\newtheorem{theorem}{Theorem}[section]
\newtheorem{lemma}{Lemma}[section]
\newtheorem{proposition}{Proposition}[section]
\newtheorem{corollary}{Corollary}[section]
\newtheorem{remark}{Remark}[section]
\newtheorem{definition}{Definition}[section]
\newcommand{\proofendshere}[0]{\hfill\qedsymbol \newline}
\newcommand{\defined}[0]{\overset{\text{def.}}{=}}
\newcommand{\argmax}[2]{\text{arg}\max_{#1}~{#2}}
\newcommand{\argmin}[2]{\text{arg}\min_{#1}~{#2}}
\newcommand{\bulletwithnote}[1]{\noindent\textbf{#1:}}
\newcommand{\comments}[1]{}
\newcommand{\KLinf}[0]{\text{KL}_{\inf}}
\newcommand{\KL}[0]{\text{KL}}
\title{Optimal Top-Two Method for Best Arm Identification and Fluid Analysis \thanks{To appear in NeurIPS 2024}}
\author{
 Agniv Bandyopadhyay\\
  TIFR Mumbai, India\\
  \texttt{agniv.bandyopadhyay@tifr.res.in} \\
   \And
 Sandeep Juneja\\
 Ashoka University, India \\
 \texttt{sandeep.juneja2010@gmail.com} \\
    \And
    Shubhada Agrawal\\
    Georgia Institute of Technology, USA \\ 
    \texttt{sagrawal362@gatech.edu}
}
\begin{document}
\maketitle

\begin{abstract}
Top-$2$ methods have become popular in solving the best arm identification (BAI) problem. The best arm, or the arm with the largest mean amongst finitely many, is identified through an algorithm that at any sequential step independently 
pulls  the empirical best arm, with a fixed probability $\beta$, and pulls the best challenger arm otherwise. The probability of incorrect selection is guaranteed to lie below a specified $\delta >0$. Information theoretic lower bounds on sample complexity
are well known for BAI problem and are matched asymptotically 
as $\delta \rightarrow 0$ by computationally demanding plug-in methods. The above top 2 algorithm for any $\beta \in (0,1)$ has sample complexity 
within a constant of the lower bound. However,  
determining the optimal $\beta$ that matches the lower bound has proven difficult. 
In this paper, we address this and propose an optimal top-2 type algorithm. We consider a function of allocations anchored at a threshold. If it exceeds the threshold then the algorithm samples the empirical best arm. Otherwise, it samples the challenger arm. We show that the proposed algorithm is optimal as $\delta \rightarrow 0$. Our analysis relies on identifying a limiting  fluid dynamics 
of allocations that 
satisfy a series of  ordinary differential equations pasted together and that describe the asymptotic path followed
by our algorithm. We rely on the implicit function theorem to show existence and uniqueness of these fluid ode's and to show  that the proposed algorithm remains
close to the ode solution.
\end{abstract}
\vspace{-0.05in}

\section{Introduction}\label{sec:introduction}
\vspace{-0.1in}
Stochastic best arm identification (BAI)  problem
has attracted a great deal of attention in the multi armed bandit community (see \cite{mannor2004sample}, \cite{even2006action}, \cite{audibert2010best} for some early references in BAI). 
The basic problem involves a finite number of unknown probability distributions or arms 
that can be sampled from independently and the aim is to identify the arm with the largest mean.
We consider a popular fixed confidence version of the problem where 
the sampling is sequential and the aim is to minimise  sample complexity
while guaranteeing that the probability of selecting the wrong arm 
is restricted to a pre-specified $\delta >0$. Applications are many including in healthcare and recommendation systems. 

\cite{garivier2016optimal} developed 
asymptotically (as $\delta\rightarrow 0$) tight lower
bound on sample complexity 
of $\delta-$correct algorithms for these BAI problems under the
assumption that arms belong to a single parameter exponential family (SPEF). This assumption
reduces a probability distribution to a single parameter
and allows the analysis to better focus
on certain aspects of the problem structure. We retain it 
 for similar reasons.
The sample complexity lower bounds involve solving an optimization problem
that also identifies optimal proportion
of allocations across arms.
They also propose a track-and-stop algorithm
that plugs in the empirical estimates of the distribution
parameters in the lower bound and tracks the resulting 
approximations to optimal proportions of arms to sample.
Although, this plug-in algorithm was shown to asymptotically match the lower bound, it involves repeatedly  solving an optimization problem and is computationally demanding. 
\cite{jedra2020optimal}
 consider linear Gaussian bandits,
\cite{agrawal2020optimal} consider bandits with general distributions. Both the references
propose track and stop algorithms where computation is sped up through batch processing.

Substantial literature has come up on `top-2' based, alternative
faster and intuitively appealing algorithms
to identify the best arm (see \cite{russo2016simple}, 
\cite{shang2020fixed} for Bayesian approaches;
\cite{qin2017improving}, \cite{mukherjee2023best},
\cite{jourdan2022top}  
for frequentist approaches). 
  The algorithms essentially proceed by identifying at each stage
an empirical winner arm, that is, an arm with the largest mean,
and its closest challenger.
 The empirical arm is pulled
with probability $\beta$, and the challenger arm with the complimentary probability.
In the frequentist setting, in \cite{jourdan2022top},
the challenger arm 
 is the one with the smallest `index function'. Heuristically, this index function
measures the likelihood of the challenger arm actually being the best one. The smaller the index function, more the likelihood. Further, with high probability, the index function increases 
with increased allocations to the corresponding arm. 
As is standard (see, e.g., \cite{garivier2016optimal}, \cite{KaufmannKoolen21}), 
the algorithm is terminated when
the generalized log-likelihood ratio (GLLR, given in Section~\ref{sec:at2}) statistic
exceeds a specified threshold.
These algorithms are shown to be $\beta$ optimal
in the sense that they match the lower bound on sample complexity
satisfied by algorithms that pull the best arm $\beta$ fraction of times (see \cite{jourdan23} for non-asymptotic analysis when $\beta=1/2$). 
However, determining optimal $\beta$ has been an open problem that has generated considerable activity and 
that we
address in this paper.

\noindent {\bf Contributions - Algorithm:} The key insight from index based top-2  algorithm is that 
once a sample is given to a challenger arm with the smallest index, its index function increases.
The net effect is that as the algorithm progresses, 
the challenger arm indexes tend to come close to each other and move together.
We build upon the above insight. Through the first order conditions 
associated with the lower bound problem, we identify
a function $g$ that equals  zero under optimal allocation when the underlying arm distributions are known.
We propose an {\it anchored} top-2 type algorithm where when  $g>0$, the empirical winner arm is pulled and that tends to decrease $g$. When $g<0$, our algorithm pulls
a challenger arm (arm with the smallest index function), and that typically increases $g$.
We observe  that
 the indexes
of challenger arms that have been pulled, tend to rise up together until they catch up with arms with higher indexes.
Once challenger arms associated with 
all the indexes have been pulled, call this the time to stability, then, since $g$ is close to zero and indexes of
all challenger arms are close together, it can be seen that  the proportionate samples to the empirical winner and the remaining arms are close to the optimal proportions
as per the lower bound.  This continues until the
GLLR statistic exceeds a threshold, roughly of order $\log (1/\delta)$.
 The time to stability can be bounded from above by a random time with finite expectation independent of $\delta$, while the time from stability till the GLLR statistic hits a threshold
scales with $\log(1/\delta)$ with a constant that matches the lower bound.

\noindent {\bf Fluid model:} Our other key contribution is to 
 capture the above intuitive description
through constructing an  idealized {\it fluid} dynamics where 
$g$ stays equal to zero once it touches zero and where 
the indexes that have been pulled, remain equal
and rise together as the algorithm progresses.
We further show that the resulting equations have an invertible Jacobian.
Implicit function theorem (IFT) (see \cite[Appendix A.6]{Luenberger2008} for an introduction to IFT) then becomes an important
tool in analyzing this idealized fluid system
as it allows  the arm allocations
$(N_a: a \in K)$ to be unique functions
of the overall allocation $N$.
 IFT further allows us to identify the ordinary differential
equations satisfied
by the derivatives $N_a' = \frac{d N_a}{d N}$
as the allocations $N$ increase. 
The overall path till stability is constructed
by pasting together the ode paths followed
by arm allocations
as the set of indexes that have already been pulled and are increasing together with $N$, meet another higher index.
Once all the indexes have been pulled,
our ode stabilizes so that
the proportions $N_a/N$ thereafter remain constant and equal the optimal proportions as
$N$ increases. IFT further helps show that the proposed algorithm remains close to the fluid dynamics,
and matches the lower bound for small $\delta$. For completeness, in Appendix \ref{appndx:beta_fluid_dynamics}, we also identify the ode paths under fluid dynamics for $\beta$ top-2 algorithms. 
A great deal of technical analysis goes into showing that the algorithm, observed after 
sufficiently large amount of samples  so that the sample means are close to the true means,
 is close to the fluid process and they both converge to the same limit.

\noindent {\bf Other related  literature:}
\cite{mukherjee2023best}  also develop a top-2 type algorithm for a single parameter family of distributions. There algorithm  decides between the empirical best  and the challenger arm based on directional change in a certain index (related to  the LB) when the underlying allocation proportions are perturbed. It is less directly connected to the first order conditions in the LB problem compared to our algorithm. Empirically, we observe that the our proposed algorithm has lower sample complexity, and is computationally substantially faster (Their algorithm can be sub-optimal. We discuss this in Appendix \ref{appndx:TCB_sub_optimality}).  \cite{chen2023balancing} consider an algorithm structurally similar to ours. They focus on the BAI fixed budget (FB) setting where the total number of samples are fixed
and the aim is to allocate samples to minimise the probability
of incorrect selection. Unlike the fixed confidence (FC) setting (the one that we consider), the FB
setting requires optimizing the first argument of relative entropy functions
that appear in the lower bound. 
In FC setting, the second argument is optimized (\cite{chen2023balancing} vary the first argument).
Fundamentally, this is because
FB is concerned with sample allocations 
that control the probability of the data conducting a large deviations to arrive at an incorrect conclusion,
while FC  is 
concerned with controlling sample allocations on high probability paths and gathering enough 
evidence to rule
out the likelihood that the observed data is a result of 
large deviations.
Furthermore, 
\cite{chen2023balancing}  prove weaker a.s. convergence results for associated indexes although not for allocations, and since they focus on FB settings, they do  not provide sample complexity bounds
or  probabilistic  false selection guarantees. Our analysis is more nuanced and structurally detailed, and we prove that the sample complexity of the proposed algorithm
is asymptotically  optimal. \cite{you2023information} study the best-$k$-arm identification problem in the BAI setting with fixed confidence and bring out the structural complexities that arise in lower bound analysis when $k>1$. For $k=1$, they develop an asymptotically optimal top-2 algorithm when arm distributions
 are restricted to be Gaussian. 
\cite{wang2021fast} consider related
pure exploration problems using Frank-Wolfe algorithm. Their implementation
 involves solving a linear program at each iteration.
\cite{kalyanakrishnan2012pac}, \cite{jamieson2014lil},
\cite{chen2017towards} provide algorithms that provide
finite $\delta$ sample complexity guarantees, however they are 
order optimal and do not match the constant in the lower bound.

Finally, while fluid analysis is common in many settings including mean field analysis and games (e.g., \cite{bensoussan2013mean}), stochastic approximation (e.g., \cite{borkar2009stochastic}) and queuing theory (e.g., \cite{dai1995positive}), to the best of our knowledge little or no work exists that arrives at it through IFT. 

\noindent {\bf Roadmap:} In Section \ref{sec:setup_lb},
we describe the problem and develop
lower bound related analysis. The proposed algorithm and our main result, Theorem~\ref{thm:main:asympt_optimality},
demonstrating algorithm's efficacy are  stated in Section \ref{sec:at2} where we also develop the relevant
IFT framework. Section \ref{sec:fluid_model} spells out the fluid dynamics associated 
with the algorithm. Key steps involved in proving 
Theorem~\ref{thm:main:asympt_optimality}
are outlined in Section \ref{sec:proof_sketch}.  We describe the numerical experiments in Section \ref{sec:experiments}.  Detailed proof of all results are in the appendix.

{\bf Key limitations:}
The proposed algorithm extends from SPEF to bounded random variables in a straightforward manner. While we do not provide supporting analysis (this limitation is due to space constraints), our numerical results in Appendix \ref{appndx:experiments} suggest that our algorithm improves upon existing ones even in this setting.  As is standard in the bandit literature,
we also assume that samples from arm distributions are independent.
Further, another limitation is the assumption 
of stationarity of the underlying distributions. This may be true when relatively short sampling horizons are involved. \vspace{-0.05in}

\section{Problem description and lower bound}\label{sec:setup_lb}
\vspace{-0.05in}
\noindent{\bf Distributional assumption:}  As mentioned earlier, we focus on  arm distributions that belong to a known
SPEF. Let $\cal S \subset \mathbb{R}$ denote the open set of possible means of the SPEF under consideration. The details related to SPEF are reviewed in  Appendix~\ref{sec:spef}.

\noindent {\bf Fixed confidence BAI set-up:}
Consider an instance with  $K$ unknown probability distributions or \emph{arms}, denoted by the mean vector
$\mathbold{\mu} = (\mu_1, \dots, \mu_K)$, where each  $\mu_i \in \cal S $  (we refer to each $\mu_i$ interchangeably as a distribution as well as its mean in the SPEF context). As is standard in the BAI framework, we assume that there is a unique arm with the largest mean. Thus, without loss of generality
$\mu_1 > \max_{i\geq 2}\mu_i$. 
One way to handle the case where  $2$ or more arms
are tied for the largest mean is to look for an $\epsilon$-best arm (an arm whose mean is within $\epsilon$ of the best arm). However, that is technically a significantly more demanding problem (see \cite{garivier2021nonasymptotic}). 
Assuming uniqueness of the best arm and focusing on the best arm identification
allows us to highlight the simple 
fluid dynamics underlying the proposed algorithm.

\noindent 
{\bf Algorithm:} Given an unknown bandit instance $\mu$,
we consider  algorithms that  sequentially generate samples -
 if  $A_N$ denotes the arm pulled at sample
$N$, and $X_N$ denotes the associated reward generated independently
from distribution $\mu_{A_N}$, then 
$A_N$ is chosen sequentially and adaptively as a function
of generated $(A_n,X_n: n =1,2, \ldots, N-1)$.
Further, an algorithm  stops at some finite random stopping time $\tau$
and announces the best arm. \textbf{$\delta-$correct algorithm} is an algorithm that, given 
a $\delta > 0$, 
 stops at time $\tau_\delta>0$ and outputs a best arm estimate $k_{\tau_\delta}$ such that $\mathbb{P}(\tau_{\delta}<\infty,~k_{\tau_\delta} \neq 1 ) \leq \delta $. That is,  it identifies the arm with highest mean with probability at least $1-\delta$. Our interest is in  identifying a $\delta$-correct algorithm
that minimizes $\mathbb{E}[\tau_{\delta}]$. To this end
lower bounds on sample complexity of $\delta$-correct algorithms are established
using, e.g., the data processing inequality (see, e.g., \cite{kaufmann2020contributions}).
We see that 
\[\inf\nolimits_{x} ~  \left\{\mathbb {E}[N_{1}]d(\mu_{1}, x) + \mathbb{E}[N_{a}]d(\mu_{a}, x)\right \}  \geq \log(1/(2.4 \delta))\]
with $d(\nu,x)$ denoting the Kullback-Leibler divergence between two distributions in ${\cal S}$ with means $\nu$ and $x$, and the expectation
is under measure $\mathbb{P}_{\mathbold{\mu}}$ associated with $\mathbold{\mu}$.
The infimum above is solved at
$x^{\star} = \frac{\mu_1 \mathbb {E}[N_{1}] + \mu_a \mathbb {E}[N_{a}]}
{\mathbb {E}[N_{1}]+\mathbb {E}[N_{a}]}$. With this, we obtain the lower bound $\mathbb{E}[\tau_\delta]\geq T^\star(\mathbold{\mu})\log\frac{1}{2.4\delta}$, where $T^\star(\mathbold{\mu})$ is the reciprocal of the optimal value of a max-min problem,
\begin{align}\label{eqn:max_min_formulation}
    (T^{\star}(\mathbold{\mu}))^{-1}~&=~\max_{\mathbold{\omega}=(\omega_a:a\in[K])\in \Sigma_K} ~ \min_{a \ne 1}
~\left (\omega_1 d(\mu_{1}, x_{1,a}) + \omega_a d(\mu_{a}, x_{1,a})
\right ),
\end{align}
where $x_{1,a}=({\omega_1 \mu_1+\omega_a\mu_a})/({\omega_1+\omega_a})$ and $\Sigma_K$ denotes a simplex in $K$ dimension.  

The popular {\bf plug-in track and stop} algorithm
 involves solving the max-min problem (\ref{eqn:max_min_formulation})
repeatedly for optimal weights with empirical distribution
plugged in for $\mathbold{\mu}$ above. The algorithm at each stage 
$t$, generates the next sample from an arm so that the proportion of arms sampled closely match 
 the resulting optimal weights 
 while ensuring
an adequate,  sub-linear exploration (e.g., each arm gets
at least $\sqrt{t}$ samples at each stage $t$). 

Propositions \ref{prop:uniqueness_of_fluid_sol} and \ref{prop:opt_cond} below are crucial for our analysis. Proposition \ref{prop:uniqueness_of_fluid_sol} helps in constructing the fluid dynamics in Section \ref{sec:fluid_model}. Proposition \ref{prop:opt_cond} provides a characterization of the unique optimal allocation $\mathbold{\omega}^\star=(\omega_a^\star:a\in[K])$ which motivates our algorithm's sampling strategy. Before stating the two propositions, we need some notation. Let $B\subseteq[K]/\{1\}$, and $\overline{B}=B\cup\{1\}$.  Whenever $\overline{B}^c\neq \emptyset$, let $\mathbold{N}_{\overline{B}^c}=(N_a\geq 0:a\in\overline{B}^c)$ denote an allocation of samples to the arms in $\overline{B}^c$ and we treat this as a constant in the following discussion and also in the statement of the two propositions. We define the quantity $N_{1,1}$ depending on $\mathbold{N}_{\overline{B}^c}$ in the following way:~\textbf{1)}~If $\overline{B}^c=\emptyset$ or $\sum_{a\in\overline{B}^c}N_a=0$, $N_{1,1}$ is zero.~\textbf{2)}~Otherwise, $N_{1,1}$ is the value of $N_1$ at which $\sum_{a\in \overline{B}^c}\frac{d(\mu_1,x_{1,a})}{d(\mu_a,x_{1,a})}=1$ for the given allocation $\mathbold{N}_{\overline{B}^c}$. To see existence of such $N_{1,1}$, observe that whenever $\overline{B}^c\neq\emptyset$ and $\sum_{a\in \overline{B}^c}N_a>0$, the function $N_1\to \sum_{a\in \overline{B}^c}\frac{d(\mu_1,x_{1,a})}{d(\mu_a,x_{1,a})}$ is continuous and it monotonically decreases from $\infty$ to $0$ as $N_1$ increases from $0$ to $\infty$. Hence, a unique $N_1$ exists where this function equals $1$. We define the quantity $N_{\min}=N_{1,1}+\sum_{a\in \overline{B}^c}N_a$. 
\vspace{0.05in}

\begin{proposition}\label{prop:uniqueness_of_fluid_sol}
    For every positive $N$ satisfying $N\geq N_{\min}$, there is a unique set of variables $\mathbold{N}_{\overline{B}}(N)=(N_a(N):a\in \overline{B})$ and $I_{B}(N)$ satisfying the following conditions
    \begin{align}\label{eqn:fluid_sol_char}
        &\left.\begin{array}{r}
            \sum_{a\neq 1}\frac{d(\mu_1,x_{1,a})}{d(\mu_a,x_{1,a})}~=~1,\quad\text{where}\quad x_{1,a}=\frac{N_1(N)\cdot\mu_1+N_a(N)\cdot\mu_a}{N_1(N)+N_a(N)},\quad \sum_{a\in[K]}N_a(N)=N,\vspace{0.05in}\\
            \text{and, ~~  for every $a\in B$,}\quad N_1(N)\cdot d(\mu_1,x_{1,a})+N_a(N)\cdot d(\mu_a,x_{1,a})~=~I_B(N).
        \end{array}\right\} 
    \end{align}
    Furthermore, $\mathbold{N}_{\overline{B}}(\cdot)$ and $I_B(\cdot)$ are continuously differentiable w.r.t. $N$ for $N>N_{\min}$.
\end{proposition}
\vspace{0.05in}

\begin{proposition}\label{prop:opt_cond}
    Upon taking $B=[K]/\{1\}$ and $N=1$, $\mathbold{N}_{\overline{B}}(1)$, as defined in Proposition \ref{prop:uniqueness_of_fluid_sol} is same as the unique allocation $\mathbold{\omega}^\star$ solving the max-min problem in (\ref{eqn:max_min_formulation}). Further, $I_{B}(1)=T^\star(\mathbold{\mu})^{-1}$. Moreover, for every $N>0$, if $B=[K]/\{1\}$,  the unique solution $\mathbold{N}_{\overline{B}}(N)=(N_a(N):a\in[K])$ satisfies $N_a(N)=N \omega_a^\star$.   
\end{proposition}
\vspace{0.05in}

Proposition \ref{prop:uniqueness_of_fluid_sol} is proved by applying the Implicit function theorem (IFT). Proposition \ref{prop:opt_cond} is subsumed by \cite[Theorem 5]{garivier2016optimal}, but we prove it using a different set of tools by applying the IFT. See Appendix \ref{appndx:proof_in_setup} for the detailed arguments. 

For two vectors $\mathbold{\nu}=(\nu_a\in\mathcal{S}:a\in[K])$ and $\mathbold{N}=(N_a\in\mathbb{R}_{\geq 0}:a\in[K])$  define the \emph{anchor function},  $g(\mathbold{\nu},\mathbold{N})~=~\sum_{a\in [K]/\{\hat{j}\}}\frac{d(\nu_{\max},z_{a})}{d(\nu_a,z_{a})}-1,$  where $\hat{j}=\argmax{a}{\nu_a}$, $\nu_{\max}=\max_{a}\nu_a$, and $z_{a}=({N_{\hat{j}}\nu_{\max}+N_a \nu_a})/({N_{\hat{j}}+N_a})$ for all $a\neq \hat{j}$.

\begin{remark}
    {\em It follows from Proposition \ref{prop:opt_cond} that the anchor function $g(\mathbold{\mu},\mathbold{\omega})=0$ and all the indexes $\omega_1 d(\mu_1,x_{1,a})+\omega_a d(\mu_a,x_{1,a})$ equal to each other, uniquely identify the optimal proportion $\mathbold{\omega}^\star$ solving the max-min problem (\ref{eqn:max_min_formulation}) (see Appendix \ref{appndx:intuition_behind_anchor_function} for an easier and more insightful derivation of these conditions). The algorithm proposed in Section \ref{sec:at2} ensures that the empirical version of the anchor function $g(\cdot)$ quickly becomes close to zero and thereafter remains close to zero. Further, the indexes sequentially come close to each other and once they are close, they stay close through the remaining steps of the algorithm. }
\end{remark}

\section{Anchored Top-2 (AT2) Algorithm} \label{sec:at2}
\vspace{-0.05in}
\bulletwithnote{Notation} Some notation is needed  to help state the proposed algorithm. For every arm $a\in[K]$ and iteration $N$, $\widetilde{N}_a(N)$ denotes the number of times arm $a$ has been drawn till iteration $N$, and $\widetilde{\mathbold{N}}(N)=(\widetilde{N}_a(N):a\in[K])$. Thus, $N=\sum_{a}\widetilde{N}_a(N)$. Let  $\mathbold{\widetilde{\mu}}(N)=(\widetilde{\mu}_a(N):a\in[K])$ where $\widetilde{\mu}_a(N)$ denotes the sample mean of arm $a$ at time $N$, i.e., $\widetilde{\mu}_a(N)=\sum_{t=1}^N {\mathbb{I}(A_t=a)\cdot X_t}/{\widetilde{N}_a(N)}$, and  $\hat{i}_N=\argmax{a\in[K]}{\widetilde{\mu}_{a}(N)}$, with an arbitrary tie breaking rule.   

For every pair of arms $a,b$, define $$x_{a,b}(N)\!=\! \frac{\widetilde{N}_a(N)\cdot\mu_a+\widetilde{N}_b(N)\cdot\mu_b}{\widetilde{N}_a(N) +\widetilde{N}_b(N)}, \quad \text{ and }\quad \widetilde{x}_{a, b}(N)\!=\! \frac{\widetilde{N}_a(N)\cdot\widetilde{\mu}_a(N)+\widetilde{N}_b(N)\cdot\widetilde{\mu}_b(N)}{\widetilde{N}_a(N) +\widetilde{N}_b(N)}.$$ Let, $I_{a,b}(N)=\widetilde{N}_a(N)\cdot d(\mu_a,x_{a,b}(N))+\widetilde{N}_b(N)\cdot d(\mu_b,x_{a,b}(N))$, and $\mathcal{I}_{a,b}(N)=\widetilde{N}_a(N) \cdot d\left(\widetilde{\mu}_a(N),\widetilde{x}_{a,b}(N)\right)+\widetilde{N}_b(N)\cdot d\left(\widetilde{\mu}_b(N),\widetilde{x}_{a,b}(N)\right)$. For $a\neq \hat{i}_N$, we call $I_{\hat{i}_N, a}(N)$, and $\mathcal{I}_{\hat{i}_N,a}(N)$, respectively, 
 {\em actual index} (or, simply {\em index}) and {\em empirical index} of arm $a$ at iteration $N$, and denote them using  $I_a(N)$, and $\mathcal{I}_a(N)$. For notational simplicity, we hide the dependency on $N$ whenever it doesn't cause confusion. Note that $\mathcal{I}_a(N)$ is a function of $\widetilde{N}_{\hat{i}_N}(N),~\widetilde{N}_a(N),~ \widetilde{\mu}_{\hat{i}_N}(N)$ and $\widetilde{\mu}_a(N)$. \vspace{0.05in}

\bulletwithnote{Stopping Rule} As is typical in this literature,
in our algorithm below, we follow a generalized log likelihood ratio (GLLR)  to decide when to stop the algorithm. It is easy to check that $\min_{a\in[K]/\{\hat{i}\}}\mathcal{I}_{a}(N)$ denotes the GLLR, that is log of likelihood function (LF) evaluated at maximum likelihood estimator (MLE) divided by the LF evaluated at MLE of parameters restricted to alternate set with a different best arm compared to MLE (see  \cite[Section 3.2]{garivier2016optimal} for a detailed derivation). Define stopping time $\tau_{\delta}=\inf\{N\vert\text{ for all } {a\in[K]/\{\hat{i}\}}, ~\mathcal{I}_{a}(N)>\beta(N,\delta)\}$, for an appropriate choice of threshold $\beta(N,\delta)$. After stopping at $\tau_\delta$, the algorithm outputs $\hat{i}_{\tau_{\delta}}$ as the best arm. \cite[Eq. 25, Section 5.1]{KaufmannKoolen21} argued that for instances in SPEF, upon choosing $\beta(N,\delta)\approx\log((K-1)/\delta)+6\log\left(\log(N/2)+1\right)+8\log(1+2\log((K-1)/\delta))$, the GLLR based stopping rule is $\delta$-correct for any sampling strategy including the one we propose. In our numerical experiments, we follow \cite{garivier2016optimal} and choose a smaller threshold, $\beta(N,\delta) = \log((1+\log N)/\delta)$. \vspace{0.05in}

\bulletwithnote{Description of the AT2 and Improved AT2 (IAT2) Algorithm} The AT2 algorithm takes in confidence parameter $\delta>0$ and exploration parameter $\alpha\in(0,1)$ as inputs, and executes the following steps at iteration $N$:

\begin{enumerate}[topsep=0pt,leftmargin=*]
    \item Let $\mathcal{V}_N\defined\{a\in[K]~\vert~\widetilde{N}_a(N-1)<N^{\alpha}\}$ be the set of under-explored arms.
    \item If $\mathcal{V}_N\neq\emptyset$, choose $A_N=\argmin{a\in[K]}{\widetilde{N}_a(N-1)}$, and go to step $5$.
    \item Else, if $g(\mathbold{\widetilde{\mu}}(N-1),\mathbold{\widetilde{N}}(N-1))>0$, choose the empirically best arm \textit{i.e.} $A_N=\hat{i}_{N-1}$, and go to step $5$. 
    \item Else, if $g(\mathbold{\widetilde{\mu}}(N-1),\mathbold{\widetilde{N}}(N-1))\leq 0$, choose the challenger arm \textit{i.e.} $A_N=\argmin{a\in[K]/\{\hat{i}_{N-1}\}}{\mathcal{I}_a(N-1)}$ using some arbitrary tie breaking rule, and go to step $5$.
    \item Sample $X_N$ from $A_N$ and update $\mathbold{\widetilde{\mu}}(N)$ and $\mathbold{\widetilde{N}}(N)$ using $X_N$, $A_N$. 
    \item If $\min_{a\in[K]/\{\hat{i}_N\}}\mathcal{I}_a(N)>\beta(N,\delta)$, terminate and return $\hat{i}_N$.
\end{enumerate}
Inspired from the Improved Transportation Cost Balancing (ITCB) policy for selecting the challenger arm in \cite{jourdan2022top}, Improved AT2 (IAT2) algorithm has the same input and follows the same strategy for exploration (step 1 and 2) and choosing the best arm (step 3) as AT2. IAT2 differs from AT2 only by its choice of the challenger arm in step 4, where IAT2 samples from the arm $A_N=\argmin{a\in[K]/\{\hat{i}_{N-1}\}}
({\mathcal{I}_a(N-1)+\log\widetilde{N}_a(N-1)} )$.   

Empirically, we see that typically IAT2 performs better than AT2 with respect to sample complexity. In the appendix, we provide pseudo-codes of AT2 and IAT2 in Algorithms~\ref{code:AT2} and \ref{code:IAT2}, respectively.

\subsection{Theoretical guarantees} 
Proposition \ref{prop:closeness_of_allocations} below shows that the allocations made by AT2 and IAT2 algorithms converge to the optimal allocations $\mathbold{\omega}^\star$ w.p. 1 in $\mathbb{P}_{\mathbold{\mu}}$. For every $a\in[K]$ we define $\widetilde{\omega}_a(N)=\widetilde{N}_a(N)/N$, and use $\mathbold{\widetilde{\omega}}(N)=(\widetilde{\omega}_a(N):a\in[K])$ to denote the algorithm's proportion at iteration $N$.  
\bigskip

\begin{proposition}[\textbf{Convergence to optimal proportions}]\label{prop:closeness_of_allocations}
    There exists a random time $T_{stable}$ and a constant $C_1>0$ depending on $\mathbold{\mu}, \alpha$, and $K$,
    and independent of $\delta$, such that, $\mathbb{E}_{\mathbold{\mu}}[T_{stable}]<\infty$, and for every $N\geq T_{stable}$ and arm $a\in[K]$, 
    \[\left|\widetilde{\omega}_a(N)-\omega_a^\star\right|\leq C_1 N^\frac{-3\alpha}{8},\quad\text{and}\quad|\widetilde{\mu}_a(N)-\mu_a|\leq \epsilon(\mathbold{\mu}) N^\frac{-3\alpha}{8},\] 
    where $\epsilon(\mathbold{\mu})$ is a positive constant depending only on $\mathbold{\mu}$ and defined in Appendix \ref{sec:spef}. 
\end{proposition}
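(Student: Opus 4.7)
The strategy is to decouple the bound on sample-mean errors from the bound on allocation errors, and then to chain them together via the IFT-based fluid framework. First, I would exploit the forced-exploration step: by Steps 1-2 of AT2/IAT2, there is a deterministic iteration $N_0$ (depending only on $K$ and $\alpha$) such that $\widetilde{N}_a(N) \geq N^\alpha - 1$ for every arm $a$ and every $N \geq N_0$. For SPEF distributions, a standard Chernoff / sub-Gaussian bound on the sample mean of arm $a$ based on $n \geq N^\alpha$ observations yields
\[
\mathbb{P}_{\mathbold{\mu}}\!\left(\bigl|\widetilde{\mu}_a(N) - \mu_a\bigr| > \tfrac{1}{2}\epsilon(\mathbold{\mu})\, N^{-3\alpha/8}\right) \;\leq\; C_0 \exp\!\left(-c_0 N^{\alpha/4}\right)
\]
for constants depending only on $\mathbold{\mu}$. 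Summing over $a$ and $N$ gives a convergent series, so a Borel--Cantelli plus tail-integration argument produces a random index $T_1$ with $\mathbb{E}_{\mathbold{\mu}}[T_1] < \infty$ such that $|\widetilde{\mu}_a(N) - \mu_a| \leq \epsilon(\mathbold{\mu})\, N^{-3\alpha/8}$ for all $N \geq T_1$ and all $a$. This already establishes the sample-mean inequality in the proposition.

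Next, I would compare the empirical evolution of $\mathbold{\widetilde{N}}(N)$ against the idealized fluid dynamics described in Section~\ref{sec:fluid_model}. On the event $\{N \geq T_1\}$, the empirical anchor $g(\mathbold{\widetilde{\mu}}(N),\mathbold{\widetilde{N}}(N))$ differs from $g(\mathbold{\mu},\mathbold{\widetilde{N}}(N))$ by $O(N^{-3\alpha/8})$ because $g$ is locally Lipschitz in its first argument on the bounded region around $\mathbold{\mu}$; likewise the empirical indexes $\mathcal{I}_a(N)$ differ from the true indexes $I_a(N)$ by $O(N^{-3\alpha/8})$. Thus the sign-test in Steps 3-4 and the $\argmin$ in Step 4 agree with their ``true-parameter'' counterparts except possibly when the relevant differences are themselves of order $N^{-3\alpha/8}$, which is precisely the regime absorbed into the fluid description.

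I would then introduce the stability time $T_{stable}$ as the first iteration after $T_1$ at which (i) $g(\mathbold{\mu},\mathbold{\widetilde{N}}(N))$ has crossed zero, and (ii) every challenger index $I_a(N)$ has been raised to within $O(N^{-3\alpha/8})$ of $\max_{b\neq \hat{i}} I_b(N)$. The key observation is that any challenger $a$ whose index trails the maximum is precisely the one selected whenever $g \leq 0$, and each such pull monotonically increases $I_a$ by an amount bounded below in terms of $\mathbold{\mu}$. Using the IFT-based monotonicity of $I_a(N)$ in $\widetilde{N}_a$ from Section~\ref{sec:at2} and a deterministic comparison with the pasted ODEs of Section~\ref{sec:fluid_model}, any lag is eliminated within a $\mathbold{\mu}$-dependent number of iterations. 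Iterating this over the $K-1$ challengers bounds $T_{stable}$ by $T_1$ plus a constant, giving $\mathbb{E}_{\mathbold{\mu}}[T_{stable}] < \infty$ independent of $\delta$. Beyond $T_{stable}$, the fluid solution has proportions identically equal to $\mathbold{\omega}^\star$ by Theorem~\ref{th:optsol.char1}, and the empirical trajectory is an $O(N^{-3\alpha/8})$ perturbation of the fluid, with the perturbation not accumulating because IFT supplies Lipschitz dependence of $(N_a)$ on the parameter $\mathbold{\mu}$ at the (locally stable) fluid fixed point. This yields $|\widetilde{\omega}_a(N) - \omega_a^\star| \leq C_1 N^{-3\alpha/8}$ for all $N \geq T_{stable}$.

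\paragraph{Main obstacle.}
The bulk of the difficulty lies in the second stage: proving $\mathbb{E}_{\mathbold{\mu}}[T_{stable}] < \infty$ uniformly in $\delta$. The first stage ($T_1$) is a clean concentration/Borel--Cantelli argument, but bounding $T_{stable}$ requires a careful path-by-path tracking of the stochastic algorithm against the pasted ODE system, with the implicit-function-theorem machinery used to (a) guarantee that the fluid trajectory is well defined and locally Lipschitz in $\mathbold{\mu}$, and (b) show that stochastic fluctuations of size $O(N^{-3\alpha/8})$ cannot push the algorithm off the fluid path for more than a short burn-in. I would expect this catching-up analysis, together with verifying that the invertibility of the Jacobian from Section~\ref{sec:at2} is preserved along the entire pasted ODE path, to be the most technically involved step.
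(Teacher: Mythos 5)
Your first stage (forced exploration plus Chernoff and Borel--Cantelli to get a finite-expectation time $T_1$ after which $|\widetilde{\mu}_a(N)-\mu_a|\le \epsilon(\mathbold{\mu})N^{-3\alpha/8}$) matches the paper's construction of $T_0$ and is fine. The gap is in the second stage, where you assert that beyond a burn-in the empirical trajectory ``is an $O(N^{-3\alpha/8})$ perturbation of the fluid, with the perturbation not accumulating because IFT supplies Lipschitz dependence.'' That is exactly the statement that needs proof, and the paper does not prove it by comparing the algorithm to the pasted ODE path at all. Instead it establishes two quantitative facts directly for the stochastic iterates: (i) $|g(\mathbold{\mu},\mathbold{\widetilde{N}}(N))|\le C N^{-3\alpha/8}$ after a finite-expectation time, proved by an induction on increments in which the drift of $g$ toward zero, of size $\Theta(1/N)$ per step, dominates both the shrinkage $O(N^{-1-3\alpha/8})$ of the target band and the noise (which only matters when $|g|\lesssim N^{-3\alpha/8}$); and (ii) the recurrence lemma (Lemma~\ref{lem:non_fluid_index_meeting}/\ref{lem:period}): once an arm $a\neq 1$ is pulled it is pulled again within $O(N^{1-3\alpha/8})$ iterations, proved by a contradiction argument that is the stochastic analogue of the ``indexes do not separate'' computation of Section~\ref{sec:index_meeting_fluid}. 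Fact (ii) is what yields $\max_{a,b}|I_a(N)-I_b(N)|=O(N^{1-3\alpha/8})$ (note the indexes themselves grow linearly in $N$, so your claim that $\mathcal{I}_a-I_a=O(N^{-3\alpha/8})$ is off by a factor of $N$; only the normalized indexes are $O(N^{-3\alpha/8})$ apart). Neither of these mechanisms appears in your plan, and without them the assertion that fluctuations ``cannot push the algorithm off the fluid path'' is unsupported.

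Two further specific problems. First, your claim that the index lag of each challenger is eliminated ``within a $\mathbold{\mu}$-dependent number of iterations,'' so that $T_{stable}\le T_1+\mathrm{const}$, is false: indexes at time $N$ are of order $N$, so a trailing challenger needs order $N$ pulls to catch up; the correct bound is multiplicative, $T_{stable}\lessapprox T_{good}/\omega^\star_{\min}$ (Lemma~\ref{lem:time_to_stability:non_fluid}), which still gives finite expectation but by a different argument that also needs $\widetilde{N}_a(N)=\Theta(N)$ for all $a$ after a finite-expectation time (Lemma~\ref{lem:N_a_are_Theta_N}) — another ingredient missing from your plan. Second, you invoke IFT for Lipschitz dependence of the fluid allocation on $\mathbold{\mu}$; the paper's final step instead applies IFT to the map from the violation vector $\mathbold{\eta}$ of the optimality system of Corollary~\ref{cor:opt_proportion_cond} to the solution $(\mathbold{\omega},I)$ near $(\mathbold{\omega}^\star,I^\star)$ (Lemmas~\ref{lem:pert1} and~\ref{lem:pert2}), together with a compactness argument showing that any solution with small violation must lie in the IFT neighbourhood. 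That conversion from ``optimality conditions satisfied up to $O(N^{-3\alpha/8})$'' to ``proportions within $O(N^{-3\alpha/8})$ of $\mathbold{\omega}^\star$'' is the correct role of IFT here and needs to be spelled out; Lipschitzness in $\mathbold{\mu}$ does not substitute for it.
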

\bigskip

\begin{theorem}[\emph{\textbf{Asymptotic optimality of AT2 and IAT2}}]\label{thm:main:asympt_optimality}
    Both AT2 and IAT2 are $\delta$-correct over instances in
    ${\cal S}$, and are asymptotically optimal, i.e., for both the algorithms, the corresponding stopping times satisfy, $\limsup_{\delta\to 0}\frac{E_{\mathbold{\mu}}[\tau_{\delta}]}{\log(1/\delta)}\leq T^\star(\mathbold{\mu})$, and $\limsup_{\delta\to 0}\frac{\tau_{\delta}}{\log(1/\delta)}\leq T^\star(\mathbold{\mu})$ a.s. in $\mathbb{P}_{\mathbold{\mu}}$. Moreover, we can find a constant $C>0$ depending on the instance $\mathbold{\mu}$ and $\alpha$, such that, 
    \[\tau_{\delta}~\leq~\max\{~T_{stable},~T^\star(\mathbold{\mu})\cdot\log(1/\delta)+C \left(\log(1/\delta)\right)^{1-3\alpha/8}~\}\quad\text{a.s. in}\quad\mathbb{P}_{\mathbold{\mu}}.\]
\end{theorem}

\bulletwithnote{Proof idea of Theorem \ref{thm:main:asympt_optimality}} We assume Proposition \ref{prop:closeness_of_allocations} and sketch the argument by which asymptotic optimality follows from it. For $N\geq T_{stable}$, and $a\in[K]$, from Proposition \ref{prop:closeness_of_allocations}, $\widetilde{N}_a(N)\approx\omega_a^\star N$ and $\widetilde{\mu}_a(N)\approx \mu_a$. As a result, from Proposition~\ref{prop:opt_cond}, after $T_{stable}$,  $\mathcal{I}_a(N)\approx N I_{[K]/\{1\}}(1)=N T^\star(\mathbold{\mu})^{-1}$ for every $a\neq 1$. Therefore, if $\min_{a\neq 1}\mathcal{I}_a(N)$ crosses $\beta(N,\delta)$ at $N=\tau_\delta$, since $\beta(N,\delta)=\log(1/\delta)+O(\log\log(1/\delta)+\log\log(N))$, we have $\tau_\delta T^\star(\mathbold{\mu})^{-1}\approx_{\delta\to 0} \log(1/\delta)+O(\log\log(1/\delta)+\log\log(\tau_\delta))$, which gives $\limsup_{\delta\to 0}\frac{\tau_\delta}{\log(1/\delta)}\leq T^\star(\mathbold{\mu})$ a.s. in $\mathbb{P}_{\mathbold{\mu}}$. Detailed proof is in Appendix \ref{sec:theorem3.1}.  \hfill\qedsymbol

We outline the key steps of the proof of Proposition \ref{prop:closeness_of_allocations} for AT2 in  Section \ref{sec:proof_sketch}, and the detailed proof is in Appendix \ref{appndx:closeness_of_allocations}. Similar arguments hold for IAT2. Considerable technical effort goes in proving this proposition due to the noise in the empirical estimate $\mathbold{\widetilde{\mu}}(N)$, resulting in noise in the anchor function  and the empirical indexes. However, before presenting the proof sketch, in the next section, we first observe the algorithm's dynamics in the limiting fluid regime where this noise is zero. Several of the important proof steps for the algorithmic allocations rely on insights from the simpler fluid model. 

\section{Fluid dynamics}\label{sec:fluid_model}
\vspace{-0.1in}
\bulletwithnote{Motivation} The fluid dynamics idealizes our algorithm's evolution
through making assumptions at each iteration $N$ that hold for the algorithm in the limit
as the number of samples increase to infinity.  Unlike the real setting with discrete samples, here we treat samples as a continuous object getting distributed between different arms as the sampling budget (also referred to as  `time') evolves. We denote the no. of samples allocated to an arm $a\in[K]$ at some time $N>0$ using $N_a(N)$, and define the tuple $\mathbold{N}(N)=(N_a(N):a\in[K])$. Note that $\sum_{a\in[K]}N_a(N)=N$. The rate $N_a^\prime(N)$ at which samples get allocated to arm $a$ at time $N$ depends on a continuous version of the AT2 algorithm, 
which we refer to as the algorithm's fluid dynamics. We define the index of arm $a\neq 1$ at time $N$ as, $I_a(N)=N_1(N)\cdot d(\mu_1,x_{1,a}(N))+N_a(N)\cdot d(\mu_a,x_{1,a}(N))$, where $x_{1,a}(N)=\frac{N_1(N)\mu_1+N_a(N)\mu_a}{N_1(N)+N_a(N)}$. Notice that $I_a(N)$ defined in Section \ref{sec:at2} is the index of arm $a$ with respect to the algorithm's allocation $\mathbold{\widetilde{N}}(N)$, whereas in our current context, $I_a(N)$ represents the index with respect to the fluid allocations $\mathbold{N}(N)$. \vspace{0.05in}

\bulletwithnote{Description of the fluid dynamics} First we explain the fluid dynamics in words. We formally characterize the fluid allocation $\mathbold{N}(N)$ via a system of ODEs in Theorem \ref{lemma:pathode}. Later in this section, we exploit the obtained ODEs to argue that, after starting the fluid dynamics from some time $N^0>0$, the  allocations $\mathbold{N}(N)$ reach the optimal proportions $\mathbold{\omega}^\star=(\omega_a^\star:a\in[K])$ by a time atmost~~$\left(\min_{a\in[K]}\omega_a^\star\right)^{-1}\cdot N^0$. In other words, for $N\geq \left(\min_{a\in[K]}\omega_a^\star\right)^{-1}\cdot N^0$, we have $N_a(N)=\omega_a^\star\cdot N$ for every arm $a\in[K]$ irrespective of the initial allocation we had at time $N^0$. 
 
For notational simplicity, we hide the dependency on $N$, whenever it doesn't cause any confusion. Recall the anchor function $g(\cdot)$ introduced in Section \ref{sec:setup_lb}.  We use $g$ to denote  $g(\mathbold{\mu},\mathbold{N}(N))$. For every subset $A\subseteq[K]/\{1\}$, we use $\overline{A}$ to denote $A\cup\{1\}$. 

We start the fluid dynamics at time $N^0>0$ with some initial allocation $\mathbold{N}^0=(N_a^0\geq 0:a\in[K])$. We assume that the vector of true means $\mathbold{\mu}$ is known. The fluid dynamics evolves according to the following steps at a given total allocation $N\geq N^0$: ~~\textbf{1)}~~If $g>0$, then $N_1$ increases with $N$ while other  $N_a$'s for $a\neq 1$  are held constant till $g=0$ ($g$ can be seen to be a monotonically  decreasing function of $N_1$).~~\textbf{2)}~~If $g=0$, let $B$ denote the set of minimum indexes. Thus, $I_{a}(N)$ are equal for all $a\in B$ (the equal value is denoted by $I_{B}(N)$) and $I_{a}(N)> I_{B}(N)$ for all $a \in \overline{B}^{c}$. Then, as $N$ increases, allocations
    $N_1$ and $(N_a:a \in B)$ increase such that $g$ remains equal to zero,  while the indexes in $B$  remain equal. In Proposition \ref{prop:uniqueness_of_fluid_sol}, we characterize and prove existence of such allocations, which the fluid dynamics will track. Later we observe that, $I_B$ increases atleast at a linear rate and indexes of arms in $\overline{B}^c$ stay bounded from above by a constant.~~\textbf{3)}~~If  $g<0$, let $B$ be the set of minimum index arms and $I_B$ be the index of arms in $B$. In this situation, $(N_a:a \in B)$ increase with $N$ keeping index of the arms in $B$ equal, while $N_1$ and $(N_a:a\in\overline{B}^c)$ are unchanged. With this $g$ also increases, since $g$ is a strictly increasing function of $N_a$ for every $a\in B$. The dynamics in this case are simple and described in Proposition \ref{lem:lemg<0} of Appendix \ref{appndx:fluid_model}.~~\textbf{4)}~~Once, $g=0$, and $B=\{2, \ldots, K\}$, we show
    that each allocation increases linearly with
    $N$ such that $N_a^\prime=\omega^{\star}_a$.\vspace{0.05in}

\bulletwithnote{The fluid ODEs}~~In Appendix \ref{appndx:fluid_model}, we argue that if the fluid dynamics has $g\neq 0$ at time $N^0$, then $g$ becomes zero within a finite time by following step 1 or step 3 of the description. This is easy to observe when $g>0$ at $N^0$, because $g$ is strictly decreasing in $N_1$, and $g\to -1$ as $N_1\to\infty$. Therefore, following step 1, $g$ becomes $0$ at some finite $N$. We now consider the situation where $g=0$ at some $N>N^0$.  Setting $B$ to the set of minimum index arms, the algorithm evolves by tracking the allocation $\mathbold{N}_{\overline{B}}(N)=(N_a(N):a\in\overline{B})$ defined through the system (\ref{eqn:fluid_sol_char}) in Proposition \ref{prop:uniqueness_of_fluid_sol}. By Proposition \ref{prop:uniqueness_of_fluid_sol}, $\mathbold{N}_{\overline{B}}(N)$ is continuously differentiable w.r.t. $N$. Applying IFT to (\ref{eqn:fluid_sol_char}), we obtain the ODEs via which the allocations and the indexes evolve and present them in Theorem \ref{lemma:pathode}.  \vspace{0.05in}
 
\bulletwithnote{Some definitions} Let $f(\mathbold{\mu},a,N)=-\frac{\partial}{\partial x}\left(\frac{d(\mu_1,x)}{d(\mu_a,x)}\right)\Big|_{x=x_{1,a}}$.  $f(\mathbold{\mu}, a, \mathbold{N})$  is strictly positive because $\frac{d(\mu_1,x)}{d(\mu_a,x)}$ is strictly decreasing with $x$ for $x\in(\mu_a,\mu_1)$. 

Let $\Delta_a=\mu_1-\mu_a$, and  $h_a(\mathbold{\mu},N_1,N_a)~=~ f(\mathbold{\mu},a,\mathbold{N})
\frac{N_1^2 \Delta_a}{(N_1+N_a)^2}$. For notational simplicity,
we denote $h(\mathbold{\mu},N_1,N_a)$ by $h_a$.
Further, for each $a$, we denote
$d(\mu_1, x_{1,a})$ by $d_{1,a}$ and 
$d(\mu_a, x_{1,a})$ by $d_{a,a}$.
Recall that for given allocations
$(N_a: a \in [K])$, $B$ denotes a set such that 
$ N_1 d_{1,a} + N_a d_{a,a}=I_{B}(N)$
for all $a \in B$, and
 $N_1 d_{1,a} + N_a d_{a,a}> I_{B}(N)$ 
 for all $a\in \overline{B}^{c}$. Let  $h(B)=\sum_{a \in B} h_a d_{a,a}^{-1}$, $h(N)= \sum_{a \in \overline{B}^{c}} h_a N_{a}$, and  $d_B= \left (\sum_{a \in B} d_{a,a}^{-1} \right )^{-1}$. \vspace{0.1in}
 
\begin{theorem}[\textbf{Fluid ODEs}] \label{lemma:pathode}
If at total allocation $N\geq N^0$, we have $g=0$, and $B$ is the set of minimum index arms, \textit{i.e.}, $B=\argmin{a\in[K]/\{1\}}{I_a(N)}$, then the following holds true:\vspace{-0.05in}
\begin{enumerate}[topsep=0pt, leftmargin=*]
    \item As $N$ increases,  and till $I_{B}(N)$ increases to hit an index in
 $\overline{B}^{c}$,
 \begin{equation} \label{eqn:lemfluid}
N_1' \!=\! \frac{N_1h(B)}{(N_1 +\sum_{a \in B}N_a) h(B)+ d_B^{-1}h(N)},~~~\text{and}~~~N_b' \!=\! \frac{N_b h(B) + d_{b,b}^{-1}h(N)}{(N_1 +\sum_{a \in B}N_a) h(B)+ d_B^{-1}h(N)},
\end{equation}
for all $b\in B$. It follows that, $I_{B}^\prime (N)= 
\frac{ I_{B}(N) h(B) + h(N)}{(N_1 +\sum_{a \in B}N_a) h(B)+ d_B^{-1}h(N)}$.

\item Furthermore, for $ a\in \overline{B}^{c}$, $I_{a}^\prime (N) =  N'_1 d_{1a} = \frac{N_1 h(B) d_{1a}}{(N_1 + \sum_{a\in B}N_a)h(B) + d^{-1}_B h(N)}$.

\item There exists a $\beta >0$, independent of $N$ such that  $I_{B}^\prime (N) > \beta$. In addition, for $ a\in \overline{B}^{c}$, $N'_a= 0$,  $I_a(N) \le N_a^0 d(\mu_a,\mu_1)$, thus the index is bounded from above. Thus, if $\overline{B}^{c} \ne \emptyset$,  $I_{B}(N)$ eventually catches up with  another index in $\overline{B}^{c}$. In this way, the set $B$ grows into $\{2,\hdots,K\}$.
\end{enumerate}
\end{theorem}

\bulletwithnote{Indexes once they meet must stay together} In Appendix \ref{appndx:fluid_index_meeting} we argue via contradiction that in our fluid dynamics, once a set of smallest indexes that are equal, increase and catch up with another index, their union then remains equal and increases together with $N$. This argument is important as it motivates the proof in our algorithm that after sufficient amount  of samples, once a sub-optimal arm is pulled, its index stays close to indexes of the other arms that have been pulled. 

\bulletwithnote{Bounding the time to reach optimal proportion}
We define $N^\star$ to be smallest time after $N^0$ at which the fluid dynamics has both $B=\{2,\hdots,K\}$ and $g=0$. Let $(N_a^\star:a\in[K])$ be the allocation at $N^\star$. We first argue that: \emph{there exists $i\in[K]$ such that $N_i^\star=N_i^0$}. We have argued before that if $g\neq 0$ at $N^0$, then $g$ becomes zero by some finite time, which we call $M$. By definition $M\leq N^\star$. Now if $B\neq\{2,\hdots,K\}$ at $M$ or $M=N^0$, then after time $M$ the fluid dynamics evolve by the ODEs in (\ref{eqn:lemfluid}) and $N^\star$ is the time at which $B$ becomes $\{2,\hdots,K\}$, which is finite by statement 3 of Theorem \ref{lemma:pathode}. In this case $i$ is the last element to be added to $B$. Otherwise if $B=\{2,\hdots,K\}$ at $M$ and $M>N^0$, the only way this can happen is $g<0$ in $[N^0,M)$. In this case, $i=1$ and $M=N^\star$. Since $g=0$ and $B=\{2,\hdots,K\}$ at time $N^\star$, Proposition \ref{prop:opt_cond} implies $N_a^\star=\omega_a^\star N^\star$ for all $a$.  Combining our observations, we have $\omega_i^\star N^\star=N_i^\star=N_i^0\leq N^0$. Hence $N^\star\leq \frac{N^0}{\omega_i^\star}\leq (\min_{a\in[K]}\omega_a^\star)^{-1} N^0$. Thus $N^\star$ is within a constant times of $N^0$. We bound $T_{stable}$ of Proposition \ref{prop:convergence_to_opt_cond:main_body} using a similar argument.

\begin{remark}[\textbf{Incorporating the stopping rule into the fluid dynamics}]
    {\em  At stopping time the idealized GLLR (which is the GLLR defined in Section \ref{sec:at2} by replacing the estimated means with the true means) just exceeds $\log(1/\delta)$. 
    Since the idealized GLLR grows linearly with the allocated samples,
    the stopping time increases linearly with $\log(1/\delta)$. Since the time
    for fluid dynamics to reach stability is independent of $\delta$,
for small $\delta$, stability will be reached before the algorithm stops.}
\end{remark}

\begin{remark}[\textbf{$\beta$-fluid dynamics}]
   \emph{In Appendix \ref{appndx:beta_fluid_dynamics}, we construct the fluid dynamics for the $\beta$-EB-TCB algorithm \cite{jourdan2022top} using IFT. We prove that, for every $\beta\in(0,1)$, the $\beta$-fluid dynamics started at some time $N^0>0$ reach the $\beta$-optimal proportion (which is the solution to the max-min problem \ref{eqn:max_min_formulation} with the added constraint $\omega_1=\beta$) by a time which is a constant times $N^0$. } 
\end{remark}

\section{Convergence of algorithmic allocations to the optimal proportions}\label{sec:proof_sketch}
\vspace{-0.05in}
We now outline the proof steps for Proposition \ref{prop:closeness_of_allocations}. To simplify our analysis, we analyze the AT2 algorithm after the random time $T_0$ defined as, 
\[T_{0}=\inf\Big\{N^\prime\geq 1~\Big\vert~\forall a\in[K]~\text{and}~N\geq N^\prime,~|\widetilde{\mu}_a(N)-\mu_a|\leq \epsilon(\mathbold{\mu})\cdot N^{-3\alpha/8}\Big\},\]
after which the estimates $\mathbold{\widetilde{\mu}}(N)$ are converging to $\mathbold{\mu}$. Recall that $\alpha\in(0,1)$ is the exploration parameter, and $\epsilon(\mathbold{\mu})>0$ is a constant
depending only on $\mathbold{\mu}$. By the definition of $\epsilon(\mathbold{\mu})$ in Appendix \ref{sec:spef}, we have $\widetilde{\mu}_a(N)<\widetilde{\mu}_1(N)$ for all $a\neq 1$ and $N\geq T_0$. As a result, arm $1$ becomes the empirically best arm after $T_0$. In Appendix \ref{sec:hitting_time}, we use Chernoff's bound to prove that $\mathbb{P}_{\mathbold{\mu}}(T_0=n+1)=\exp(-\Omega(n^{\alpha/4}))$, which implies $\mathbb{E}_{\mathbold{\mu}}[T_0]<\infty$. In the following discussion, all the results mentioned are true for both AT2 and IAT2 algorithms.

Proposition \ref{prop:convergence_to_opt_cond:main_body} shows that the allocations made by the proposed algorithm converges to the first order condition satisfied by the optimal proportion $\mathbold{\omega}^\star=(\omega_a^\star:a\in[K])$ at a rate $O(N^{-3\alpha/8})$, where $\alpha$ is the exploration parameter. \vspace{0.1in}

\begin{proposition}\label{prop:convergence_to_opt_cond:main_body}
    There exists a random time $T_{stable}\geq T_{0}$ satisfying $\mathbb{E}_{\mathbold{\mu}}[T_{stable}]<\infty$ and a constant $C_2>0$ depending on $\mathbold{\mu}, \alpha$ and $K$, and independent of the sample paths, such that, for $N\geq T_{stable}$
\begin{align}\label{eqn:g_convergence:main}
        \left|g(\mathbold{\mu},\mathbold{\widetilde{\omega}}(N))\right|~&=~\Big\vert~\sum_{a\neq 1}\frac{d(\mu_1,x_{1,a}(N))}{d(\mu_a,x_{1,a}(N))}-1~\Big\vert~\leq~ C_2 N^{-3\alpha/8},
\end{align}\vspace{-0.1in}\begin{align}\label{eqn:index_closeness:main}
      \text{and}\qquad  \max_{a,b\in[K]/\{1\}} \left|I_a(N)-I_b(N)\right|~&\leq~ C_2 N^{1-3\alpha/8}.
    \end{align}  
\end{proposition}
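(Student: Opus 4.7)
The plan is to carry the qualitative picture of the fluid dynamics in Section~\ref{sec:fluid_model} into the noisy algorithmic setting, using the post-$T_0$ concentration of empirical means as the mechanism that makes the stochastic iteration track the fluid ODEs. Throughout I work on the event $\{N \ge T_0\}$, where the under-exploration rule (steps~1--2 of AT2) ensures $\widetilde{N}_a(N) \ge N^\alpha$ and $|\widetilde{\mu}_a(N)-\mu_a| \le \epsilon(\mathbold{\mu}) N^{-3\alpha/8}$ for every arm $a$. Since $g$ and the indexes are $C^1$ in the means on the compact set $\prod_a [\mu_a - \epsilon(\mathbold{\mu}),\, \mu_a + \epsilon(\mathbold{\mu})]$, the empirical decision variables $g(\mathbold{\widetilde{\mu}}, \mathbold{\widetilde{N}})$ and $\mathcal{I}_a(N)$ differ from their clean-mean counterparts $g(\mathbold{\mu}, \mathbold{\widetilde{N}})$ and $I_a(N)$ by $O(N^{-3\alpha/8})$ and $O(N^{1-3\alpha/8})$ respectively. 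It therefore suffices to prove both displays with the clean-mean quantities and absorb the gap into $C_2$.

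For (\ref{eqn:g_convergence:main}) I would use a one-step drift argument on $|g|$. The anchor $g(\mathbold{\mu}, \mathbold{\widetilde{\omega}})$ is strictly decreasing in $\widetilde{N}_1$ and strictly increasing in each $\widetilde{N}_a$ for $a \ne 1$, with partial derivatives of order $1/N$, so a single sample changes $g$ by $\Theta(1/N)$ with a sign determined by the algorithmic rule. Whenever $g(\mathbold{\mu}, \mathbold{\widetilde{\omega}}(N)) > C N^{-3\alpha/8}$ for a sufficiently large $C$, step~3 of AT2 forces $A_N = \hat{i}_{N-1} = 1$ and pushes $g$ down by an amount of order $1/N$; when $g < -C N^{-3\alpha/8}$, step~4 fires and $g$ increases by the same order. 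Combined with the fact that each step moves $g$ by only $O(1/N)$, $|g|$ cannot persist outside $[-C N^{-3\alpha/8}, C N^{-3\alpha/8}]$, and the desired envelope follows after a deterministic number of additional steps beyond $T_0$.

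The bound (\ref{eqn:index_closeness:main}) is the harder half, and I would establish it by an inductive argument over the active set $B(N) = \arg\min_{a \ne 1} I_a(N)$, mirroring Section~\ref{sec:index_meeting_fluid}. An arm's index stays constant except when it is sampled, only challenger arms in $B(N)$ are pulled in step~4, and each such pull raises its index by $\Theta(1)$, so $I_{\min}(N) := \min_{a \ne 1} I_a(N)$ grows at a rate bounded below by a discrete analogue of $I_B'(N) \ge \beta$ in Theorem~\ref{lemma:pathode}(2). Consequently $I_{\min}$ sequentially catches up with each initially larger index in time proportional to that gap; once it catches an index $b$, a quantitative version of the contradiction in Section~\ref{sec:index_meeting_fluid} shows that any subsequent drift of $I_b$ below $I_{\min} + C' N^{1-3\alpha/8}$ would make $b$ the strict argmin, force it to be sampled, and restore parity within $O(N^{1-3\alpha/8})$ steps. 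Taking $T_{stable}$ to be the first $N \ge T_0$ at which $B(N) = \{2,\ldots,K\}$ and all indexes are inside the envelope, $\mathbb{E}_{\mathbold{\mu}}[T_{stable}] < \infty$ follows from $\mathbb{E}_{\mathbold{\mu}}[T_0] < \infty$ (Lemma~\ref{lem:hitting_time_is_finite_as}) together with applying the linear catch-up bound at most $K-1$ times.

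The main obstacle is the quantitative ``indexes do not separate'' claim. The fluid argument of Section~\ref{sec:index_meeting_fluid} is infinitesimal and uses smooth derivatives of $g$, whereas algorithmically we face discrete $1/N$ jumps and $N^{-3\alpha/8}$-size noise in the mean estimates feeding the decision rule. The technical work lies in showing that the $N^{1-3\alpha/8}$ envelope is self-reinforcing rather than expanding, which requires balancing the rate at which an under-served arm's index recovers against the accumulated noise; this is precisely where the exponent $3\alpha/8$, and thus the role of $\alpha$ in the exploration/repair tradeoff, enters.
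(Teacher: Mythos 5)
Your overall strategy (transfer the noise via the $N^{-3\alpha/8}$ concentration after $T_0$, a one-step drift/induction argument for the anchor function, and a fluid-inspired ``indexes do not separate'' contradiction for the index bound) is the same as the paper's, but two of your key quantitative claims do not hold as stated, and they are exactly where the paper's technical work lies. First, the assertion that one sample changes $g(\mathbold{\mu},\mathbold{\widetilde{N}}(N))$ by $\Theta(1/N)$ (with a matching lower bound, which your drift argument needs in order to beat the shrinkage of the envelope) is only true when \emph{every} $\widetilde{N}_a(N)=\Theta(N)$: the relevant partial derivatives are $\partial g/\partial N_a=\Theta(N_a/N_1^2)$ and $\partial g/\partial N_1=-\Theta(\sum_a N_a^2/N_1^3)$, and the exploration floor $\widetilde{N}_a\ge N^\alpha$ alone gives one-step moves as small as $N^{\alpha-2}$, which for small $\alpha$ is dominated by the $O(N^{-(1+3\alpha/8)})$ shrinkage of the target interval, so the induction does not close. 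The paper first proves linear sampling of all arms (Lemma~\ref{lem:universal_g_bound}, Corollary~\ref{cor:N_1_is_Omega_N}, Lemmas~\ref{lem:Na_by_Nb_bound} and~\ref{lem:N_a_are_Theta_N}), using only crude boundedness of $g$ after a crossing time together with the index-comparison sampling rule, and only then runs the induction; your proposal silently assumes this $\Theta(N)$ property, and it is not obtainable from $T_0$ and exploration alone.

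Second, the statement ``an arm's index stays constant except when it is sampled'' is false: $I_a(N)=\widetilde N_1 d(\mu_1,x_{1,a})+\widetilde N_a d(\mu_a,x_{1,a})$ increases whenever arm $1$ is sampled (in the fluid model $I_a'(N)=N_1' d_{1,a}$ for $a\in\overline{B}^c$, equation~(\ref{eqn:der_indexa})). This is not cosmetic: the genuine difficulty, resolved in the paper by Lemma~\ref{lem:period}, is to show that an arm sampled at iteration $N$ cannot thereafter be starved for more than $O(N^{1-3\alpha/8})$ iterations, i.e.\ that the indexes of the \emph{other} challengers, which keep rising through the common $\widetilde N_1$ term, cannot stay below arm $a$'s index for long; proving this requires the anchor-constraint identity relating $\Delta\widetilde N_1/\Delta\widetilde N_b$ to $\widetilde N_1/\widetilde N_b$ (Lemma~\ref{lem:triv1}) and the lower bound $\Delta\widetilde N_{b}\ge D_2 R$ (Lemma~\ref{lem:triv2}), culminating in the contradiction of Corollary~\ref{cor:index_are_omegaR}. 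Your mechanism --- ``if $I_b$ drifts below, $b$ becomes the argmin, is sampled, and parity is restored within $O(N^{1-3\alpha/8})$ steps'' --- does not supply this: a single pull raises an index by only $O(1)$, so an $\Omega(N^{1-3\alpha/8})$ deficit cannot be repaired in one catch-up, and without the return-time lemma you have no control on how long the deficit persists. Until these two points are filled in (linear sampling of all arms, and a quantitative return-time bound replacing the ``constant when unsampled'' premise), the proposed proof of both displays, and hence of the finiteness of your $T_{stable}$, is incomplete.
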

\vspace{0.1in}

Before outlining the proof of Proposition \ref{prop:convergence_to_opt_cond:main_body}, we explain how Proposition \ref{prop:closeness_of_allocations} follows from Proposition \ref{prop:convergence_to_opt_cond:main_body} just using the IFT. 

\bulletwithnote{Proof idea of Proposition \ref{prop:closeness_of_allocations}} If our algorithm follows optimal proportions at time $N$, \textit{i.e.}, $\widetilde{N}_a(N)=\omega_a^\star N$ for all $a\in[K]$,  RHS of (\ref{eqn:g_convergence:main}) and (\ref{eqn:index_closeness:main}) becomes zero by Proposition \ref{prop:opt_cond}. Moreover, by Proposition \ref{prop:opt_cond} $\mathbold{\omega}^\star$ uniquely satisfies the conditions: anchor function is zero and all alternative arms have equal index. (\ref{eqn:g_convergence:main}) and (\ref{eqn:index_closeness:main}) imply that, $\widetilde{\mathbold{\omega}}(N)$ satisfies these conditions upto a perturbation of $C_2 N^{-3\alpha/8}$ after $T_{stable}$. Using the IFT, we prove that the algorithm's allocation is a Lipschitz continuous function of the perturbation when it is sufficiently small. Hence, by choosing $T_{stable}$ large enough and using Lipschitzness, we get  $\max_{a\in[K]}\left|\widetilde{\omega}_a(N)-\omega_a^\star\right|=O(N^{-3\alpha/8})$. Closeness of $\widetilde{\mathbold{\mu}}(\cdot)$ to $\mathbold{\mu}$ follows from the fact that $T_{stable}\geq T_0$. \hfill\qedsymbol
\vspace{0.05in}

\noindent\textbf{Proof idea of Proposition \ref{prop:convergence_to_opt_cond:main_body}:}~We separately outline the proofs of (\ref{eqn:g_convergence:main}) and (\ref{eqn:index_closeness:main}) in Proposition \ref{prop:convergence_to_opt_cond:main_body}. In the following discussion, constants hidden in $O(\cdot), \Omega(\cdot)$ and $\Theta(\cdot)$ notations are independent of the sample path after time $T_{stable}$. To simplify our analysis, we choose $T_{stable}$ such that exploration stops after $T_{stable}$, \textit{i.e.}, $\mathcal{V}_N=\emptyset$ for $N\geq T_{stable}$ (see the discussion before Definition \ref{def:T5} in Appendix \ref{appndx:anchor_convergence} for justification).\vspace{0.05in}

\noindent\textit{\textbf{Key ideas in the proof of (\ref{eqn:g_convergence:main})}:}  We prove (\ref{eqn:g_convergence:main}) via induction. We prove the existence of a constant $D>0$ such that at every $N\geq T_{stable}$, whenever the actual anchor value $g(\mathbold{\mu},\widetilde{\mathbold{N}}(N))$ (we denote using $g(N)$) satisfies $|g(N)|>DN^{-3\alpha/8}$, our algorithm pushes $g(\cdot)$ towards zero by $\Theta(1/N)$ in the next iteration through steps 3 and 4. Whereas the interval $[-C_2 N^{-3\alpha/8},C_2 N^{-3\alpha/8}]$ shrinks by $O(N^{-(1+3\alpha/8)})$ from both ends. Since $N^{-(1+3\alpha/8)}<<N^{-1}$, we choose the constant $C_2$ large enough such that $g(\cdot)$ stays in the said interval in iteration $N+1$. 
\vspace{0.05in}

\noindent\textit{\textbf{Key ideas in the proof of (\ref{eqn:index_closeness:main})}:} The following lemma forms a crucial part of the argument for proving closeness of the indexes in the non-fluid setting. \vspace{0.1in}

\begin{lemma}\label{lem:non_fluid_index_meeting}
    There exists a random time $T_{good}\in[T_0,T_{stable}]$ such that the algorithm picks all the alternative arms in $[K]/\{1\}$ atleast once between the iterations $T_{good}$ and $T_{stable}$. Moreover, for $N\geq T_{good}$, if the algorithm picks some arm $a\in[K]/\{1\}$ at iteration $N$, then it picks arm $a$ again within a next $O(N^{1-3\alpha/8})$ iterations.
\end{lemma}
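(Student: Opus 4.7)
I would split the argument into two parts matching the two claims.

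\textbf{Existence of $T_{good}$.} The exploration rule in Steps 1--2 of AT2/IAT2 guarantees $\widetilde{N}_a(N) \geq N^\alpha - 1$ for every arm $a$ and all sufficiently large $N$, so every challenger arm is pulled infinitely often. I would define $T_{good}$ as the smallest $N \geq T_0$ at which every arm in $[K] \setminus \{1\}$ has been pulled at least once in $(T_0, N]$; this is a.s.\ finite with finite mean since the inter-arrival times between exploration-forced pulls of any fixed arm can be dominated by an i.i.d.\ sequence with finite mean. The definition of $T_{stable}$ then includes the constraint $T_{stable} \geq T_{good}$.

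\textbf{Recurrence property.} Fix $N \geq T_{good}$ and suppose arm $a$ is pulled at iteration $N$. I would prove arm $a$ is pulled again within $M = O(N^{1 - 3\alpha/8})$ iterations via three ingredients. First, the smoothness of $d$ and $|\widetilde{\mu}_b(N) - \mu_b| = O(N^{-3\alpha/8})$ for $N \geq T_0$ yield a Taylor-expansion noise bound $|\mathcal{I}_b(N') - I_b(N')| \leq C_3 N'^{\,1 - 3\alpha/8}$ uniformly in $b$ for $N' \geq T_0$, with $C_3$ depending only on $\mathbold{\mu}$. Second, Step~4 gives $\mathcal{I}_a(N-1) = \min_{c \neq 1} \mathcal{I}_c(N-1)$, which combined with the noise bound implies $I_a(N-1) \leq I_c(N-1) + 2 C_3 N^{1 - 3\alpha/8}$ for every challenger $c$. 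Third, the first-order condition $N_1 d_2(\mu_1, x_{1,b}) + N_b d_2(\mu_b, x_{1,b}) = 0$ defining $x_{1,b}$ shows that each pull of a challenger $b \neq 1, a$ increases $I_b$ by exactly $d(\mu_b, x_{1,b}) = \Theta(1)$ while leaving $I_a$ unchanged, and each pull of arm $1$ shifts every $I_c$ upward by $d(\mu_1, x_{1,c}) = \Theta(1)$ modulo $O(1/N)$ corrections. Adapting the discrete analogue of the computation around (\ref{eqn:indexdiff3}) in Section~\ref{sec:index_meeting_fluid}, the gap $I_b - I_a$ strictly widens by $\Omega(1)$ per pull of $b$ even after accounting for intervening arm-1 pulls. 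Combining with the noise-adjusted inequality $I_b(N') \leq I_a(N') + 2 C_3 N'^{\,1 - 3\alpha/8}$ that must hold whenever $b$ is pulled in $(N, N+M]$ caps the per-arm pull count at $O(N^{1 - 3\alpha/8})$. Summing over the $K-2$ challengers $b \neq 1, a$ gives $\sum_{b \neq 1, a} k_b = O(N^{1 - 3\alpha/8})$, and using (\ref{eqn:g_convergence:main}) together with the $\Theta(1/N)$ per-step drift of $g$ (so that $|g| = O(N^{-3\alpha/8})$ forces arm-1 and challenger pulls to alternate in bounded ratio) bounds the arm-1 pulls $k_1$ by a constant multiple of this sum, yielding $M = O(N^{1 - 3\alpha/8})$.

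\textbf{Main obstacle.} The delicate step is the third ingredient: transporting the fluid-level derivative argument of Section~\ref{sec:index_meeting_fluid} into the noisy algorithmic trajectory. One must carry the $O(N^{-3\alpha/8})$ empirical-mean noise and the second-order Taylor remainders of order $O((k_1 + k_b)^2 / \widetilde{N}_a(N))$ through the case split on the sign of $d(\mu_1, x_{1,a}) - d(\mu_1, x_{1,b})$, and verify that the net widening of the gap $I_b - I_a$ per pull of $b$ dominates the $O(N^{-3\alpha/8})$ noise budget; the exploration lower bound $\widetilde{N}_a(N) = \Omega(N^\alpha)$ is essential for making the remainders negligible relative to the leading $\Theta(1)$ increments. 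The IAT2 variant needs only a trivial adjustment: its extra $\log \widetilde{N}_a$ term in the tie-breaking index is $O(\log N) = o(N^{1 - 3\alpha/8})$ and can be absorbed into $C_3$ without changing the asymptotic order.
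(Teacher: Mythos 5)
Your outline of the recurrence step has the right raw ingredients (the $O(N^{1-3\alpha/8})$ noise bound on $|\mathcal{I}_b-I_b|$, the near-equality $\mathcal{I}_a(N-1)=\min_c\mathcal{I}_c(N-1)$, the case split on the sign of $d(\mu_1,x_{1,a})-d(\mu_1,x_{1,b})$, and the use of $|g|=O(N^{-3\alpha/8})$), but the bookkeeping you build on them — a per-challenger cap "$I_b-I_a$ widens by $\Omega(1)$ per pull of $b$ even after accounting for intervening arm-1 pulls", then summing $k_b$ over $b\neq 1,a$ — has a genuine gap. Arm-1 pulls do not shift all challenger indexes by comparable amounts: a pull of arm 1 raises $I_a$ by $d(\mu_1,x_{1,a})$ and $I_b$ by $d(\mu_1,x_{1,b})$, so in the case $d(\mu_1,x_{1,a})>d(\mu_1,x_{1,b})$ the interleaved arm-1 pulls \emph{shrink} the gap $I_b-I_a$, and the only quantitative control available is the discrete analogue of (\ref{eqn:log_derivative}), which bounds $\Delta\widetilde N_1$ relative to the challenger with \emph{maximal relative growth} over the block (this is Lemma~\ref{lem:triv1}), not relative to an arbitrary challenger $b$. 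For a non-maximal $b$ you cannot bound $\Delta\widetilde N_1$ by a constant times $k_b$, so the claimed $\Omega(1)$ net widening per pull of $b$, and hence the per-arm cap, does not follow. The paper avoids this by working only with $b(N,R)=\arg\max_{j\neq 1}\Delta\widetilde N_j(N,R)/\widetilde N_j(N)$: for $R\gtrsim N^{1-3\alpha/8}$ this arm receives a constant fraction $D_2R$ of the block's samples (Lemma~\ref{lem:triv2}), the index-gap estimate with the case split is proved for that arm alone (Lemma~\ref{lem:indexes_are_omegaR}, the discrete version of the computation around (\ref{eqn:indexdiff3})), and the contradiction is extracted at the \emph{last} iteration $\tau(N,R)$ at which $b(N,R)$ was pulled, where the algorithm's challenger rule forces $\mathcal{I}_{b(N,R)}\leq\mathcal{I}_a$ while the estimate gives $\mathcal{I}_a\leq\mathcal{I}_{b(N,R)}-\Omega(N^{1-3\alpha/8})$. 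Your scheme could be repaired along these lines (apply your pull-time inequality only to the max-relative-growth arm, then convert its count into a bound on all challenger pulls via $\widetilde N_j(N)=\Theta(N)$), but as written the central step fails.

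Two secondary points. First, your construction of $T_{good}$ does not deliver the stated property: you define $T_{good}$ as the end of a sweep starting at $T_0$ and only impose $T_{stable}\geq T_{good}$, which guarantees all challengers are picked in $(T_0,T_{good}]$, not between $T_{good}$ and $T_{stable}$; the paper instead takes $T_{good}=T_{7,M_6}$ and \emph{defines} $T_{stable}=T_{8,M_6}$ as the first iteration after $T_{good}$ by which every challenger has been picked (and separately verifies this $T_{stable}$ works for Proposition~\ref{prop:closeness_of_allocations}). Second, the finiteness argument via i.i.d.\ domination of inter-arrival times of exploration-forced pulls is not correct — exploration epochs spread out at rate $\Theta(N^{1-\alpha})$ (Proposition~\ref{prop:explo}) — and is also unnecessary: after $T_7$ one has $\widetilde N_a(N)=\Theta(N)$ for all $a$ (Lemma~\ref{lem:N_a_are_Theta_N}), which gives $T_{8,M}\leq C'T_{7,M}$ and hence finite expectation directly.
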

Proof of Lemma \ref{lem:non_fluid_index_meeting} (in Appendix \ref{appndx:index_convergence}) is technically involved and requires proving several supplementary lemmas. Several of the key steps of this proof borrow insights from the fluid dynamics, and we outline them in Appendix \ref{appndx:proof_sketches}. Here we assume Lemma \ref{lem:non_fluid_index_meeting} and sketch the argument by which (\ref{eqn:index_closeness:main}) follows from it for the AT2 algorithm.   For any $a,b\neq 1$ and after any $N\geq T_{stable}$, $\mathcal{I}_a(\cdot)$ and $\mathcal{I}_b(\cdot)$ crosses each other before the algorithm picks both $a,b$ atleast once. We can show that, for $j=a,b$ and $N\geq T_{stable}$, $\mathcal{I}_j(N)$ and $I_j(N)$ differ by $O(N^{1-3\alpha/8})$. As a result, when $\mathcal{I}_a(\cdot)$ crosses $\mathcal{I}_b(\cdot)$ at $N+R$, we have  $|I_a(N+R)-I_b(N+R)|=O((N+R)^{1-3\alpha/8})=O(N^{1-3\alpha/8})$ since $R=O(N^{1-3\alpha/8})$. For $j=a,b$, the partial derivatives of $I_j(\cdot)$ w.r.t. $\widetilde{N}_1$ and $\widetilde{N}_j$ are non-negative and bounded from above by $\max\{d(\mu_1,\mu_j),d(\mu_j,\mu_1)\}=O(1)$. As a result, $|I_j(N+R)-I_j(N)|=O(R)=O(N^{1-3\alpha/8})$ for $j=a,b$. Hence, $|I_a(N)-I_b(N)|\leq |I_a(N+R)-I_b(N+R)|+\sum_{j=a,b}|I_j(N+R)-I_j(N)|=O(N^{1-3\alpha/8})$. \hfill \qedsymbol
\vspace{0.1in}

\bulletwithnote{Bounding $T_{stable}$} In Appendix \ref{appndx:closeness_of_allocations}, we choose $T_{good}$ and $T_{stable}$ such that $T_{stable}$ is the time after $T_{good}$ by which the algorithm picks all the sub-optimal arms atleast once. By Proposition \ref{prop:closeness_of_allocations}, the algorithm approximately matches $\mathbold{\omega}^\star$ after $T_{stable}$. Using an argument similar to the one for bounding time to reach the optimal proportion in the fluid dynamics of Section \ref{sec:fluid_model}, we can prove that $T_{stable}\lessapprox(\omega_{\min}^\star)^{-1} T_{good}$ a.s. in $\mathbb{P}_{\mathbold{\mu}}$, where $\omega_{\min}^\star=\min_{a\in[K]}\omega_a^\star$ (Lemma \ref{lem:time_to_stability:non_fluid}, Appendix \ref{appndx:time_to_stability:non_fluid}). 
\vspace{0.1in}

\noindent\textbf{Role of forced exploration in analysis:}~As we observe in the numerical results in Appendix \ref{app:exp_exploration}, forced exploration (step 1 of our algorithm) does not increase the observed sample complexity. We emphasize that without the forced exploration, Propositions \ref{prop:convergence_to_opt_cond:main_body} and \ref{prop:closeness_of_allocations} continue to hold if we can show that the proposed algorithm perform sufficient exploration over the instance. That is, after a random time $T$ depending on the instance and satisfying $\mathbb{E}[T]<\infty$, every arm has $\widetilde{N}_a(N)=\Omega(\sqrt{N})$. As a result, upon proving sufficient exploration, asymptotic optimality will follow without the forced exploration step.

In Appendix \ref{app:exp_exploration}, we see in the numerical experiments, when there is no forced exploration, AT2's sample complexity blows up over instances where multiple sub-optimal arms have equal mean. On the other hand, IAT2 performs optimally over the same instances and its sample complexity remains unaffected even when there is no forced exploration. To understand AT2's sample complexity blow up when multiple sub-optimal arms have the same mean, consider the sample paths where the best arm observes unusually small values in the first few samples. As a result, with positive probability, AT2 confuses one of the multiple sub-optimal arms with equal mean as the best arm and stay stuck sampling between those sub-optimal arms forever. However, IAT2 avoids such situation because of the exploration of every sub-optimal arm induced by the extra logarithmic term in the index.  Based on these observations, we make the following conjectures:~\textbf{1)}~AT2 performs sufficient exploration over instances where the means of all the sub-optimal arms are distinct, and~\textbf{2)}~IAT2 performs sufficient exploration over all instances including when some of the sub-optimal arms may have equal means.

\section{Numerical results}\label{sec:experiments}
\vspace{-0.1in}
In this section, we numerically demonstrate the dynamics followed by the algorithm AT2, and also compare its performance against the $\beta$-EB-TCB algorithm of \cite{jourdan2022top} for different values of $\beta$, and TCB algorithm of \cite{mukherjee2023best}. 
We consider $4$ armed Gaussian bandit with unit variance and mean vector $\mu = [10, 8, 7, 6.5]$. We simulate one sample path of the AT2 without stopping rule, and plot the value of normalized indexes of the sub-optimal arms. Figure~\ref{mainfig:index_1} demonstrates that the normalised indexes once close remain close, and hence,
AT2 closely mimics the fluid path. In Figure~\ref{mainfig:sc_comparison}, we plot the sample complexities of the (I)AT2, (I)TCB, and $\beta$-EB-(I)TCB, for different choices of $\beta$, and observe that (I)AT2 outperforms all the other algorithms. Note that we use the same forced exploration rule and stopping rule for all algorithms. 
\begin{figure*}[h!]
    \centerline{
    \begin{minipage}{.5\textwidth}
      \centering
      \includegraphics[trim={.15cm 0 1 0},clip,width=\linewidth]{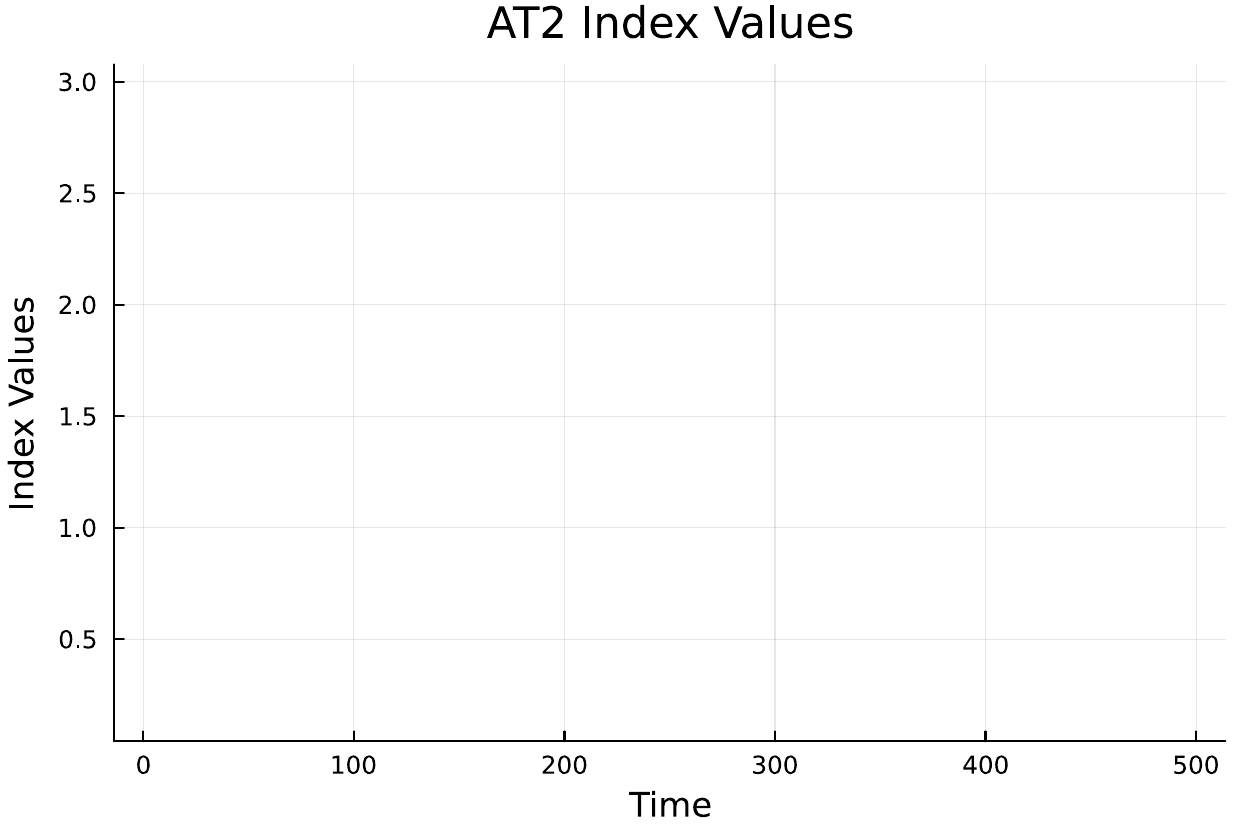}
      \caption{Normalised index on $1$ sample path.}
      \label{mainfig:index_1}
    \end{minipage}%
    \hspace{0.1in}\begin{minipage}{.5\textwidth}
      \centering
      \includegraphics[trim={.15cm 0 1 0},clip,width=\linewidth]{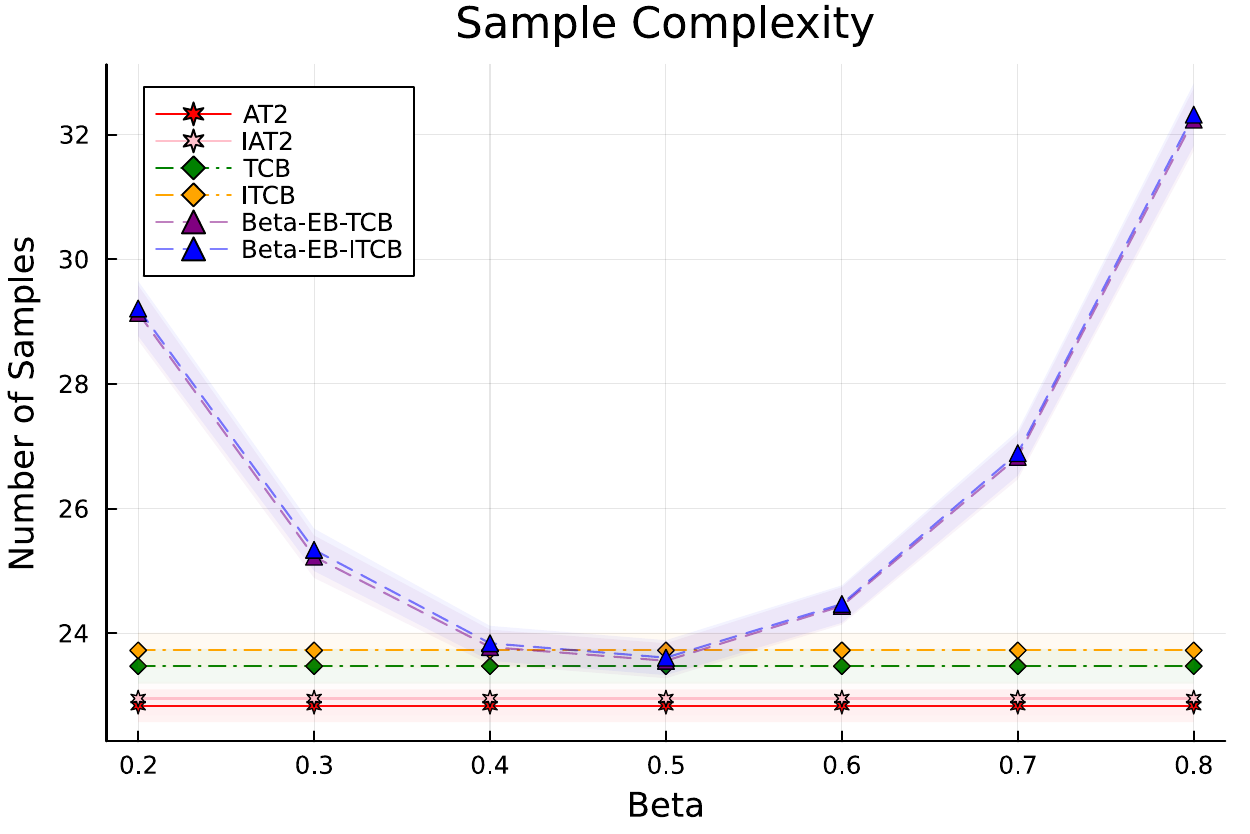}
      \caption{Sample complexity comparison.}
      \label{mainfig:sc_comparison}
    \end{minipage}}
\end{figure*}
In Appendix \ref{appndx:experiments}, we demonstrate by several examples that both the AT2 and IAT2 algorithms significantly outperform the $\beta$-EB-TCB and $\beta$-EB-ITCB  of \cite{jourdan2022top} when $\beta$ is chosen different from the optimal $\beta$. We also illustrate that the AT2 and IAT2 algorithms have average sample complexity significantly lesser than the TCB and ITCB policies of \cite{mukherjee2023best}. In fact, we observe numerically, that (I)TCB doesn't quite satisfy the asymptotic optimality conditions (Figure~\ref{fig:anchor_comparison_gauss}, Appendix~\ref{appndx:experiments}). Next, in Appendix~\ref{app:exp_exploration}, we study the effect of choice of the forced exploration parameter $\alpha$ on the sample complexities of AT2 and IAT2. Additionally, we conduct simulations for natural extensions of these algorithms to bandits with distributions supported in $[0,1]$ (Appendix~\ref{appndx:experiments_bded_supports}).

\section{Conclusion}
\vspace{-0.05in}
We considered the best-arm identification problem under the popular top-2 framework. 
In the literature,  top-2 framework involves sequentially identifying  
 the empirical best arm and the most-likely challenger
arm, and sampling the empirical best with probability $\beta$ and the other with the complimentary probability. 
However, optimal $\beta$ was not known. \cite{mukherjee2023best} recently  proposed a
 deterministic rule for deciding between the empirical best and the challenger arm. 
In this paper, we have provided a most natural first order optimality condition based rule
to help decide between the two. We showed that our associated algorithm is asymptotically optimal,
and empirically performs better than \cite{mukherjee2023best} both in sample and computational complexity.
Our another key contribution was to identify the underlying limiting ordinary differential equation based fluid dynamics   that our algorithm  tracks. This structure also provides important insights which help prove convergence
of the proposed algorithm.
\bigskip

\noindent {\bf Acknowledgments:} {\em We thank Arun Suggala and Karthikeyan Shanmugam 
from Google Research Bangalore for initial discussions on this project. The second and the third author initiated this work while visiting Google Research in Bangalore. 
}

\bibliographystyle{plain}
\bibliography{main}

\newpage
\appendix

\section*{Appendix}
\section{Outline}

Below we provide a brief outline of the topics presented in the appendices. 
\begin{enumerate}[leftmargin=*]
    \item Algorithm (\ref{code:AT2}) and (\ref{code:IAT2}) are, respectively, the pseudocodes of the AT2 and IAT2 algorithms introduced in Section \ref{sec:at2}.
    
    \item Appendix \ref{sec:spef}:~~We define the single parameter exponential family (SPEF) of distributions, and prove several inequalities bounding the index function, anchor function, and the derivatives of the anchor function, which are crucial for our analysis.  

     \item Appendix \ref{appndx:Jacobian}:~~We introduce a framework using which we apply the implicit function theorem for proving several properties related to the fluid dynamics and the algorithm's allocations. 
 
    \item Appendix \ref{appndx:proof_in_setup}:~~We prove Propositions \ref{prop:uniqueness_of_fluid_sol} and \ref{prop:opt_cond} from Section \ref{sec:setup_lb}.  

    \item Appendix \ref{appndx:fluid_model}:~~We provide the proofs of the results mentioned in Section \ref{sec:fluid_model}, and also construct the fluid dynamics for the $\beta$-EB-TCB algorithm (\cite{jourdan2022top}) in Appendix \ref{appndx:beta_fluid_dynamics}. 

    \item Appendix \ref{appndx:proof_sketches}:~~We provide a heuristic argument to show that if the minimum index meets with the index of some sub-optimal arm $a\neq 1$ following the ODEs  in Theorem \ref{lemma:pathode}, then arm $a$ must be incorporated into the set of minimum index arms. We argue via contradiction to show that, if this is not the case then index of arm $a$ becomes strictly less than the minimum index of the arms, which implies a contradiction. Several of the key steps in the proof of Lemma \ref{lem:non_fluid_index_meeting} are extensions of the said argument to the non-fluid setting of the algorithm with additional terms because of the noise in the estimates. 

    \item Appendix \ref{appndx:non_fluid}:~~We prove Proposition \ref{prop:closeness_of_allocations}, \ref{prop:convergence_to_opt_cond:main_body}, and provide detailed proofs of all the results mentioned in Section \ref{sec:proof_sketch}. 

    \item Appendix \ref{sec:theorem3.1}:~~We provide the detailed proof of Theorem \ref{thm:main:asympt_optimality}.

    \item Appendix \ref{appndx:extension_to_bded_support}:~~We describe a natural extension of the proposed AT2 and IAT2 algorithms to the class of distributions with support contained in $[0,1]$. We do not theoretically analyze this algorithm owing to space constraints. However we compare the proposed algorithm with existing algorithms experimentally in Appendix \ref{appndx:experiments_bded_supports}.  
    
    \item Appendix \ref{appndx:experiments}:~~We compare the performance of the proposed algorithms against existing algorithms through numerical experiments. We also illustrate how our algorithm follows the fluid dynamics as the no. of samples increase. 
\end{enumerate}

\begin{algorithm}[h!]
    \DontPrintSemicolon
    \caption{Anchored Top-Two (AT2) Algorithm}\label{code:AT2}
    \SetKwInOut{Input}{Input}
    \Input{Confidence parameter $\delta>0$, exploration parameter $\alpha\in(0,1)$}
    \For{$N\geq 1$}{
        $\mathcal{V}_N\gets\left\{a\in[K]~\vert~\widetilde{N}_a(N-1)<N^{\alpha}\right\}$\; 

        \If{$\mathcal{V}_N\neq\emptyset$}{
            $A_N\gets\argmin{a\in[K]}{\widetilde{N}_a(N-1)}$  \tcp*{Forced exploration}
        }
        \ElseIf{$g(\mathbold{\widetilde{\mu}}(N-1),\mathbold{\widetilde{N}}(N-1))>0$}{
            $A_N\gets\hat{i}_{N-1}$ \tcp*{Choosing leader}
        }
        \Else{
            $A_N\gets\argmin{a\in[K]/\{\hat{i}_{N-1}\}}{\mathcal{I}_a(N-1)}$ \tcp*{Choosing challenger}
        }

        Sample $X_N$ from $A_N$ and compute $\mathbold{\widetilde{\mu}}(N)$ and $\mathbold{\widetilde{N}}(N)$\;
        \bigskip

        \tcc{Generalized Log-Likelihood Ratio (GLLR) Test}
        \If{$\min_{a\in[K]/\{\hat{i}_N\}}\mathcal{I}_a(N)>\beta(N,\delta)$}{
            \Return $\hat{i}_N$
        }
    }
\end{algorithm}

\begin{algorithm}[h!]
    \DontPrintSemicolon
    \caption{Improved Anchored Top-Two (IAT2) Algorithm}\label{code:IAT2}
    \SetKwInOut{Input}{Input}
    \Input{Confidence parameter $\delta>0$, exploration parameter $\alpha\in(0,1)$}
    \For{$N\geq 1$}{
        $\mathcal{V}_N\gets\left\{a\in[K]~\vert~\widetilde{N}_a(N-1)<N^{\alpha}\right\}$ \;

        \If{$\mathcal{V}_N\neq\emptyset$}{
           $A_N\gets\argmin{a\in[K]}{\widetilde{N}_a(N-1)}$ \tcp*{Forced exploration}
        }\ElseIf{$g(\mathbold{\widetilde{\mu}}(N-1),\mathbold{\widetilde{N}}(N-1))>0$}{
            $A_N\gets\hat{i}_{N-1}$ \tcp*{Choosing leader}
        }\Else{
            $A_N\gets\argmin{a\in[K]/\{\hat{i}_{N-1}\}}{\mathcal{I}_a(N-1)+\log\widetilde{N}_a(N-1)}$ \tcp*{Choosing challenger}
        }

        Sample $X_N$ from $A_N$ and compute $\mathbold{\widetilde{\mu}}(N)$ and $\mathbold{\widetilde{N}}(N)$
        \bigskip

        \tcc{Generalized Log-Likelihood Ratio (GLLR) Test}
        \If{$\min_{a\in[K]/\{\hat{i}_N\}}\mathcal{I}_a(N)>\beta(N,\delta)$}{
            \Return $\hat{i}_N$
        }
    }
\end{algorithm}

\section{Single parameter exponential family of distributions}\label{sec:spef}
We consider single parameter exponential family (SPEF) of distributions of the form 
\[
d\nu_{\theta}(x) =
\exp(\theta x - b(\theta)) d \rho(x)
\]
where $\rho$ is a dominating measure which we assume to be degenerate, $\theta$ lies
in the interior of set $\Theta$ defined below (denoted by $\Theta^o$):
\[
\Theta = \left\{ \theta~\Big\vert~\int_{\mathbb{R}} \exp(\theta x) d \rho(x) < \infty \right\}, \]
and 
\[b(\theta)=\log\left(\int_{\mathbb{R}} \exp(\theta x)d\rho(x)\right)\]
is the log-moment generating function of the measure $\rho(\cdot)$.  

For $\theta,\widetilde{\theta}\in\Theta^o$, the KL-divergence between the measures $\nu_{\theta}$ and $\nu_{\widetilde{\theta}}$ is,  
\[KL(\nu_{\theta},\nu_{\widetilde{\theta}})=(\theta-\widetilde{\theta}) b'(\theta) -b(\theta)+b(\widetilde{\theta}).\]
The mean under $\nu_{\theta}$ is given by $b^\prime(\theta)$. Let $\mathcal{S}$ be the image of the set $\Theta^o$ under the mapping $b^\prime(\cdot)$. Note that $\mathcal{S}$ is an open interval. Also, since $b^{\prime\prime}(\cdot)>0$ in $\Theta^o$, $b^\prime(\cdot)$ is strictly increasing in $\Theta^o$, and is a bijection between $\Theta^o$ and $\mathcal{S}$. This implies we can parameterize the distributions in the SPEF using their means as well.  

Let $\theta_{\mu}$ be the unique $\theta$ satisfying $b'(\theta)=\mu$ for some $\mu\in\mathcal{S}$. Clearly, $\theta_{\mu}$ is a strictly increasing function of $\mu$. This follows since $b'(\cdot)$ is strictly increasing in $\Theta^o$. Additionally, all the higher derivatives of $b(\cdot)$ exist in the set $\Theta^o$ (see Exercise 2.2.24 in \cite{dembo2009large}). 

For $\mu,\widetilde{\mu}\in \mathcal{S}$ we define $d(\mu,\widetilde{\mu})$ as, 
\[ d(\mu, \widetilde{\mu}) = KL(\nu_{\theta_\mu},\nu_{\theta_{\widetilde{\mu}}})=(\theta_{\mu} -  \theta_{\widetilde{\mu}}) \mu-b(\theta_{\mu}) + b(\theta_{\widetilde{\mu}}). \]

We define $\mu_{\inf}\in\mathbb{R}\cup\{-\infty\}$ and $\mu_{\sup}\in\mathbb{R}\cup\{+\infty\}$, respectively, to be the infimum and supremum of the interval $\mathcal{S}$. Then, $\mathcal{S}=(\mu_{\inf},\mu_{\sup})$. 

\begin{definition}\label{def:min_radius}
\emph{For $\mathbold{\mu}=(\mu_1,\mu_2,\hdots,\mu_K)\in\mathcal{S}^K$, define $r_{\min}(\mathbold{\mu})=\min_{i\in[K]}\{\min\{\mu_i-\mu_{\inf},\mu_{\sup}-\mu_i\}\}$. }   
\end{definition}
Since $\mathcal{S}$ is an open interval,  $r_{\min}(\mathbold{\mu})$ is positive for every $\mathbold{\mu}\in\mathcal{S}^K$, and can be $\infty$ if both $\mu_{\inf}$ and $\mu_{\sup}$ are $\infty$.

\bulletwithnote{Partial derivatives of $d(\mu,\widetilde{\mu})$} For every pair $\mu,\widetilde{\mu}\in\mathcal{S}$, the partial derivatives of $d(\mu,\widetilde{\mu})$ with respect to the first argument is
\[
d_1(\mu,\widetilde{\mu})\defined \frac{\partial }{\partial \mu} d(\mu,\widetilde{\mu})= \theta_{\mu}-\theta_{\widetilde{\mu}}, 
\]
and that with respect to the second argument is, 
\[
d_2(\mu,\widetilde{\mu})\defined \frac{\partial }{\partial\widetilde{\mu}} d(\mu,\widetilde{\mu})=\frac{\widetilde{\mu}-\mu}{b^{\prime\prime}(\theta_{\widetilde{\mu}})}.
\]
\bulletwithnote{Enveloping the KL-divergence} In the following discussion, we try to bound the KL-divergence $d(\mu,\widetilde{\mu})$ from both sides using the squared distance $|\mu-\widetilde{\mu}|^2$ after imposing some restrictions on the choice of $\mu,\widetilde{\mu}\in\mathcal{S}$. For an instance $\mathbold{\mu}\in \mathcal{S}^K$, we define the constants $\Delta_{\min}(\mathbold{\mu})$ and $\epsilon(\mathbold{\mu})$ as
\[\Delta_{\min}(\mathbold{\mu})=\min_{i\in[K]/\{1\}}(\mu_1-\mu_i) \quad  \text{and}\quad\epsilon(\mathbold{\mu})=\min\left\{\frac{\Delta_{\min}(\mathbold{\mu})}{4},\frac{r_{\min}(\mathbold{\mu})}{2}\right\}.\] 
We further define 
\begin{align*}
    \mathcal{H}(\mathbold{\mu})~&=~\bigcup_{i\in[K]/\{1\}}[\mu_i-\epsilon(\mathbold{\mu}),\mu_1+\epsilon(\mathbold{\mu})],\\
    \sigma_{\max}(\mathbold{\mu})~&=~\max_{\mu\in\mathcal{H}(\mathbold{\mu})} b''(\theta_\mu)\quad\text{and}\quad\sigma_{\min}(\mathbold{\mu})~=~\min_{\mu\in\mathcal{H}(\mathbold{\mu})}b''(\theta_\mu).
\end{align*}
Since $\mathcal{H}(\mathbold{\mu})\subset\mathcal{S}$, all distributions with mean in $\mathcal{H}(\mathbold{\mu})$ have positive variance. Note that $b^{\prime\prime}(\theta_\mu)$ represents the variance of the distribution with mean $\mu$. As a result, since $\mathcal{H}(\mathbold{\mu})$ is a compact set, both $\sigma_{\max}(\mathbold{\mu})$ and $\sigma_{\min}(\mathbold{\mu})$ are positive constants.  

Hence, $b(\cdot)$ is $\sigma_{\min}(\mathbold{\mu})$-strongly convex and $b'(\cdot)$ is $\sigma_{\max}(\mathbold{\mu})$-Lipschitz on the set $(b')^{-1}(\mathcal{H}(\mathbold{\mu}))$. Thus, using \cite[Theorems 2.1.5 and 2.1.10]{nesterov2018lectures},  for every $\theta_1,\theta_2\in(b')^{-1}(\mathcal{H}(\mathbold{\mu}))$, we have  
\[\frac{|b'(\theta_1)-b'(\theta_2)|^2}{2\sigma_{\max}(\mathbold{\mu})}\leq b(\theta_2)-b(\theta_1)-b'(\theta_1)\cdot(\theta_2-\theta_1)=d(\nu_{\theta_1},\nu_{\theta_2})\leq\frac{|b'(\theta_1)-b'(\theta_2)|^2}{2\sigma_{\min}(\mathbold{\mu})},\] 

and hence, for $\mu,\widetilde{\mu}\in\mathcal{H}(\mathbold{\mu})$, 
\begin{align}\label{eqn:envelope_kl_div}
    \frac{|\mu-\widetilde{\mu}|^2}{2\sigma_{\max}(\mathbold{\mu})}\leq d(\mu,\widetilde{\mu})\leq \frac{|\mu-\widetilde{\mu}|^2}{2\sigma_{\min}(\mathbold{\mu})}.
\end{align}

\bulletwithnote{Bounding the partial derivatives} We now introduce bounds on the partial derivatives $d_1$ and $d_2$ introduced earlier. Consider $\mu, x, \widetilde{\mu}\in\mathcal{H}(\mathbold{\mu})$ such that $\mu>\widetilde{\mu}$, and $x\in[\widetilde{\mu},\mu]$. Recall 
\[d_1(\mu,x)=\theta_\mu-\theta_x\quad\text{and}\quad d_1(\widetilde{\mu},x)=-(\theta_x-\theta_{\widetilde{\mu}}).\]
Since $\theta_{(\cdot)}=(b')^{-1}(\cdot)$, we have, 
\begin{align}\label{eqn:envelope_d1}
    \frac{\mu-x}{\sigma_{\max}(\mathbold{\mu})}&\leq d_1(\mu,x)\leq \frac{\mu-x}{\sigma_{\min}(\mathbold{\mu})},\quad \text{and}\quad
    -\frac{x-\widetilde{\mu}}{\sigma_{\max}(\mathbold{\mu})}\geq d_1(\widetilde{\mu},x)\geq -\frac{x-\widetilde{\mu}}{\sigma_{\min}(\mathbold{\mu})}.
\end{align}

Similarly, since, 
\[d_2(\mu,x)=-\frac{\mu-x}{b''(\theta_x)}\quad\text{and}\quad d_2(\widetilde{\mu},x)=\frac{x-\widetilde{\mu}}{b''(\theta_x)}, \]
we have, 
\begin{align}\label{eqn:envelope_d2}
    -\frac{\mu-x}{\sigma_{\max}(\mathbold{\mu})}&\geq d_2(\mu,x)\geq-\frac{\mu-x}{\sigma_{\min}(\mathbold{\mu})},\quad\text{and}\quad
    \frac{x-\widetilde{\mu}}{\sigma_{\min}(\mathbold{\mu})}\leq d_2(\widetilde{\mu},x)\leq\frac{x-\widetilde{\mu}}{\sigma_{\min}(\mathbold{\mu})}.
\end{align}

\subsection{Enveloping the anchor and index functions under noisy estimates of the rewards}

For every arm $a\in[K]$, let $\widetilde{\mu}_a$ be an estimate of $\mu_a$ satisfying $|\widetilde{\mu}_a-\mu_a|\leq \epsilon(\mathbold{\mu})$ and $\mathbold{\widetilde{\mu}}=(\widetilde{\mu}_a)_{a\in[K]}$. Since $\epsilon(\mathbold{\mu})\leq \frac{\Delta_{\min}(\mathbold{\mu})}{4}$, the empirically best arm with respect to the estimates $\mathbold{\widetilde{\mu}}$ is the first arm. Also let $N_a$ be the no. of times arm $a$ has been pulled and $\mathbold{N}=(N_a)_{a\in[K]}$.

\bulletwithnote{Enveloping the anchor function}  As we introduced in Section \ref{sec:setup_lb}, the anchor function is, 
\[g(\mathbold{\widetilde{\mu}}, \mathbold{N})=\sum_{a\neq 1}\frac{d(\widetilde{\mu}_1,\widetilde{x}_{1,a})}{d(\widetilde{\mu}_a,\widetilde{x}_{1,a})}-1,\]
where $\widetilde{x}_{1,a}=\frac{N_1\widetilde{\mu}_1+N_a\widetilde{\mu}_a}{N_1+N_a}$. Note that $\widetilde{\mu}_a, \widetilde{x}_{1,a}\in\mathcal{H}(\mathbold{\mu})$ for every $a\in[K]/\{1\}$. Therefore, using (\ref{eqn:envelope_kl_div}), we have, 

\begin{align*}
    \frac{\sigma_{\min}(\mathbold{\mu})}{\sigma_{\max}(\mathbold{\mu})}\sum_{a\neq 1}\frac{(\widetilde{\mu}_1-\widetilde{x}_{1,a})^2}{(\widetilde{x}_{1,a}-\widetilde{\mu}_a)^2}-1~&\leq~g(\mathbold{\widetilde{\mu}},\mathbold{N})~\leq~\frac{\sigma_{\max}(\mathbold{\mu})}{\sigma_{\min}(\mathbold{\mu})}\sum_{a\neq 1}\frac{(\widetilde{\mu}_1-\widetilde{x}_{1,a})^2}{(\widetilde{x}_{1,a}-\widetilde{\mu}_a)^2}-1, 
\end{align*}

Putting $\widetilde{x}_{1,a}=\frac{N_1\widetilde{\mu}_1+N_a \widetilde{\mu}_a}{N_1+N_a}$, we obtain, 
\begin{align}\label{eqn:envelope_g1}
    \frac{\sigma_{\min}(\mathbold{\mu})}{\sigma_{\max}(\mathbold{\mu})}\sum_{a\neq 1}\frac{N_a^2}{N_1^2}-1~&\leq~g(\mathbold{\widetilde{\mu}},\mathbold{N})~\leq~\frac{\sigma_{\max}(\mathbold{\mu})}{\sigma_{\min}(\mathbold{\mu})}\sum_{a\neq 1}\frac{N_a^2}{N_1^2}-1,
\end{align}

Now $\widetilde{\mu}_1-\widetilde{x}_{1,a}=\frac{N_a}{N_1+N_a}(\widetilde{\mu}_1-\widetilde{\mu}_a)$ and $\widetilde{x}_{1,a}-\widetilde{\mu}_a=\frac{N_1}{N_1+N_a}(\widetilde{\mu}_1-\widetilde{\mu}_a)$. Using these, we can say, 
\begin{align}\label{eqn:envelope_g2}
    g(\mathbold{\widetilde{\mu}}, \mathbold{N})~&=~\Theta\left(\sum_{a\neq 1}\frac{N^2_a}{N^2_1}\right)-1,
\end{align}
whenever $|\widetilde{\mu}_a-\mu_a|\leq \epsilon(\mathbold{\mu})$ for all $a\in[K]$. The constants hidden in the $\Theta(\cdot)$ depends only on $\mathbold{\mu}$ and obviously the choice of the SPEF family.

\bulletwithnote{Enveloping the index} Following the definition of empirical index $\mathcal{I}_a(\cdot)$ in Section \ref{sec:at2}, we define the index of any alternative arm $a\in[K]/\{1\}$ with respect to the estimates $\mathbold{\widetilde{\mu}}=(\widetilde{\mu}_a:a\in[K])$ and allocation $\mathbold{N}=(N_a:a\in[K])$ is, 
\[\mathcal{W}_a(\mathbold{\widetilde{\mu}},\mathbold{N})~=~N_1 d(\widetilde{\mu}_1,\widetilde{x}_{1,a})+ N_a d(\widetilde{\mu}_a, \widetilde{x}_{1,a}).\]

Observe that, the empirical index $\mathcal{I}_a(N)$ and index $I_a(N)$ introduced in Section \ref{sec:at2} are, respectively, equivalent to the quantities $\mathcal{W}_a(\mathbold{\widetilde{\mu}}(N), \mathbold{\widetilde{N}}(N))$, and $\mathcal{W}_a(\mathbold{\mu}, \mathbold{\widetilde{N}}(N))$, where $\mathbold{\widetilde{\mu}}(N)$ is the empirical estimate and $\mathbold{\widetilde{N}}(N)$ is the algorithm's allocation at iteration $N$.  

Using (\ref{eqn:envelope_kl_div}) we have, 
\begin{align*}
   \frac{1}{2\sigma_{\min}(\mathbold{\mu})}(N_1(\widetilde{\mu}_1-\widetilde{x}_{1,a})^2+N_a(\widetilde{x}_{1,a}-\widetilde{\mu}_a)^2)~\leq~\mathcal{W}_a(\mathbold{\widetilde{\mu}},\mathbold{N})~\leq~&\frac{1}{2\sigma_{\min}(\mathbold{\mu})}(N_1 (\widetilde{\mu}_1-\widetilde{x}_{1,a})^2 \\
   &\quad+N_a(\widetilde{x}_{1,a}-\widetilde{\mu}_a)^2).
\end{align*}

Putting $\widetilde{x}_{1,a}=\frac{N_1\widetilde{\mu}_1+N_a\widetilde{\mu}_a}{N_1+N_a}$, we get, 
\begin{align*}
    \frac{(\widetilde{\mu}_1-\widetilde{\mu}_a)^2}{2\sigma_{\max}(\mathbold{\mu})} \frac{N_1 N_a}{N_1+N_a}~&\leq~\mathcal{W}_a(\mathbold{\widetilde{\mu}},\mathbold{N})~\leq~\frac{(\widetilde{\mu}_1-\widetilde{\mu}_a)^2}{2\sigma_{\min}(\mathbold{\mu})} \frac{N_1 N_a}{N_1+N_a},~\text{which implies,}\\
    \frac{(\mu_1-\mu_a-2\epsilon(\mathbold{\mu}))^2}{2\sigma_{\max}(\mathbold{\mu})} \frac{N_1 N_a}{N_1+N_a}~&\leq~\mathcal{W}_a(\mathbold{\widetilde{\mu}},\mathbold{N})~\leq~\frac{(\mu_1-\mu_a+2\epsilon(\mathbold{\mu}))^2}{2\sigma_{\min}(\mathbold{\mu})} \frac{N_1 N_a}{N_1+N_a}.
\end{align*}

We define $\Delta_{\max}(\mathbold{\mu})=\max_{a\in[K]/\{1\}}\Delta_a$. Since $\epsilon(\mathbold{\mu})\leq \frac{1}{4}\Delta_{\min}(\mathbold{\mu})$, we have, 
\begin{align}\label{eqn:envelope_I1}
    \frac{\Delta_{\min}(\mathbold{\mu})^2}{8\sigma_{\max}(\mathbold{\mu})} \frac{N_1 N_a}{N_1+N_a}~&\leq~\mathcal{W}_a(\mathbold{\widetilde{\mu}},\mathbold{N})~\leq~\frac{(\Delta_{\max}(\mathbold{\mu})+2\epsilon(\mathbold{\mu}))^2}{2\sigma_{\min}(\mathbold{\mu})} \frac{N_1 N_a}{N_1+N_a}.
\end{align}

Using the same notation, as we used in (\ref{eqn:envelope_g2}), we have, 
\begin{align}\label{eqn:envelope_I2}
    \mathcal{W}_a(\mathbold{\widetilde{\mu}},\mathbold{N})~&=~\Theta\left(\frac{N_1 N_a}{N_1+N_a}\right)
\end{align}

for every arm $a\in[K]/\{1\}$.

\bulletwithnote{Enveloping the partial derivatives of anchor function with respect to $\mathbold{N}$} While analyzing the AT2 and IAT2 algorithms, we need to show that the anchor function converges to zero at a uniform rate as no. of iteration goes to infinity. During this step, we need to bound the partial derivatives of the anchor function $g(\mathbold{\mu},\mathbold{N})$ with respect to $\mathbold{N}$. Below we evaluate the partial derivatives of $g(\mathbold{\mu},\mathbold{N})$ with respect to $N_a$ for different arms $a\in[K]$.

\begin{align}\label{eqn:partial_derv_of_g_wrt_N}
    \frac{\partial g}{\partial N_1}(\mathbold{\mu},\mathbold{N})&=-\sum_{a\neq 1}f(\mathbold{\mu},a,\mathbold{N})\frac{N_a \Delta_a}{(N_1+N_a)^2},~\text{and}\nonumber\\
    \forall a\neq 1,~~\frac{\partial g}{\partial N_a}(\mathbold{\mu},\mathbold{N})&=f(\mathbold{\mu},a,\mathbold{N})\frac{N_1 \Delta_a}{(N_1+N_a)^2},
\end{align}
where $\Delta_a=\mu_1-\mu_a$ and 
\[f(\mathbold{\mu},a,\mathbold{N})=-\frac{\partial}{\partial x}\left(\frac{d(\mu_1,x)}{d(\mu_a,x)}\right)\Big\vert_{x=x_{1,a}}=-\frac{d_2(\mu_1,x_{1,a})}{d(\mu_a,x_{1,a})}+\frac{d(\mu_1,x_{1,a})d_2(\mu_a,x_{1,a})}{(d(\mu_a,x_{1,a}))^2}.\]

Using (\ref{eqn:envelope_d2}), and (\ref{eqn:envelope_kl_div}), we have, 
\begin{align*}
    -d_2(\mu_1,x_{1,a})~&=~\Theta(\mu_1-x_{1,a}),\quad d_2(\mu_a,x_{1,a})~=~\Theta(x_{1,a}-\mu_a),\\
    d(\mu_1,x_{1,a})~&=~\Theta((\mu_1-x_{1,a})^2),\quad\text{and}\quad d(\mu_a,x_{1,a})~=~\Theta((x_{1,a}-\mu_a)^2),    
\end{align*}

where the constants hidden in $\Theta(\cdot)$ depend only on $\mathbold{\mu}$ and are independent of the sample path. Therefore,
\begin{align}\label{eqn:envelope_f}
    f(\mathbold{\mu},a,\mathbold{N})&=\Theta\left(\frac{\mu_1-x_{1,a}}{(x_{1,a}-\mu_a)^2}+\frac{(\mu_1-x_{1,a})^2}{(x_{1,a}-\mu_a)^3}\right)\nonumber\\
    &=\Theta\left(\frac{N_a}{N_1}\left(1+\frac{N_a}{N_1}\right)^2\right).
\end{align}
As a consequence, we have,
\begin{align}\label{eqn:order_of_derv_of_g_wrt_N}
    \frac{\partial g}{\partial N_1}(\mathbold{\mu}, \mathbold{N})&=-\Theta\left(\sum_{a\in[K]/\{1\}}\frac{N_a^2}{N_1^3}\right),~~\text{and}\nonumber\\
    \forall a\neq 1,~~\frac{\partial g}{\partial N_a}(\mathbold{\mu},\mathbold{N})&=\Theta\left(\frac{N_a}{N_1^2}\right).
\end{align}

\section{Framework for applying the Implicit function theorem (IFT)}\label{appndx:Jacobian}
In this appendix we explain a general framework using which we later apply the Implicit function theorem for the following purposes: 
\begin{enumerate}
    \item Constructing the fluid dynamics for the AT2 and $\beta$-EB-TCB algorithms in Appendix \ref{appndx:fluid_model}. 
    
    \item Proving convergence of the algorithm's allocations to the optimal proportions in Appendix \ref{appndx:alloc_closeness_proof}. 
\end{enumerate}

We introduce the variables:~~$\mathbold{N}=(N_a\in\mathbb{R}_{\geq 0}:a\in[K])$,~~$I\in\mathbb{R}$,~~$\mathbold{\eta}=(\eta_a\in\mathbb{R}:a\in[K])$,~~and~~$N\geq 0$. After fixing some instance $\mathbold{\mu}\in\mathcal{S}^K$ ($\mathcal{S}$ is defined in Appendix \ref{sec:spef}), we define the following functions: 
\begin{align*}
    \Phi_1(\mathbold{N},\mathbold{\eta})~&=~\sum_{a\in[K]/\{1\}}\frac{d(\mu_1,x_{1,a}(N_1,N_a))}{d(\mu_a,x_{1,a}(N_1,N_a))}-1-\eta_1,\\
    \text{for}~a\in[K]/\{1\},~~\Phi_a(\mathbold{N},I,\mathbold{\eta})~&=~N_1 d(\mu_1,x_{1,a}(N_1,N_a))+N_a d(\mu_a,x_{1,a}(N_1,N_a))-I-\eta_a,\\
\text{and}\quad\Phi_{K+1}(\mathbold{N},N)~&=~\sum_{a\in[K]}N_a - N,
\end{align*}
where 
\[x_{1,a}(N_1,N_a)=\frac{N_1\mu_1+N_a\mu_a}{N_1+N_a}.\] 

For every non-empty subset $B\subseteq[K]/\{1\}$, we define the vector valued functions,
\begin{align*}
    \mathbold{\Phi}_B(\mathbold{N},I,\mathbold{\eta},N)~&=~\left[~\Phi_1(\mathbold{N},\mathbold{\eta}),\quad(\Phi_a(\mathbold{N},I,\mathbold{\eta}))_{a\in B},\quad\Phi_{K+1}(\mathbold{N},N)~\right],\quad\text{and}\\
    \mathbold{\widetilde{\Phi}}_B(\mathbold{N},I,\mathbold{\eta})~&=~\left[~\Phi_1(\mathbold{N},\mathbold{\eta}),\quad(\Phi_a(\mathbold{N},I,\mathbold{\eta}))_{a\in B}~\right].    
\end{align*}

We denote $\mathbold{\Phi}_{[K]/\{1\}}(\cdot)$ just using $\mathbold{\Phi}(\cdot)$.  

In the definitions of $\mathbold{\Phi}_{B}(\cdot)$ and $\mathbold{\widetilde{\Phi}}_B(\cdot)$, without loss of generality, we assume that, the functions $\Phi_a(\cdot)$ in the tuple $(\Phi_a(\cdot))_{a\in B}$ are enumerated in the increasing order of $a\in B$, \textit{i.e.}, if we have $B=\{a_1,a_2,\hdots,a_{|B|}\}$ with $1<a_1<a_2<\hdots<a_{|B|}$, then,
\begin{align*}
    \mathbold{\Phi}_{B}~&=~\left[~\Phi_1,\quad\Phi_{a_1},\quad,\Phi_{a_2},\quad\Phi_{a_3},\quad\hdots\quad\Phi_{a_{|B|}},\quad\Phi_{K+1}~\right],\quad\text{and}\\
    \widetilde{\mathbold{\Phi}}_{B}~&=~\left[~\Phi_1,\quad\Phi_{a_1},\quad,\Phi_{a_2},\quad\Phi_{a_3},\quad\hdots\quad\Phi_{a_{|B|}}~\right].
\end{align*}

Before stating the main result of this section in Lemma \ref{lem:Jacobian}, we define some notation that are essential for the lemma statement. For any $A\subseteq [K]$, we use the notation $\mathbold{N}_A$ to denote the tuple of variables $(N_a:a\in A)$. For some vector valued function $\mathbold{G}$ depending on $\mathbold{N}$, denote the Jacobian of $G(\cdot)$ with respect to the tuple of variables $\mathbold{N}_A$ using $\frac{\partial \mathbold{G}}{\partial \mathbold{N}_A}$. 

For any non-empty $B\subseteq[K]/\{1\}$, we define $\overline{B}=B\cup\{1\}$. For every $k\geq 1$, $\mathbf{0}_k$ and $\mathbf{1}_k$, respectively, refers to a $k$-dimensional vector with entries $0$ and $1$.  We define $\mathcal{Z}_B$ to be the set of tuples $(\mathbold{N},I,\mathbf{0}_{K},N)$ with~~$\mathbold{N}\in\mathbb{R}_{\geq 0}^K$,~~$I\in\mathbb{R}$,~~and~~$N\in\mathbb{R}_{>0}$,~~which satisfy 
\[\mathbold{\Phi}_{B}(\mathbold{N},I,\mathbf{0}_{K},N)~=~\mathbf{0}_{|B|+2}.\]

\bigskip

\begin{lemma}[\textbf{Invertibility of the Jacobian}]\label{lem:Jacobian}
    For all non-empty subset $B\subseteq[K]/\{1\}$, the following statements hold true at every tuple $(\mathbold{N},I,\mathbf{0}_K,N)$ in the set $\mathcal{Z}_B$,
    
    \begin{enumerate}
        \item The Jacobian $\frac{\partial \mathbold{\widetilde{\Phi}}_B}{\partial \mathbold{N}_{\overline{B}}}$ is invertible at $(\mathbold{N},I,\mathbf{0}_K)$. 

        \item We have 
        \[\mathbf{1}_{|B|+1}^{T}\left(\frac{\partial \mathbold{\widetilde{\Phi}}_B}{\partial \mathbold{N}_{\overline{B}}}\right)^{-1}\frac{\partial\mathbold{\widetilde{\Phi}}_B}{\partial I}\leq-\sum_{a\in B}\frac{1}{d(\mu_a,\mu_1)},\]
        at $(\mathbold{N},I,\mathbf{0}_K)$.

        \item The Jacobian $\frac{\partial \mathbold{\Phi}_B}{\partial (\mathbold{N}_{\overline{B}}, I)}$ defined as, 
        \begin{align*}
                \frac{\partial \mathbold{\Phi}_B}{\partial (\mathbold{N}_{\overline{B}},I)}~&=~\left[\begin{array}{c|c}
        & \\
           \frac{\partial \mathbold{\widetilde{\Phi}}_{B}}{\partial \mathbold{N}_{\overline{B}}}  & \frac{\partial \mathbold{\widetilde{\Phi}}_{B}}{\partial I} \\
           & \\
           \hline \\
            \frac{\partial \Phi_{K+1}}{\partial \mathbold{N}_{\overline{B}}}  & \frac{\partial \Phi_{K+1}}{\partial I} \\
            & 
        \end{array}\right]
            \end{align*}
        is invertible at $(\mathbold{N},I,\mathbf{0}_K,N)$.
    \end{enumerate}
\end{lemma}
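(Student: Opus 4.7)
The plan is to exploit the ``arrowhead'' structure of $M := \frac{\partial \mathbold{\widetilde{\Phi}}_B}{\partial \mathbold{N}_{\overline{B}}}$. The envelope identity $N_1 d_2(\mu_1,x_{1,a}) + N_a d_2(\mu_a,x_{1,a}) = 0$, which characterises $x_{1,a}$ as the minimiser of $N_1 d(\mu_1,\cdot)+N_a d(\mu_a,\cdot)$, annihilates the indirect contributions of $x_{1,a}$ to the partials of $\Phi_a$, so that $\partial \Phi_a/\partial N_1 = d(\mu_1,x_{1,a})$, $\partial \Phi_a/\partial N_a = d(\mu_a,x_{1,a})$, and $\partial \Phi_a/\partial N_b = 0$ for $b \in B\setminus\{a\}$; the top row of $M$ is read off from (\ref{eqn:partial_derv_of_g_wrt_N}) as $\bigl(-\sum_{c\neq 1} h_c N_c/N_1^2,\;(h_a/N_1)_{a\in B}\bigr)$ in the $h_a$-notation of (\ref{eqn:def-h}). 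Before using these formulas I will verify that every point of $\mathcal{Z}_B$ has $N_1 > 0$ and $N_a > 0$ for each $a \in \overline{B}$: $\Phi_1 = 0$ rules out $N_1 = 0$ (the corresponding ratios would blow up), and given $N_1 > 0$ together with $\Phi_a = 0$, any $N_a = 0$ would force $I = 0$ and then propagate through the remaining $\Phi_{a'} = 0$ equations to a contradiction. Consequently $x_{1,a} \in (\mu_a,\mu_1)$ for every $a$, and $h_a$, $d(\mu_1,x_{1,a})$, $d(\mu_a,x_{1,a})$ are all strictly positive.

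For statement 1, I will apply the arrowhead Schur-complement identity to obtain
\[
\det M \;=\; \prod_{a\in B} d(\mu_a,x_{1,a})\cdot\Bigl(-\sum_{c\neq 1}\frac{h_c N_c}{N_1^2}\;-\;\sum_{a\in B}\frac{h_a\, d(\mu_1,x_{1,a})}{N_1\, d(\mu_a,x_{1,a})}\Bigr),
\]
in which the second sum inside the bracket consists of strictly negative terms whenever $B\neq\emptyset$, making $\det M\neq 0$. For statement 2, I will solve $M y = v$ with $v = (0,-1,\ldots,-1)^\top$ using the same structure: the last $|B|$ rows give $y_a = -\bigl(1 + d(\mu_1,x_{1,a})\,y_1\bigr)/d(\mu_a,x_{1,a})$, and back-substitution into the first row produces a closed form for $y_1$ with $y_1 < 0$ (the numerator is strictly positive and the denominator strictly negative, matching the bracket in $\det M$). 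Summing and invoking $\Phi_1 = 0$ in the rearranged form $\sum_{a \in B} d(\mu_1,x_{1,a})/d(\mu_a,x_{1,a}) = 1 - \sum_{a \in \overline{B}^c} d(\mu_1,x_{1,a})/d(\mu_a,x_{1,a})$ yields
\[
\mathbf{1}^\top y \;=\; \Bigl(\sum_{a\in\overline{B}^c}\tfrac{d(\mu_1,x_{1,a})}{d(\mu_a,x_{1,a})}\Bigr) y_1 \;-\; \sum_{a\in B}\tfrac{1}{d(\mu_a,x_{1,a})},
\]
whose first term is non-positive and whose second term is bounded above by $-\sum_{a\in B} 1/d(\mu_a,\mu_1)$ because $d(\mu_a,\cdot)$ is strictly increasing on $(\mu_a,\mu_{\sup})$ and $x_{1,a} < \mu_1$; this gives the desired inequality.

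Statement 3 will then follow at once from the block-determinant identity
\[
\det \frac{\partial \mathbold{\Phi}_B}{\partial (\mathbold{N}_{\overline{B}},I)} \;=\; -\det(M)\cdot \mathbf{1}^\top M^{-1} v,
\]
since $\det M \neq 0$ by step 1 and $\mathbf{1}^\top M^{-1} v < 0$ strictly by step 2 (the upper bound $-\sum_{a\in B}1/d(\mu_a,\mu_1)$ is strictly negative for non-empty $B$). The main obstacle I anticipate is the preliminary positivity argument on $\mathcal{Z}_B$: the coordinate-axis degeneracies must be ruled out carefully, because at them the arrowhead factorisation, the strict monotonicity used for the final bound, and even the definition of $x_{1,a}$ can fail. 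Once that case analysis is settled, the remainder of the proof is a short, essentially algebraic, arrowhead computation combined with the constraint $\Phi_1 = 0$.
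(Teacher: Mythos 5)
Your overall computation (the arrowhead Schur complement for statement 1, back-substitution combined with $\Phi_1=0$ for statement 2, and the block-determinant identity for statement 3) is essentially the same elimination the paper performs by column operations. The genuine problem is the preliminary claim your strictness assertions lean on: it is \emph{false} that every point of $\mathcal{Z}_B$ has $N_a>0$ for all $a\in\overline{B}$. If $\overline{B}^c\neq\emptyset$ and $\sum_{a\in\overline{B}^c}N_a>0$, the tuple with $N_1=N_{1,1}$, $N_a=0$ for all $a\in B$, $I=0$ and $N=N_{1,1}+\sum_{a\in\overline{B}^c}N_a$ satisfies $\mathbold{\Phi}_B=\mathbf{0}$ and hence lies in $\mathcal{Z}_B$; your proposed contradiction (``$N_a=0$ forces $I=0$ and then propagates to a contradiction'') only propagates to $N_{a'}=0$ for every $a'\in B$, which is perfectly consistent. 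This is not a dispensable corner case: it is exactly the point at which the lemma is invoked in the proof of Proposition \ref{prop:uniqueness_of_fluid_sol}, where the curve $I\mapsto\mathbold{N}_{\overline{B}}(I)$ starts at $I=0$ with $N_a(0)=0$ for all $a\in B$. At such a point $x_{1,a}=\mu_1$ for $a\in B$, so $d(\mu_1,x_{1,a})=0$ and $f(\mathbold{\mu},a,\mathbold{N})=h_a=0$; consequently the ``second sum'' in your determinant bracket vanishes (its terms are not strictly negative), and in your statement-2 system one gets $y_1=0$ rather than $y_1<0$, so the strictness arguments you rely on break down exactly where they are needed.

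The conclusions do survive at these degenerate points, but for a reason your write-up does not supply and which you should make explicit. From $\Phi_1=0$ and $N>0$ one gets $N_1>0$ and, since the ratio sum equals $1$, at least one arm $c\neq 1$ with $N_c>0$ and $d(\mu_1,x_{1,c})>0$ (in the degenerate case necessarily $c\in\overline{B}^c$), hence $h_cN_c>0$; so the first sum $-\sum_{c\neq1}h_cN_c/N_1^2$ in your bracket is already strictly negative and $\det M\neq0$ without any contribution from $B$. Likewise $y_1\leq 0$ (not $<0$) is all that statement 2 needs, since the bound $d(\mu_a,x_{1,a})\leq d(\mu_a,\mu_1)$ tolerates equality, and statement 3 only needs the strict negativity of $\mathbf{1}^\top M^{-1}v$, which statement 2 supplies for non-empty $B$. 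This is precisely how the paper's proof avoids the issue: it argues purely with signs -- diagonal entries $d(\mu_a,x_{1,a})>0$ (true even when $N_a=0$, since then $x_{1,a}=\mu_1$), off-diagonal and first-row entries merely nonnegative, and $\frac{\partial\Phi_1}{\partial N_1}<0$ -- never requiring $N_a>0$, $h_a>0$ or $d(\mu_1,x_{1,a})>0$ for $a\in B$. If you delete the false positivity lemma and rewrite your three steps with these weak inequalities (keeping only $N_1>0$, $d(\mu_a,x_{1,a})>0$, and the strict negativity of the first sum), your argument becomes correct and coincides in substance with the paper's.
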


\begin{proof}
\bulletwithnote{Statement 1} The Jacobian $\frac{\partial \mathbold{\widetilde{\Phi}}_B}{\partial \mathbold{N}_{\overline{B}}}$ is equivalent to,
\begin{align}\label{eqn:Jacobian1}
    \frac{\partial \mathbold{\widetilde{\Phi}}_B}{\partial \mathbold{N}_{\overline{B}}}&=\begin{bmatrix}
        \frac{\partial \Phi_1}{\partial N_1} & \frac{\partial \Phi_1}{\partial N_{a_1}} & \frac{\partial \Phi_1}{\partial N_{a_2}} & \hdots & \frac{\partial \Phi_1}{\partial N_{a_{|B|}}} \\
        \frac{\partial \Phi_{a_1}}{\partial N_1} & \frac{\partial \Phi_{a_1}}{\partial N_{a_1}} & \frac{\partial \Phi_{a_1}}{\partial N_{a_2}} & \hdots & \frac{\partial \Phi_{a_1}}{\partial N_{a_{|B|}}} \\
        \vdots & \vdots & \vdots & \hdots & \vdots \\
        \frac{\partial \Phi_{a_{|B|}}}{\partial N_1} & \frac{\partial \Phi_{a_{|B|}}}{\partial N_{a_1}} & \frac{\partial \Phi_{a_{|B|}}}{\partial N_{a_2}} & \hdots & \frac{\partial \Phi_{a_{|B|}}}{\partial N_{a_{|B|}}}
    \end{bmatrix}.
\end{align}

Now we observe the following properties about the sign of the entries of the above Jacobian matrix,
\begin{itemize}[leftmargin=*]
    \item We have $N>0$ and $\Phi_1(\mathbold{N},\mathbf{0}_K)=0$. This implies $N_1>0$ and $\max_{a\in[K]/\{1\}}N_a>0$. As a result, using (\ref{eqn:order_of_derv_of_g_wrt_N}), we have $\frac{\partial \Phi_1}{\partial N_1}<0$ and $\frac{\partial \Phi_1}{\partial N_{a_i}}\geq 0$ for every $i\in\{1,2,\hdots,|B|\}$.
    \item For $i\in\{2,\hdots,|B|+1\}$, in the $i$-th row, the only non-zero entries can be the first and the $i$-th entry. The first entry is $\frac{\partial \Phi_{a_i}}{\partial N_1}=d(\mu_1,x_{1,a_i}(N_1,N_{a_i}))\geq 0$. The $i$-th entry is $\frac{\partial \Phi_{a_i}}{\partial N_{a_i}}=d(\mu_{a_i},x_{1,a_i}(N_1,N_{a_i}))$. Since we have $N_1>0$, using (\ref{eqn:envelope_kl_div}), we have $d(\mu_{a_i},x_{1,a_i}(N_1,N_{a_i}))>0$, making the $i$-th entry positive.   
\end{itemize}

Therefore, considering only the sign of the elements, the matrix in (\ref{eqn:Jacobian1}) is of the form, 
\begin{align}\label{eqn:Jacobian2}
    &\begin{bmatrix}
        -- & + & + & + & \hdots & + \\
        + & ++ & 0 & 0 & \hdots & 0 \\
        + & 0 & ++ & 0 & \hdots & 0 \\
        + & 0 & 0 & ++ & \hdots & 0 \\
        \vdots & \vdots & \vdots & \vdots & \hdots & \vdots \\
        + & 0 & 0 & 0 & \hdots & ++
    \end{bmatrix},
\end{align}
where the symbols $++,~--$ and $+$ implies that the corresponding entries are positive, negative and non-negative. 

We now argue that a matrix of the above structure has a rank $|B|+1$. To see that, by subtracting some appropriate constant times the $i$-th column from the first column, we can make the entries in position $i\in\{2,3,\hdots,|B|+1\}$ in the first column zero. As a result of these transformations, since we are subtracting non-negative quantities from the first entry of the first column, that entry remains negative. The matrix we obtain after this sequence of transformations has a structure, 
\begin{align}\label{eqn:Jacobian3}
    &\begin{bmatrix}
        -- & + & + & + & \hdots & + \\
        0 & ++ & 0 & 0 & \hdots & 0 \\
        0 & 0 & ++ & 0 & \hdots & 0 \\
        0 & 0 & 0 & ++ & \hdots & 0 \\
        \vdots & \vdots & \vdots & \vdots & \hdots & \vdots \\
        0 & 0 & 0 & 0 & \hdots & ++
    \end{bmatrix}.
\end{align}

Clearly a matrix of the above structure has a rank $|B|+1$ and therefore invertible. 
\bigskip

\bulletwithnote{Statement 2} Using statement 1 of Lemma \ref{lem:Jacobian}, if we take $\mathbold{v}=\left(\frac{\partial \mathbold{\widetilde{\Phi}}_B}{\partial \mathbold{N}_{\overline{B}}}\right)^{-1}\frac{\partial\mathbold{\widetilde{\Phi}}_B}{\partial I}$, then we have, 
\[\frac{\partial \mathbold{\widetilde{\Phi}}_B}{\partial \mathbold{N}_{\overline{B}}}\mathbold{v}~=~\frac{\partial\mathbold{\widetilde{\Phi}}_B}{\partial I}.\]

Note that, the RHS of the above linear system \textit{i.e.} $\frac{\partial\mathbold{\widetilde{\Phi}}_B}{\partial I}$ is a $|B|+1$ dimensional vector with zero in its first entry and $-1$ in every other entry. Using (\ref{eqn:Jacobian2}), $\mathbold{v}=[v_1,v_2,v_3,\hdots,v_{|B|+1}]^T$ satisfies a linear system with coefficients having the following signs,
\begin{align*}
    (--)v_1+(+)v_2+(+)v_3+(+)v_4+\hdots+(+)v_{|B|+1}~&=~0,\quad\text{and} 
\end{align*}
\begin{align*}
        \text{for}~~i\in\{2,\hdots,|B|+1\},\quad(+)v_1+(++)v_{i}~&=~(--),
\end{align*}
where $++$, $--$ and $+$ represents quantities which are positive, negative and non-negative. 

For every $i\in\{2,\hdots,|B|+1\}$, we can eliminate $v_i$ from the first equation by subtracting some positive constant times the $i$-th equation from it. After eliminating $v_2,v_3,\hdots,v_{|B|+1}$ from the first equation following the mentioned procedure, we will be left with an equation of the form $(--)v_1=(+)$, implying $v_1\leq 0$. 

Now the $i$-th equation of the system, for $i\in \{2,\hdots,|B|+1\}$ is, 
\begin{align}\label{eqn:ift_linear_sys1}
    \frac{\partial \Phi_{a_{i-1}}}{\partial N_1} v_1 +\frac{\partial \Phi_{a_{i-1}}}{\partial N_{a_{i-1}}} v_{i}~&=~-1.
\end{align}

We know that, $\frac{\partial \Phi_{a_{i-1}}}{\partial N_1}=d(\mu_1,x_{1,{a_{i-1}}})$ and $\frac{\partial \Phi_{a_{i-1}}}{\partial N_{a_{i-1}}}=d(\mu_{a_{i-1}},x_{1,{a_{i-1}}})$, where $x_{1,a}=\frac{N_1 \mu_1 + N_{a} \mu_{a}}{N_1+N_{a}}$ for every $a\in [K]/\{1\}$. 

Now putting the derivatives  in  (\ref{eqn:ift_linear_sys1}), we have
\[d(\mu_1,x_{1,a_{i-1}}) v_1 + d(\mu_{a_{i-1}},x_{1,a_{i-1}}) v_i~=~-1, \]
which implies,
\[v_i~=~-\frac{1}{d(\mu_{a_{i-1}},x_{1,a_{i-1}})}-\frac{d(\mu_{1},x_{1,a_{i-1}})}{d(\mu_{a_{i-1}},x_{1,a_{i-1}})} v_1,\]
for every $i\in\{2,\hdots,|B|+1\}$. 

Now adding both sides for $i\in \{2,3,\hdots,|B|+1\}$, we get, 
\begin{align*}
    \sum_{i=2}^{|B|+1} v_i~&=~-\sum_{i=2}^{|B|+1}\frac{1}{d(\mu_{a_{i-1}},x_{1,a_{i-1}})}-v_1 \sum_{i=2}^{|B|+1}\frac{d(\mu_{1},x_{1,a_{i-1}})}{d(\mu_{a_{i-1}},x_{1,a_{i-1}})}\\
    ~&=~-\sum_{a\in B}\frac{1}{d(\mu_a,x_{1,a})}-v_1 \sum_{a\in B}\frac{d(\mu_1,x_{1,a})}{d(\mu_a,x_{1,a})} \\
    ~&\leq~-\sum_{a\in B}\frac{1}{d(\mu_a,x_{1,a})}-v_1 \sum_{a\in[K]/\{1\}}\frac{d(\mu_1,x_{1,a})}{d(\mu_a,x_{1,a})}\qquad\text{(since $v_1\leq 0$)}\\
    ~&=~-\sum_{a\in B}\frac{1}{d(\mu_a,x_{1,a})}-v_1,
\end{align*}
where the last step follows from the fact that $\sum_{a\in[K]/\{1\}}\frac{d(\mu_1,x_{1,a})}{d(\mu_a,x_{1,a})}=\Phi_1(\mathbold{N},\mathbf{0}_K)+1=1$. Taking $v_1$ on the LHS, we have, 
\[\sum_{i=1}^{|B|+1}v_i~\leq~-\sum_{a\in B}\frac{1}{d(\mu_a,x_{1,a})}.\]
Note that the LHS of the above inequality is same as $\mathbf{1}^T \mathbold{v}=\mathbf{1}^T \left(\frac{\partial \mathbold{\widetilde{\Phi}}_B}{\partial \mathbold{N}_{\overline{B}}}\right)^{-1} \frac{\partial \Phi_{K+1}}{\partial I}$. In the RHS, since $d(\mu_a,x_{1,a})\leq d(\mu_a,\mu_1)$, we conclude the desired result.  \\



\bulletwithnote{Statement 3} We have, 
\begin{align*}
    \frac{\partial \mathbold{\Phi}_B}{\partial (\mathbold{N}_{\overline{B}},I)}&=\left[\begin{array}{c|c}
    & \\
       \frac{\partial \mathbold{\widetilde{\Phi}}_{B}}{\partial \mathbold{N}_{\overline{B}}}  & \frac{\partial \mathbold{\widetilde{\Phi}}_{B}}{\partial I} \\
       & \\
       \hline \\
        \frac{\partial \Phi_{K+1}}{\partial \mathbold{N}_{\overline{B}}}  & \frac{\partial \Phi_{K+1}}{\partial I} \\
        & 
    \end{array}\right]=\left[\begin{array}{c|c}
    & \\
       \frac{\partial \mathbold{\widetilde{\Phi}}_{B}}{\partial \mathbold{N}_{\overline{B}}}  & \frac{\partial \mathbold{\widetilde{\Phi}}_{B}}{\partial I} \\
       & \\
       \hline \\
        \mathbf{1}_{|B|+1}^T  & 0 \\
        & 
    \end{array}\right]
\end{align*}

We do the following determinant preserving column operation on $\frac{\partial \mathbold{\Phi}_B}{\partial (\mathbold{N}_{\overline{B}},I)}$, 
\[\left[\frac{\partial \mathbold{\Phi}_B}{\partial (\mathbold{N}_{\overline{B}},I)}\right]_{:,|B|+2}\Longleftarrow \left[\frac{\partial \mathbold{\Phi}_B}{\partial (\mathbold{N}_{\overline{B}},I)}\right]_{:,|B|+2}-\left[\frac{\partial \mathbold{\Phi}_B}{\partial (\mathbold{N}_{\overline{B}},I)}\right]_{:,1:|B|+1}\left(\frac{\partial \mathbold{\widetilde{\Phi}}_B}{\partial \mathbold{N}_{\overline{B}}}\right)^{-1}\frac{\partial\mathbold{\widetilde{\Phi}}_B}{\partial I},\]
where $\left[\frac{\partial \mathbold{\Phi}_B}{\partial (\mathbold{N}_{\overline{B}},I)}\right]_{:,|B|+2}$ and $\left[\frac{\partial \mathbold{\Phi}_B}{\partial (\mathbold{N}_{\overline{B}},I)}\right]_{:,1:|B|+1}$, respectively, denotes the $|B|+2$-th column and left $(|B|+2)\times(|B|+1)$ submatrix of $\frac{\partial \mathbold{\Phi}_B}{\partial (\mathbold{N}_{\overline{B} },I)}$. \\ 

The above column operation gives us the matrix, 

\[\left[\begin{array}{c|c}
    & \\
       \frac{\partial \mathbold{\widetilde{\Phi}}_{B}}{\partial \mathbold{N}_{\overline{B}}}  & \mathbf{0}_{|B|+1} \\
       & \\
       \hline \\
        \mathbf{1}_{|B|+1}^T  & -\mathbf{1}_{|B|+1}^{T}\left(\frac{\partial \mathbold{\widetilde{\Phi}}_B}{\partial \mathbold{N}_{\overline{B}}}\right)^{-1}\frac{\partial\mathbold{\widetilde{\Phi}}_B}{\partial I} \\
        & 
    \end{array}\right],\] 

which has the same determinant as $\frac{\partial \mathbold{\Phi}_{B}}{\partial (\mathbold{N}_{\overline{B}},I)}$. Therefore,

\[\det\left(\frac{\partial \mathbold{\Phi}_{B}}{\partial (\mathbold{N}_{\overline{B} },I)}\right)=\left(-\mathbf{1}_{|B|+1}^{T}\left(\frac{\partial \mathbold{\widetilde{\Phi}}_B}{\partial \mathbold{N}_{\overline{B}}}\right)^{-1}\frac{\partial\mathbold{\widetilde{\Phi}}_B}{\partial I}\right)\times \det\left(\frac{\partial \mathbold{\widetilde{\Phi}}_{B}}{\partial \mathbold{N}_{\overline{B}}}\right).\]
Using statement 1 and 2 of Lemma \ref{lem:Jacobian}, both the quantities in the above product are non-zero, making the Jacobian $\frac{\partial \mathbold{\Phi}_{B}}{\partial (\mathbold{N}_{\overline{B} },I)}$ invertible for every tuple in $\mathcal{Z}_B$.
\end{proof}

\section{Proofs from Section \ref{sec:setup_lb}}\label{appndx:proof_in_setup}
Theorem \ref{th:optsol.char-update} is essential for proving Propositions \ref{prop:uniqueness_of_fluid_sol} and \ref{prop:opt_cond} in Section \ref{sec:setup_lb}. Before stating Theorem \ref{th:optsol.char-update} we state an alternative formulation of the max-min problem \ref{eqn:max_min_formulation} which we call $\mathbf{O}$.
 \begin{align}\label{eqn:optsol_0}
     \mathbf{O}~:\quad&\min~\sum\nolimits_{a=1}^K N_a\nonumber\\
     \text{s.t.}~~&\forall a\neq 1,~~W_a(N_1,N_a) :=    N_1 d(\mu_{1}, x_{1,a}) +  N_a d(\mu_{a}, x_{1,a}) \ge \log\frac{1}{2.4 \delta},
 \end{align}
where $N_a \ge 0$ for all $a$, and each
$x_{1,a} = \frac{\mu_1 N_1 + \mu_a N_a}{N_1 + N_a}$. 

The optimal value of the problem \textbf{O} is of the form $T^\star(\mathbold{\mu})\log(1/(2.4\delta))$, where $T^\star(\mathbold{\mu})$ is the reciprocal of the optimal value of (\ref{eqn:max_min_formulation}). If $\mathbold{N}^\star=(N_a^\star:a\in[K])$ is an optimal allocation solving $\mathbf{O}$, then $\mathbold{\omega}_a^\star=\frac{N_a^\star}{\sum_{b\in[K]}N_b^\star}$ is an optimal proportion solving (\ref{eqn:max_min_formulation}). Similarly if $\mathbold{\omega}^\star$ is an optimal proportion solving (\ref{eqn:max_min_formulation}) then $\mathbold{N}^\star=(N_a^\star:a\in[K])$ with $N_a^\star=\omega_a^\star T^\star(\mathbold{\mu})$ is an optimal allocation solving $\mathbf{O}$. Theorem \ref{th:optsol.char-update} implies uniqueness to the solution of $\mathbf{O}$ which also implies uniqueness of the solution of (\ref{eqn:max_min_formulation}). 

Some notations are needed before stating Theorem \ref{th:optsol.char-update}.
Let $B \subset [K]/1$ and $\overline{B}=B\cup\{1\}$. Let $\mathbold{\nu}=(\nu_a:a\in[K])\in\mathcal{S}^K$ be some instance with $\nu_1>\max_{a\neq 1}\nu_a$. 
 Let $(I_a: a \in B)$ each be strictly positive. 
If $\overline{B}^{c} \ne \emptyset$ then 
let $(N_a\in\mathbb{R}_{\geq 0}: a \in \overline{B}^{c})$ 
be the no. of samples allocated to arms in $\overline{B}^c$. We define $N_{1,1}$ as zero when $\sum_{a\in\overline{B}^c}N_a=0$ or $\overline{B}^c=\emptyset$. Otherwise, we take $N_{1,1}$ to be the unique $N_1>0$ that solves, 
\begin{equation} \label{eqn:ratsumbarB}
  \sum_{a \in \overline{B}^{c}}
\frac{d(\nu_1, x_{1,a})}{d(\nu_a, x_{1,a})}
=1,
\end{equation}
where
$x_{1,a} = \frac{\nu_1 N_1 + \nu_a N_a}{N_1 + N_a}$.
Note that if $\overline{B}^c\neq \emptyset$ and $\sum_{a\in \overline{B}^c}N_a>0$, there exists an $a\in \overline{B}^c$ with $N_a>0$. As a result, the LHS of (\ref{eqn:ratsumbarB}) decreases from $\infty$ to $0$ as $N_1$ increases from $0$ to $\infty$. This implies the existence of a unique $N_1>0$ solving (\ref{eqn:ratsumbarB}).

Set 
$N_{1,2} := \max_{a\in B} I_a{d(\nu_1, \nu_a)^{-1}}$. 

\begin{theorem}\label{th:optsol.char-update}
There exists a unique solution $N_1^{\star}\geq 0$ and $(N^{\star}_a\geq 0:a \in B)$ satisfying 
\begin{equation} \label{eqn:optsol_1111}
\sum_{a \ne 1}
\frac{d(\nu_{1},{x}^{\star}_{1,a})}{d(\nu_a,{x}^{\star}_{1,a})}-1=0\quad\mbox{ where }\quad x^{\star}_{1,a} = \frac{\nu_1 N^{\star}_1 + \nu_a N^{\star}_a}{N^{\star}_1 + N^{\star}_a},
\end{equation}
  \begin{equation} \label{eqn:optsol_0111}
  \mbox{ and }\quad
  N^{\star}_1 d(\nu_{1}, x^{\star}_{1,a}) +  N^{\star}_a d(\nu_{a}, x^{\star}_{1,a})  =
I_a  \quad\forall a \in B.
    \end{equation}
 Further,   $N^{\star}_1\geq\max(N_{1,1}, N_{1,2})$
 and each 
 $N_a^{\star}\geq \frac{I_a}{d(\nu_a, \nu_1)}$.

Furthermore, 
 The optimal solution to 
$\mathbf{O}$  is uniquely characterized
by the solution above with 
$B = \{2, \ldots, K\}$ and 
each $I_a = \log(1/(2.4 \delta))$
and constraints 
(\ref{eqn:optsol_0}) tight, that is,
indexes of all the sub-optimal arms being equal to $\log(1/(2.4\delta))$.
Further, 
  $N_1^{\star} \geq \max_{a\in [K]\setminus{1}}  
 \frac{\log(1/(2.4 \delta) }{d(\nu_1, \nu_a)}$ 
 and each 
 $N_a^{\star} \geq  \frac{\log(1/(2.4 \delta) }{d(\nu_a, \nu_1)}$. 
\end{theorem}

\noindent {\bf Proof:}
First observe that every solution $N_1^\star$ and $(N_a^\star:a\in B)$ to the system (\ref{eqn:optsol_1111}) and (\ref{eqn:optsol_0111}) must satisfy $N_1\geq\max\{N_{1,1},N_{1,2}\}$ and $N_a^\star \geq \frac{I_a}{d(\nu_a,\nu_1)}$, for every $a\in B$. 

If $\overline{B}^c\neq\emptyset$, (\ref{eqn:optsol_1111}) implies, 
\[\sum_{a\in \overline{B}^c}\frac{d(\nu_1,x_{1,a})}{d(\nu_a,x_{1,a})}~\leq~\sum_{a\in [K]/\{1\}}\frac{d(\nu_1,x_{1,a})}{d(\nu_a,x_{1,a})}~=~1.\]

If $\sum_{a\in \overline{B}^c}N_a=0$, then $N_1\geq 0=N_{1,1}$. Otherwise, we can find an $a\in\overline{B}^c$ with $N_a>0$, making $\sum_{a\in\overline{B}^c}\frac{d(\nu_1,x_{1,a})}{d(\nu_a,x_{1,a})}$ strictly decreasing in $N_1$. As a result, by the definition of $N_{1,1}$ we have $N_1^\star\geq N_{1,1}$. 

Now, for every $a\in B$, we have 
\[I_a~=~N_1^\star d(\nu_1,x_{1,a})+N_a^\star d(\nu_a,x_{1,a})~=~\min_{x\in[\nu_a,\nu_1]}\{N_1^\star d(\nu_1,x)+N_a^\star d(\nu_a,x)\}.\]
Note that RHS of the above inequality is upper bounded by $\min\{N_1^\star d(\nu_1,\nu_a),N_a^\star d(\nu_a,\nu_1)\}$. As a result, for every $a\in B$, $N_a^\star\geq \frac{I_a}{d(\nu_a,\nu_1)}$, and $N_1^\star\geq \frac{I_a}{d(\mu_1,\mu_a)}$. This further implies, $N_1^\star\geq\max_{a\in B}\frac{I_a}{d(\nu_1,\nu_a)}=N_{1,2}$. 

Now we prove existence of a unique $N_1^\star$ and $(N_a^\star:a\in B)$. For every $N_1\geq N_{1,2}$ and $a\in B$, as $N_a$ increases from $0$ to $\infty$, $N_1 d(\nu_1,x_{1,a})+N_a d(\nu_a,x_{1,a})$ increases monotonically from $0$ to $N_1 d(\nu_1,\nu_a)$. Note that $N_1 d(\nu_1,\nu_a)\geq N_{1,2} d(\nu_1,\nu_a)\geq I_a$ (by the definition of $N_{1,2}$). As a result, we can find a unique $N_a$ for which $N_1 d(\nu_1,x_{1,a})+N_a d(\nu_a,x_{1,a})=I_a$. For every $a\in B$, we call that unique $N_a$ as $N_a(N_1)$. Observe that, since $I_a>0$, $N_a(N_1)$ is strictly decreasing in $N_1$, and if $I_a=0$, then $N_a(N_1)=0$. Also, if $a=\argmax{b\in B}{\frac{I_b}{d(\nu_1,\nu_b)}}$, then $N_a(N_{1,2})=\infty$ and $\lim_{N_1\to\infty} N_a(N_1)=\frac{I_a}{d(\nu_a,\nu_1)}$. 

We now consider the allocation where every arm $a\in B$ has $N_a(N_1)$ samples, and consider the function, 
\[h(N_1)~=~\sum_{a\in [K]/\{1\}} \frac{d(\nu_1,x_{1,a})}{d(\nu_a,x_{1,a})}.\]

Observe that, for all $a\in B$, $\frac{d(\nu_1,x_{1,a})}{d(\nu_a,x_{1,a})}$ is strictly decreasing for $N_1\geq N_{1,2}$. Also if $\overline{B}^c$ is non-empty, then every term $\frac{d(\nu_1,x_{1,a})}{d(\nu_a,x_{1,a})}$ with $N_a>0$ is strictly decreasing in $N_1$. As a result, the overall function $N_1\to h(N_1)$ is strictly decreasing.

Moreover, as $N_1$ increases to $\infty$, $N_a(N_1)$ converges to $\frac{I_a}{d(\nu_a,\nu_1)}$ for every $a\in B$. Hence, $h(N_1)$ decreases to $0$ as $N_1\to \infty$. Therefore, if we can show that $h(\max\{N_{1,1},N_{2,1}\})\geq 1$, then we can find a unique $N_1^\star\geq\max\{N_{1,1},N_{1,2}\}$ at which $h(N_1^\star)=1$, and can take $N_a^\star=N_a(N_1^\star)$ for every $a\in B$. Following this, to prove uniqueness, it is sufficient to show $h(\max\{N_{1,1},N_{1,2}\})\geq 1$. 

If $N_{1,2}\geq N_{1,1}$, then some $a\in B$ has $N_a(N_{1,2})=\infty$. As a result, we have $h(N_{1,2})=\infty \geq 1$. Otherwise, if $N_{1,1}\geq N_{1,2}>0$, then by definition of $N_{1,1}$, we have
\[h(N_{1,1})~\geq~\sum_{a\in \overline{B}^c}\frac{d(\nu_1,x_{1,a})}{d(\nu_a,x_{1,a})}=1.\]

Hence we finish proving the first part of Theorem \ref{th:optsol.char-update}. 
\bigskip

To see the necessity of the stated optimality conditions
for ${\bf O}$
 observe that 
we cannot have $N_1^{\star}=0$ or $N_a^{\star}=0$ as that implies 
that the index $W_{a}(N_1^\star, N_a^{\star})$ is zero.
Further, if $W_{a}(N_1^\star, N_a^{\star}) >\log(1/(2.4\delta))$,  the objective improves by reducing $N_a^{\star}$. Thus the constraints (\ref{eqn:optsol_0}) must be 
tight. 
To see the tightness of (\ref{eqn:optsol_0}) again note that 
the derivative of 
$W_{a}(N_1,N_a)$ with respect to $N_1$ and $N_a$, respectively,  equals 
$d(\mu_{1},x_{1,a})$ and $d(\mu_{a},x_{1,a})$. 

Now, perturbing $N_1$ 
by a tiny $\epsilon$ and adjusting each $N_a$ by $\frac{d(\mu_{1},x_{1,a})}{d(\mu_a,x_{1,a})} \epsilon$ maintains the value of $W_{a}(N_1^\star, N_a^{\star})$. Thus, at optimal $\mathbold{N}^{\star}$, necessity of  tightness of inequalities in (\ref{eqn:optsol_0}) follows. 
This condition can also be seen through the Lagrangian (see \cite{agrawal2023optimal}). 

The fact that these three criteria uniquely specify the optimal solution follows from our analysis above. 
Since the convex problem
${\bf O}$ has a solution, the uniqueness of the solution above satisfying the necessary conditions implies 
that this uniquely solves ${\bf O}$.\hfill \qedsymbol

To prove Propositions \ref{prop:uniqueness_of_fluid_sol} and \ref{prop:opt_cond}, we need to use the Implicit function theorem. For that, we define the following functions, 
\begin{align*}
&J_1(\mathbold{N})=g(\mathbold{\mu},\mathbold{N})=\sum_{a\neq 1}\frac{d(\mu_1,x_{1,a}(N_1,N_a))}{d(\mu_a,x_{1,a}(N_1,N_a))}-1,   \\
  \forall~a\in[K]/\{1\}\quad&J_a(\mathbold{N},I)=N_1 d(\mu_1,x_{1,a}(N_1,N_a))+N_a d(\mu_a,x_{1,a}(N_1,N_a))-I, \\
  &J_{K+1}(\mathbold{N},N)=\sum_{a\in[K]}N_a-N,
\end{align*}
where $\mathbold{N}=(N_a \in\mathbb{R}_{\geq 0} : a \in [K])$, $I\in\mathbb{R}$, $N\in\mathbb{R}_+$, and, for every $a\in[K]/\{1\}$, $x_{1,a}(N_1,N_a)=\frac{N_1 \mu_1+N_a\mu_a}{N_1+N_a}$. 

Using these functions, for every non-empty $B\subseteq[K]/\{1\}$, we define the vector valued function 
\[\mathbold{J}_{B}(\mathbold{N},I,N)=\left[J_1(\mathbold{N}),~(J_a(\mathbold{N},I))_{a\in B},~J_{K+1}(\mathbold{N},N)\right].\]
We call $\mathbold{J}_{[K]/\{1\}}(\cdot)$ as $\mathbold{J}(\cdot)$. Recall that $\overline{B}$ denotes $B\cup\{1\}$. 

For every $m\geq 1$, we use the notation $\mathbf{0}_m$ to denote a $m$-dimensional vector with all entries set to zero. Observe that, for every $B\subseteq[K]/\{1\}$, $\mathbold{J}_B(\mathbold{N},I,N)=\mathbold{\Phi}_{B}(\mathbold{N},I,\mathbf{0}_{K},N)$, where the function $\mathbold{\Phi}_B(\cdot)$ is defined in Appendix \ref{appndx:Jacobian}. 

Lemma \ref{lem:main:ift} is essential for proving Proposition \ref{prop:uniqueness_of_fluid_sol}. 
\begin{lemma}\label{lem:main:ift}
    For every $N>0$ and non-empty $B\subseteq[K]/\{1\}$, if $\mathbold{\hat{N}}=(\hat{N}_a\in\mathbb{R}_{\geq 0}:a\in[K])$ satisfies $\sum_{a\in \overline{B}^c}\hat{N}_a<N$, and $\mathbold{J}_B(\mathbold{\hat{N}},\hat{I},N)=\mathbf{0}_{|\overline{B}|+1}$, then, the Jacobian of $\mathbold{J}_B(\cdot)$ with respect to the arguments  $(\mathbold{N}_{\overline{B}},I)$ is invertible at $(\mathbold{\hat{N}}, \hat{I}, N)$.
\end{lemma}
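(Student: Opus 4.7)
The plan is to derive this lemma as an immediate corollary of Lemma \ref{lem:Jacobian}, since the functions $\mathbold{J}_B$ and $\mathbold{\Phi}_B$ agree once the perturbation coordinate is frozen at $\mathbold{\eta}=\mathbf{0}_K$. First I would record the componentwise identity $\mathbold{J}_B(\mathbold{N},I,N) = \mathbold{\Phi}_B(\mathbold{N},I,\mathbf{0}_K,N)$, which is immediate from the definitions. Since $\mathbold{\eta}$ is not a differentiation variable in the present Jacobian and $N$ is held fixed, this identity promotes at corresponding points to
\[
\frac{\partial \mathbold{J}_B}{\partial (\mathbold{N}_{\overline{B}},I)}\bigg\vert_{(\mathbold{\hat{N}},\hat{I},N)} \;=\; \frac{\partial \mathbold{\Phi}_B}{\partial (\mathbold{N}_{\overline{B}},I)}\bigg\vert_{(\mathbold{\hat{N}},\hat{I},\mathbf{0}_K,N)}.
\]
So it suffices to invoke statement 3 of Lemma \ref{lem:Jacobian} at the right-hand point, and the task reduces to verifying that $(\mathbold{\hat{N}},\hat{I},\mathbf{0}_K,N)\in\mathcal{Z}_B$.

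Next I would check the membership in $\mathcal{Z}_B$. The conditions $\mathbold{\hat{N}}\in\mathbb{R}_{\geq 0}^K$ and $N>0$ are given. The defining equality $\mathbold{\Phi}_B(\mathbold{\hat{N}},\hat{I},\mathbf{0}_K,N)=\mathbf{0}_{|B|+2}$ is exactly a restatement of $\mathbold{J}_B(\mathbold{\hat{N}},\hat{I},N)=\mathbf{0}_{|\overline{B}|+1}$, noting that $|\overline{B}|+1 = |B|+2$. One subtle bookkeeping point is that the sign-pattern argument underlying statement 1 of Lemma \ref{lem:Jacobian} implicitly uses $\hat{N}_1>0$ and $\max_{a\ne 1}\hat{N}_a>0$. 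Both follow here: the hypothesis $\sum_{a\in\overline{B}^c}\hat{N}_a<N=\sum_a\hat{N}_a$ forces $\sum_{a\in\overline{B}}\hat{N}_a>0$, and then $\Phi_1(\mathbold{\hat{N}},\mathbf{0}_K)=0$ rules out $\hat{N}_1=0$ (which would make the ratio sum blow up, as $d(\mu_a,x_{1,a})\to 0$ when $x_{1,a}\to\mu_a$) and rules out $\hat{N}_a=0$ for every $a\ne 1$ (which would make the sum equal $0$, not $1$).

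Finally, statement 3 of Lemma \ref{lem:Jacobian}, applied at $(\mathbold{\hat{N}},\hat{I},\mathbf{0}_K,N)\in\mathcal{Z}_B$, yields invertibility of $\frac{\partial \mathbold{\Phi}_B}{\partial (\mathbold{N}_{\overline{B}},I)}$, which by the first paragraph is precisely the claimed invertibility of $\frac{\partial \mathbold{J}_B}{\partial (\mathbold{N}_{\overline{B}},I)}$ at $(\mathbold{\hat{N}},\hat{I},N)$. I do not expect a genuine technical obstacle here: the nontrivial content, namely the sign-pattern analysis of the bordered matrix and the sign bound
\[
\mathbf{1}_{|B|+1}^{T}\left(\frac{\partial \mathbold{\widetilde{\Phi}}_B}{\partial \mathbold{N}_{\overline{B}}}\right)^{-1}\frac{\partial\mathbold{\widetilde{\Phi}}_B}{\partial I} \;\leq\; -\sum_{a\in B}\frac{1}{d(\mu_a,\mu_1)},
\]
has already been discharged inside Lemma \ref{lem:Jacobian}. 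The only care required is in the bookkeeping of the previous paragraph, ensuring that the hypothesis $\sum_{a\in\overline{B}^c}\hat{N}_a<N$ supplies exactly the positivity of $\hat{N}_1$ and of $\sum_{a\in B}\hat{N}_a$ that Lemma \ref{lem:Jacobian} tacitly requires.
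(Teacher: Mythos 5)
Your proposal is correct and follows exactly the paper's route: identify $\mathbold{J}_B(\mathbold{N},I,N)=\mathbold{\Phi}_B(\mathbold{N},I,\mathbf{0}_K,N)$ and invoke statement 3 of Lemma \ref{lem:Jacobian} at a point of $\mathcal{Z}_B$. The extra positivity bookkeeping you add (that $\hat{N}_1>0$ and $\max_{a\neq 1}\hat{N}_a>0$) is already discharged inside the paper's proof of statement 1 of Lemma \ref{lem:Jacobian} from $N>0$ and $\Phi_1=0$, so it is a harmless redundancy rather than a new ingredient.
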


\begin{proof} 
We have $\mathbold{J}_B(\mathbold{N},I,N)=\mathbold{\Phi}_{B}(\mathbold{N},I,\mathbf{0}_{K},N)$, for every tuple $\mathbold{N},I,N$, and non-empty $B\subseteq[K]/\{1\}$. As a result, Lemma \ref{lem:main:ift} follows from statement 3 of Lemma \ref{lem:Jacobian}. 
\end{proof}

For every non-empty subset $B\subseteq [K]/\{1\}$, we define the function $\mathbold{\widetilde{J}}_B(\cdot)$ to be the first $|B|+1$ components of the vector valued function $\mathbold{J}_B(\cdot)$, or in other words, 
\[\mathbold{\widetilde{J}}_B(\cdot)~=~\left[~J_1(\cdot)~,~(J_a(\cdot))_{a\in B}~\right].\]

Observe that $\mathbold{\widetilde{J}}_B(\cdot)$  depends only on the tuple $\mathbold{N}$ and $I$, and doesn't depend on $N$. Also for every tuple $(\mathbold{N},I)\in\mathbb{R}_{\geq 0}^K$,  $\mathbold{\widetilde{J}}_B(\mathbold{N},I)=\mathbold{\widetilde{\Phi}}_B(\mathbold{N},I,\mathbf{0}_{K})$, where $\mathbold{\widetilde{\Phi}}_B$ is defined in Appendix \ref{appndx:Jacobian}. 

We now proceed on proving Propositions \ref{prop:uniqueness_of_fluid_sol} and \ref{prop:opt_cond}. 
\bigskip

\noindent\textbf{Proof of Proposition \ref{prop:uniqueness_of_fluid_sol}:}~Observe that, for every non-empty $B\subseteq[K]/\{1\}$, solving the system (\ref{eqn:fluid_sol_char}) is equivalent to solving for the pair $\mathbold{N}_{\overline{B}},I_B$ in $\mathbold{J}_{B}((\mathbold{N}_{\overline{B}},\mathbold{N}_{\overline{B}^c}),I_B,N)=0$. 

For every $I\geq 0$, by Theorem \ref{th:optsol.char-update}, there is a unique $\mathbold{N}_{\overline{B}}=(N_a\in\mathbb{R}_{\geq 0}:a\in \overline{B})$ for which, $\mathbold{\widetilde{J}}_B((\mathbold{N}_{\overline{B}},\mathbold{N}_{\overline{B}^c}),I)=\mathbf{0}_{|B|+1}$. We denote that solution using  $\mathbold{N}_{\overline{B}}(I)$ (we supress the dependence on $\mathbold{N}_{\overline{B}^c}$ for cleaner presentation, and also because we will be treating $\mathbold{N}_{\overline{B}^c}$ like a constant in the rest of the proof). Since, $\mathbold{\widetilde{J}}_B(\mathbold{N},I)=\mathbold{\widetilde{\Phi}}_B(\mathbold{N},I,\mathbf{0}_{K})$,  by the Implicit function theorem and using statement 1 of Lemma \ref{lem:Jacobian}, the function $I\to \mathbold{N}_{\overline{B}}(I)$ is continuously differentiable. Also, we have
\[\mathbold{N}^\prime_{\overline{B}}(I)~=~-\left(\frac{\partial \mathbold{\widetilde{J}}_B}{\partial\mathbold{N}_{\overline{B}}}\right)^{-1} \frac{\partial \mathbold{\widetilde{J}}_B}{\partial I}~=~-\left(\frac{\partial \mathbold{\widetilde{\Phi}}_B}{\partial\mathbold{N}_{\overline{B}}}\right)^{-1} \frac{\partial \mathbold{\widetilde{\Phi}}_B}{\partial I},\]
where the right most quantity is evaluated at the tuple $((\mathbold{N}_{\overline{B}}(I),\mathbold{N}_{\overline{B}^c}),I,\mathbf{0}_{K})$. Moreover, using statement 2 of Lemma \ref{lem:Jacobian}, we have, 
\[\sum_{a\in\overline{B}}N_a^\prime(I)~=~\mathbf{1}^T_{|B|+1}\mathbold{N}_{\overline{B}}^\prime(I)~=-\mathbf{1}^T_{|B|+1}\left(\frac{\partial \mathbold{\widetilde{\Phi}}_B}{\partial\mathbold{N}_{\overline{B}}}\right)^{-1} \frac{\partial \mathbold{\widetilde{\Phi}}_B}{\partial I}~\geq~\sum_{a\in B}\frac{1}{d(\mu_a,\mu_1)}>0\]

As a result, the function $\sum_{a\in\overline{B}}N_a(I)$ is strictly increasing in $I$ with a derivative atleast $\sum_{a\in B}\frac{1}{d(\mu_a,\mu_1)}$. Also, for $I=0$, the unique solution is $N_1(0)=N_{1,1}$ and $N_a(0)=0$ for every $a\in B$. As a result, as $I$ increases from $0$ to $\infty$, $\sum_{a\in\overline{B}}N_a(I)$ increases from $N_{11}$ to $\infty$ monotonically. Hence, for every $N\geq N_{11}+\sum_{a\in B^c}N_a$, we can find a unique $I_B$ for which $\sum_{a\in \overline{B}}N_a(I_B)+\sum_{a\in \overline{B}^c} N_a=N$. Therefore $\mathbold{N}_{\overline{B}}(I_B),I_B$ becomes the unique tuple to satisfy, $\mathbold{J}_B((\mathbold{N}_{\overline{B}},\mathbold{N}_{\overline{B}^c}),I_B,N)=\mathbf{0}_{|B|+2}$.   \proofendshere

\noindent\textbf{Proof of Proposition \ref{prop:opt_cond}:}~Recall that solving $\mathbf{O}$ in (\ref{eqn:optsol_0}) is equivalent to solving (\ref{eqn:max_min_formulation}). With this observation Proposition \ref{prop:opt_cond} follows directly from the definition of $\mathbold{N}_{[K]/\{1\}}(1),I_{[K]/\{1\}}(1)$ and the second statement of Theorem \ref{th:optsol.char-update}.  \proofendshere

\subsection{Single variable formulation of the lower bound problem and intuition behind the anchor function} \label{appndx:intuition_behind_anchor_function}

Now we show that the $K$-variable convex optimization problem $\mathbf{O}$ defined in (\ref{eqn:optsol_0}) can be reduced to a single variable convex optimization problem involving only  $N_1$. To see this, observe that $\mathbf{O}$ is equivalent to the problem  
\begin{align*}
    \mathbf{O}_1\quad &\min\sum_{a\in[K]}N_a\\
    &\text{s.t.}\quad \forall a\neq 1,~W_a(N_1,N_a)~=~N_1 d(\mu_1,x_{1,a})+N_a d(\mu_a,x_{1,a})~=~\log(1/(2.4 \delta))\\
    &\text{where}~x_{1,a}=\frac{N_1\mu_1+N_a \mu_a}{N_1+N_a},~\text{and}~\forall a\in[K],~N_a\geq 0.
\end{align*}
This follows from the proof of Theorem \ref{th:optsol.char-update}.

We first make some observations about $\mathbf{O}_1$.

\begin{enumerate}
    \item Upon fixing some $N_1>0$, $N_a\to W_a(N_1,N_a)$ is strictly increasing for every $a\neq 1$. Moreover, we have 
    \[W_a(N_1,0)=0\quad\text{and}\quad\lim_{N_a\to \infty} W_a(N_1,N_a)=N_1 d(\mu_1,\mu_a).\]  
    As a result, every feasible solution of $\mathbf{O}_1$ must satisfy $N_1 d(\mu_1,\mu_a)>\log(1/(2.4\delta))$ for every $a\neq 1$, which implies $N_1>\frac{\log(1/(2.4\delta))}{\min_{a\neq 1}d(\mu_1,\mu_a)}$.
    \item Using the preceding observation, for every $a\neq 1$ and $N_1>\frac{\log(1/(2.4\delta))}{\min_{a\neq 1}d(\mu_1,\mu_a)}$, we can find a unique $N_a$ such that $W_a(N_1,N_a)=\log(1/(2.4\delta))$. We use $\Bar{N}_a(N_1)$ to denote that unique $N_a$ as a function of $N_1$. Using the Implicit function theorem, it is easy to prove that the function $N_1\to \Bar{N}_a(N_1)$ is differentiable for every $N_1>\frac{\log(1/(2.4\delta))}{\min_{a\neq 1}d(\mu_1,\mu_a)}$ with the derivative being
    \[\Bar{N}_a^\prime(N_1)~=~-\frac{d(\mu_1,z_a(N_1))}{d(\mu_a,z_a(N_1))}\quad\text{where}\quad z_a(N_1)~=~\frac{N_1\mu_1+\Bar{N}_a(N_1) \mu_a}{N_1+\Bar{N}_a(N_1)}.\]
\end{enumerate}

It follows that $\mathbf{O}_1$ is equivalent to the single variable optimization problem:
\begin{align*}
    \mathbf{O}_2\quad &\min f(N_1)~\overset{\text{def.}}{=}~N_1+\sum_{a\neq 1}\Bar{N}_a(N_1)\\
    &\text{s.t.}~N_1>\frac{\log(1/(2.4\delta))}{\min_{a\neq 1}d(\mu_1,\mu_a)}.
\end{align*}

\begin{figure}[h]
    \centering
    \includegraphics[width=0.6\linewidth]{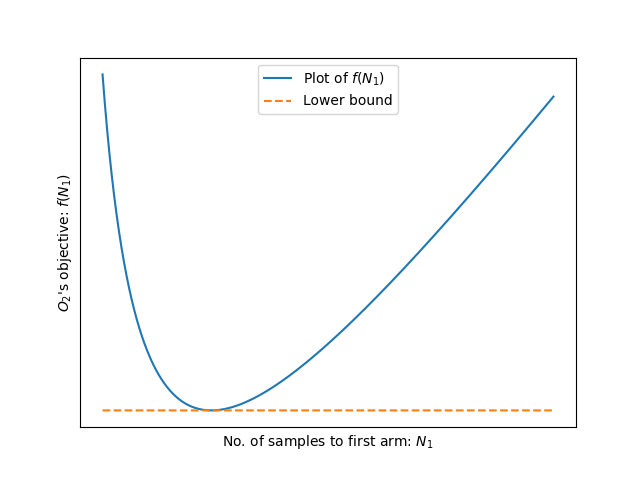}
    \caption{Illustrative plot of $\mathbf{O}_2$'s objective $f(N_1)$}
    \label{fig:lb_objective}
\end{figure}

Further,
\begin{align*}
    f^\prime(N_1)~&=~1+\sum_{a\neq 1}\Bar{N}_a^\prime(N_1)\\
    ~&=~1-\sum_{a\neq 1}\frac{d(\mu_1,z_a(N_1))}{d(\mu_a, z_a(N_1))},    
\end{align*}

which is exactly negative of the anchor function defined in Section \ref{sec:setup_lb} w.r.t. the allocation where arm 1 gets $N_1$samples and every sub-optimal arm $a\neq 1$ gets $\Bar{N}_a(N_1)$ samples. Furthermore, $f^\prime(N_1)$ is strictly increasing in $N_1$ and increases from $-\infty$ to $1$ as $N_1$ increases from $\frac{\log(1/(2.4\delta))}{\min_{a\neq 1}d(\mu_1,\mu_a)}$ to $\infty$. As a result, $\mathbf{O}_2$ is a convex problem w.r.t. $N_1$ and the optimal $N_1$ solving $\mathbf{O}_2$ is uniquely identified by the condition:
\[\sum_{a\neq 1}\frac{d(\mu_1,z_a(N_1))}{d(\mu_a,z_a(N_1))}-1=0,\]
which is equivalent to saying that the anchor function $g$ must be zero when evaluated at the allocation where the first arm gets $N_1$ samples and every arm $a\neq 1$ gets $\Bar{N}_a(N_1)$ samples. Figure \ref{fig:lb_objective} shows an illustrative plot of $\mathbf{O}_2$'s objective $f(N_1)$.

This observation also implies that the unique $\beta$ which makes the $\beta$-EB-TCB(I) policy in \cite{jourdan2022top}  asymptotically optimal is uniquely identified by the first order conditions:~1)~the anchor function $g$ must be zero and~2)~indexes of the sub-optimal arms must be equal.  Hence allocations made by every asymptotically optimal sampling policy must converge to these first order conditions. Both the fluid dynamics in Section \ref{sec:fluid_model} and Proposition \ref{prop:convergence_to_opt_cond:main_body} in Section \ref{sec:proof_sketch}, respectively, show that the sampling policies of (I)AT2 algorithm converges to these first order conditions in a fluid model and in the proposed algorithms.

\subsection{Sub-optimality of TCB(I)}\label{appndx:TCB_sub_optimality}

\cite{mukherjee2023best} implicitly assumes that the optimal proportion is uniquely identified by the condition that indexes of every sub-optimal arm under the proportion are equal. Note that the optimal proportion is a $K$ dimensional vector and the conditions mentioned in \cite[Lemma 2, statement 2]{mukherjee2023best} has $K-1$ equations ($K-2$ equations to maintain equality of $K-1$ indexes and one more equation to make sure that all the entries in the $K$-dimensional vector add up to $1$). Moreover, in Appendix \ref{appndx:intuition_behind_anchor_function}, changing the variables of the problem $\mathbf{O}_2$ to capture proportion of samples allocated to every arm, it is not hard to prove that for every value of $w_1\in(0,1)$, we can get a unique proportion $(w_1,w_2,\hdots,w_K)$ such that $\sum_{a\in[K]}w_a=1$ and index of all sub-optimal arms under the proportion are equal. By the argument in Appendix \ref{appndx:intuition_behind_anchor_function}, the unique optimal proportion out of these infinite no. of proportions satisfying equality of the indexes, is uniquely identified by the necessary and sufficient condition that the anchor function $g(\cdot)$ evaluated at that proportion must be zero. Without this condition, the allocation cannot be optimal. In the numerical experiments of Appendix \ref{appndx:experiments:TCB_suboptimality}, we see that the anchor function doesn't always converge to zero for the TCB(I) algorithm. As a result, allocations made by TCB(I) can be sub-optimal. 

\section{Proofs from Section \ref{sec:fluid_model}} \label{appndx:fluid_model}
\noindent {\bf Proof of Theorem~\ref{lemma:pathode}:}
We first prove all the steps 
of Theorem~\ref{lemma:pathode} except for 
showing the existence of
 $\beta >0$ and independent of $N$ such that 
$I_{b}^\prime (N) > \beta$. That requires intermediate lemmas
and is done separately. 

First suppose that $B$ contains a singleton index $b$. 
Define $N_1(N)$ and  $N_{b}(N)$ using
IFT through the equations
\begin{equation} \label{eqn:ratio_sum_0}
\sum_{a\ne 1}
\frac{d({\mu_1}, x_{1,a})}
{d({\mu_a}, x_{1,a})} -1=0
\end{equation}
and $\sum_{a} N_a = N.$
For each $a$, letting
$x_{1,a}'$ denote the derivative of
$x_{1,a}$ with respect to $N_1$,
$\widetilde{x}_{1,b}'$
denote the derivative of
$x_{1,b}$ with respect to $N_b$.
It is easy to check that
\begin{equation} \label{eqn:spef5}
\widetilde{x}_{1,b}' = - \frac{N_1}{N_a} x_{1,b}',
\end{equation}
and each $x_{1,a}' =\frac{N_a \Delta_a}{(N_1+N_a)^2} $.
Differentiating  (\ref{eqn:ratio_sum_0}) with respect to 
$N$, observing that $N_b'=1- N_1'$, we get 
\begin{eqnarray*}
N_1' \sum_{a\ne 1} h_a  N_a
= N_1 h_b  (1-N'_1).
\end{eqnarray*}

It follows that 
\[
N'_1 = \frac{N_1 h_b}{\sum_{a\ne 1} h_a  N_a
+ N_1 h_b}
\]
as stipulated. 
Also $N_b'= \frac{\sum_{a\ne 1} h_a  N_a}{\sum_{a\ne 1} h_a  N_a
+ N_1 h_b}$.

\bigskip

Now consider the case where $g=0$, and we have 
set $B \subset [K]/1$ of indices where the indexes are equal, 
they are higher for the remaining set. Cardinality of $B$ is at least 2. We want to argue that as $N$ increases, and the equality of indexes in $B$ is maintained along with $g=0$, then the tied indexes will increase with $N$.

We have for  $b, a \in B$
\begin{equation} \label{eqn:spef1}
N_1 d(\mu_1, x_{1,b}) + {N_b} d(\mu_b, x_{1,b})
= N_1 d(\mu_1, x_{1,a}) + {N_a} d(\mu_a, x_{1,a}).
\end{equation}
Furthermore, $g=0$, i.e., 
\begin{equation} \label{eqn:spef2}
\sum_{a \ne 1} \frac{d(\mu_1, x_{1,a})}
{d(\mu_a, x_{1,a})}=1.
\end{equation}

Keeping a particular $b \in B$ fixed, differentiating with respect to $N$, (since 
for each $a$, by definition of $x_{1,a}$,
$N_1 d'(\mu_1, x_{1,a}) + N_a d'(\mu_a, x_{1,a})=0$)
 we see from
(\ref{eqn:spef1}) that
\[
N_1' d(\mu_1, x_{1,b})
+ {N_b}' d(\mu_b, x_{1,b})
= N_1' d(\mu_1, x_{1,a}) + {N_a}' d(\mu_a, x_{1,a}).
\]
Using~\eqref{eqn:spef1} again in the above equality,
\begin{equation} \label{eqn:spef3}
N_a' = \frac{1}{N_1 d(\mu_a, x_{1,a})}
\left ( N_a d(\mu_a, x_{1,a}) - N_b d(\mu_b, x_{1,b})
\right )N_1'
+ \frac{d(\mu_b, x_{1,b})}{d(\mu_a, x_{1,a})} N_b'.
\end{equation}
Then from (\ref{eqn:spef2}), we have that 
\begin{eqnarray} \label{eqn:spef4}
N_1' \sum_{a \ne {1}} f(\mathbold{\mu}, a, \mathbold{N})
x_{1,a}' 
+
 \sum_{a \in {B}} f(\mathbold{\mu}, a, \mathbold{N})
 \widetilde{x}_{1,a}'N_a'
=0.
\end{eqnarray}
(Recall that for each $a$,
$x_{1,a}'$ denotes the derivative of
$x_{1,a}$ with respect to $N_1$ and 
$\widetilde{x}_{1,a}'$
denotes the derivative of
$x_{1,a}$ with respect to $N_a$.)

Plugging (\ref{eqn:spef3}) and (\ref{eqn:spef5})
in (\ref{eqn:spef4}), multiplying each term by $N_1^2$,  we see that $N_1'$ is a ratio of 
\[
N_1 N_b' d_{b,b} h_B 
\]
with
\[
h(N) 
+ N_b d_{b,b} h_B.
\]

Then, 
\[ N_b'= N_1'\frac{h(N)d_{b,b}^{-1} 
+ N_b  h_B}{N_1 h_B}. \]
In particular, since, $\sum_{a}N_a' =1$, (\ref{eqn:lemfluid}) follow. Statement 3 of Theorem \ref{lemma:pathode} follows from~\eqref{eqn:lemfluid} and expression for $I_a(N)$. Since $I_{a}^\prime (N) > 0$ index is non-decreasing in $N$. Furthermore, $\lim\limits_{N\rightarrow\infty} I_a(N) = N_a d(\mu_a, \mu_1)$. 
\proofendshere

\bigskip

To prove the existence of
 $\beta >0$ and independent of $N$ such that 
$I_{b}^\prime (N) > \beta$,
we need 
Lemmas (\ref{lem:next}), (\ref{lem:next2}) and  (\ref{lem:next3}). In Lemma~\ref{lem:next3}, we argue that the indexes in set $B$ grow linearly with the number of samples. Since index for arm $a\in \overline{B}^{c}$ are bounded, eventually indexes in set $B$ catch-up with other indexes. 
\bigskip

Some notation first. 
Observe  that 
$d(\mu_1, x)- d(\mu_a, x)$ is a continuous and
strictly decreasing function of $x \in [\mu_a, \mu_1]$. It equals  
$d(\mu_1, \mu_a)$ for $x= \mu_a$ and 
$-d(\mu_a, \mu_1)$ for $x= \mu_1$.
Let $x_a \in (\mu_a, \mu_1)$ be such that 
\[
d(\mu_1, x_a)= d(\mu_a, x_a).
\]

  Furthermore, let
 \[
\widetilde{a}= \arg \max _a \frac{d(\mu_1, x_{1,a})}
{d(\mu_a, x_{1,a})}.
\]
Let $x(\widetilde{a})$ be such that 
\[
d(\mu_1, x(\widetilde{a}))= (K-1)^{-1} d(\mu_a, x(\widetilde{a})).
\]
It is guaranteed to exist since the ratio $ d(\mu_a, x)/d(\mu_1, x) \in (0,\infty) $ is monotonic in $x$.

Next, let $$d_a = \frac{\mu_1-x_a}{x_a-\mu_a} \quad \text{ and } \quad d(\widetilde{a})= \frac{\mu_1-x(\widetilde{a})}{x(\widetilde{a})-\mu_a}.$$
Since $K\geq 2$, we have  $x(\widetilde{a}) \geq x_{\widetilde{a}}$, 
and $d(\widetilde{a}) \leq d_{\widetilde{a}}$.

\begin{lemma} \label{lem:next}
Suppose that $g=0$. Then, 
 \begin{equation} \label{eqn:boundn1}
 \left(1+ \sum_{a\ne 1} d_a\right)^{-1} N \leq N_1 \leq \left(1+ d(\widetilde{a})\right)^{-1} N.
 \end{equation}
 \end{lemma}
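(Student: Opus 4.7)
The key identity I would exploit is that, by the definition of $x_{1,a} = (N_1\mu_1 + N_a\mu_a)/(N_1+N_a)$, one has the algebraic relation
\[
\frac{N_a}{N_1} \;=\; \frac{\mu_1 - x_{1,a}}{x_{1,a}-\mu_a}.
\]
Thus controlling the ratio $N_a/N_1$ reduces to locating $x_{1,a}$ inside the interval $(\mu_a,\mu_1)$. I would then use the monotonicity (stated just before the lemma) of $x\mapsto d(\mu_1,x)/d(\mu_a,x)$, which is strictly decreasing from $+\infty$ down to $0$ as $x$ moves from $\mu_a$ to $\mu_1$.

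For the lower bound on $N_1$, I would observe that $g=0$ forces $\sum_{a\ne 1}\frac{d(\mu_1,x_{1,a})}{d(\mu_a,x_{1,a})}=1$, and since all summands are positive each individual ratio is at most $1$. By the monotonicity just mentioned, $\frac{d(\mu_1,x_{1,a})}{d(\mu_a,x_{1,a})}\le 1 = \frac{d(\mu_1,x_a)}{d(\mu_a,x_a)}$ forces $x_{1,a}\ge x_a$. Translating via the algebraic identity, $N_a/N_1 \le (\mu_1-x_a)/(x_a-\mu_a) = d_a$. Summing over $a\ne 1$ and adding $N_1$ on both sides,
\[
N \;=\; N_1 + \sum_{a\ne 1} N_a \;\le\; N_1\Bigl(1 + \sum_{a\ne 1} d_a\Bigr),
\]
which rearranges to the left inequality of the lemma.

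For the upper bound on $N_1$, I would instead use the extremal arm $\widetilde{a}=\arg\max_a \frac{d(\mu_1,x_{1,a})}{d(\mu_a,x_{1,a})}$. Since there are $K-1$ positive terms summing to $1$, the maximum must be at least $1/(K-1)$, i.e.\ $\frac{d(\mu_1,x_{1,\widetilde{a}})}{d(\mu_{\widetilde{a}},x_{1,\widetilde{a}})}\ge \frac{1}{K-1} = \frac{d(\mu_1,x(\widetilde{a}))}{d(\mu_{\widetilde{a}},x(\widetilde{a}))}$. By monotonicity this gives $x_{1,\widetilde{a}}\le x(\widetilde{a})$, so via the same algebraic identity $N_{\widetilde{a}}/N_1 \ge d(\widetilde{a})$. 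Since $N\ge N_1+N_{\widetilde{a}}\ge N_1(1+d(\widetilde{a}))$, the right inequality follows.

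The proof is essentially a one-step consequence of the monotonicity of the ratio and the algebraic identity linking the allocation ratio $N_a/N_1$ with the location of $x_{1,a}$; I do not foresee a substantial obstacle. The only mild subtlety is bookkeeping the two sides of the monotonicity: the lower bound uses that each ratio is $\le 1$, while the upper bound uses that some ratio is $\ge 1/(K-1)$, and these two thresholds are exactly what define the reference points $x_a$ and $x(\widetilde{a})$.
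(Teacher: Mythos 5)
Your proposal is correct and follows essentially the same route as the paper: $g=0$ forces each ratio $d(\mu_1,x_{1,a})/d(\mu_a,x_{1,a})\le 1$ (hence $x_{1,a}\ge x_a$, so $N_a\le d_a N_1$) and the maximal ratio to be at least $1/(K-1)$ (hence $x_{1,\widetilde{a}}\le x(\widetilde{a})$, so $N_{\widetilde{a}}\ge d(\widetilde{a})N_1$), and summing gives the two bounds. No gaps.
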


\begin{lemma} \label{lem:next2}
Suppose that $g=0$. 
 Then, 
\begin{enumerate}
    \item 
    there exist constants $H$  and $D$ such that
$h_a \leq H$ for all $a$,
and 
$d_{a,a}^{-1} \leq D$ for all $a$.
\item
Further, there exists an $\widetilde{a}$
such that 
$N_{\widetilde{a}} > \alpha N$ for some $\alpha >0$ and
 the corresponding $h_{\widetilde{a}}$ is bounded from
below by a positive constant. 
\end{enumerate}
\end{lemma}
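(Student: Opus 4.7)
The plan is to exploit the key consequence of $g=0$: since $\sum_{a\ne 1}\frac{d(\mu_1,x_{1,a})}{d(\mu_a,x_{1,a})}=1$ with positive summands, every individual ratio satisfies $\frac{d(\mu_1,x_{1,a})}{d(\mu_a,x_{1,a})}\le 1$. By the monotonicity of the difference $d(\mu_1,x)-d(\mu_a,x)$ in $x$ discussed just before Lemma~\ref{lem:next}, this forces $x_{1,a}\ge x_a$ for every $a\ne 1$. Hence $d_{a,a}=d(\mu_a,x_{1,a})\ge d(\mu_a,x_a)>0$, which is a positive constant depending only on $\mathbold{\mu}$, so $D:=\max_{a\ne 1} d(\mu_a,x_a)^{-1}$ bounds $d_{a,a}^{-1}$. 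For $h_a$ I would write $h_a=f(\mathbold{\mu},a,\mathbold{N})\bigl(\tfrac{N_1}{N_1+N_a}\bigr)^2\Delta_a\le f(\mathbold{\mu},a,\mathbold{N})\Delta_a$ and argue that $f$, viewed as a function of $x_{1,a}\in[x_a,\mu_1]$, is continuous and bounded: the only potentially problematic point is $x_{1,a}\to\mu_1$, where $d_2(\mu_1,x_{1,a})/d(\mu_1,x_{1,a})$ diverges, but the prefactor $d(\mu_1,x_{1,a})/d(\mu_a,x_{1,a})$ kills that divergence (a quick Taylor expansion via the SPEF envelopes in Appendix~\ref{sec:spef}, $d(\mu,\tilde{\mu})=\Theta((\mu-\tilde{\mu})^2)$ and $d_2(\mu,\tilde{\mu})=\Theta(\tilde{\mu}-\mu)$, shows $f\to 0$ as $x_{1,a}\to\mu_1$). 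Thus $H:=\max_{a\ne 1}\Delta_a\cdot\sup_{x\in[x_a,\mu_1]}f(\mathbold{\mu},a,\mathbold{N}(x))$ is a valid uniform upper bound.

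For statement~2, pick $\widetilde{a}\in\arg\max_{a\ne 1}\frac{d(\mu_1,x_{1,a})}{d(\mu_a,x_{1,a})}$, the index used already in the definition preceding Lemma~\ref{lem:next}. The pigeonhole bound from $g=0$ gives $\frac{d(\mu_1,x_{1,\widetilde{a}})}{d(\mu_{\widetilde{a}},x_{1,\widetilde{a}})}\ge \frac{1}{K-1}$; since this ratio is strictly decreasing in $x$, this forces $x_{1,\widetilde{a}}\le x(\widetilde{a})$. Unpacking $x_{1,\widetilde{a}}=\frac{N_1\mu_1+N_{\widetilde{a}}\mu_{\widetilde{a}}}{N_1+N_{\widetilde{a}}}\le x(\widetilde{a})$ rearranges to $N_{\widetilde{a}}/N_1\ge d(\widetilde{a})$. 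Combined with the lower bound $N_1\ge(1+\sum_a d_a)^{-1}N$ from Lemma~\ref{lem:next}, this yields $N_{\widetilde{a}}\ge d(\widetilde{a})(1+\sum_a d_a)^{-1} N$, and taking $\alpha:=\min_{a\ne 1} d(a)(1+\sum_b d_b)^{-1}>0$ (over the finitely many possible identities of $\widetilde{a}$) gives the desired instance-dependent constant.

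For the lower bound on $h_{\widetilde{a}}$, the two inequalities $x_{\widetilde{a}}\le x_{1,\widetilde{a}}\le x(\widetilde{a})$ place $x_{1,\widetilde{a}}$ in a compact subset of $(\mu_{\widetilde{a}},\mu_1)$. On this set $f(\mathbold{\mu},\widetilde{a},\mathbold{N})$ is continuous and strictly positive (the sign argument in the paragraph defining $f$ applies with both $d_2(\mu_{\widetilde{a}},\cdot)>0$ and $-d_2(\mu_1,\cdot)>0$), so it is bounded below by a positive constant depending only on $\mathbold{\mu}$. Meanwhile $N_{\widetilde{a}}/N_1\le N/N_1-1\le\sum_a d_a$ by Lemma~\ref{lem:next} again, so $(N_1/(N_1+N_{\widetilde{a}}))^2\ge(1+\sum_a d_a)^{-2}$. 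Minimizing over the finitely many choices of $\widetilde{a}$ delivers the desired uniform positive lower bound on $h_{\widetilde{a}}$. The main obstacle, as foreshadowed above, is handling $f$ near $x_{1,a}\to\mu_1$ for part~1; once that indeterminacy is resolved using the SPEF estimates, the rest of the argument reduces to bookkeeping with Lemma~\ref{lem:next}.
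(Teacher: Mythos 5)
Your proposal is correct and follows essentially the same route as the paper: both use $g=0$ to get $x_a\le x_{1,a}$ (hence $d_{a,a}\ge d(\mu_a,x_a)$ and boundedness of $f$, so of $h_a$, on the compact range of $x_{1,a}$), and both pick $\widetilde a$ as the argmax ratio, use the pigeonhole bound $\ge (K-1)^{-1}$ to get $x_{1,\widetilde a}\le x(\widetilde a)$, and combine with Lemma~\ref{lem:next}'s bound on $N_1$ to lower-bound $N_{\widetilde a}$ and $h_{\widetilde a}$. The only differences are cosmetic (e.g., you bound $N_1^2/(N_1+N_{\widetilde a})^2$ by $(1+\sum_a d_a)^{-2}$ where the paper uses $(1+d_{\widetilde a})^{-2}$ via $N_a\le d_a N_1$, and you make the minimization over the finitely many candidate $\widetilde a$ explicit).
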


\begin{lemma} \label{lem:next3}
Suppose that $g=0$, and for 
 $B \subset [K]/1$  the indexes are all equal and  are strictly higher for the remaining set. Then there exists a constant
 $\beta >0$ such that
 \[
 N_1' d(\mu_1, x_{1,a}) + N_a' d(\mu_a, x_{1,a})=I_{B}^\prime (N)
 > \beta.
\]
\end{lemma}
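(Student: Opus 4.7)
\textbf{Proof plan for Lemma \ref{lem:next3}.} The plan is to use the closed-form expression for $I_B'(N)$ already derived in Theorem \ref{lemma:pathode},
\[
I_B'(N)=\frac{I_B(N)\,h(B)+h(N)}{(N_1+\sum_{a\in B}N_a)\,h(B)+d_B^{-1}\,h(N)},
\]
and then bound the numerator from below by $c_{\mathrm{num}}\,N$ and the denominator from above by $c_{\mathrm{den}}\,N$ for positive constants $c_{\mathrm{num}},c_{\mathrm{den}}$ depending only on $\mathbold{\mu}$. Setting $\beta=c_{\mathrm{num}}/c_{\mathrm{den}}$ will give the claim. All constants that appear below come from Lemmas \ref{lem:next} and \ref{lem:next2} (and the SPEF envelope inequalities of Appendix \ref{sec:spef}), so they are uniform in $N$.

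For the denominator, Lemma \ref{lem:next2}(1) supplies constants $H$ and $D$ with $h_a\le H$ and $d_{a,a}^{-1}\le D$ for every $a\ne 1$. Since $N_1+\sum_{a\in B}N_a\le N$, $h(B)\le |B|HD$, $d_B^{-1}\le |B|D$, and $h(N)\le H\sum_{a\in\overline{B}^{c}}N_a\le HN$, the denominator is at most $c_{\mathrm{den}}\,N$ for $c_{\mathrm{den}}=2|B|HDN$'s coefficient, i.e.\ $2|B|HD$. (The edge case $\overline{B}^{c}=\emptyset$ is even simpler since then $h(N)=0$ and the formula collapses to $I_B(N)/N$, which by Corollary \ref{cor:opt_proportion_cond} equals $I^\star=T^\star(\mathbold{\mu})^{-1}>0$.)

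For the numerator, I would split on where the ``heavy'' arm $\widetilde{a}$ from Lemma \ref{lem:next2}(2) sits. Lemma \ref{lem:next2}(2) gives $N_{\widetilde{a}}\ge\alpha N$ and $h_{\widetilde{a}}\ge\eta$ for some $\alpha,\eta>0$.
\emph{Case $\widetilde{a}\in\overline{B}^{c}$:} then $h(N)\ge h_{\widetilde{a}}N_{\widetilde{a}}\ge\eta\alpha N$, so the numerator is at least $\eta\alpha N$ directly.
\emph{Case $\widetilde{a}\in B$:} I use Lemma \ref{lem:next} to get $N_1\ge c_1 N$ and $N_1\le c_2 N$ with $0<c_1\le c_2<1$. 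Combined with $\alpha N\le N_{\widetilde{a}}\le N$, the convex combination
\[
x_{1,\widetilde{a}}=\frac{N_1\mu_1+N_{\widetilde{a}}\mu_{\widetilde{a}}}{N_1+N_{\widetilde{a}}}
\]
is bounded away from both $\mu_1$ and $\mu_{\widetilde{a}}$ by constants depending only on $\mathbold{\mu}$, so \eqref{eqn:envelope_kl_div} yields a positive lower bound $\kappa>0$ on both $d(\mu_1,x_{1,\widetilde{a}})$ and $d(\mu_{\widetilde{a}},x_{1,\widetilde{a}})$. Picking $b=\widetilde{a}$ in the definition of the common index gives $I_B(N)\ge N_{\widetilde{a}}d(\mu_{\widetilde{a}},x_{1,\widetilde{a}})\ge\alpha\kappa N$. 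Moreover $d_{\widetilde{a},\widetilde{a}}=d(\mu_{\widetilde{a}},x_{1,\widetilde{a}})\le d(\mu_{\widetilde{a}},\mu_1)$, so $h(B)\ge h_{\widetilde{a}}d_{\widetilde{a},\widetilde{a}}^{-1}\ge\eta/d(\mu_{\widetilde{a}},\mu_1)>0$. Therefore $I_B(N)\,h(B)\ge c_{\mathrm{num}}\,N$.

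Putting the two cases together, the numerator is at least a positive constant times $N$, while the denominator is at most a positive constant times $N$, and the ratio $\beta$ is strictly positive and independent of $N$. The main subtlety---and the only non-mechanical step---is ensuring in the $\widetilde{a}\in B$ case that $x_{1,\widetilde{a}}$ sits in the interior of $(\mu_{\widetilde{a}},\mu_1)$ uniformly in $N$; this is where the two-sided bound on $N_1$ from Lemma \ref{lem:next} is essential (without the upper bound, $x_{1,\widetilde{a}}$ could slide to $\mu_1$ and make $d(\mu_1,x_{1,\widetilde{a}})$ collapse, and without the lower bound it could slide to $\mu_{\widetilde{a}}$).
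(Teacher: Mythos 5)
Your proof is correct and follows the same overall route as the paper: use the explicit ODE expression for $I_B'(N)$ from Theorem~\ref{lemma:pathode}, then apply the boundedness constants from Lemmas~\ref{lem:next} and~\ref{lem:next2}. For the case $\widetilde{a}\in B$ the two arguments are essentially identical (the paper passes through a constant lower bound on $N_{\widetilde{a}}'$ and then uses $I_B'\ge N_{\widetilde{a}}' d_{\widetilde{a},\widetilde{a}}$; you go straight through the numerator term $I_B(N)h(B)$, which amounts to the same computation). Where your write-up genuinely departs from---and, I would say, improves on---the paper is the case $\widetilde{a}\notin B$: the paper argues indirectly that the arms in $B$ grow linearly, hence eventually $h(B)$ is bounded away from zero, and concludes only that \emph{``after some delay''} $I_B'(N)>\beta$. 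That caveat sits uneasily with the uniform bound that the lemma statement asserts. Your observation that, in this case, the numerator satisfies $I_B(N)h(B)+h(N)\ge h(N)\ge h_{\widetilde{a}}N_{\widetilde{a}}\ge\eta\alpha N$ is a one-line uniform bound that makes the delay argument unnecessary. One small simplification you could note: the uniform interior location of $x_{1,\widetilde{a}}$ in $(\mu_{\widetilde{a}},\mu_1)$ already follows from the sandwiching $x_{\widetilde{a}}\le x_{1,\widetilde{a}}\le x(\widetilde{a})$ established under $g=0$ in the proof of Lemma~\ref{lem:next}, so the two-sided bound on $N_1$ is a slight detour rather than the load-bearing step you flag it as; nevertheless the route you chose is valid.
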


\noindent {\bf Proof of Lemma (\ref{lem:next}):}
Since $g=0$, it follows that for each $a \in [K]\setminus{1}$,
 \[
 \frac{d_{1,a}}{d_{a,a}} \leq 1.
 \]
 Thus, $x_{1,a} \geq x_a$. This in turn implies that for each $a$,
 \[
 N_1 \geq N_a d_a^{-1}.
 \]
The above follows from substituting for $x_{1,a}$ in the inequality $x_{1,a} \ge x_a$, and from the definition of $d_a$. Moreover, since $g=0$,  it also follows that for each $a \ne 1$, $x(\left(\widetilde{a}\right) \ge x_{1,a}$,  implying
 \[
 N_1 d(\left(\widetilde{a}\right)) \leq N_{a} .
 \]
Then,
 \[
 N_1 \left(1+ \sum_{a\ne 1} d_a\right) \geq N
 \]
 and 
 \[ N_1 (1+ d(\widetilde{a})) \leq N, \]
 and the result follows. 
\proofendshere 

\bigskip

\noindent {\bf Proof of Lemma (\ref{lem:next2}):}
Recall the definitions of $h_a$ and $f(\mathbold{\mu},a,\mathbold{N})$ from
Section \ref{sec:fluid_model}.

Since, $g=0$ implies that
$x_{1,a} \geq x_a$ for all $a$,
it follows that 
$d_{a,a}= d(\mu_a, x_{1,a}) \geq d(\mu_a, x_{a})$.
In particular, for all $a$
\[
d_{a,a}^{-1} \leq D 
\]
for $D=\max_a d(\mu_a, x_{a})^{-1}$.

Further, $d'(\mu_a, x_{1,a})$ is continuous 
in $x_{1,a}$ and is bounded from above by 
$\sup_{x_a \leq x_{1,a} \leq \mu_1} d'(\mu_a, x_{1,a})$. Similarly, $-d'(\mu_1, x_{1,a})$ is bounded from above by $\sup_{x_a \leq x_{1,a} \leq \mu_1} - d'(\mu_1, x_{1,a})$. 
This implies that
$f(\mathbold{\mu},a,\mathbold{N})$ is bounded from above by a positive constant
and hence so is $h_a$.

\bigskip

To see part 2, observe  from definition of 
$x(\tilde{a})$ that $x_{1, \tilde{a}} \leq x(\tilde{a})$. It follows that 

$N_{\widetilde{a}} \geq N_1 d^{-1}_{\widetilde{a}}$. 
Therefore,
\begin{equation} \label{eqn:uhi007}
N_{\widetilde{a}} \geq N d^{-1}_{\widetilde{a}}  (1+ \sum_{a\ne 1} d_a)^{-1}.
\end{equation}

Again, 
$x_{1, \widetilde{a}} \leq x(\widetilde{a})$. 
Therefore,
$d_{\widetilde{a},\widetilde{a}}= d(\mu_{\widetilde{a}}, x_{1,\widetilde{a}}) \leq d(\mu_{\widetilde{a}},x(\widetilde{a}))$
and
$d_{1,\widetilde{a}}= d(\mu_1, x_{1,\widetilde{a}})
\geq d(\mu_1, x(\widetilde{a}))$.

Further, $d'(\mu_{\widetilde{a}}, x_{1,\widetilde{a}})$ is continuous 
in $x_{1,\widetilde{a}}$ and is bounded from below by 
\[
\inf_{x_{\widetilde{a}} \leq   x_{1,\widetilde{a}} \leq x(\widetilde{a})} d'(\mu_{\widetilde{a}}, x_{1,\widetilde{a}}).
\]

Similarly, 
$-d'(\mu_1, x_{1,\widetilde{a}})$ is bounded from below by 
$\inf_{x_{\widetilde{a}} \leq   x_{1,\widetilde{a}} \leq x(\widetilde{a})} - d'(\mu_1, x_{1,\widetilde{a}})$.

Thus, $f(\mathbold{\mu}, \widetilde{a})$ is bounded from below. Further, since each
~~$N_a \leq d_a N_1$,~~
$N_1^2/(N_1+ N_{\widetilde{a}})^2 \geq (1+d_{\widetilde{a}})^{-2}$,
~~hence, 
$h_{\widetilde{a}}$ is bounded from
below by a positive constant.\proofendshere

\bigskip

\noindent {\bf Proof of Lemma (\ref{lem:next3}):}
Recall from  (\ref{eqn:uhi007}) that
\begin{equation} \label{eqn:boundnta}
N_{\widetilde{a}} \geq N d^{-1}_{\widetilde{a}}  (1+ \sum_{a\ne 1} d_a)^{-1}.
\end{equation}

Also, recall the definition of $f(\mathbold{\mu},a,\mathbold{N})$ and $h_a$ from Section \ref{sec:fluid_model}.

Because of (\ref{eqn:boundn1}) and 
(\ref{eqn:boundnta}), and since
$N_{\widetilde{a}} \leq N$,
we see that $f(\mathbold{\mu},\widetilde{a})$ is bounded
from below by a positive constant, and the same is true for
$h_{\widetilde{a}}$.  

If $\widetilde{a} \in B$,
recall that $h_B=\sum_{a \in B} h_a d_{a,a}^{-1}$. 
Thus, $h_B$ is greater than a constant times $N$. 
This ensures that $N_{\widetilde{a}}'$ is bounded from below
by a positive constant. Since $d_{\widetilde{a},\widetilde{a}}$
is also bounded from below by a positive constant, 
we conclude that
there exists a $\beta>0$
such that 
$I_{B}^\prime (N)
 > \beta$.
 
 \bigskip
 
If $\widetilde{a} \notin B$, then recalling that 
$h(N)= \sum_{a \in B^c/1} h_a N_{a}$, we conclude
that 
$N_{\widetilde{a}}'$ is bounded from below
by a positive constant.
This implies that as $N$ increases by a positive
fraction, so does each $N_a$ for $a \in B$.
This in turn ensures that
then $h_B$ is thereafter bounded from below 
by a positive constant. In particular,
after some delay we have 
$I_{B}^\prime (N)
 > \beta$ for some $\beta >0$.\proofendshere

\subsection{Fluid dynamics starting at $g<0$}
Proposition \ref{lem:lemg<0} provides us the  ODEs by which the fluid allocations evolve in step 3 of the description of fluid dynamics when $g<0$. 

\begin{proposition} \label{lem:lemg<0}
Now consider the case where $g<0$ at total allocations $N$,
and $B$ again denotes the set of arms
whose indexes have the minimum value.
Then,
till $I_{B}(N)$ increases with $N$ to either hit an index in
${\overline{B}^{c}}$, 
 or for $g$ to equal zero, whichever is earlier, $I_{B}^\prime (N)
= \left(\sum_{a \in B} d_{a,a}^{-1} \right )^{-1}$, and for $a\in B$, $N_a' = d_{a,a}^{-1} \left(\sum_{a \in B} d_{a,a}^{-1} \right )^{-1}$. In particular, $I_{B}(N)$ and each $(N_a$, $a \in B)$, are increasing functions
of $N$. 
\end{proposition}

\begin{proof}

Let $i_1$ denote the arm  corresponding 
to a minimum index.
Recall that $\mathbold{\omega}^{\star}=(\omega^{\star}_a: a \in[K])$ denote the optimal
    proportions
    to the lower bound problem. 
    Consider
     $\hat{N}_a= \frac{\omega^{\star}_a}{\omega^{\star}_1}N_1$.
    Recall that at these samples,
    $g=0$ and all the indexes are equal. Let $\hat{I}$ 
    denote the corresponding value of the indexes at this allocation.
    
First we argue that  $N_{i_1}   < \hat{N}_{i_1}$. 

Suppose this is not true, then $g < 0$ implies
that for $N_{1}$ fixed, there exists an
 arm $a$ so that $N_a < \hat{N}_a$, else
 if each $N_a \geq \hat{N}_a$ then since $g$ increases with $N_a$,  we would have
 $g\geq 0$. 
 This contradiction implies 
that index for arm $a$ is $< \hat{I}_{B}(N)$. It follows
that the  index corresponding 
to $i_1$ is $< \hat{I}_{B}(N)$.
 Since the index increases with
$N_{i_1}$, it follows that  $N_{i_1} < 
\hat{N}_{i_1}$.

Thus, initially  $N$ increases due to increase in 
$N_{i_1}$. Let $B=\{i_1\}$. Suppose, iteratively that $B=\{i_1, \ldots, i_{j-1}\}$, denotes the smallest indexes that are equal and increase with $N$ and 
$g < 0$. Proof follows by observing that the derivative of each index
$a \in B$ satisfies the relation $N_a' d_{a,a}= I_{B}^\prime (N)$.
Further, $\sum_{a\in B} N_a'= 1$.

Thus, as $N$ increases, each $N_a(N), a \in B$ increases,
so that $g$ increases. Since all indexes corresponding
to $\overline{B}^{c}$ are constant, as $N$ increases, either $g=0$
first or another index $I_j$ becomes equal to $I_{B}(N)$.
\end{proof}

\subsection{Fluid dynamics of the $\beta$-EB-TCB algorithm  (\cite{jourdan2022top})}\label{appndx:beta_fluid_dynamics}
For every $\beta\in(0,1)$ and allocation $\mathbold{N}=(N_a\in\mathbb{R}_{\geq 0}:a\in[K])$, we define the $\beta$-anchor function as, 
\[g(\mathbold{N};\beta)~=~\beta-\frac{N_1}{\sum_{a\in[K]}N_a}.\]

Note that, if $g(\mathbold{N};\beta)=0$, then $\beta$-fraction of the total no. of samples in the allocation $\mathbold{N}$ is allocated to the first arm.  The fluid dynamics for the $\beta$-EB-TCB algorithm (see \cite{jourdan2022top}) can be constructed similarly to that of the Anchored Top Two algorithm, by replacing the anchor function $g(\mathbold{\mu},\mathbold{N})$ with the $\beta$-anchor function $g(\mathbold{N};\beta)$ in Section \ref{sec:fluid_model}. 

\bulletwithnote{Existence of fluid dynamics} Recall that, for every $B\subseteq[K]/\{1\}$, $\overline{B}$ denotes the set $B\cup\{1\}$. Lemma \ref{lem:beta_fluid_existence:prelude} and Proposition \ref{prop:beta_fluid_existence} are essential for constructing the fluid behavior for $\beta$-EB-TCB algorithm. 

\begin{lemma}\label{lem:beta_fluid_existence:prelude}
    Given a non-empty $B\subseteq[K]/\{1\}$, some tuple $\mathbold{N}_{\overline{B}^c}=(N_a\in\mathbb{R}_{\geq 0}:a\in\overline{B}^c)$ and $I\geq 0$, there is a unique tuple $\mathbold{N}_{\overline{B}}=(N_a\in\mathbb{R}_{\geq 0}:a\in\overline{B})$ which satisfies, 
    \begin{align*}
        N_1~&=~\beta \sum_{a\in[K]} N_a,\quad\text{and}
    \end{align*}
    \begin{align*}
        \text{for every}~a\in B,\quad N_1 d(\mu_1,x_{1,a})+N_a d(\mu_a,x_{1,a})~&=~I,
    \end{align*}
    where $x_{1,a}=\frac{N_1 \mu_1+N_a \mu_a}{N_1+N_a}$. 
    
    Moreover, if we define~~$N_{1,1}=\beta\sum_{a\in\overline{B}^c}N_a$~~and~~$N_{1,2}=\frac{I}{\min_{a\in B}d(\mu_1,\mu_a)}$,~~then $N_1\geq \max\{N_{1,1},N_{1,2}\}$. 
\end{lemma}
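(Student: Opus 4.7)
The plan is to mirror the proof strategy of Theorem \ref{th:optsol.char-update}: first establish the stated lower bounds on $N_1$ as necessary conditions, then reduce the system to a single scalar equation in $N_1$ by solving each index equation for $N_a$ in terms of $N_1$, and finally invoke continuity and strict monotonicity to extract existence and uniqueness.

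For necessity, the bound $N_1 \geq N_{1,1}$ is immediate from $N_1 = \beta \sum_{a \in [K]} N_a \geq \beta \sum_{a \in \overline{B}^c} N_a$, since all allocations are non-negative. For $N_{1,2}$, fix any $a \in B$ and use that $x \mapsto N_1 d(\mu_1, x) + N_a d(\mu_a, x)$ is convex on $[\mu_a, \mu_1]$ with its unique minimizer at $x = x_{1,a}$ (this is precisely the point where its derivative in $x$ vanishes). Hence the minimum value $I$ is bounded above by the endpoint value $N_1 d(\mu_1, \mu_a)$ attained at $x = \mu_a$, giving $N_1 \geq I/d(\mu_1, \mu_a)$. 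Minimizing the RHS over $a \in B$ yields $N_1 \geq N_{1,2}$.

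For existence and uniqueness when $I > 0$, fix $N_1 > N_{1,2}$ and for each $a \in B$ solve the index equation: the map $N_a \mapsto N_1 d(\mu_1, x_{1,a}) + N_a d(\mu_a, x_{1,a})$ is continuous and strictly increasing from $0$ (at $N_a = 0$) to $N_1 d(\mu_1, \mu_a) > I$ (as $N_a \to \infty$), producing a unique $N_a(N_1) \in (0, \infty)$ with value $I$. Implicit differentiation, using that the partials of the minimand in $x$ cancel at $x_{1,a}$, yields $N_a'(N_1) = -d(\mu_1, x_{1,a})/d(\mu_a, x_{1,a}) < 0$, so each $N_a(\cdot)$ is continuous and strictly decreasing, with $N_a(N_1) \to I/d(\mu_a, \mu_1)$ as $N_1 \to \infty$ and $N_{a^\star}(N_1) \to \infty$ as $N_1 \downarrow N_{1,2}$ for a minimizer $a^\star \in \arg\min_{b \in B} d(\mu_1, \mu_b)$. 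Setting
\[
F(N_1) = (1-\beta) N_1 - \beta \sum_{a \in B} N_a(N_1),
\]
the anchor constraint $N_1 = \beta \sum_{a \in [K]} N_a$ becomes $F(N_1) = \beta \sum_{a \in \overline{B}^c} N_a$. Then $F$ is continuous and strictly increasing on $(N_{1,2}, \infty)$ with $F(N_1) \to -\infty$ as $N_1 \downarrow N_{1,2}$ and $F(N_1) \to +\infty$ as $N_1 \to \infty$, so the intermediate value theorem yields a unique $N_1^\star$. The degenerate case $I = 0$ forces each $N_a = 0$ for $a \in B$ (since the index is strictly positive when both $N_1, N_a > 0$), reducing the anchor constraint to a trivial linear equation with unique solution $N_1^\star = \tfrac{\beta}{1-\beta}\sum_{a \in \overline{B}^c} N_a \geq N_{1,1}$.

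The bound $N_1^\star \geq \max(N_{1,1}, N_{1,2})$ then follows by applying the necessity argument to the constructed solution. No serious obstacle arises; the only delicate step is tracking the boundary behavior of $F$ at $N_{1,2}$, which hinges on the divergence of some $N_a(N_1)$ as $N_1 \downarrow N_{1,2}$, while everything else reduces to routine monotonicity and the intermediate value theorem.
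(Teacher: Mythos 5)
Your proof is correct and follows essentially the same route as the paper: both reduce the system to the strictly decreasing per-arm inverses $N_a(N_1)$ of the index equations and then solve a single monotone scalar equation in $N_1$ (your $F(N_1)=(1-\beta)N_1-\beta\sum_{a\in B}N_a(N_1)$ is just an algebraic rearrangement of the paper's $\beta$-anchor function $h(N_1;\beta)$, with your necessity/divergence argument replacing the paper's sign check at $\max\{N_{1,1},N_{1,2}\}$). One wording slip: in the necessity step you should take the \emph{maximum} of $I/d(\mu_1,\mu_a)$ over $a\in B$ (equivalently the $a$ minimizing $d(\mu_1,\mu_a)$), not "minimize the RHS", to obtain $N_1\geq N_{1,2}$.
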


\begin{proof}
    Proof of this lemma follows an argument similar to the proof of Theorem \ref{th:optsol.char-update}. 

    First we fix some $I\geq 0$ and $N_1\geq N_{1,2}$.  Note that for every $a\in B$, as $N_a$ increases from $0$ to $\infty$, $N_1 d(\mu_1,x_{1,a})+N_a d(\mu_a,x_{1,a})$ increases monotonically from $0$ to  $N_1 d(\mu_1,\mu_a)$. Since $N_1\geq N_{1,2}\geq \frac{I}{d(\mu_1,\mu_a)}$, we have $N_1 d(\mu_1,\mu_a)\geq I$. This implies, for a given $N_1$, there is a unique value of $N_a$ for which $N_1 d(\mu_1,x_{1,a})+N_a d(\mu_a,x_{1,a})=I$. We call this value $N_a(N_1)$ for every $a\in B$. 
    
    Observe that $N_1\to N_a(N_1)$ is a strictly decreasing function of $N_1$, and if $N_1=N_{1,2}$, then there exists an $a\in B$ for which $N_a(N_{1,2})=\infty$. On the other hand, if $N_1\to \infty$, $N_a(N_1)\to \frac{I}{d(\mu_a,\mu_1)}$ for every $a\in B$.

    For every $N_1$, we consider the function 
    \[h(N_1;\beta)=\beta-\frac{N_1}{N_1+\sum_{a\in B}N_a(N_1)+\sum_{a\in\overline{B}^c}N_a}.\]

    Note that $h(N_1;\beta)$ is the value of $g(\mathbold{N};\beta)$, when the tuple $\mathbold{N}$ has $N_a=N_a(N_1)$ for every $a\in B$. Note that $N_1\to h(N_1;\beta)$ is strictly decreasing for $N_1\geq N_{1,2}$. Moreover, as $N_1\to\infty$, $h(N_1;\beta)\to\beta-1<0$. In the rest of the argument, we show that $h(\max\{N_{1,1},N_{1,2}\};\beta)\geq 0$. After we prove this, we can find a unique $N_1$ for which $h(N_1;\beta)=0$. Using this, we take $N_a=N_a(N_1)$ for every $a\in B$ to obtain our unique tuple $\mathbold{N}_{\overline{B}}$. 

    Now we consider two cases. 
     
    \bulletwithnote{Case 1}~~If $N_{1,1}\geq N_{1,2}$, then at $N_1=N_{1,1}$, 
    \[N_{1,1}~=~\beta\sum_{a\in \overline{B}^c}N_a~\leq~\beta(N_{1,1}+\sum_{a\in[K]/\{1\}} N_a).\]
    As a result, 
    \[\frac{N_{1,1}}{N_{1,1}+\sum_{a\in\overline{B}^c}N_a+\sum_{a\in B}N_a(N_{1,1})}\leq \beta,\]
    which implies $h(N_{1,1};\beta)\geq 0$.\\

    \bulletwithnote{Case 2}~~If $N_{1,2}\geq N_{1,1}$, then, as we argued before, there exists an $a\in B$ for which $N_a(N_{1,2})=\infty$. As a result, 
    \[\frac{N_{1,2}}{N_{1,2}+\sum_{a\in\overline{B}^c}N_a+\sum_{a\in B}N_a(N_{1,2})}=0\]
    implying $h(N_{1,2};\beta)=\beta>0$. 
\end{proof}

Proposition \ref{prop:beta_fluid_existence} stated below is crucial for constructing the fluid dynamics of the $\beta$-EB-TCB policy and is analogous to Proposition \ref{prop:uniqueness_of_fluid_sol} used for constructing the fluid dynamics of the anchored top-two algorithm. 

\begin{proposition}\label{prop:beta_fluid_existence}
    For every non-empty $B\subseteq[K]/\{1\}$, tuple $\mathbold{N}_{\overline{B}^c}=(N_a\in\mathbb{R}_{\geq 0}:a\in\overline{B}^c)$, and $N\geq \frac{1}{1-\beta}\sum_{a\in\overline{B}^c}N_a$, there exists a unique tuple $\mathbold{N}_{\overline{B}}=(N_a\in\mathbb{R}_{\geq 0}:a\in \overline{B})$ and $I_B\geq 0$ for which, 
    \begin{align*}
        &N_1~=~\beta N,\quad \sum_{a\in[K]}N_a~=~N,\\
        &\text{for every $a\in B$},\quad N_1 d(\mu_a, x_{1,a})+N_a d(\mu_a,x_{1,a})~=~I_B,\quad\text{and}\\
        &\text{where}\quad x_{1,a}~=~\frac{N_1 \mu_1+N_a \mu_a}{N_1+N_a}\quad\text{for every $a\in[K]/\{1\}$.}
    \end{align*}

    If we denote that tuple by $\mathbold{N}_{\overline{B}}(N)$ and $I_B(N)$, then the functions $N\to \mathbold{N}_{\overline{B}}(N),I_B(N)$ are continuously differentiable with respect to $N$. 
\end{proposition}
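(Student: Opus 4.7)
The plan is to parallel the proof of Proposition \ref{prop:uniqueness_of_fluid_sol}, using Lemma \ref{lem:beta_fluid_existence:prelude} in place of Theorem \ref{th:optsol.char1}. Holding $\mathbold{N}_{\overline{B}^c}$ fixed, Lemma \ref{lem:beta_fluid_existence:prelude} supplies, for each $I \geq 0$, a unique tuple $\mathbold{N}_{\overline{B}}(I) = (N_a(I) : a \in \overline{B})$ satisfying the first two defining equations of the proposition. Let $\Sigma(I) \defined \sum_{a \in \overline{B}} N_a(I) + \sum_{a \in \overline{B}^c} N_a$. My goal is to show that $I \mapsto \Sigma(I)$ is a continuous, strictly increasing bijection from $[0, \infty)$ onto $\bigl[\tfrac{1}{1-\beta} \sum_{a \in \overline{B}^c} N_a,\, \infty\bigr)$. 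Then for each admissible $N$ there is a unique $I_B(N)$ with $\Sigma(I_B(N)) = N$, and setting $\mathbold{N}_{\overline{B}}(N) \defined \mathbold{N}_{\overline{B}}(I_B(N))$ automatically yields the sum constraint $\sum_{a \in [K]} N_a = N$, delivering the stated existence and uniqueness.

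The endpoint behaviour of $\Sigma$ would be checked first. At $I = 0$, the index equations force $N_a(0) = 0$ for every $a \in B$, since $d(\mu_1, x_{1,a})$ and $d(\mu_a, x_{1,a})$ are both strictly positive whenever $N_1, N_a > 0$. The proportion constraint $N_1 = \beta \sum_a N_a$ then yields $N_1(0) = \tfrac{\beta}{1-\beta} \sum_{a \in \overline{B}^c} N_a$, so $\Sigma(0) = \tfrac{1}{1-\beta} \sum_{a \in \overline{B}^c} N_a$, matching the stated lower bound. At the other end, the estimate $N_1(I) \geq I / \min_{a \in B} d(\mu_1, \mu_a)$ from Lemma \ref{lem:beta_fluid_existence:prelude} drives $\Sigma(I) \to \infty$ as $I \to \infty$.

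Continuity, strict monotonicity, and continuous differentiability would all flow from the Implicit Function Theorem applied to the system
\begin{align*}
E_0(\mathbold{N}) &= N_1 - \beta \textstyle\sum_{a \in [K]} N_a, \\
E_a(\mathbold{N}, I) &= N_1\, d(\mu_1, x_{1,a}) + N_a\, d(\mu_a, x_{1,a}) - I, \quad a \in B, \\
E_{K+1}(\mathbold{N}, N) &= \textstyle\sum_{a \in [K]} N_a - N,
\end{align*}
treated as equations in $(\mathbold{N}_{\overline{B}}, I)$ with parameter $N$. The Jacobian of $(E_0, (E_a)_{a \in B})$ with respect to $(N_1, (N_a)_{a \in B})$ has first row $(1-\beta, -\beta, \ldots, -\beta)$, and the familiar arrow-matrix structure of Lemma \ref{lem:Jacobian} in the remaining rows. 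Clearing each $-\beta$ of row zero by adding a positive multiple of row $a$ turns the $(0,0)$ entry into $(1-\beta) + \beta \sum_{a \in B} d(\mu_1, x_{1,a})/d(\mu_a, x_{1,a}) > 0$, so the Jacobian is invertible. Solving $J\, \mathbold{N}_{\overline{B}}'(I) = (0, -1, \ldots, -1)^T$ and summing its entries by the same device used in statement~2 of Lemma \ref{lem:Jacobian} gives $\Sigma'(I) > 0$; composing $\Sigma^{-1}$ with the already-smooth map $I \mapsto \mathbold{N}_{\overline{B}}(I)$ produces the claimed continuously differentiable $N \mapsto (\mathbold{N}_{\overline{B}}(N), I_B(N))$.

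The main technical hurdle is the Jacobian analysis: Lemma \ref{lem:Jacobian} was tailored to the anchor function $g(\mathbold{\mu}, \mathbold{N})$, whereas here the top row comes from the linear $\beta$-proportion constraint. The sign pattern is different enough to require rerunning the row-reduction argument, but not different enough to alter the conclusion. The cleanest organisation is to package this as a standalone $\beta$-analogue of Lemma \ref{lem:Jacobian}, after which existence, uniqueness, and smoothness mirror Proposition \ref{prop:uniqueness_of_fluid_sol} step for step.
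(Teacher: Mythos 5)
Your proposal follows the paper's approach very closely: both reduce existence-at-fixed-$I$ to Lemma~\ref{lem:beta_fluid_existence:prelude}, both adapt the Jacobian framework of Lemma~\ref{lem:Jacobian} to the $\beta$-constraint and show invertibility, and both close by showing the total allocation $\Sigma(I) = \sum_{a\in\overline{B}}N_a(I) + \sum_{a\in\overline{B}^c}N_a$ is strictly increasing in $I$ so that the IFT delivers the claimed smooth parametrization by $N$. Two small points. (i) The paper keeps $\Phi_1 = g(\mathbold{N};\beta) = \beta - N_1/\sum_a N_a$, which preserves the sign pattern $(-,+,\ldots,+)$ of the top row exactly as in Lemma~\ref{lem:Jacobian}, so the original column-reduction argument goes through verbatim; your rescaled surrogate $E_0 = N_1 - \beta\sum_a N_a$ flips the top row to $(+,-,\ldots,-)$, which is why you need the row-reduction variant you describe---both are fine, just a cosmetic difference. (ii) There is a sign slip in the IFT step: differentiating the system $E_0 = 0$, $E_a = 0$ (for $a \in B$) in $I$ gives $J\,\mathbold{N}_{\overline{B}}'(I) = -\partial E/\partial I = (0,1,\ldots,1)^T$, not $(0,-1,\ldots,-1)^T$; with the sign as you wrote it, the device of statement~2 of Lemma~\ref{lem:Jacobian} would yield $\Sigma'(I) < 0$, contradicting your stated conclusion. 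Once the sign is corrected the argument closes as intended, but you would also need to actually rerun the computation of statement~2 for your $E_0$ (as the paper does explicitly for its $\Phi_1$): the formula for $v_1$ changes, and asserting it ``by the same device'' skips the step where the paper verifies $-v_1 = \Omega(1)$ using $d_{a,a} = \Theta(1)$ when $N_1 = \beta\sum_a N_a$.
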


\begin{proof}
Proof of Proposition \ref{prop:beta_fluid_existence} follows by an argument similar to the one used in the proof of Proposition \ref{prop:uniqueness_of_fluid_sol}, by using Lemma \ref{lem:beta_fluid_existence:prelude} instead of Theorem \ref{th:optsol.char-update}. Observe that the $\beta$-anchor function $g(\mathbold{N};\beta)$ is strictly decreasing in $N_1$ and strictly increasing in $N_a$, when $N_1>0$. As a result, statement 1 of Lemma \ref{lem:Jacobian} in Appendix \ref{appndx:Jacobian} holds true upon having, 
\[\Phi_1(\mathbold{N},\mathbold{\eta})~=~g(\mathbold{N};\beta)-\eta_0,\]
and defining the set $\mathcal{Z}_{B}$ using the modified function $\mathbold{\Phi}_B$.

With the above modification, if we can find a constant $\gamma>0$ such that, $\mathbf{1}_{|B|+1}^T \left(\frac{\partial \mathbold{\widetilde{\Phi}}_B}{\partial \mathbold{N}_B}\right)^{-1}\frac{\partial \mathbold{\widetilde{\Phi}}_B}{\partial I}\leq-\gamma<0$ for every tuple in $\mathcal{Z}_{B}$, then statements 2 and 3 of Lemma \ref{lem:Jacobian} also hold true for this modified $\mathbold{\Phi}_B$. As a result, Proposition \ref{prop:beta_fluid_existence} follows using Lemma \ref{lem:beta_fluid_existence:prelude} by the same argument used for proving Proposition \ref{prop:uniqueness_of_fluid_sol} using Theorem \ref{th:optsol.char-update}. 

In the rest of the proof we argue the existence of such a constant $\gamma>0$.   

Let $\mathbold{v}=(v_a:a\in\overline{B})\in\mathbb{R}^{|B|+1}$ be the solution to the system 
\[\frac{\partial \mathbold{\widetilde{\Phi}}_B}{\partial \mathbold{N}_B}\mathbold{v}~=~\frac{\partial \mathbold{\widetilde{\Phi}}_B}{\partial I}.\]

We have $\frac{\partial \Phi_1}{\partial \mathbold{N}_{\overline{B}}}\mathbold{v}=\frac{\partial \Phi_1}{\partial I}$, and $N_1=\beta\sum_{a\in[K]}N_a$, which after some algebraic manipulation implies,
\begin{align}\label{eqn:beta_IFT1}
   -\left(\frac{1}{\beta}-1\right)v_1+\sum_{a\in B}v_a~&=~0. 
\end{align}
(\ref{eqn:beta_IFT1}) further implies, 
\[\mathbold{1}_{|B|+1}^T \mathbold{v}~=~\sum_{i=1}^{|B|+1}v_i~=~\frac{v_1}{\beta}.\]

Therefore proving that $v_1$ is upper bounded by a negative constant is sufficient for proving the desired result. 

For every $a\in B$, we have 
\[\frac{\partial \Phi_a}{\partial\mathbold{N}_{\overline{B}}}\mathbold{v}=\frac{\partial \Phi_a}{\partial I}=-1,\]

which implies 
\begin{align}\label{eqn:beta_IFT2}
    v_1 d(\mu_1,x_{1,a})+v_a d(\mu_a,x_{1,a})~&=~-1,
\end{align}

where $x_{1,a}=\frac{N_1 \mu_1+N_a\mu_a}{N_1+N_a}$. We use $d_{1,a}$ and $d_{a,a}$, respectively, to denote $d(\mu_1,x_{1,a})$ and $d(\mu_a,x_{1,a})$. 

For every $a\in B$, we can eliminate $v_a$ for (\ref{eqn:beta_IFT1}) using (\ref{eqn:beta_IFT2}). After this procedure, we get, 
\begin{align}\label{eqn:beta_IFT3}
    -v_1~&=~\frac{\sum_{a\in B}\frac{1}{d_{a,a}}}{\frac{1}{\beta}-1+\sum_{a\in B}\frac{d_{1,a}}{d_{a,a}}}\nonumber\\
    ~&\geq~\frac{\sum_{a\in B}\frac{1}{d_{a,a}}}{\frac{1}{\beta}-1+\sum_{a\neq 1}\frac{d_{1,a}}{d_{a,a}}}\nonumber\\ 
    ~&\geq~\frac{\sum_{a\in B}\frac{1}{d_{a,a}}}{\frac{1}{\beta}-1+\sum_{a\neq 1}\frac{d(\mu_1,\mu_a)}{d_{a,a}}}.\quad\text{(since $d_{1,a}\leq d(\mu_1,\mu_a)$)}
\end{align}

Since $N_1=\beta \sum_{a\in [K]} N_a$, using (\ref{eqn:envelope_kl_div}) we have $d_{a,a}=\Theta(1)$ for all $a\in[K]/\{1\}$, where the constant hidden in $\Theta(\cdot)$ is independent of $\mathbold{N}$. As a result, by (\ref{eqn:beta_IFT3}), $-v_1=\Omega(1)$. Hence we conclude the proof. 
\end{proof}

\bulletwithnote{Constructing the fluid ODEs}~~Without loss of generality, we assume that the fluid dynamics starts from a state $\mathbold{N}$ where $g(\mathbold{N};\beta)=0$. Otherwise, 
\begin{enumerate}
    \item If $g(\mathbold{N};\beta)>0$, the algorithm gives samples to arm $1$ till $g(\mathbold{N};\beta)=0$.

    \item If $g(\mathbold{N};\beta)<0$, the $\beta$-EB-TCB algorithm follows the dynamics in Proposition \ref{lem:lemg<0}, and reaches $g(\mathbold{N};\beta)=0$ in a finite amount of time. 
\end{enumerate}

Following Proposition \ref{prop:beta_fluid_existence}, the algorithm tracks the allocation $\mathbold{N}_{\overline{B}}(N)$ at a given time $N$, where $B$ denotes the set of minimum index arms. We now determine the ODEs by which the state of the algorithm evolves. 

To simplify the notations, we use $g_{\beta}$ as a shorthand for $g(\mathbold{N}(N);\beta)$ at a given time $N$. For every $a\in[K]$, we use $N_a, N_a^\prime, I_B,$ and $I_B^\prime$, respectively, to denote $N_a(N), N_a^\prime(N), I_B(N)$ and $I_B^\prime(N)$. For every $a\in\overline{B}^c$, we use $I_a$ to denote $I_a(N)$. Also for every $a\in[K]/\{1\}$, we adopt the notation $d_{1,a}$ and $d_{a,a}$, respectively, for the quantities $d(\mu_1,x_{1,a})$ and $d(\mu_a,x_{1,a})$, where $x_{1,a}=\frac{N_1 \mu_1+N_a \mu_a}{N_1+N_a}$.  

For every non-empty $B\subseteq[K]/\{1\}$ we define the quantity, 
\[d_{B}=\left(\sum_{a\in B}\frac{1}{d_{a,a}}\right)^{-1}.\]

We now show the fluid ODEs in the following proposition.\\

\begin{proposition}[\textbf{Fluid ODEs for $\beta$-EB-TCB}]\label{prop:beta_fluid_ODEs}
    Let us assume the algorithm starts from a state $\mathbold{N}(N^0)=(N_a^0:a\in[K])$ with $\sum_{a\in[K]}N_a^0=N^0$, $N_1^0=\beta N^0$ and $N^0>0$. Let $B\subseteq[K]/\{1\}$ be the set of arms having minimum index at a given time $N\geq N^0$. The following statements hold true about the allocation $\mathbold{N}_{\overline{B}}(N)=(N_a(N):a\in\overline{B})$ made by $\beta$-EB-TCB algorithm, 
    \begin{enumerate}
        \item The allocation $\mathbold{N}(N)=(N_a(N):a\in [K])$ evolves by the following system of ODEs, 
        \begin{align}\label{eqn:beta_fluid_ODE1}
            N_1^\prime~&=~\beta,\quad\text{and}\nonumber\\
            \text{for every}\quad b\in B,\quad N_b^\prime~&=~\frac{((1-\beta)N-\sum_{a\in B}N_a) d_B + N_b d_{b,b}}{N d_{b,b}}.
        \end{align}    

        \item The index $I_B(N)$ of the arms in $B$ evolves by the following ODE,
        \begin{align}\label{eqn:beta_fluid_ODE2}
            I_B^\prime~&=~\left(1-\beta-\frac{\sum_{a\in B} N_a}{N}\right) d_B + \frac{I_B}{N}.
        \end{align}

        \item There exists a constant $c>0$ such that, $I_B^\prime\geq c$. On the other hand, indexes of the arms in $\overline{B}^c$ remains upper bounded by $N_a^0 d(\mu_a,\mu_1)$. 
    \end{enumerate}
\end{proposition}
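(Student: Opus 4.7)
The plan is to read off three kinds of defining equations from Proposition~\ref{prop:beta_fluid_existence} --- the $\beta$-anchor constraint $N_1 = \beta N$, the total-allocation constraint $\sum_a N_a = N$, and the $|B|$ index equalities $N_1 d_{1,b} + N_b d_{b,b} = I_B$ for $b \in B$ --- and to differentiate each with respect to $N$, treating $(N_a)_{a \in \overline{B}^c}$ as constants since arms outside $\overline{B}$ receive no samples. The first constraint immediately gives $N_1' = \beta$, and the second then forces $\sum_{b \in B} N_b' = 1 - \beta$.

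For the index equations I would exploit the same envelope identity used in the proof of Theorem~\ref{lemma:pathode}: because $x_{1,b}$ is the minimizer of $N_1 d(\mu_1, \cdot) + N_b d(\mu_b, \cdot)$, we have $N_1 d_2(\mu_1, x_{1,b}) + N_b d_2(\mu_b, x_{1,b}) = 0$, so differentiating $N_1 d_{1,b} + N_b d_{b,b} = I_B$ collapses to $N_1' d_{1,b} + N_b' d_{b,b} = I_B'$, i.e., $N_b' = (I_B' - \beta d_{1,b})/d_{b,b}$. Summing over $b \in B$ and equating to $1 - \beta$ yields a single scalar equation for $I_B'$; to put this in the claimed form~(\ref{eqn:beta_fluid_ODE2}), I would eliminate $\sum_{b \in B} d_{1,b}/d_{b,b}$ using the identity $\beta d_{1,b}/d_{b,b} = I_B/(N d_{b,b}) - N_b/N$, which follows directly from the index equation after dividing by $N d_{b,b}$. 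Substituting the resulting expression for $I_B'$ back into $N_b' = (I_B' - \beta d_{1,b})/d_{b,b}$, and again replacing $\beta d_{1,b}$ by $(I_B - N_b d_{b,b})/N$, gives~(\ref{eqn:beta_fluid_ODE1}) after routine simplification.

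For statement~3, I would argue as follows. Since $N_1 = \beta N$ uses up exactly $\beta N$ of the budget, $\sum_{a \in B} N_a \leq (1-\beta)N$, so the first term in~(\ref{eqn:beta_fluid_ODE2}) is non-negative and $I_B' \geq I_B/N$. This differential inequality is equivalent to $(I_B/N)' \geq 0$, so $I_B/N$ is non-decreasing within each phase where $B$ is fixed; and since $I_B$ is continuous across phase boundaries (a new arm joins $B$ precisely when its own index reaches $I_B$), $I_B/N$ is non-decreasing along the entire trajectory. Hence $I_B/N \geq I_B(N^0)/N^0 =: c > 0$ for every $N \geq N^0$, which via~(\ref{eqn:beta_fluid_ODE2}) gives $I_B' \geq c$. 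The bound on indexes in $\overline{B}^c$ is immediate: $N_a' = 0$ keeps $N_a = N_a^0$, and $I_a = \min_x[N_1 d(\mu_1,x) + N_a d(\mu_a,x)] \leq N_a^0 d(\mu_a,\mu_1)$ by evaluating at $x = \mu_1$.

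The main obstacle I anticipate is bookkeeping rather than conceptual: verifying that the algebraic rearrangements collapse to the exact forms~(\ref{eqn:beta_fluid_ODE1}) and~(\ref{eqn:beta_fluid_ODE2}), and justifying continuity of $I_B$ at phase transitions carefully enough to chain the within-phase monotonicity $(I_B/N)' \geq 0$ into a global lower bound. Beyond the envelope identity and the Gronwall-style observation, no new analytic tools are required.
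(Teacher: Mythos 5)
Statements 1 and 2: your derivation is essentially the same as the paper's. The paper differentiates $N_1 = \beta N$ and the index equalities for pairs $a, b \in B$, eliminates $d_{1,b} - d_{1,a}$ using the index equality itself, and sums; you instead express $N_b'$ directly in terms of $I_B'$ via $N_b' = (I_B' - \beta d_{1,b})/d_{b,b}$ and solve for $I_B'$ from $\sum_{b\in B}N_b' = 1-\beta$. The two routes are algebraically equivalent and collapse to the same ODEs (I checked the substitution $\beta d_{1,b} = I_B/N - N_b d_{b,b}/N$ works out). The upper bound on indexes of $\overline{B}^c$ in statement 3 is also identical. So far so good.

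For the lower bound $I_B' \geq c > 0$ in statement 3, you take a genuinely different route, and there is a gap. You drop the $d_B$ term entirely, observe $I_B' \geq I_B/N$, and use a Gronwall-type monotonicity of $I_B/N$ chained across phase boundaries to conclude $I_B/N \geq I_B(N^0)/N^0 =: c$. That chaining is fine — $I_B$ is indeed continuous when a new arm joins $B$ — but you then assert $c > 0$ without justification, and this can fail. The proposition's hypotheses require only $N_1^0 = \beta N^0$ and $N^0 > 0$; nothing forbids a suboptimal arm $a$ from starting with $N_a^0 = 0$, in which case $x_{1,a}(N^0) = \mu_1$, $I_a(N^0) = 0$, hence $I_B(N^0) = 0$ and your constant $c$ vanishes. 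In that case $I_B' \geq I_B/N$ is vacuous at time $N^0$.

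The paper's argument is more robust precisely because it keeps the $d_B$ term. The constraint $N_1 = \beta N$ forces $N_a/(N_1+N_a) \leq 1-\beta$, hence $x_{1,a}$ is bounded away from $\mu_a$, hence $d_{a,a} = d(\mu_a, x_{1,a}) = \Theta(1)$, giving a state-independent lower bound $d_B \geq c_1 > 0$. Combined with the envelope bound $I_B = \Theta\bigl(\tfrac{N_1 N_a}{N_1+N_a}\bigr) = \Theta(N_a)$ (again using $N_1 = \beta N$), which yields $I_B \geq c_2 \sum_{a\in B}N_a$ for a constant $c_2$ depending only on $\beta, K, \mathbold{\mu}$, the paper gets
\[
I_B' \;\geq\; c_1\Bigl(1-\beta-\tfrac{\sum_{a\in B}N_a}{N}\Bigr) + c_2\,\tfrac{\sum_{a\in B}N_a}{N} \;\geq\; \min\{c_1,c_2\}(1-\beta),
\]
a bound that is positive even when $I_B(N^0) = 0$ and does not depend on the initial state at all. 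To patch your proof you would either need to add $N_a^0 > 0$ for all $a$ as a standing hypothesis, or fall back on a positive lower bound for $d_B$ as the paper does to cover the first instants when $I_B$ is small.
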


\begin{proof}
    \bulletwithnote{Statement 1}~~$N_1^\prime = \beta$ follows directly from the fact that $g_\beta=0$. 

    By definition of $B$, we have 
    \begin{align}\label{eqn:beta_fluid1}
        N_1 d_{1,a}+N_a d_{a,a}~&=~N_1 d_{1,b}+N_b d_{b,b}
    \end{align}
    for every $a,b\in[K]$. Taking derivative on both sides, we get,
    \[N_1^\prime d_{1,a}+N_a^\prime d_{a,a}~=~N_1^\prime d_{1,b}+N_b^\prime d_{b,b},\]
    
    which implies, 
    \begin{align}\label{eqn:beta_fluid2}
        N_a^\prime~=~\frac{d_{1,b}-d_{1,a}}{d_{a,a}}N_1^\prime + \frac{d_{b,b}}{d_{a,a}}N_b^\prime.
    \end{align}
    
    Using (\ref{eqn:beta_fluid1}), we have 
    \[d_{1,b}-d_{1,a}~=~\frac{N_a d_{a,a}-N_b d_{b,b}}{N_1}.\]
    
    Using the above expression in (\ref{eqn:beta_fluid2}), we get, 
    \begin{align*}
        N_a^\prime~&=~\frac{N_a d_{a,a}-N_b d_{b,b}}{N_1 d_{a,a}}N_1^\prime + \frac{d_{b,b}}{d_{a,a}}N_b^\prime.
    \end{align*}
    
    Since $N_1^\prime = \beta$ and $N_1=\beta N$~~(which follows from $g_\beta =0$), the above equation implies, 
    \begin{align*}
        N_a^\prime~&=~\frac{N_a d_{a,a}-N_b d_{b,b}}{N d_{a,a}}+\frac{d_{b,b}}{d_{a,a}} N_b^\prime.
    \end{align*}
    
    Adding both sides for $a\in B$, we get,
    \begin{align*}
        1-\beta~=~\sum_{a\in B}N_a^\prime~&=~\sum_{a\in B}\frac{N_a d_{a,a}-N_b d_{b,b}}{N d_{a,a}}+d_{b,b}N_b^\prime\sum_{a\in B}d_{a,a}^{-1}\\
        &=~\frac{\sum_{a\in B}N_a}{N}-\frac{N_b}{N} d_{b,b} d_{B}^{-1}+N_b^\prime d_{b,b} d_B^{-1},
    \end{align*}
    which implies 
    \[N_b^\prime~=~\frac{((1-\beta)N-\sum_{a\in B} N_a) d_B + N_b d_{b,b}}{N d_{b,b}}.\]
    \bigskip 

    \bulletwithnote{Statement 2}~~We know 
    \begin{align*}
        I_B^\prime~&=~N_1^\prime d_{1,b}+N_b^\prime d_{b,b}\quad\text{for every}~b\in B. 
    \end{align*}

    Putting in the derivatives from (\ref{eqn:beta_fluid_ODE1}), we obtain, 
    \begin{align*}
        I_B^\prime~&=~\beta d_{1,b}+\left(1-\beta-\frac{\sum_{a\in B} N_a}{N}\right) d_B + \frac{N_b d_{b,b}}{N} \\
        &=~\left(1-\beta-\frac{\sum_{a\in B} N_a}{N}\right) d_B+\frac{\beta N d_{1,b}+ N_b d_{b,b}}{N} \\
        &=~\left(1-\beta-\frac{\sum_{a\in B} N_a}{N}\right) d_B+\frac{I_B}{N}.\quad\text{(since $N_1=\beta N$ and  $I_b=I_B$ for every $b\in B$)}  
    \end{align*}
    \bigskip 

    \bulletwithnote{Statement 3}~~For the following argument, the constants hidden in $O(\cdot), \Omega(\cdot)$ and $\Omega(\cdot)$ are independent of $N$. 
    

    Note that $N_1 = \beta N$. As a result, using (\ref{eqn:envelope_kl_div}), $d_{a,a}=\Theta(1)$ for every $a\in[K]/\{1\}$. This implies, we can find a constant $c_1>0$ such that $d_B\geq c_1$. On the other hand, using (\ref{eqn:envelope_I2}), we have, 
    \[I_B~=~\Theta\left(\frac{N_1 N_a}{N_1+N_a}\right)\]
    for every $a\in B$. Since $N_1=\beta N$ and $N_a\leq N$, we have 
    \[I_B~=~\Theta(N_a).\]
    
    Adding for all $a\in B$, we get $I_B=\Theta(\sum_{a\in B} N_a)$. Therefore, $I_B\geq c_2 \sum_{a\in B} N_a$, for some  constant $c_2 >0$. 

    Now using (\ref{eqn:beta_fluid_ODE2}), 
    \begin{align*}
        I_B^\prime~&=~\left(1-\beta-\frac{\sum_{a\in B} N_a}{N}\right) d_B + \frac{I_B}{N}\\
        &\geq~c_1\times\left(1-\beta-\frac{\sum_{a\in B} N_a}{N}\right)+c_2 \times \frac{\sum_{a\in B} N_a}{ N}\\\
        &\geq~\min\{c_1,c_2\}\times (1-\beta).
    \end{align*}
    Taking $c=\min\{c_1,c_2\}>0$, we have the desired result.

    Now for arms $a\in\overline{B}^c$, note that $x_{1,a}=\argmin{x\in[\mu_a,\mu_1]}{\left(N_1 d(\mu_1,x)+N_a^0 d(\mu_a,x)\right)}$ and $I_a=N_1 d(\mu_1,x_{1,a})+N_a^0 d(\mu_a,x_{1,a})$. 
    
    As a result, putting $x=\mu_1$, we have $I_a\leq N_a^0 d(\mu_a,\mu_1)$.
\end{proof}

\bulletwithnote{Reaching $\beta$-optimal proportions}~~By statement 3 of Proposition \ref{prop:beta_fluid_ODEs}, the indexes of the arms in $B$ increase at a linear rate, whereas the indexes of the arms in $\overline{B}^c$ stay bounded above by a constant. As a result, by some finite time, $I_B$ crosses the index of some arm in $a\in \overline{B}^c$, after which $B$ gets updated to $B\cup\{a\}$. The same process then continues with the updated $B$. In this way, eventually $B=[K]/\{1\}$ and the fluid dynamics reaches the $\beta$-optimal proportion $\mathbold{\omega}^\star(\beta)=(\omega^\star_a(\beta):a\in[K])$ ($\beta$-optimal proportion is the solution to the max-min problem (\ref{eqn:max_min_formulation}) with the added constraint $\omega_1=\beta$) where, 
\[\frac{N_1}{N}~=~\omega^\star_1(\beta)~=~\beta\quad\text{and}\quad\frac{N_a}{N}~=~\omega^\star_a(\beta)\quad\text{for every}~a\in[K]/\{1\}.\]

Applying the same argument as used to bound the time to reach optimal proportions in Section \ref{sec:fluid_model}, if the fluid dynamics start at some time $N^0$ with state $\mathbold{N}(N^0)=(N_a(N^0):a\in[K])$, then it reaches stability by a time atmost $\frac{N^0}{\min_{a\in[K]}\omega^\star_a(\beta)}$.

\section{Intuitions based on fluid dynamics applied to algorithmic behavior}\label{appndx:proof_sketches}
\subsection{Indexes once they meet do not separate}\label{appndx:fluid_index_meeting}
In the fluid dynamics described in Theorem \ref{lemma:pathode}, once the indexes meet
thereafter they move up together by construction.
It turns out 
that $I_{B}^\prime (N)$ is positive.
Below we give a heuristic argument
that in our fluid dynamics,
once a set of smallest indexes that are equal, increase and catch up with another index, their union then remains equal and increases together with $N$. This argument provides important insights which help us later to prove that after after a random time of finite expectation, if our algorithm picks a suboptimal arm, then it picks that arm again in a periodic manner, which helps us prove closeness of indexes w.r.t. the algorithmic allocations (see Lemma~\ref{lem:non_fluid_index_meeting}). Differentiating $g=0$ with respect to $N$, we see that,
\begin{equation} \label{eqn:sumsqgrad0}
N_1'\sum_{a \ne 1} f(\mathbold{\mu},a,\mathbold{N}) \frac{N_a \Delta_a}{(N_1+N_a)^2} =
\sum_{a \ne 1} N_a' f(\mathbold{\mu},a,\mathbold{N}) \frac{N_1 \Delta_a}{(N_1+N_a)^2}.
\end{equation}
Inductively, suppose that a set $B$ of indexes are moving up together
and they run into another index $b$ at time $N$. Upon assuming contradiction, we can have a neighbourhood $[N,N+\Delta N]$ where the algorithm only allocates to a subset $C\subset B\cup\{b\}$ and doesn't allocate to arms in $D=B\cup\{b\}-C\neq\emptyset$. Then the allocations follow the ODEs in (\ref{eqn:lemfluid})  of Theorem \ref{lemma:pathode} with $B=C$, in the interval $[N,N+\Delta N]$. 
 
Consider the probability vector
$(p_a: a \in[K]/\{1\})$ where, 
\[p_a = \frac{f(\mathbold{\mu},a,\mathbold{N}) \frac{ N_a \Delta_a}{(N_1+N_a)^2}}{\sum_{b \ne 1} f(\mathbold{\mu},b,\mathbold{N}) \frac{ N_b \Delta_b}{(N_1+N_b)^2}}.\] 
Note that $p_a>0$ for every $a\in[K]/\{1\}$.
We have from (\ref{eqn:sumsqgrad0}) that
\begin{align}\label{eqn:log_derivative:0}
    \frac{N_1'}{N_1}~&=~\sum_{a \in C}\frac{N_a'}{N_a} p_a 
\end{align}
Letting $b=\argmax{a\in C}{\frac{N_a^\prime(N)}{N_a(N)}}$~~(where $N_a^\prime(N)$ is the derivative in (\ref{eqn:lemfluid}) of Theorem \ref{lemma:pathode}, upon putting $B=C$), we have
\begin{equation} \label{eqn:log_derivative}
\frac{N_1'}{N_1}\leq \left(\sum_{a\in C}p_a\right)\frac{N_b'}{N_b}\overset{(1)}{<}\frac{N_b'}{N_b},
\end{equation}
where the strict inequality in (1) follows from the fact that $D=B\cup\{b\}-C\neq \emptyset$, causing $\sum_{a\in C}p_a<1$.  

We now argue that $D$ must be empty.
Suppose instead that $D\neq \emptyset$ and $a \in D$. Because all indexes in $B$ are equal at time $N$, we have,  
$N_1 d(\mu_1, x_{1,a}) + N_a d(\mu_a, x_{1,a}) = N_1 d(\mu_1, x_{1,b})
+ N_b d(\mu_b, x_{1,b})$ 
at N. 
Observe that for any arm $d\in[K]/\{1\}$, derivative of its index with respect to $N$ equals
$N'_1 d(\mu_1, x_{1,d})
+ N'_d d(\mu_d, x_{1,d})$
(since, by definition of 
$x_{1,d}$, 
$N_1 d_2(\mu_1, x_{1,d})
+ N_d d_2(\mu_d, x_{1,d})=0$).
Since arm $a$ gets no sample in $[N,N+\Delta N]$, we have $N_a'=0$, which implies
\[I_a^\prime=N_1^\prime d(\mu_1,x_{1,a})~\text{in}~[N,N+\Delta N].\]
By our previous discussion 
\begin{equation} \label{eqn:indexb0}
I_{b}^\prime~=~N'_1 d(\mu_1, x_{1,b})
+ N'_b
d(\mu_b, x_{1,b}).
\end{equation}
We now argue that $N_1' d(\mu_1, x_{1,a})$
is strictly less than (\ref{eqn:indexb0}) at $N$. As a result, if $\Delta N>0$ is picked sufficiently small, index of $b$, which is the minimum index, outruns index of $a$ in $[N,N+\Delta N]$.

Consider the difference
\begin{equation} \label{eqn:indexdiff0}
N_1' d(\mu_1, x_{1,a})
-
N'_1 d(\mu_1, x_{1,b})
- N'_b d(\mu_b, x_{1,b})~=~N_1^\prime ((\mu_1, x_{1,a})
- d(\mu_1, x_{1,b}))-N_b^\prime d(\mu_b,x_{1,b}).
\end{equation}
We want show that this is strictly negative. We consider two cases, 

\bulletwithnote{Case I} If $d(\mu_1, x_{1,a})
- d(\mu_1, x_{1,b}) \leq 0$, it follows trivially. 

\bulletwithnote{Case II} If $d(\mu_1, x_{1,a})
- d(\mu_1, x_{1,b}) > 0$, since $N_b^\prime>0$, using (\ref{eqn:log_derivative}) we can upper bound (\ref{eqn:indexdiff0}) by
 \begin{equation} \label{eqn:indexdiff3}
N_b^\prime\cdot\left(\frac{N_1}{N_b} \left (d(\mu_1, x_{1,a})
-
 d(\mu_1, x_{1,b}) \right ) 
- d(\mu_b, x_{1,b})\right).
\end{equation}
Since the two indexes are equal at this point, we have
\[N_1(d(\mu_1, x_{1,a})
- d(\mu_1, x_{1,b})) = N_b d(\mu_b, x_{1,b}) - N_a d(\mu_a, x_{1,a}).\]
Substituting this in (\ref{eqn:indexdiff3}), the latter equals
\[
N_b^\prime\cdot\left(\frac{1}{N_b}(N_b d(\mu_b, x_{1,b}) - N_a d(\mu_a, x_{1,a})) - 
d(\mu_b, x_{1,b})\right) \leq - N_b^\prime \cdot\frac{N_a}{N_b} d(\mu_a, x_{1,a}) <0.
\]
We thus have our contradiction.
Therefore, indexes of the arms in $B \cup\{b\}$ move together.

\subsection{Proof sketch of Lemma \ref{lem:non_fluid_index_meeting}} 
We consider the situation where the algorithm picks some arm $a\in[K]/\{1\}$ at iteration $N$ and doesn't pick $a$ for the next $R(N)\geq 1$ iterations. For better readability, we denote $R(N)$ using $R$. In the following argument, we try to bound $R$ from above using $N$. We can prove that $\widetilde{N}_j(N)=\Theta(N)$ for every $j\in[K]$ and $N\geq T_{good}$ (see Remark \ref{rem:g=0_after_T_good} in Appendix \ref{appndx:closeness_of_allocations}). As a result, $R$ is atmost $O(N)$ for $N\geq T_{good}$. Below, we improve the upper bound to $O(N^{1-3\alpha/8})$ by a refined analysis.

Let us define $\Delta\widetilde{N}_j(N,t)=\widetilde{N}_j(N+t)-\widetilde{N}_j(N)$ for every $j\in[K]$ and $N,t\geq 1$. By our choice of $T_{good}$, we have $|g(\mathbold{\mu},\mathbold{\widetilde{N}}(N))|=O(N^{-3\alpha/8})$ for $N\geq T_{good}$~~(see Remark \ref{rem:g=0_after_T_good} in Appendix \ref{appndx:closeness_of_allocations}).  By applying mean value theorem on $g(\cdot)$ for $N\geq T_{good}$, we have, 
\begin{align}\label{eqn:non_fluid1}
    \left|\frac{\Delta\widetilde{N}_1(N,t)}{\widetilde{N}_1(N)}-\sum_{j\neq {1,a}}\hat{p}_j(N,t)\frac{\Delta\widetilde{N}_j(N,t)}{\widetilde{N}_j(N)}\right|~=~O(N^{-3\alpha/8}),\quad\text{for every}~t\leq R,
\end{align}
where $(\hat{p}_j(N,t):j\in[K]/\{1\})$ is a probability distribution over the set $[K]/\{1\}$, depending on $N$ and $t$ (this distribution is not important to the discussion and is spelt out at Appendix \ref{sec:proof_of_prop:period}). Taking $b_t=\argmax{a\neq 1}{\frac{\Delta\widetilde{N}_a(N,t)}{\widetilde{N}_a(N)}}$, and using (\ref{eqn:non_fluid1}), we obtain, 
\begin{align}\label{eqn:non_fluid2}
    \frac{\Delta\widetilde{N}_1(N,t)}{\Delta\widetilde{N}_{b_t}(N,t)}~\leq~\frac{\widetilde{N}_1(N)}{\widetilde{N}_{b_t}(N)}+O(N^{1-3\alpha/8})\quad\text{for every}~t\leq R.
\end{align}
Observe that (\ref{eqn:non_fluid1}) and (\ref{eqn:non_fluid2}), respectively, resembles (\ref{eqn:log_derivative:0}) and (\ref{eqn:log_derivative}) from Appendix \ref{appndx:fluid_index_meeting}, except for a $O(N^{1-3\alpha/8})$ term due to the noise in $\mathbold{\widetilde{\mu}}$. 

Applying the mean value theorem, we can bound the difference between the empirical indexes of arm $a$ and $b_t$ at iteration $N+t$ by, 
\begin{align}\label{eqn:non_fluid3}
    \mathcal{I}_a(N+t)-\mathcal{I}_{b_t}(N+t)&\leq\Delta\widetilde{N}_1(N,t)\cdot\left(d(\widetilde{\mu}_1,\widetilde{x}_{1,a})-d(\widetilde{\mu}_1,\widetilde{x}_{1,b_t})\right)\nonumber\\
    &\quad-\Delta\widetilde{N}_{b_t}(N,t)\cdot d(\widetilde{\mu}_{b_t},\widetilde{x}_{1,b_t})+O(N^{1-\frac{3\alpha}{8}}).  
\end{align}

Now if $d(\widetilde{\mu}_1,\widetilde{x}_{1,a})-d(\widetilde{\mu}_1,\widetilde{x}_{1,b_t})<0$,~~(\ref{eqn:non_fluid3}) implies, 
\[\mathcal{I}_a(N+t)-\mathcal{I}_{b_t}(N+t)~\leq~-\Delta\widetilde{N}_{b_t}(N,t)\cdot d(\widetilde{\mu}_{b_t},\widetilde{x}_{1,b_t})+O(N^{1-3\alpha/8}).\]

Otherwise, if $d(\widetilde{\mu}_1,\widetilde{x}_{1,a})-d(\widetilde{\mu}_1,\widetilde{x}_{1,b_t})\geq 0$,~~using (\ref{eqn:non_fluid2}) we have
\begin{align}\label{eqn:non_fluid4}
    &\mathcal{I}_a(N+t)-\mathcal{I}_{b_t}(N+t)\nonumber\\
    &\leq~\Delta\widetilde{N}_{b_t}(N,t)\cdot\left(\frac{\widetilde{N}_1(N)}{\widetilde{N}_{b_t}(N)}\left(d(\widetilde{\mu}_1,\widetilde{x}_{1,a})-d(\widetilde{\mu}_1,\widetilde{x}_{1,b_t})\right)-d(\widetilde{\mu}_{b_t},\widetilde{x}_{1,b_t})\right)+O(N^{1-3\alpha/8}), 
\end{align}

Note that (\ref{eqn:non_fluid3}) resembles (\ref{eqn:indexdiff3}) in Appendix \ref{appndx:fluid_index_meeting}. Since $\mathcal{I}_a(N-1)\leq \mathcal{I}_{b_t}(N-1)$, we can prove using the mean value theorem that $\mathcal{I}_a(N)\leq \mathcal{I}_{b_t}(N)+O(N^{1-3\alpha/8})$. Now expanding the empirical indexes, we get
\begin{align*}
    \widetilde{N}_1(N)\cdot d(\widetilde{\mu}_1,\widetilde{x}_{1,a})+\widetilde{N}_a(N)\cdot d(\widetilde{\mu}_a,\widetilde{x}_{1,a})~\leq~\widetilde{N}_1(N)\cdot d(\widetilde{\mu}_1,\widetilde{x}_{1,a})&+\widetilde{N}_{b_t}(N)\cdot d(\widetilde{\mu}_{b_t},\widetilde{x}_{1,b_t})\\ 
    &+O(N^{1-3\alpha/8}).
\end{align*}

Now dividing both sides by $\widetilde{N}_{b_t}(N)$ and using the fact that $\widetilde{N}_{b_t}(N)=\Theta(N)$, we have
\[\frac{\widetilde{N}_1(N)}{\widetilde{N}_{b_t}(N)}\left(d(\widetilde{\mu}_1,\widetilde{x}_{1,a})-d(\widetilde{\mu}_1,\widetilde{x}_{1,b_t})\right)-d(\widetilde{\mu}_{b_t},\widetilde{x}_{1,b_t})\leq -\frac{\widetilde{N}_a(N)}{\widetilde{N}_{b_t}(N)}d(\widetilde{\mu}_a,\widetilde{x}_{1,a})+O(N^{-3\alpha/8}).\]

Using the above inequality in (\ref{eqn:non_fluid4}), we have, 
\begin{align}\label{eqn:non_fluid5}
  \mathcal{I}_a(N+t)-\mathcal{I}_{b_t}(N+t)&\leq-\Delta\widetilde{N}_{b_t}(N,t)\cdot\frac{\widetilde{N}_a(N)}{\widetilde{N}_{b_t}(N)}d(\widetilde{\mu}_a,\widetilde{x}_{1,a})+O(N^{1-\frac{3\alpha}{8}})\nonumber\\
  &\quad+O(\Delta\widetilde{N}_{b_t}(N,t)\cdot N^{-\frac{3\alpha}{8}}),\nonumber\\
  &\leq-\Delta\widetilde{N}_{b_t}(N,t)\cdot\frac{\widetilde{N}_a(N)}{\widetilde{N}_{b_t}(N)}d(\widetilde{\mu}_a,\widetilde{x}_{1,a})\nonumber\\
  &\quad+O(N^{1-\frac{3\alpha}{8}})\quad\text{(since $\Delta\widetilde{N}_{b_t}(N,t)\leq R=O(N)$)},
\end{align}
whenever $d(\widetilde{\mu}_1,\widetilde{x}_{1,a})-d(\widetilde{\mu}_1,\widetilde{x}_{1,b_t})\geq 0$. 

Since $\widetilde{N}_j(N)=\Theta(N)$ and $\widetilde{\mu}_j(N)\approx\mu_j$ for all $j\in[K]$ and $N\geq T_{good}$, the coefficient of $\Delta\widetilde{N}_{b_t}(N,t)$ in (\ref{eqn:non_fluid4}) and (\ref{eqn:non_fluid5}) are $-\Theta(1)$.  As a result, we can find a constant $C_3>0$, such that, for $t\leq R$ and $N\geq T_{good}$, 
\begin{align}\label{eqn:non_fluid6}
    \mathcal{I}_a(N+t)-\mathcal{I}_{b_t}(N+t)~\leq~-C_3 \Delta\widetilde{N}_b(N,t)+O(N^{1-3\alpha/8}).
\end{align}
Applying the mean value theorem and using the fact that $t$ can be atmost $O(N)$, we can prove that, (\ref{eqn:non_fluid6}) implies, 
\[\mathcal{I}_a(N+t-1)-\mathcal{I}_{b_t}(N+t-1)~\leq~-C_3 \Delta\widetilde{N}_b(N,t)+O(N^{1-3\alpha/8}).\]
As a result, we have a constant $C_4>0$ such that, 
\begin{align}\label{eqn:non_fluid7}
    \mathcal{I}_a(N+t-1)-\mathcal{I}_{b_t}(N+t-1)~\leq~-C_3 \Delta\widetilde{N}_b(N,t)+C_4 N^{1-3\alpha/8}.
\end{align}

Using (\ref{eqn:non_fluid2}), we can choose $T_{good}$ suitably, and find constants $D_1,~D_2>0$, such that, whenever $N\geq T_{good}$, 
\[R\geq D_1 N^{1-3\alpha/8}\quad\implies\quad\Delta\widetilde{N}_{b_R}(N,R)\geq D_2 R\quad\text{(see Lemma \ref{lem:triv2} of Appendix \ref{sec:proof_of_prop:period}).}\] 
We consider the case where $R\geq\max\left\{D_1, \frac{2C_4}{C_3 D_2}\right\}\times N^{1-3\alpha/8}$. 

Since $R\geq D_1 N^{1-3\alpha/8}$, we have 
\[\Delta\widetilde{N}_{b_R}(N,R)\geq D_2 R\geq D_2\times \frac{2C_4}{C_3 D_2}N^{1-3\alpha/8}=\frac{2C_4}{C_3}N^{1-3\alpha/8}.\]  
For notational simplicity, we use $b$ to denote $b_R$. We consider the iteration $N+S$ where arm $b$ was selected for the last time before iteration $N+R$. Then by definition of $b_R$ and $S$, and using the above inequality, we have
\begin{align}\label{eqn:non_fluid8}
    \Delta\widetilde{N}_b(N,S)~&=~\Delta\widetilde{N}_b(N,R)~\geq~\frac{2C_4}{C_3}N^{1-3\alpha/8}.
\end{align}
Also, since $\Delta\widetilde{N}_b(N,S)=\Delta\widetilde{N}_b(N,R)$ and for every $j\neq 1$ $\Delta\widetilde{N}_j(N,R)\geq \Delta\widetilde{N}_j(N,S)$, we conclude $b=b_S$. Therefore,  
\begin{align*}
    \mathcal{I}_a(N+S-1)-\mathcal{I}_{b}(N+S-1)~&=~\mathcal{I}_a(N+S-1)-\mathcal{I}_{b_S}(N+S-1)\\
    \text{(using (\ref{eqn:non_fluid7}))}\quad~&\leq~-C_3\Delta\widetilde{N}_{b_S}(N,S)+C_4 N^{1-3\alpha/8}\\
    \text{(since $b=b_S$)}\quad~&=~-C_3\Delta\widetilde{N}_{b}(N,S)+C_4 N^{1-3\alpha/8}\\
    \text{(using (\ref{eqn:non_fluid8}))}\quad~&\leq~-C_3\times\frac{2C_4}{C_3}N^{1-3\alpha/8}+C_4 N^{1-3\alpha/8}\\
    ~&=-C_4 N^{1-3\alpha/8}.
\end{align*}
The above inequality implies, the AT2 algorithm pulls arm $b$ at iteration $N+S$, even though 
\[\mathcal{I}_a(N+S-1)\leq \mathcal{I}_b(N+S-1)-C_4 N^{1-3\alpha/8},\] 
which is contradicting the algorithm's description. Hence we must have 
\[R\leq \max\left\{D_1,\frac{2C_4}{C_3 D_2}\right\}\times N^{1-3\alpha/8}=O(N^{1-3\alpha/8}).\]
\hfill\qedsymbol

\section{Algorithmic allocations: non fluid behaviour}\label{appndx:non_fluid}

In the following sections, unless otherwise stated, the proof of the mentioned results for AT2 (\ref{code:AT2}) and IAT2 (\ref{code:IAT2}) algorithms follow a similar argument. Also, the constants introduced while stating the results in the following sections might be different for the two algorithms. 

While using the $O(\cdot),\Theta(\cdot)$ and $\Omega(\cdot)$  notations, we imply that the hidden constants can depend on the choice of algorithm among AT2 and IAT2, instance $\mathbold{\mu}$, exploration factor $\alpha\in(0,1)$ and no. of arms $K$, and are independent of the sample path. 

\subsection{Convergence of algorithmic allocations to the  optimality conditions}\label{appndx:conv_to_optcond}

In this section, our agenda is to prove the convergence of the allocations of AT2 and IAT2 algorithms to the optimality conditions mentioned in Proposition \ref{prop:opt_cond}. For ease of presentation we state the conditions uniquely characterizing the optimal proportion $\mathbold{\omega}^\star$ below according to Proposition \ref{prop:opt_cond}: 
\begin{align}\label{eqn:optimality_cond}
    \sum_{a\neq 1}\frac{d(\mu_1,x_{1,a}^\star)}{d(\mu_a,x_{1,a}^\star)}~&=~1\quad\text{and}\quad\forall a\neq 1,\quad \omega_1^\star d(\mu_1,x_{1,a}^\star)+\omega_a d(\mu_a,x_{1,a}^\star)~=~I^\star~=~T^\star(\mathbold{\mu})^{-1}\\
    \text{where}\quad x_{1,a}^\star~&=~\frac{\omega_1^\star \mu_1+\omega_a^\star \mu_a}{\omega_1^\star+\omega_a^\star},\quad\text{and}\quad\sum_{a\in[K]}\omega_a^\star=1.\nonumber
\end{align}

Recall that for every $a\in[K]$ and iteration $N$, $\widetilde{\omega}_a(N)=\frac{\widetilde{N}_a(N)}{N}$ denotes the proportion of samples allocated by the algorithm to arm $a$. Let $\mathbold{\widetilde{\omega}}(N)=(\widetilde{\omega}_a(N):a\in[K])$. 

Recall the anchor function $g(\mathbold{\mu},\mathbold{\widetilde{N}}(\cdot))$ and  index $I_a(\cdot)$ for every alternative arm $a\in [K]/\{1\}$. In Section \ref{sec:proof_sketch}, we defined the normalized index $H_a(\cdot)$ of every arm $a\in[K]/\{1\}$ at iteration $N$ as $H_a(N)=\frac{1}{N}I_a(N)$. In the next two sections, we prove,
\begin{align}\label{eqn:anchor_convergence}
   \left|g(\mathbold{\mu},\mathbold{\widetilde{\omega}}(N))\right|~=~\left|~\sum_{a\in[K]}\frac{d(\mu_1,x_{1,a}(N))}{d(\mu_a,x_{1,a}(N)}-1~\right|~&\longrightarrow ~0, 
\end{align}
and
\begin{align}\label{eqn:index_convergence}
   \max_{a,b\in[K]/\{1\}}|H_a(N)-H_b(N)|~&\longrightarrow~ 0
\end{align}

a.s. in $\mathbb{P}_{\mathbold{\mu}}$ as $N\to\infty$. Moreover, we show that, after a random time of finite expectation, both the convergences in (\ref{eqn:anchor_convergence}) and (\ref{eqn:index_convergence}) happen at a uniform rate over all sample paths. We prove these convergence results in Proposition \ref{prop:gbound} and \ref{prop:closeness_of_indexes} stated below. 
\begin{proposition}[\textbf{Convergence of $g$ to zero}]\label{prop:gbound}
    There exists constants $M_4\geq 1$ and $C>0$ independent of the sample paths, such that, if $T_6$ is defined as the iteration at which $g(\mathbold{\widetilde{\mu}}(\cdot),\mathbold{\widetilde{N}}(\cdot))$ crosses the value zero after iteration $\max\{M_4,T_5\}$~~($T_5$ is a random time satisfying $\mathbb{E}_{\mathbold{\mu}}[T_5]<\infty$ and defined in Definition \ref{def:T5} of Appendix \ref{appndx:anchor_convergence}), then for $N\geq T_6$ we have, 
    \begin{align*}
        \left|g(\mathbold{\mu},\mathbold{\widetilde{N}}(N))\right|~&\leq~CN^{-3\alpha/8}.
    \end{align*}
    Moreover, the random time $T_6$ satisfies $\mathbb{E}_{\mathbold{\mu}}[T_6]<\infty$.
\end{proposition}
\vspace{0.1in}

\begin{proposition}[\textbf{Closeness of the indexes}]\label{prop:closeness_of_indexes}
There exists a random time $T_8$~ (defined in Definition \ref{def:T8} of Appendix \ref{appndx:index_convergence}) satisfying $\mathbb{E}_{\mathbold{\mu}}[T_8]<\infty$, such that, for $N\geq T_8$, every pair of alternative arms $a,b\in[K]/\{1\}$ has, 
\[|I_a(N)-I_b(N)|~=~O(N^{1-3\alpha/8}).\]
\end{proposition}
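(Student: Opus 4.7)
The plan is to use Proposition~\ref{prop:gbound} (which gives $|g(\mathbold{\mu},\mathbold{\widetilde N}(N))|=O(N^{-3\alpha/8})$ after a random time of finite expectation) together with the AT2/IAT2 selection rule for challengers to force the indexes to stay close. The high-level idea, borrowing from the fluid analysis in Section~\ref{sec:index_meeting_fluid}, is that whenever one empirical index $\mathcal I_a$ lags too far behind another $\mathcal I_b$, the algorithm's step~4 rule will repeatedly sample $a$, and we must quantify how quickly this drives them to meet.

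The first step is to establish the following revisit lemma: there is a random time $T_{good}$ with $\mathbb{E}_{\mathbold{\mu}}[T_{good}]<\infty$ such that (i)~every arm in $[K]/\{1\}$ gets pulled at least once between $T_{good}$ and $T_{stable}$, and (ii)~for every $N\ge T_{good}$, if arm $a\in[K]/\{1\}$ is pulled at iteration $N$, then it is pulled again within the next $O(N^{1-3\alpha/8})$ iterations. The proof will be by contradiction: assume some arm $a$ is not pulled for $R$ iterations starting at $N$, with $R\gg N^{1-3\alpha/8}$. Since the algorithm samples only challengers in this window (the proportions are $\Theta(1)$ for each arm by Proposition~\ref{prop:gbound} and the envelope~\eqref{eqn:envelope_g2}, so arm $1$ is pulled only to restore $g\ge 0$, which contributes an $O(N^{-3\alpha/8})$-scale deviation), differentiating the relation $g(\mathbold\mu,\mathbold{\widetilde N}(N))=O(N^{-3\alpha/8})$ and applying the mean value theorem yields an analogue of~\eqref{eqn:log_derivative}: with $b$ the arm maximizing $\Delta\widetilde N_b/\widetilde N_b$ in the window,
\begin{equation*}
\frac{\Delta\widetilde N_1(N,t)}{\widetilde N_1(N)}\;\le\;\frac{\Delta\widetilde N_b(N,t)}{\widetilde N_b(N)}+O(N^{-3\alpha/8}).
\end{equation*}
Plugging this into the mean-value expansion of $\mathcal I_a-\mathcal I_b$ and using the fluid-style cancellation argument of Section~\ref{sec:index_meeting_fluid} (Case II there) will produce a bound of the form $\mathcal I_a(N+t-1)-\mathcal I_b(N+t-1)\le -C\Delta\widetilde N_b(N,t)+C'N^{1-3\alpha/8}$. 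Picking $R=\Theta(N^{1-3\alpha/8})$ suitably large and rolling back to the last iteration $S\le R$ at which $b$ was selected gives $\mathcal I_a(N+S-1)<\mathcal I_b(N+S-1)$ by a margin, contradicting the step~4 rule that chose $b$ over $a$ at that iteration.

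Given this revisit lemma, the proposition follows by a short argument. Define $T_8\defined T_{stable}$ (from Proposition~\ref{prop:gbound}/Proposition~\ref{prop:closeness_of_allocations}), so $\mathbb E_{\mathbold\mu}[T_8]<\infty$. Fix $N\ge T_8$ and $a,b\in[K]/\{1\}$. Let $N+R_{a,b}$ be the first iteration strictly after $N$ at which the sign of $\mathcal I_a(\cdot)-\mathcal I_b(\cdot)$ flips; since each of $a,b$ is revisited within $O(N^{1-3\alpha/8})$ iterations after $T_{good}\le N$, and $\mathcal I_a-\mathcal I_b$ changes sign between successive pulls of $a$ and of $b$, we obtain $R_{a,b}=O(N^{1-3\alpha/8})$. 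Then
\begin{equation*}
|\mathcal I_a(N)-\mathcal I_b(N)|\;\le\;|\mathcal I_a(N+R_{a,b})-\mathcal I_a(N)|+|\mathcal I_b(N+R_{a,b})-\mathcal I_b(N)|,
\end{equation*}
and since the partial derivatives of each $\mathcal I_j$ with respect to $\widetilde N_1$ and $\widetilde N_j$ are bounded by $\max\{d(\mu_1,\mu_j),d(\mu_j,\mu_1)\}+O(N^{-3\alpha/8})$ for $N\ge T_8$, each index can change by at most $O(1)$ per iteration, giving $|\mathcal I_a(N)-\mathcal I_b(N)|=O(R_{a,b})=O(N^{1-3\alpha/8})$. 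Finally, Lemma~\ref{lem:noise_in_index} (to be invoked) bounds the gap between empirical and true indexes by $O(N^{1-3\alpha/8})$, and the triangle inequality converts the bound on $|\mathcal I_a-\mathcal I_b|$ into the claimed bound on $|I_a(N)-I_b(N)|$.

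The main obstacle is the revisit lemma. The delicate point is that the fluid cancellation of Section~\ref{sec:index_meeting_fluid} uses an exact relation $g=0$ and exact equality of indexes in $B$, whereas here $g$ fluctuates in $[-CN^{-3\alpha/8},CN^{-3\alpha/8}]$ and the empirical indexes differ from the true indexes by $O(N^{1-3\alpha/8})$ noise; the contradiction must therefore be extracted with enough slack to dominate these error terms, which forces the exponent $N^{1-3\alpha/8}$ rather than anything sharper. Carefully tracking these error terms through the chain of mean-value estimates, and verifying that the leading ``$-C\Delta\widetilde N_b$'' term indeed dominates once $R$ exceeds a suitable multiple of $N^{1-3\alpha/8}$, is the technical crux of the proof.
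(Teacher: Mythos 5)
Your proposal is correct and follows essentially the same route as the paper: a revisit lemma (the paper's Lemma~\ref{lem:period}) proved by contradiction via the fluid-style cancellation under the approximate relation $|g|=O(N^{-3\alpha/8})$, followed by the sign-crossing time $R_{a,b}=O(N^{1-3\alpha/8})$, per-iteration $O(1)$ bounds on the index increments via the mean value theorem, and Lemma~\ref{lem:noise_in_index} to pass from empirical to true indexes. The one caveat is that $T_8$ should be defined as the first time after which $g$-convergence holds and every challenger has been pulled once (as in the paper's $T_{7,M},T_{8,M}$ construction), not as $T_{stable}$ from Proposition~\ref{prop:closeness_of_allocations}, since that proposition's proof itself relies on the present index-closeness result.
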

\vspace{0.1in}

\bulletwithnote{Proof of Proposition \ref{prop:convergence_to_opt_cond:main_body}}  By the definition of $T_{stable}$ in Definition \ref{def:T_opt} of Appendix \ref{appndx:closeness_of_allocations}, we have $T_{stable}\geq T_{6},~T_{8}$, where $T_{6}$ and $T_8$ are the random times mentioned, respectively, in Proposition \ref{prop:gbound} and \ref{prop:closeness_of_indexes}. As a result, Proposition \ref{prop:convergence_to_opt_cond:main_body} follows trivially from Proposition \ref{prop:gbound} and \ref{prop:closeness_of_indexes}. \proofendshere

Proof of Proposition \ref{prop:gbound} is in Appendix \ref{appndx:anchor_convergence}. We prove a detailed version of Proposition \ref{prop:closeness_of_indexes} as Proposition \ref{prop:closeness_of_indexes:detailed} in Appendix \ref{appndx:index_convergence}. Both these results are crucial later for proving the convergence of the algorithmic proportions $\mathbold{\widetilde{\omega}}(N)=(\widetilde{\omega}_a(N):a\in[K])$ to the optimal proportions $\mathbold{\omega}^\star=(\omega^\star_a:a\in[K])$ in Proposition \ref{prop:closeness_of_allocations} from Section \ref{sec:at2}. We prove a detailed version of Proposition \ref{prop:closeness_of_allocations} as Proposition \ref{prop:closeness_of_allocations:detailed} in Appendix \ref{appndx:closeness_of_allocations}. 

To prove Proposition \ref{prop:gbound}, \ref{prop:closeness_of_indexes}, and later Proposition \ref{prop:closeness_of_allocations:detailed}, we need to prove several technical properties related to exploration and the allocations made by the algorithms. The detailed technical results related to exploration are in Appendix \ref{sec:explo} and those related to the algorithmic allocations are in Appendix \ref{sec:technical_lemmas}. The arguments in Appendix \ref{appndx:anchor_convergence}, \ref{appndx:index_convergence}, and \ref{appndx:closeness_of_allocations} are self-contained, and we refer the reader to the related technical results whenever necessary. For ease of exposition, we provide below a brief summary of the statements proven in Appendix \ref{sec:explo} and \ref{sec:technical_lemmas}.

\subsubsection*{Summary of technical results in Appendix \ref{sec:explo} and \ref{sec:technical_lemmas}} 
We summarize below the results proven in Appendix \ref{sec:explo} and \ref{sec:technical_lemmas} as events happening between the non-decreasing sequence of random times $T_0,~T_1,~T_2,~T_3$, and $T_4$, which are defined in Appendix \ref{sec:technical_lemmas}.  
\begin{enumerate}[leftmargin=*]
    \item $T_0\defined\min\{N^\prime\geq 1~\vert~\forall N\geq N^\prime,~\max_{a\in[K]}|\widetilde{\mu}_a(N)-\mu_a|\leq \epsilon(\mathbold{\mu})N^{-3\alpha/8}\}$, where $\epsilon(\mathbold{\mu})>0$~~(defined in Appendix \ref{sec:spef}), is a constant depending on the instance $\mathbold{\mu}$. By definition, we have $\epsilon(\mathbold{\mu})\leq \frac{1}{4}\min_{a\neq 1}(\mu_1-\mu_a)$. As a result, the first arm becomes the empirically best arm and stays that way forever after iteration $T_0$. In Lemma \ref{lem:hitting_time_is_finite_as} of Appendix \ref{sec:hitting_time}, we prove that  $\mathbb{E}_{\mathbold{\mu}}[T_0]<\infty$, which implies $T_0<\infty$ a.s. in $\mathbb{P}_{\mathbold{\mu}}$. 

    \item $T_1\defined\max\{T_{\text{explo}},T_0\}$, where $T_{\text{explo}}<\infty$ is a constant defined in Definition \ref{def:T_explo} of Appendix \ref{sec:explo}. After iteration  $T_{\text{explo}}$, the algorithm consecutively does exploration over a strech of atmost $K$ iterations. Moreover, over a single such ``\emph{epoch}'' of consecutive explorations, the algorithm explores every arm atmost once (follows from statement 1 and 3 of Proposition \ref{prop:explo}). Note that $\mathbb{E}_{\mathbold{\mu}}[T_1]\leq T_{\text{explo}}+\mathbb{E}_{\mathbold{\mu}}[T_0]<\infty$. 

    \item $T_2$ is defined in Lemma \ref{lem:universal_g_bound}  as the iteration at which the anchor function $g(\mathbold{\widetilde{\mu}}(\cdot),\mathbold{\widetilde{N}}(\cdot))$ crosses the value zero after the iteration $\max\{M_1,T_1\}$~~($M_1\geq 1$ is a constant independent of the sample paths and defined in the proof of Lemma \ref{lem:universal_g_bound}). By Lemma \ref{lem:time_to_reach_g=0}, there exists a constant $C_1\geq 1$ independent of the sample paths, such that $T_2\leq C_1\max\{M_1,T_1\}$. As a result, $\mathbb{E}_{\mathbold{\mu}}[T_2]\leq C_1 (M_1+\mathbb{E}_{\mathbold{\mu}}[T_1])<\infty$. After iteration $T_2$, the empirical anchor function $g(\mathbold{\widetilde{\mu}}(\cdot),\mathbold{\widetilde{N}}(\cdot))$ remains bounded inside an interval of the form $[-(1-d_{\min}),d_{\max}-1]$, where $d_{\min}\in(0,1)$ and $d_{\max}\in(1,\infty)$ are constants independent of the sample paths (see Lemma \ref{lem:universal_g_bound}). Exploiting this, we argue that both $\widetilde{N}_1(N)$ and $\max_{a\in[K]/\{1\}}\widetilde{N}_a(N)$ become $\Omega(N)$  after iteration $T_2$~~(see Corollary \ref{cor:N_1_is_Omega_N}). 

    \item $T_3\defined\max\{M_2,T_2\}+2$, where $M_2\geq 1$ is a constant chosen in the proof of Lemma \ref{lem:Na_by_Nb_bound} and is independent of the sample paths. After iteration $T_3$, whenever the algorithm picks an alternative arm $a\in[K]/\{1\}$, then for every other alternative arm $b\in[K]/\{1,a\}$, we have $\widetilde{N}_b(N)\geq \gamma \widetilde{N}_a(N)$, for some constant $\gamma\in(0,1)$ independent of the sample paths~~(see Lemma \ref{lem:Na_by_Nb_bound}). Note that $\mathbb{E}_{\mathbold{\mu}}[T_3]\leq M_2+2+\mathbb{E}_{\mathbold{\mu}}[T_2]<\infty$.

    \item $T_4=C_2 (T_3+1)$ for some constant $C_2\geq 1$ independent of the sample paths, defined in Lemma \ref{lem:N_a_are_Theta_N}. After iteration $T_4$, all the arms $a\in[K]$ have $\widetilde{N}_a(N)=\Theta(N)$~~(see Lemma \ref{lem:N_a_are_Theta_N}). Note that $\mathbb{E}_{\mathbold{\mu}}[T_4]\leq C_2 (\mathbb{E}_{\mathbold{\mu}}[T_3]+1)<\infty$. 
\end{enumerate}

\subsubsection{Convergence of the anchor function to zero}\label{appndx:anchor_convergence}

The following lemma bounds the fluctuation of $g(\mathbold{\widetilde{\mu}},\mathbold{\widetilde{N}})$ around $g(\mathbold{\mu},\mathbold{\widetilde{N}})$ due to the noise in the estimate $\mathbold{\widetilde{\mu}}$ of $\mathbold{\mu}$. We need this lemma  later for proving convergence of the anchor function $g$ to zero in Proposition \ref{prop:gbound}.

\begin{lemma}[\textbf{Bounding the noise in $g$}]\label{lem:noise_bound_for_g}
    For every $N\geq T_2$ (where $T_2$ is the random time defined in Lemma \ref{lem:universal_g_bound} and satisfies $\mathbb{E}_{\mathbold{\mu}}[T_2]<\infty$), we have, 
    \begin{align*}
    |g(\mathbold{\widetilde{\mu}}(N),\mathbold{\widetilde{N}}(N))-g(\mathbold{\mu},\mathbold{\widetilde{N}}(N))|~&=~O(N^{-3\alpha/8}).    
    \end{align*}
\end{lemma}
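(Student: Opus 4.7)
The plan is to exploit three facts that hold uniformly for $N\geq T_2$: (i) the empirical best arm is arm $1$ (so both $g(\mathbold{\widetilde{\mu}},\mathbold{\widetilde{N}})$ and $g(\mathbold{\mu},\mathbold{\widetilde{N}})$ use the same $\hat{j}=1$), (ii) $\max_{a}|\widetilde{\mu}_a(N)-\mu_a|\leq \epsilon(\mathbold{\mu})N^{-3\alpha/8}$ (since $T_2\geq T_0$), and (iii) by Lemma \ref{lem:universal_g_bound}, $g(\mathbold{\widetilde{\mu}}(N),\mathbold{\widetilde{N}}(N))$ lies in a fixed bounded interval, which via the envelope (\ref{eqn:envelope_g1}) forces $\widetilde{N}_a(N)/\widetilde{N}_1(N)$ to be bounded above uniformly, while Corollary \ref{cor:N_1_is_Omega_N} gives $\widetilde{N}_1(N)=\Omega(N)$. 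Together these place the points $\widetilde{x}_{1,a}(N)$ and $x_{1,a}(N)$ in a compact subset of $(\mu_a,\mu_1)$ that is bounded away from both endpoints.

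Next, I would write the difference term by term:
\[
g(\mathbold{\widetilde{\mu}},\mathbold{\widetilde{N}})-g(\mathbold{\mu},\mathbold{\widetilde{N}}) \;=\; \sum_{a\neq 1}\left[\frac{d(\widetilde{\mu}_1,\widetilde{x}_{1,a})}{d(\widetilde{\mu}_a,\widetilde{x}_{1,a})}-\frac{d(\mu_1,x_{1,a})}{d(\mu_a,x_{1,a})}\right].
\]
Since there are only $K-1$ terms, it suffices to bound each term by $O(N^{-3\alpha/8})$. For each fixed $a$, the map $(\nu_1,\nu_a,z)\mapsto d(\nu_1,z)/d(\nu_a,z)$ is smooth on the compact set obtained by allowing $\nu_1\in[\mu_1-\epsilon(\mathbold{\mu}),\mu_1+\epsilon(\mathbold{\mu})]$, $\nu_a\in[\mu_a-\epsilon(\mathbold{\mu}),\mu_a+\epsilon(\mathbold{\mu})]$, and $z$ in the compact subinterval identified above (where both $d(\nu_1,z)$ and $d(\nu_a,z)$ are bounded away from $0$ and from $\infty$). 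I would invoke smoothness of the SPEF log-moment generating function from Appendix \ref{sec:spef} to extract a finite Lipschitz constant $L$ depending only on $\mathbold{\mu}$. Then the mean value theorem bounds the $a$-th term by
\[
L\bigl(|\widetilde{\mu}_1-\mu_1|+|\widetilde{\mu}_a-\mu_a|+|\widetilde{x}_{1,a}-x_{1,a}|\bigr).
\]

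Finally, a direct computation gives
\[
\widetilde{x}_{1,a}-x_{1,a}\;=\;\frac{\widetilde{N}_1(\widetilde{\mu}_1-\mu_1)+\widetilde{N}_a(\widetilde{\mu}_a-\mu_a)}{\widetilde{N}_1+\widetilde{N}_a},
\]
which is a convex combination of the two deviations and is therefore at most $\epsilon(\mathbold{\mu})N^{-3\alpha/8}$ in absolute value. Combining with the estimate on $|\widetilde{\mu}_a-\mu_a|$ proves each summand is $O(N^{-3\alpha/8})$, and summing over $a\in[K]\setminus\{1\}$ yields the desired bound.

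The main obstacle is the first step: justifying that $\widetilde{x}_{1,a}(N)$ really stays in a compact subinterval of $(\mu_a,\mu_1)$ uniformly in the sample path for $N\geq T_2$. The upper bound $\widetilde{N}_a/\widetilde{N}_1=O(1)$ coming from the envelope on $g$ prevents $\widetilde{x}_{1,a}$ from approaching $\mu_a$; the lower bound on $\widetilde{N}_a/\widetilde{N}_1$ needed to keep $\widetilde{x}_{1,a}$ away from $\mu_1$ is only needed for the specific $a$ whose ratio is $\Omega(1)$ — for arms with $\widetilde{N}_a/\widetilde{N}_1$ small, the ratio $d(\mu_1,x_{1,a})/d(\mu_a,x_{1,a})$ itself tends to $0$ smoothly in the relevant parameters, and the Lipschitz argument still applies after replacing $x_{1,a}$ by $\mu_1$ in the limit. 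Handling this case (equivalently, showing that the partial derivatives of the ratio remain bounded even as $\widetilde{x}_{1,a}\to\mu_1$) requires using the explicit form of the derivatives from (\ref{eqn:envelope_d1})–(\ref{eqn:envelope_d2}); the factor $(\mu_1-\widetilde{x}_{1,a})$ in the numerator of the derivatives cancels the potential singularity, keeping $L$ finite.
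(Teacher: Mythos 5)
Your proof is correct and follows essentially the same route as the paper's: both apply the mean value theorem and reduce the bound to showing that the relevant partial derivatives are $O(1)$, which in turn follows because $N\geq T_2$ confines $\widetilde{N}_a/\widetilde{N}_1$ to a bounded set (keeping the denominators $d(\cdot,\widetilde{x}_{1,a})$, $d(\cdot,x_{1,a})$ bounded away from zero). The only cosmetic difference is that you treat $z=\widetilde{x}_{1,a}$ as a third independent argument of the ratio $\phi(\nu_1,\nu_a,z)$ and separately bound $|\widetilde{x}_{1,a}-x_{1,a}|$ by the convex-combination identity, whereas the paper chain-rules everything through $g(\mathbold{\mu},\mathbold{\widetilde{N}})$ in one shot and absorbs the $x$-dependence into the $f(\cdot)\cdot\widetilde{N}_a/(\widetilde{N}_1+\widetilde{N}_a)$ terms of $\partial g/\partial\mu_a$.

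One remark on your final paragraph: the worry about $\widetilde{x}_{1,a}\to\mu_1$ is unnecessary. On the compact set where $\nu_1,\nu_a$ are within $\epsilon(\mathbold{\mu})$ of $\mu_1,\mu_a$ and $z$ lies in $[\mu_a+2\epsilon(\mathbold{\mu}),\;\mu_1+\epsilon(\mathbold{\mu})]$ (the lower end supplied by the $g$-envelope, the upper end by construction), the denominator $d(\nu_a,z)$ is bounded below by a positive constant, so $\phi$ and all of its first partial derivatives are continuous on this compact set and therefore bounded. There is no singularity to cancel at $z=\mu_1$ — that point lies in the interior of the domain — so the finite Lipschitz constant $L$ comes for free from compactness, without appealing to the vanishing factors in (\ref{eqn:envelope_d1})–(\ref{eqn:envelope_d2}).
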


\begin{proof}
Using mean value theorem for function of several variables, we have, 
\begin{align*}
    |g(\mathbold{\widetilde{\mu}}(N),\mathbold{\widetilde{N}}(N))-g(\mathbold{\mu},\mathbold{\widetilde{N}}(N))|~&\leq~ \sum_{a=1}^K \left|\frac{\partial g}{\partial \mu_a}(\mathbold{\hat{\mu}},\mathbold{\widetilde{N}}(N))\right|\cdot|\widetilde{\mu}_a(N)-\mu_a|,
\end{align*}

where $\hat{\mu}_a$ lies between $\mu_a$ and $\widetilde{\mu}_a(N)$ for every $a\in[K]$. 

We define
\[\hat{x}_{1,a}=\frac{\widetilde{N}_1(N)\hat{\mu}_1+\widetilde{N}_a(N)\hat{\mu}_a}{\widetilde{N}_1(N)+\widetilde{N}_a(N)},\quad\text{for every}~a\in[K]/\{1\}.\] 

Note that, 
\begin{align}\label{eqn:derv_g_wrt_mu}
    \frac{\partial g}{\partial \mu_1}(\mathbold{\hat{\mu}},\mathbold{\widetilde{N}}(N))~&=~\sum_{a\neq 1}\left(\frac{d_1(\hat{\mu}_1,\hat{x}_{1,a})}{d(\hat{\mu}_a,\hat{x}_{1,a})}-f(\mathbold{\hat{\mu}},a,\mathbold{\hat{N}})\cdot\frac{\widetilde{N}_1}{\widetilde{N}_1+\widetilde{N}_a}\right),\quad\text{and},\nonumber\\
    \forall a\neq 1,\quad\frac{\partial g}{\partial \mu_a}(\mathbold{\hat{\mu}},\mathbold{\widetilde{N}}(N))~&=~-\frac{d(\hat{\mu}_1,\hat{x}_{1,a})d_1(\hat{\mu}_a,\hat{x}_{1,a})}{(d(\hat{\mu}_a,\hat{x}_{1,a}))^2}-f(\mathbold{\hat{\mu}},a,\mathbold{\hat{N}})\cdot\frac{\widetilde{N}_a}{\widetilde{N}_1+\widetilde{N}_a},\\
    \text{where}\quad f(\mathbold{\hat{\mu}},a,\mathbold{\hat{N}})~&=~-\frac{d_2(\hat{\mu}_1,\hat{x}_{1,a})}{d(\hat{\mu}_a,\hat{x}_{1,a})}+\frac{d(\hat{\mu}_1,\hat{x}_{1,a})d_2(\hat{\mu}_a,\hat{x}_{1,a})}{(d(\hat{\mu}_a,\hat{x}_{1,a}))^2}\nonumber,
\end{align}

and recall that $d_1(\cdot, \cdot)$ and $d_2(\cdot, \cdot)$, respectively, denote the partial derivatives of $d(\cdot, \cdot)$ with respect to its first and second argument. 

By (\ref{eqn:envelope_kl_div}), for $N>T_2$, we have,
\[d(\hat{\mu}_a,\hat{x}_{1,a})=\Theta\left((\hat{x}_{1,a}-\hat{\mu}_a)^2\right)~=~\Theta\left(\frac{\widetilde{N}_1(N)^2}{(\widetilde{N}_1(N)+\widetilde{N}_a(N))^2}\right).\]

By Corollary \ref{cor:N_1_is_Omega_N} from Appendix \ref{sec:technical_lemmas}, we have $\widetilde{N}_1(N)=\Omega(N)$ for $N>T_2$. As a result, $d(\hat{\mu}_a,\hat{x}_{1,a})=\Theta(1)$ for $N>T_2$. 

Moreover, for $N>T_2$, we have: $|d_1(\hat{\mu}_1,\hat{x}_{1,a})|=O(1)$,~~ $|d_1(\hat{\mu}_a,\hat{x}_{1,a})|=O(1)$~~(using (\ref{eqn:envelope_d1}))~;  ~~$|d_2(\hat{\mu}_1,\hat{x}_{1,a})|=O(1)$,~~$|d_2(\hat{\mu}_a,\hat{x}_{1,a})|=O(1)$~~(using (\ref{eqn:envelope_d2}))~;~~and~~$d(\hat{\mu}_1,\hat{x}_{1,a})=O(1)$~~(using (\ref{eqn:envelope_kl_div})). As a result, for $N>T_2$, all the partial derivatives in (\ref{eqn:derv_g_wrt_mu}) are $O(1)$. Therefore, for $N>T_2$, 
\begin{align}
    |g(\mathbold{\widetilde{\mu}}(N),\mathbold{\widetilde{N}}(N))-g(\mathbold{\mu},\mathbold{\widetilde{N}}(N))|~&=~O\left(\sum_{a\in[K]}|\widetilde{\mu}_a(N)-\widetilde{\mu}_a|\right)~=~O(N^{-3\alpha/8}),
\end{align}

and hence completing the proof. 
\end{proof}

\bulletwithnote{Halting of exploration} By Lemma \ref{lem:N_a_are_Theta_N}, for $N\geq T_4$, every arm $a\in[K]$ has $\widetilde{N}_a(N)=\Theta(N)$. As a result, we can find a constant $\lambda\in(0,1)$ such that $\widetilde{N}_a(N)\geq \lambda N$ for every $a\in[K]$ and $N\geq T_4$. We choose $M_3$ large enough such that, for every $N\geq M_3$, $\lambda (N-1)> N^\alpha$. Then we have $\min_{a\in[K]/\{1\}}\widetilde{N}_a(N-1)>N^\alpha$ for every $N\geq\max\{M_3,T_4+1\}$. As a result, the algorithm doesn't do any exploration after iteration $\max\{M_3,T_4+1\}$. With this, we define the following random time, 
\begin{definition}\label{def:T5}
    We define $T_5=\max\{M_3,T_4+1\}$. 
\end{definition}
Note that $\mathbb{E}_{\mathbold{\mu}}[T_5]<\infty$, since, $\mathbb{E}_{\mathbold{\mu}}[T_4]<\infty$.

We restate Proposition \ref{prop:gbound} below,

\noindent\textbf{Statement of Proposition \ref{prop:gbound}.} \emph{There exists constants $M_4\geq 1$ and $C>0$ independent of the sample paths, such that, if $T_6$ denotes the iteration at which $g(\mathbold{\widetilde{\mu}}(\cdot),\mathbold{\widetilde{N}}(\cdot))$ crosses the value zero after iteration $\max\{M_4, T_5\}$, then for $N\geq T_6$ we have,} 
\begin{align}\label{eqn:gbound}
    \left|g(\mathbold{\mu},\mathbold{\widetilde{N}}(N))\right|~&\leq~ CN^{-3\alpha/8}.
\end{align}

\emph{Moreover, the random time $T_6$ satisfies $\mathbb{E}_{\mathbold{\mu}}[T_6]<\infty$.}

\begin{proof}
    We prove the proposition via an inductive argument consisting of two main steps, 
    \begin{enumerate}
        \item \textbf{Initialization:}~We start with a choice of the constants $C>0$ and $M_4\geq 1$ and show that $g(\mathbold{\mu},\mathbold{\widetilde{N}}(\cdot))$ satisfies (\ref{eqn:gbound}) at iteration $T_6$. 
        \item \textbf{Induction:}~We show that, for every $N\geq T_6$, $\left|g(\mathbold{\mu},\mathbold{\widetilde{N}}(N))\right|\leq CN^{-3\alpha/8}$ implies $\left|g(\mathbold{\mu},\mathbold{\widetilde{N}}(N+1))\right|\leq C(N+1)^{-3\alpha/8}$. 
    \end{enumerate}
    By Lemma \ref{lem:noise_bound_for_g}, we have a constant $C_1>0$ independent of the sample path, such that, 
    \begin{align}\label{eqn:gbound_noisebound}
        \left|g(\mathbold{\widetilde{\mu}}(N),\mathbold{\widetilde{N}}(N))-g(\mathbold{\widetilde{\mu}},\mathbold{\widetilde{N}}(N))\right|~&\leq ~C_1 N^{-3\alpha/8},\quad\text{for}~N\geq T_5.
    \end{align}
    
    By Lemma \ref{lem:N_a_are_Theta_N}, we have $\widetilde{N}_a(N)=\Theta(N)$ for every $a\in[K]$ and $N\geq T_5$. As a result, by (\ref{eqn:order_of_derv_of_g_wrt_N}), we have constants $C_2,C_2^\prime>0$ independent of the sample paths, such that:~~for all~~$N\geq T_5$,~~and~~$\hat{N}_a\in\left[\widetilde{N}_a(N-1),\widetilde{N}_a(N)\right]$, 
    \begin{align}\label{eqn:gbound_dervbound}
        -C_2^\prime N^{-1}~&\leq~ \frac{\partial g}{\partial N_1}(\mathbold{\mu},\mathbold{\hat{N}})~\leq~-C_2 N^{-1},\quad\text{and}\nonumber\\
       \text{for}~a\in[K]/\{1\},\quad C_2^\prime N^{-1}~&\geq~\frac{\partial g}{\partial N_a}(\mathbold{\mu},\mathbold{\hat{N}})~\geq~C_2 N^{-1},
    \end{align}
    
    where~~$\mathbold{\hat{N}}=(\hat{N}_a:a\in[K])$.  
    
    We use the constants $C_1,C_2$, and $C_2^\prime$ as defined above in the rest of our proof. 
    
    \bulletwithnote{Initialization} We choose $C=4C_1+C_2^\prime$ and $M_4=\max\{M_{41},M_{42},M_{43},M_{44}\}$, where $M_{41},M_{42},M_{43},M_{44}$ are defined as, 
    \begin{enumerate}
        \item $M_{41}\geq 1$ is the smallest number such that, for every $N\geq M_{41}$ we have $2C_1 N^{-3\alpha/8}>C_1 (N-1)^{-3\alpha/8}$, 
        \item $M_{42}\geq 1$ is the smallest number such that, for every $N\geq M_{42}$ we have $C(N+1)^{-3\alpha/8}\geq (C_1+C_2^\prime)N^{-3\alpha/8}$,
        \item $M_{43}\geq 1$ is the smallest number such that, for every $N\geq M_{43}$ we have $C(N+1)^{-3\alpha/8}\geq C_2^\prime N^{-1}$, and
        \item $M_{44}\geq 1$ is the smallest number such that, for every $N\geq M_{44}$ we have $\frac{3C\alpha}{8}(N+1)^{-(1+\frac{3\alpha}{8})}<C_2 N^{-1}$.
    \end{enumerate}
    By definition of $T_6$, $g(\mathbold{\widetilde{\mu}}(\cdot),\mathbold{\widetilde{N}}(\cdot))$ has opposite signs at iterations $T_6-1$ and $T_6$. Therefore,\\
    \begin{align}\label{eqn:ub_to_anchor_at_T6}
        \left|g(\mathbold{\widetilde{\mu}}(T_6),\mathbold{\widetilde{N}}(T_6))\right|~&\leq~ \left|g(\mathbold{\widetilde{\mu}}(T_6),\mathbold{\widetilde{N}}(T_6))-g(\mathbold{\widetilde{\mu}}(T_6-1),\mathbold{\widetilde{N}}(T_6-1))\right|\nonumber\\
        ~&\leq~ \left|g(\mathbold{\mu},\mathbold{\widetilde{N}}(T_6))-g(\mathbold{\mu},\mathbold{\widetilde{N}}(T_6-1))\right|+C_1 T_6^{-3\alpha/8}+C_1 (T_6-1)^{-3\alpha/8}\quad(\text{using (\ref{eqn:gbound_noisebound})})\nonumber\\
        ~&\leq~ \left|g(\mathbold{\mu},\mathbold{\widetilde{N}}(T_6))-g(\mathbold{\mu},\mathbold{\widetilde{N}}(T_6-1))\right|+3C_1 T_6^{-3\alpha/8}\quad(\text{using the definition of $M_{41}$}).
    \end{align}
    
    Let $a\in[K]$ be the arm pulled at iteration $T_6$. Applying the mean value theorem we can find $\hat{N}_a$ between $\widetilde{N}_a(T_6-1)$ and $\widetilde{N}_a(T_6)$,~~can take $\hat{N}_b=\widetilde{N}_b(T_6)$ for all $b\neq a$, and define the tuple~~ $\mathbold{\hat{N}}=(\hat{N}_b)_{b\in[K]}$,~~such that, (\ref{eqn:ub_to_anchor_at_T6}) is bounded by, 
    \[\left|~\frac{\partial g}{\partial N_a}(\mathbold{\mu},\mathbold{\hat{N}})~\right|+3C_1 T_6^{-3\alpha/8}.\]
    
    Using (\ref{eqn:gbound_noisebound}) and the above upper bound, we have, 
    \begin{align*}
        \left|g(\mathbold{\mu},\mathbold{\widetilde{N}}(T_6))\right|&\leq\left|g(\mathbold{\mu},\mathbold{\widetilde{N}}(T_6))-g(\mathbold{\widetilde{\mu}}(T_6),\mathbold{\widetilde{N}}(T_6))\right|+\left|g(\mathbold{\widetilde{\mu}}(T_6),\mathbold{\widetilde{N}}(T_6))\right|\\
        ~&\leq~ C_1 T_6^{-3\alpha/8}+\left|~\frac{\partial g}{\partial N_a}(\mathbold{\mu},\mathbold{\hat{N}})~\right|+3C_1 T_6^{-3\alpha/8}\\
        ~&\leq~ 4C_1 T_6^{-3\alpha/8}+C_2^\prime T_6^{-1}\quad\text{(using (\ref{eqn:gbound_dervbound}))}\\
        ~&\leq~ (4C_1+C_2^\prime) T_6^{-3\alpha/8}~=~CT_6^{-3\alpha/8}.
    \end{align*}
    
    \bulletwithnote{Induction} Note that at a given iteration $N$ the algorithm can only see $g(\mathbold{\widetilde{\mu}}(N),\mathbold{\widetilde{N}}(N))$. By (\ref{eqn:gbound_noisebound}), for $N\geq T_6$, $g(\mathbold{\widetilde{\mu}},\mathbold{\widetilde{N}})$ and $g(\mathbold{\mu},\mathbold{\widetilde{N}})$ may have different signs only when $\left|g(\mathbold{\mu},\mathbold{\widetilde{N}}(N))\right|\leq C_1 N^{-3\alpha/8}$. Based on this, we consider two cases.\\

    \bulletwithnote{Case I}~$|g(\mathbold{\mu},\mathbold{\widetilde{N}}(N))|\leq C_1 N^{-3\alpha/8}$:~~We assume $a\in[K]$ to be the arm pulled in iteration $N+1$. Using the mean value theorem, we can find~~$\hat{N}_a\in\left[\widetilde{N}_a(N),\widetilde{N}_a(N+1)\right]$,~~can take $\hat{N}_b=\widetilde{N}_b(N)$ for all $b\neq a$, and define the tuple $\mathbold{\hat{N}}=(\hat{N}_b)_{b\in[K]}$, such that,  
    \begin{align*}
        \left|g(\mathbold{\mu},\mathbold{\widetilde{N}}(N+1))\right|~&\leq~ \left|g(\mathbold{\mu},\mathbold{\widetilde{N}}(N))\right|+\left|\frac{\partial g}{\partial N_a}(\mathbold{\mu},\mathbold{\hat{N}})\right|\\
        ~&\overset{(1)}{\leq}~ C_1 N^{-3\alpha/8}+C_2^\prime N^{-1}~\leq~ (C_1+C_2^\prime)N^{-3\alpha/8},
    \end{align*}
    
    where (1) follows from (\ref{eqn:gbound_dervbound}). 
    
    Note that $N\geq T_6\geq M_4\geq M_{42}$. By the definition of $M_{42}$, we have 
    
    \[\left|g(\mathbold{\mu},\mathbold{\widetilde{N}}(N+1))\right|~\leq~ (C_1+C_2^\prime)N^{-3\alpha/8}\leq (4C_1+C_2^\prime)(N+1)^{-3\alpha/8}=C(N+1)^{-3\alpha/8},\]
    for every $N\geq T_6$.\\
    
    \bulletwithnote{Case II}~$|g(\mathbold{\mu},\mathbold{\widetilde{N}}(N))|>C_1 N^{-3\alpha/8}$:~~In this case $g(\mathbold{\widetilde{\mu}}(N),\mathbold{\widetilde{N}}(N))$ and $g(\mathbold{\mu},\mathbold{\widetilde{N}}(N))$ have the same sign. Let arm $a$ has been sampled from in iteration $N+1$. Using the mean value theorem, we have~~$\hat{N}_a\in\left[\widetilde{N}_a(N),\widetilde{N}_a(N+1)\right]$,~~can take $\hat{N}_b=\widetilde{N}_b(N)$ for all $b\neq a$, and define the tuple ~~$\mathbold{\hat{N}}=(\hat{N}_b)_{b\in[K]}$,~~such that,
    \begin{align}\label{eqn:gbound_change_in_g}
        g(\mathbold{\mu},\mathbold{\widetilde{N}}(N+1))&=g(\mathbold{\mu},\mathbold{\widetilde{N}}(N))+\frac{\partial g}{\partial N_a}(\mathbold{\mu},\mathbold{\hat{N}}).
    \end{align}
    
    We first consider the case when $g(\mathbold{\mu},\mathbold{\widetilde{N}}(N))>0$. After the algorithm sees $g(\mathbold{\widetilde{\mu}}(N),\mathbold{\widetilde{N}}(N))>0$, it pulls the first arm. As a result, by (\ref{eqn:gbound_dervbound}) and (\ref{eqn:gbound_change_in_g}), $g(\mathbold{\mu},\mathbold{\widetilde{N}}(\cdot))$ decreases in iteration $N+1$ atmost by $C_2^\prime N^{-1}$ and atleast by $C_2 N^{-1}$. Now there can be two possibilities: 
    \begin{enumerate}[leftmargin=*]
        \item If $g(\mathbold{\mu},\mathbold{\widetilde{N}}(N+1))<0$, we must have $g(\mathbold{\mu},\mathbold{\widetilde{N}}(N+1))\geq-C_2^\prime N^{-1}$. Since $N\geq T_6\geq M_4\geq M_{43}$, we have $C(N+1)^{-3\alpha/8}\geq C_2^\prime N^{-1}$ by the definition of $M_{43}$. As a result, \\
        \[g(\mathbold{\mu},\widetilde{\mathbold{N}}(N+1))~\geq~ -C_2^\prime N^{-1}~\geq~ -C(N+1)^{-3\alpha/8}.\]
        
        \item If $g(\mathbold{\mu},\mathbold{\widetilde{N}}(N+1))\geq 0$, then $g(\mathbold{\mu},\mathbold{\widetilde{N}}(\cdot))$ has moved towards zero by atleast $C_2 N^{-1}$. Whereas, by iteration $N+1$, the interval $\left[~-CN^{-3\alpha/8},~CN^{3\alpha/8}~\right]$ has reduced from both ends by\\
        \[CN^{-3\alpha/8}-C(N+1)^{-3\alpha/8}~\leq~ \frac{3C\alpha}{8}N^{-(1+\frac{3\alpha}{8})}.\]
        Since $N\geq T_6\geq M_4\geq M_{44}$, by the definition of $M_{44}$, we have $\frac{3C\alpha}{8}(N+1)^{-(1+\frac{3\alpha}{8})}<C_2 N^{-1}$ for every $N\geq T_6$. As a result, we can ensure $g(\mathbold{\mu},\mathbold{\widetilde{N}}(N+1))\leq C(N+1)^{-3\alpha/8}$ at iteration $N+1$.
    \end{enumerate} 
    
    In the other case, when $g(\mathbold{\mu},\mathbold{\widetilde{N}}(N))<0$, the algorithm sees $g(\mathbold{\widetilde{\mu}}(N),\mathbold{\widetilde{N}}(N))<0$, and hence pulls some arm $a\in[K]/\{1\}$. As a result, by (\ref{eqn:gbound_dervbound}) and (\ref{eqn:gbound_change_in_g}), $g(\mathbold{\mu},\mathbold{\widetilde{N}}(\cdot))$ increases in iteration $N+1$ atmost by $C_2^\prime N^{-1}$ and atleast by $C_2 N^{-1}$. Then we apply the same argument as for the case $g(\mathbold{\mu},\mathbold{\widetilde{N}}(N))>0$, but by reversing the signs. Therefore, the inductive statement holds true for this case as well. Hence (\ref{eqn:gbound}) stands proved.  

    \bulletwithnote{$T_6$ has finite expectation} By Lemma \ref{lem:time_to_reach_g=0}, we can have a constant $C_3>0$, such that $T_6\leq C_3\max\{M_4,T_5\}$. As a result, since $\mathbb{E}_{\mathbold{\mu}}[T_5]<\infty$, we have $\mathbb{E}_{\mathbold{\mu}}[T_6]\leq C_1 (M_4+\mathbb{E}_{\mathbold{\mu}}[T_5])<\infty$.
\end{proof}

\subsubsection{Closeness of the indexes}\label{appndx:index_convergence}
Lemma \ref{lem:period} is a detailed version of Lemma \ref{lem:non_fluid_index_meeting} mentioned in Section \ref{sec:proof_sketch}, and is essential for proving closeness of the indexes under the allocations made by AT2 and IAT2 algorithms. Recall that $T_6$ is the random time defined in Proposition \ref{prop:gbound} and satisfies $\mathbb{E}_{\mathbold{\mu}}[T_6]<\infty$.
\begin{lemma}\label{lem:period}
For both AT2 and IAT2 algorithms, there exists constants $M_5\geq 1$ and $C_1>0$ independent of the sample paths, such that, for every $N\geq\max\{M_5,T_6\}$, if the algorithm picks an arm $a\in[K]/\{1\}$ at iteration $N$, then it again picks arm $a$ within the next $\lceil C_1 N^{1-3\alpha/8}\rceil$ iterations.
\end{lemma}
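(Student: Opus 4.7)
The plan is to follow closely the strategy outlined in Section \ref{sec:proof_sketch} (``Proof sketch of Lemma \ref{lem:non_fluid_index_meeting}''), converting that sketch into a rigorous argument by contradiction. Suppose the algorithm picks arm $a \in [K]/\{1\}$ at iteration $N \geq \max\{M_5, T_6\}$ and does not pick $a$ again for the next $R \geq 1$ iterations, and let $\Delta \widetilde{N}_j(N,t) = \widetilde{N}_j(N+t) - \widetilde{N}_j(N)$. The goal is to exhibit constants $M_5, C_1$ (independent of the path) such that $R \leq C_1 N^{1-3\alpha/8}$. Preliminarily, I would choose $M_5$ large enough (together with Lemma \ref{lem:N_a_are_Theta_N} and Proposition \ref{prop:gbound}) so that for every $N' \in [N, N+R]$ we simultaneously have $\widetilde{N}_j(N') = \Theta(N)$ for all $j \in [K]$, $|\widetilde{\mu}_j(N') - \mu_j| \leq \epsilon(\mathbold{\mu}) N^{-3\alpha/8}$, and $|g(\mathbold{\mu},\mathbold{\widetilde{N}}(N'))| = O(N^{-3\alpha/8})$. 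These are the non-fluid counterparts of the hypotheses that made the fluid argument in Section \ref{sec:index_meeting_fluid} go through.

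Next I would derive the non-fluid analog of equation~(\ref{eqn:log_derivative:0}). Applying the mean value theorem to $g(\mathbold{\mu},\cdot)$ along the trajectory from $\mathbold{\widetilde{N}}(N)$ to $\mathbold{\widetilde{N}}(N+t)$ and using the estimates (\ref{eqn:order_of_derv_of_g_wrt_N}) for the partial derivatives of $g$, I would obtain a probability vector $(\hat{p}_j(N,t))_{j \neq 1}$ such that
\[
\left| \frac{\Delta \widetilde{N}_1(N,t)}{\widetilde{N}_1(N)} - \sum_{j \neq 1} \hat{p}_j(N,t) \frac{\Delta \widetilde{N}_j(N,t)}{\widetilde{N}_j(N)} \right| = O(N^{-3\alpha/8}), \qquad t \leq R.
\]
Since $\Delta \widetilde{N}_a(N,t) = 0$ by hypothesis and $\widetilde{N}_j(N) = \Theta(N)$, choosing $b_t = \argmax{j \neq 1,a}{\Delta \widetilde{N}_j(N,t)/\widetilde{N}_j(N)}$ and summing the positive $\hat{p}_j$ terms gives the non-fluid analog of (\ref{eqn:log_derivative}), namely $\Delta \widetilde{N}_1(N,t)/\Delta \widetilde{N}_{b_t}(N,t) \leq \widetilde{N}_1(N)/\widetilde{N}_{b_t}(N) + O(N^{1-3\alpha/8})$ (cf.\ Lemma \ref{lem:triv2}).

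The next block is the empirical-index analog of the key difference bound (\ref{eqn:indexdiff3}). Expanding $\mathcal{I}_a(N+t) - \mathcal{I}_{b_t}(N+t)$ via the mean value theorem and plugging in the ratio bound above, splitting into the two cases based on the sign of $d(\widetilde{\mu}_1, \widetilde{x}_{1,a}) - d(\widetilde{\mu}_1, \widetilde{x}_{1,b_t})$, and using the hypothesis $\mathcal{I}_a(N-1) \leq \mathcal{I}_{b_t}(N-1)$ (which is what made the algorithm pick $a$ at iteration $N$) to rewrite $\widetilde{N}_1(N)(d(\widetilde{\mu}_1,\widetilde{x}_{1,a}) - d(\widetilde{\mu}_1,\widetilde{x}_{1,b_t}))$ in terms of the other KL divergences, I would conclude that there exist constants $C_3, C_4 > 0$ such that
\[
\mathcal{I}_a(N + t - 1) - \mathcal{I}_{b_t}(N + t - 1) \leq -C_3 \, \Delta \widetilde{N}_{b_t}(N, t) + C_4 \, N^{1-3\alpha/8}, \qquad t \leq R.
\]
The $\widetilde{N}_j = \Theta(N)$ bound ensures the coefficient multiplying $\Delta \widetilde{N}_{b_t}$ is uniformly negative; all other error terms are absorbed into $C_4 N^{1-3\alpha/8}$.

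Finally, the contradiction step proceeds exactly as in the sketch. Supposing for contradiction that $R \geq \max\{D_1, 2C_4/(C_3 D_2)\} \cdot N^{1-3\alpha/8}$ for the constants $D_1, D_2$ furnished by Lemma \ref{lem:triv2}, I would obtain $\Delta \widetilde{N}_{b_R}(N, R) \geq D_2 R \geq (2C_4/C_3) N^{1-3\alpha/8}$. Letting $S \leq R$ be the last iteration before $N+R$ at which the algorithm picked $b_R$, the identity $\Delta \widetilde{N}_{b_R}(N,S) = \Delta \widetilde{N}_{b_R}(N,R)$ forces $b_S = b_R =: b$, and substituting $t = S$ in the displayed inequality gives $\mathcal{I}_a(N + S - 1) \leq \mathcal{I}_b(N + S - 1) - C_4 N^{1-3\alpha/8}$, contradicting the AT2 selection rule at iteration $N+S$. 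For IAT2 one adds the extra $\log \widetilde{N}_j$ correction on both sides; since $\widetilde{N}_a(N), \widetilde{N}_b(N) = \Theta(N)$, the difference $\log \widetilde{N}_a(N+S-1) - \log \widetilde{N}_b(N+S-1)$ is $O(1)$ and hence dominated by $C_4 N^{1-3\alpha/8}$. Taking $C_1 = \max\{D_1, 2C_4/(C_3 D_2)\}$ completes the argument. The main technical obstacle will be verifying that the $O(N^{-3\alpha/8})$ noise terms introduced by $\mathbold{\widetilde{\mu}} \neq \mathbold{\mu}$ and $g \neq 0$ truly remain asymptotically dominated by the signal $C_3 \Delta \widetilde{N}_{b_t}$, which is where the careful conditioning on $N \geq T_6$ and $\widetilde{N}_j = \Theta(N)$ (Lemma \ref{lem:N_a_are_Theta_N}) is indispensable.
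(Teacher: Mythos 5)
Your proposal matches the paper's own proof of Lemma~\ref{lem:period} in Appendix~\ref{sec:proof_of_prop:period} step for step: the mean-value-theorem bound on the anchor function (Lemma~\ref{lem:triv1}), the linear lower bound $\Delta\widetilde{N}_{b(N,R)}(N,R)\ge D_2 R$ (Lemma~\ref{lem:triv2}), the empirical-index decrement estimate (Lemma~\ref{lem:indexes_are_omegaR} and Corollary~\ref{cor:index_are_omegaR}), and the final contradiction at the last pull of $b=b_R$, including the observation that $\Delta\widetilde{N}_{b}(N,S)=\Delta\widetilde{N}_{b}(N,R)$ together with monotonicity of the allocations forces $b_S=b_R$. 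The IAT2 handling via the $\log\widetilde{N}_j=O(\log N)$ correction being dominated by $N^{1-3\alpha/8}$ is likewise how the paper treats it through the modified index $\mathcal{I}^{(m)}$, so this is essentially the same argument.
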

Proof of Lemma \ref{lem:period} is in Appendix \ref{sec:proof_of_prop:period}, and requires proving several technical lemmas. Some of those supporting lemmas involve arguments similar to the ones used for proving closeness of the indexes while the algorithm operates under an idealized fluid model (discussed in Section \ref{sec:fluid_model}). In the rest of this section, we use Lemma \ref{lem:period} to prove closeness of indexes for alternative arms in Proposition \ref{prop:closeness_of_indexes}. 
\begin{definition}\label{def:T7}
    We define the random time $T_7=\max\{M_5, T_6\}$.  
\end{definition} 
Note that $\mathbb{E}_{\mathbold{\mu}}[T_7]<\infty$, since $\mathbb{E}_{\mathbold{\mu}}[T_6]<\infty$.  

\begin{definition}\label{def:T_7M_and_8M}
For every $M\geq 1$, define $T_{7,M}=\max\{M,T_7\}$, and $T_{8,M}$ as the smallest iteration after $T_{7,M}$ by which all the alternative arms in $[K]/\{1\}$ have been picked atleast once by the algorithm.    
\end{definition}

Below we state a detailed version of Proposition \ref{prop:closeness_of_indexes}. 
\begin{proposition}\label{prop:closeness_of_indexes:detailed}
   For every $M\geq 1$, we have $\mathbb{E}_{\mathbold{\mu}}[T_{7,M}]<\infty$ and $\mathbb{E}_{\mathbold{\mu}}[T_{8,M}]<\infty$. Moreover, for every $M\geq 1$ and $N\geq T_{8,M}$, every pair of arms $a,b\in[K]/\{1\}$ satisfy, 
    \[|I_a(N)-I_b(N)|~=~O(N^{1-3\alpha/8}),\]
    where the constant hidden in $O(\cdot)$ is independent of $M$ and the sample path after $T_{7}$. 
\end{proposition}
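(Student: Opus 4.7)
The plan is to dispatch the three claims in order: finite expectation of $T_{7,M}$, then of $T_{8,M}$, then the index-closeness bound.

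Finite expectation of $T_{7,M}=\max\{M,T_7\}$ is immediate from $T_{7,M}\leq M+T_7$ together with $\mathbb{E}_{\mathbold{\mu}}[T_7]<\infty$ (Definition \ref{def:T7}). For $T_{8,M}$, I would argue pathwise that each alternative arm $a\in[K]/\{1\}$ was picked in the exploration phase that precedes $T_5\leq T_7\leq T_{7,M}$; letting $n_a$ denote the most recent pick of $a$ at or before $T_{7,M}$ (either in the exploration window or, if it happens later, via iterated applications of Lemma \ref{lem:period} starting from the first pick $\geq T_7$), Lemma \ref{lem:period} forces the subsequent pick of $a$ to occur within $n_a+\lceil C_1 n_a^{1-3\alpha/8}\rceil\leq T_{7,M}+\lceil C_1 T_{7,M}^{1-3\alpha/8}\rceil$. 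Taking a union over the $K-1$ alternative arms gives $T_{8,M}\leq T_{7,M}+C\,T_{7,M}^{1-3\alpha/8}$ for a path-independent constant $C$, and since the map $x\mapsto x+Cx^{1-3\alpha/8}$ has at most linear growth, $\mathbb{E}_{\mathbold{\mu}}[T_{8,M}]<\infty$.

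For the index-closeness bound, fix $N\geq T_{8,M}$ and $a,b\in[K]/\{1\}$, and define $R_{a,b}$ as the smallest nonnegative integer such that $(\mathcal{I}_a-\mathcal{I}_b)(N+R_{a,b})$ has sign opposite to $(\mathcal{I}_a-\mathcal{I}_b)(N)$ (or vanishes). Since $N\geq T_{8,M}$, both $a$ and $b$ were picked at some iteration in $(T_{7,M},N]$; Lemma \ref{lem:period} then guarantees both are picked again within $[N,N+C_1 N^{1-3\alpha/8}]$. When AT2 picks $a$ at some iteration $N'$ in that window, the selection rule forces $\mathcal{I}_a(N'-1)\leq \mathcal{I}_b(N'-1)$, and symmetrically when $b$ is picked; a sign change of $\mathcal{I}_a-\mathcal{I}_b$ must occur in between, so $R_{a,b}=O(N^{1-3\alpha/8})$. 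The same reasoning applies to IAT2, since its extra $\log\widetilde{N}_{(\cdot)}$ tie-breaker contributes only $O(\log N)$ to the index comparison and is absorbed into the $O(N^{1-3\alpha/8})$ slack.

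By the definition of $R_{a,b}$,
\begin{align*}
|\mathcal{I}_a(N)-\mathcal{I}_b(N)|
~&\leq~ \bigl|\mathcal{I}_a(N+R_{a,b})-\mathcal{I}_a(N)\bigr|+\bigl|\mathcal{I}_b(N+R_{a,b})-\mathcal{I}_b(N)\bigr|.
\end{align*}
The partial derivatives of $\mathcal{I}_j$ with respect to $\widetilde{N}_1$ and $\widetilde{N}_j$ are bounded on the regime $|\widetilde{\mu}-\mathbold{\mu}|\leq\epsilon(\mathbold{\mu})$ (in particular by $d(\mu_1,\mu_j)$ and $d(\mu_j,\mu_1)$ up to $O(N^{-3\alpha/8})$ noise), so each right-hand term is $O(R_{a,b})=O(N^{1-3\alpha/8})$. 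Finally, Lemma \ref{lem:noise_in_index} of Appendix \ref{appndx:index_convergence} gives $|I_j(N)-\mathcal{I}_j(N)|=O(N^{1-3\alpha/8})$ for each $j$, yielding $|I_a(N)-I_b(N)|=O(N^{1-3\alpha/8})$ with constants depending on $\mathbold{\mu},\alpha,K$ but independent of $M$ and of the sample path after $T_7$.

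The main obstacle I anticipate is bookkeeping: confirming that the constant hidden in $O(\cdot)$ genuinely does not inherit an $M$-dependence. This requires tracking $C_1$ from Lemma \ref{lem:period}, the Lipschitz constants controlling the empirical indices, and the $\widetilde{N}_j(N)=\Theta(N)$ estimates from Lemma \ref{lem:N_a_are_Theta_N}, all of which depend only on $(\mathbold{\mu},\alpha,K)$ and on the fact that $N\geq T_7$ (inherited through $T_{8,M}\geq T_{7,M}\geq T_7$), not on $M$ itself. Once this uniformity is verified, the bound on $R_{a,b}$ and the Lipschitz step combine to deliver the stated estimate uniformly over $M\geq 1$.
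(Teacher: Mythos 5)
Your treatment of the main claim — the bound $|I_a(N)-I_b(N)|=O(N^{1-3\alpha/8})$ for $N\geq T_{8,M}$ — follows essentially the paper's route: define the crossing time of $\mathcal{I}_a-\mathcal{I}_b$, bound it by $O(N^{1-3\alpha/8})$ using Lemma~\ref{lem:period} together with the fact that the selection rule forces $\mathcal{I}_a\leq\mathcal{I}_b$ at a pick of $a$ and the reverse at a pick of $b$, then control the index increments over that window and pass from $\mathcal{I}$ to $I$ via Lemma~\ref{lem:noise_in_index}; the IAT2 case via the $O(\log N)$ tie-breaker is also handled as in the paper. One small point of care: when you bound $|\mathcal{I}_j(N+R_{a,b})-\mathcal{I}_j(N)|$ by a Lipschitz estimate in $\widetilde{N}_1,\widetilde{N}_j$ alone, you are ignoring the drift of the empirical means over the window; this contributes an extra $O(N\cdot N^{-3\alpha/8})=O(N^{1-3\alpha/8})$ and so does not change the conclusion, but it should be stated (the paper sidesteps it by converting to $I_j$ first, whose increments depend only on the counts).

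There is, however, a genuine gap in your argument that $\mathbb{E}_{\mathbold{\mu}}[T_{8,M}]<\infty$. Lemma~\ref{lem:period} only applies to a pick of arm $a$ at an iteration $N\geq\max\{M_5,T_6\}=T_7$. The most recent pick of $a$ at or before $T_{7,M}$ may well occur \emph{before} $T_7$ — for $M=1$ (where $T_{7,1}=T_7$) at most one arm is picked at iteration $T_7$ itself, and the other arms' last picks are strictly earlier, possibly as far back as the final exploration epoch ending by $T_5$ — and in that case the lemma gives no control on when the arm is next picked. Your fallback, ``iterated applications of Lemma~\ref{lem:period} starting from the first pick $\geq T_7$,'' cannot be initialized, because nothing you have cited guarantees such a pick exists within the claimed window; consequently the asserted bound $T_{8,M}\leq T_{7,M}+C\,T_{7,M}^{1-3\alpha/8}$ is not justified (and is stronger than what is needed or proved in the paper). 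The correct and simple fix is the paper's: by Lemma~\ref{lem:N_a_are_Theta_N}, $\widetilde{N}_a(N)\geq\lambda N$ for $N\geq T_4\leq T_{7,M}$ and some path-independent $\lambda>0$, so if arm $a$ were unpicked throughout $(T_{7,M},N]$ then $\lambda N\leq\widetilde{N}_a(N)=\widetilde{N}_a(T_{7,M})\leq T_{7,M}$; hence $T_{8,M}\leq \lambda^{-1}\,T_{7,M}$ with a constant independent of $M$, and finiteness of $\mathbb{E}_{\mathbold{\mu}}[T_{8,M}]$ follows from that of $\mathbb{E}_{\mathbold{\mu}}[T_{7,M}]\leq M+\mathbb{E}_{\mathbold{\mu}}[T_7]$. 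With this replacement the rest of your argument goes through, and your uniformity bookkeeping (constants depending only on $\mathbold{\mu},\alpha,K$ and on $N\geq T_7$, not on $M$) is consistent with the paper.
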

\begin{definition}\label{def:T8}
    We define $T_8=T_{8,1}$, where $T_{8,1}$ is defined according to Proposition \ref{prop:closeness_of_indexes:detailed}. 
\end{definition}
By the defintion of $T_8$ above, Proposition \ref{prop:closeness_of_indexes} follows trivially from Proposition \ref{prop:closeness_of_indexes:detailed}. 

The following lemma helps us to bound the deviation of the empirical index $\mathcal{I}_a(N)$ from the index $I_a(N)$ due to the noise in the estimates $\mathbold{\widetilde{\mu}}$, for every alternative arm $a\in[K]/\{1\}$. 

\begin{lemma}\label{lem:noise_in_index}
    For $a\in[K]/\{1\}$ and $N\geq T_0$, we have, 
    \[|\mathcal{I}_a(N)-I_a(N)|~=~O(N^{1-3\alpha/8}).\]
\end{lemma}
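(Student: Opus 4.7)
The plan is to apply the mean value theorem to the map $(\nu_1,\nu_a)\mapsto F(\nu_1,\nu_a):=\widetilde{N}_1(N)\,d(\nu_1,y)+\widetilde{N}_a(N)\,d(\nu_a,y)$ with $y=y(\nu_1,\nu_a):=\tfrac{\widetilde{N}_1(N)\nu_1+\widetilde{N}_a(N)\nu_a}{\widetilde{N}_1(N)+\widetilde{N}_a(N)}$, viewed as a function on a small neighbourhood of $(\mu_1,\mu_a)$ contained in $\mathcal{H}(\mathbold{\mu})^2$. Then $F(\mu_1,\mu_a)=I_a(N)$ and $F(\widetilde\mu_1(N),\widetilde\mu_a(N))=\mathcal{I}_a(N)$, and the hypothesis $N\ge T_0$ guarantees $|\widetilde\mu_i(N)-\mu_i|\le\epsilon(\mathbold{\mu})N^{-3\alpha/8}\le\epsilon(\mathbold{\mu})$ for every $i$, so the segment joining $(\mu_1,\mu_a)$ to $(\widetilde\mu_1(N),\widetilde\mu_a(N))$ lies inside $\mathcal{H}(\mathbold{\mu})^2$ and the weighted average $y$ stays sandwiched between the two arguments (hence also in $\mathcal{H}(\mathbold{\mu})$).

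The key observation is that the partial derivatives of $F$ simplify because $y$ is exactly the minimizer of $x\mapsto\widetilde{N}_1 d(\nu_1,x)+\widetilde{N}_a d(\nu_a,x)$. A direct computation using the SPEF identity $d_2(\nu,x)=(x-\nu)/b''(\theta_x)$ from Appendix~\ref{sec:spef} gives $\widetilde{N}_1 d_2(\nu_1,y)+\widetilde{N}_a d_2(\nu_a,y)=0$, so the chain-rule contribution through $y$ vanishes and one obtains
\[
\frac{\partial F}{\partial\nu_1}=\widetilde{N}_1(N)\,d_1(\nu_1,y),\qquad\frac{\partial F}{\partial\nu_a}=\widetilde{N}_a(N)\,d_1(\nu_a,y).
\]
This is the crucial simplification; without it one would have to deal with $\partial y/\partial\nu_i$ and the size of $d_2$ at arbitrary intermediate points.

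Next I would bound the partial derivatives uniformly. Since $\nu_1,\nu_a,y\in\mathcal{H}(\mathbold{\mu})$ and $|\nu_1-y|,|\nu_a-y|\le|\nu_1-\nu_a|\le\Delta_{\max}(\mathbold{\mu})+2\epsilon(\mathbold{\mu})$, the bound~(\ref{eqn:envelope_d1}) yields $|d_1(\nu_1,y)|,|d_1(\nu_a,y)|\le C(\mathbold{\mu})$ for a constant depending only on $\mathbold{\mu}$. Combining this with the trivial bounds $\widetilde{N}_1(N),\widetilde{N}_a(N)\le N$ and the mean value theorem gives
\[
|\mathcal{I}_a(N)-I_a(N)|\ \le\ C(\mathbold{\mu})\bigl(\widetilde{N}_1(N)|\widetilde\mu_1(N)-\mu_1|+\widetilde{N}_a(N)|\widetilde\mu_a(N)-\mu_a|\bigr)\ \le\ 2C(\mathbold{\mu})\,\epsilon(\mathbold{\mu})\,N^{1-3\alpha/8},
\]
which is the desired $O(N^{1-3\alpha/8})$ bound.

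The only non-routine step is the envelope cancellation producing the clean partial derivatives; once that is in hand the argument is a one-line mean value estimate. I do not expect any real obstacle, only the small bookkeeping of verifying that the mean value segment lies inside $\mathcal{H}(\mathbold{\mu})^2$ so that the bounds of Appendix~\ref{sec:spef} apply.
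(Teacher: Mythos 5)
Your proposal is correct and takes a genuinely slicker route than the paper's own proof. The paper bounds $|\mathcal{I}_a(N)-I_a(N)|$ by first splitting it into $\widetilde{N}_1|d(\widetilde\mu_1,\widetilde x_{1,a})-d(\mu_1,x_{1,a})|+\widetilde{N}_a|d(\widetilde\mu_a,\widetilde x_{1,a})-d(\mu_a,x_{1,a})|$ and then applying the mean value theorem to each KL difference separately, regarded as a function of $(\widetilde\mu_1,\widetilde\mu_a)$; since $\widetilde x_{1,a}$ itself depends on the means, the resulting partial derivatives carry both $d_1$ and $d_2$ contributions, which the paper bounds term by term using (\ref{eqn:envelope_d1}) and (\ref{eqn:envelope_d2}). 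You instead apply the mean value theorem once to the full map $F(\nu_1,\nu_a)=\widetilde{N}_1 d(\nu_1,y)+\widetilde{N}_a d(\nu_a,y)$ and exploit the first-order optimality of the weighted average $y$, namely $\widetilde{N}_1 d_2(\nu_1,y)+\widetilde{N}_a d_2(\nu_a,y)=0$ (a fact the paper uses elsewhere, e.g.\ when differentiating indexes in Section \ref{sec:index_meeting_fluid}), so that all $d_2$ contributions cancel and the gradient of $F$ reduces to $(\widetilde{N}_1 d_1(\nu_1,y),\widetilde{N}_a d_1(\nu_a,y))$. This makes the bound a genuinely one-line application of (\ref{eqn:envelope_d1}) and removes the need for the $d_2$ envelope bounds entirely. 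Both proofs are valid; yours buys a cleaner derivation and a constant that is visibly $2C(\mathbold{\mu})\epsilon(\mathbold{\mu})$, whereas the paper's is slightly more mechanical but mirrors the bookkeeping style used in the rest of Appendix \ref{appndx:index_convergence}. Your side conditions (the segment staying inside $\mathcal{H}(\mathbold{\mu})^2$, and $y$ being sandwiched between its arguments so that $|d_1|$ is $O(1)$) are all correctly verified.
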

\begin{proof}
    Proof of this lemma uses mean value theorem. For any arm $a\in[K]/\{1\}$, upon expanding the indexes, 
    \begin{align}\label{eqn:diff_index}
        |\mathcal{I}_a(N)-I_a(N)|~&\leq~\widetilde{N}_1\cdot|d(\widetilde{\mu}_1,\widetilde{x}_{1,a})-d(\mu_1,x_{1,a})|+\widetilde{N}_a\cdot|d(\widetilde{\mu}_a,\widetilde{x}_{1,a})-d(\mu_a,x_{1,a})|, 
    \end{align}
    
    where $\widetilde{N}_1,\widetilde{N}_a,\widetilde{\mu}_1,\widetilde{\mu}_a$, and $\widetilde{x}_{1,a}$ are evaluated at $N$. Since $\widetilde{N}_1,\widetilde{N}_a\leq N$, the difference (\ref{eqn:diff_index}) is bounded above by, 
    \begin{align*}
        |\mathcal{I}_a(N)-I_a(N)|~&\leq~N\cdot\left(|d(\widetilde{\mu}_1,\widetilde{x}_{1,a})-d(\mu_1,x_{1,a})|+|d(\widetilde{\mu}_a,\widetilde{x}_{1,a})-d(\mu_a,x_{1,a})|\right).
    \end{align*}
    
    Now considering the first term in the RHS, and applying mean value theorem, we get 
    \begin{align*}
        |d(\widetilde{\mu}_1,\widetilde{x}_{1,a})-d(\mu_1,x_{1,a})|~&=~\left|~d_1(\hat{\mu}_1,\hat{x}_{1,a})+d_2(\hat{\mu}_1,\hat{x}_{1,a})\cdot\frac{\widetilde{N}_1}{\widetilde{N}_1+\widetilde{N}_a}~\right|\cdot|\widetilde{\mu}_1-\mu_1|\\
        &+\left|~d_2(\hat{\mu}_1,\hat{x}_{1,a})\cdot\frac{\widetilde{N}_a}{\widetilde{N}_1+\widetilde{N}_a}~\right|\cdot|\widetilde{\mu}_a-\mu_a|,
    \end{align*}
    
    where $\hat{\mu}_1,\hat{\mu}_a$, respectively, lie between $\widetilde{\mu}_1,\mu_1$, and $\widetilde{\mu}_a,\mu_a$, and $\hat{x}_{1,a}=\frac{\widetilde{N}_1 \hat{\mu}_1+\widetilde{N}_a \hat{\mu}_a}{\widetilde{N}_1+\widetilde{N}_a}$. Using (\ref{eqn:envelope_d1}) and (\ref{eqn:envelope_d2}), all the partial derivatives in the above upper bound are $O(1)$ for $N\geq T_0$. Therefore, 
    \begin{align}\label{eqn:diff_in_index_at2_ub1}
        |d(\widetilde{\mu}_1,\widetilde{x}_{1,a})-d(\mu_1,x_{1,a})|~&=~O\left(|\widetilde{\mu}_1-\mu_1|+|\widetilde{\mu}_a-\mu_a|\right)\nonumber\\
        ~&=~O(N^{-3\alpha/8}). 
    \end{align}
    
    Following a similar procedure, we can argue using (\ref{eqn:envelope_d1}) and (\ref{eqn:envelope_d2}), that the partial derivatives of $d(\widetilde{\mu}_j,\widetilde{x}_{1,j})$ with respect to $\widetilde{\mu}_1$ and $\widetilde{\mu}_j$ are $O(1)$ in magnitude. As a result, using the mean value theorem, 
    \begin{align}\label{eqn:diff_in_index_at2_ub2}
        |d(\widetilde{\mu}_a,\widetilde{x}_{1,a})-d(\mu_a,x_{1,a})|~&=~O(N^{-3\alpha/8}).
    \end{align}
    
    Therefore, we have, 
    \begin{align*}
        |\mathcal{I}_a(N)-I_a(N)|~&=~O(N^{1-3\alpha/8}),
    \end{align*}
    
    for $N\geq T_0$ and completing the proof. 
\end{proof}

\bulletwithnote{Proof of Proposition \ref{prop:closeness_of_indexes}}
    We have $\mathbb{E}_{\mathbold{\mu}}[T_{7,M}]\leq M+\mathbb{E}_{\mathbold{\mu}}[T_7]<\infty$. By Lemma \ref{lem:N_a_are_Theta_N}, $\widetilde{N}_a(N)=\Theta(N)$ for $N\geq T_{7,M}$. Hence, by the definition of $T_{8,M}$, there exists a constant $C^\prime>0$ independent of $M$, such that, for every $M\geq 1$,~$T_{8,M}\leq C^\prime T_{7,M}$. As a result,~$\mathbb{E}_{\mathbold{\mu}}[T_{8,M}]\leq C^\prime \mathbb{E}_{\mathbold{\mu}}[T_{7,M}]<\infty$.
    
    Note that $T_{7,1}=T_7$. Also, for every $M\geq 1$, ~$T_{8,M}\geq T_{8,1}=T_8$~~($T_8$ is defined in Definition \ref{def:T8}). It is sufficient to prove the proposition for every $N\geq T_8$. 
    
    We now argue for the algorithms AT2 and IAT2 separately. 
        
    \bulletwithnote{AT2} We consider any two alternative arms $a,b\in[K]/\{1\}$, and define the time $\tau_{a,b}(N)$ as, 
    \[\tau_{a,b}(N)=\min\left\{~t\geq 1~~\Big\vert~~\mathcal{I}_b(N+t)-\mathcal{I}_a(N+t)~~\text{and}~~\mathcal{I}_b(N)-\mathcal{I}_a(N)~~\text{have opposite signs}~\right\}.\]
    
    Note that $N+\tau_{a,b}(N)$ must be before the iteration after $N$ by which the algorithm has picked both $a$ and $b$ atleast once. By the definition of $T_7$ and $T_8$, for every $N\geq T_8$, all alternative arms in $[K]/\{1\}$ has been sampled from atleast once between iterations $T_7$ and $N$. Therefore, by Lemma \ref{lem:period}, we have $\tau_{a,b}(N)=O(N^{1-3\alpha/8})$. 

    Since $\mathcal{I}_a(N)-\mathcal{I}_b(N)$ and $\mathcal{I}_a(N+\tau_{a,b}(N))-\mathcal{I}_b(N+\tau_{a,b}(N))$ have opposite signs, we have,
    \begin{align*}
        \left|\mathcal{I}_a(N)-\mathcal{I}_b(N)\right|~&\leq~\left|\left(\mathcal{I}_a(N)-\mathcal{I}_b(N)\right)~-~\left(\mathcal{I}_a(N+\tau_{a,b}(N))-\mathcal{I}_b(N+\tau_{a,b}(N))\right)\right|\\
        ~&\leq~ \left|\mathcal{I}_a(N+\tau_{a,b}(N))-\mathcal{I}_a(N)\right|~+~\left|\mathcal{I}_b(N+\tau_{a,b}(N))-\mathcal{I}_b(N)\right|\\
        ~&\leq~\left|I_a(N+\tau_{a,b}(N))-I_a(N)\right|~+~\left|I_b(N+\tau_{a,b}(N))-I_b(N)\right|\\
        &\quad+~O((N+\tau_{a,b}(N))^{1-3\alpha/8}),
    \end{align*}
    
    where the last step follows from Lemma \ref{lem:noise_in_index}. Now, 
    \[O\left(\left(N+\tau_{a,b}(N)\right)^{1-3\alpha/8}\right)~=~O\left(\left(N+O(N^{1-3\alpha/8})\right)^{1-3\alpha/8}\right)~=~O(N^{1-3\alpha/8}).\] 
    
    Therefore,
    \[|\mathcal{I}_a(N)-\mathcal{I}_b(N)|~\leq~\left|I_a(N+\tau_{a,b}(N))-I_a(N)\right|~+~\left|I_b(N+\tau_{a,b}(N))-I_b(N)\right|~+~O(N^{1-3\alpha/8}).\]
    
    By Lemma \ref{lem:noise_in_index}, we know $|I_a(N)-I_b(N)|\leq |\mathcal{I}_a(N)-\mathcal{I}_b(N)|+O(N^{1-3\alpha/8})$. Therefore, the above inequality implies,    
    \begin{align}\label{eqn:diff_in_index_at2}
        |I_a(N)-I_b(N)|~&\leq~ |\mathcal{I}_a(N)-\mathcal{I}_b(N)|~+~O(N^{1-3\alpha/8})\nonumber\\
        ~&\leq~ \left|I_a(N+\tau_{a,b}(N))-I_a(N)\right|~+~\left|I_b(N+\tau_{a,b}(N))-I_b(N)\right| \nonumber\\
        &\quad+~O(N^{1-3\alpha/8}).
    \end{align}
    
    Using mean value theorem, for $j\in\{a,b\}$, we have,
    \begin{align}\label{eqn:diff_index_at2_term1}
        |I_j(N+\tau_{a,b}(N))-I_j(N)|~&\leq~ \left(\sum_{i\in\{1,j\}}\frac{\partial I_j}{\partial N_i}(\hat{N}_1,\hat{N}_j)\right)\cdot\tau_{a,b}(N),
    \end{align}
    
    where $\hat{N}_i\in\left[\widetilde{N}_i(N),~\widetilde{N}_i(N+\tau_{a,b}(N))\right]$ for $i=1,a,b$. 
    
    We know,
    \[\frac{\partial I_j}{\partial N_1}(\hat{N}_1,\hat{N}_j)~=~d(\mu_1,\hat{x}_{1,j})\quad\text{and}\quad\frac{\partial I_j}{\partial N_j}(\hat{N}_1,\hat{N}_j)~=~d(\mu_j,\hat{x}_{1,j}),\]
    where $\hat{x}_{1,j}=\frac{\hat{N}_1\mu_1+\hat{N}_j\mu_j}{\hat{N}_1+\hat{N}_j}$. Note that both the partial derivatives above are bounded from above by $\max\{d(\mu_1,\mu_a),d(\mu_a,\mu_1)\}$, and therefore $O(1)$. As a result, since $\tau_{a,b}(N)=O(N^{1-3\alpha/8})$, we have,  \\
    \begin{align}\label{eqn:index_at2_term1_finalbound}
        |I_j(N+\tau_{a,b}(N))-I_j(N)|~&\leq~ O(N^{1- 3\alpha/8}) \quad\text{for $j=a,b$.}
    \end{align} 
    
    Using (\ref{eqn:index_at2_term1_finalbound}) in (\ref{eqn:diff_in_index_at2}), we get
    \[|I_a(N)-I_b(N)|~=~O(N^{1-3\alpha/8}),\quad\text{for}~N\geq T_8.\]
    \bigskip
    
    \bulletwithnote{IAT2} First we define the modified empirical index of every alternative arm $a\in[K]/\{1\}$ using the notation $\mathcal{I}^{(m)}_{a}(N)$ as, 
    \[\mathcal{I}^{(m)}_a(N)~=~\mathcal{I}_a(N)+\log(\widetilde{N}_a(N)).\]
    
    We define the time $\tau_{a,b}^{(m)}(N)$ as, 
    \begin{align*}
        \tau_{a,b}^{(m)}(N)~=~\min\Big\{~t\geq 1~&\Big\vert~~~\mathcal{I}^{(m)}_b(N+t)-\mathcal{I}^{(m)}_a(N+t)\quad\text{and}\\ &\quad\mathcal{I}^{(m)}_b(N)-\mathcal{I}^{(m)}_a(N)\quad\text{have opposite signs}~\Big\}.
    \end{align*}
    
    Note that, for every $a\in[K]/\{1\}$, $\mathcal{I}^{(m)}_a(N)$ differs from $\mathcal{I}_a(N)$ by atmost $\log(N)$ and $\mathcal{I}_a(N)$ differs from $I_a(N)$ by atmost $O(N^{1-3\alpha/8})$ for $N\geq T_0$. Therefore, 
    \[|\mathcal{I}_a^{(m)}(N)-I_a(N)|~=~O(N^{1-3\alpha/8})\quad\text{for}~N\geq T_0~\text{and every}~a\in[K]/\{1\}.\]
    
    Now $N+\tau_{a,b}^{(m)}(N)$ must be earlier than the iteration after $N$ by which the algorithm has picked both $a$ and $b$ atleast once. Using the same argument as AT2, by Lemma \ref{lem:period}, we have $\tau_{a,b}^{(m)}(N)=O(N^{1-3\alpha/8})$. 
    Also, following the same steps as AT2, by replacing the empirical index $\mathcal{I}$ with the modified empirical index $\mathcal{I}^{(m)}$ for every alternative arm, we obtain, 
    \[|I_a(N)-I_b(N)|~\leq ~|I_a(N+\tau_{a,b}^{(m)}(N))-I_a(N)|~+~|I_b(N+\tau_{a,b}^{(m)}(N))-I_b(N)|~+~O(N^{1-3\alpha/8}).\]
    
    Using the mean value theorem, since the parital derivatives of $I_a$ and $I_b$ with respect to $\widetilde{N}_1,\widetilde{N}_a$ and $\widetilde{N}_b$ are $O(1)$, we have
    \[|I_j(N+\tau_{a,b}^{(m)}(N))-I_j(N)|~\leq~O\left(\tau_{a,b}^{(m)}(N)\right)=O(N^{1- 3\alpha/8})\quad\text{for $j=a,b$.}\]
    
    From the last two observations, we conclude 
    \[|I_a(N)-I_b(N)|~\leq ~O(N^{1-3\alpha/8})\quad\text{for}~N\geq T_8.\]
\hfill\qedsymbol

\subsection{Convergence of algorithm to optimal proportions}\label{appndx:closeness_of_allocations}
In this appendix we prove a slightly detailed version of Proposition \ref{prop:closeness_of_allocations} from Section \ref{sec:at2}. In Proposition \ref{prop:closeness_of_allocations}, we argue that the proportion of samples allocated by the algorithm converges to the optimal proportions for the instance, a.s. in $\mathbb{P}_{\mathbold{\mu}}$, as the no. of samples grows to $\infty$. 

For every $M\geq 1$, we use $T_{7,M}$ and $T_{8,M}$ as defined in Definition \ref{def:T_7M_and_8M} in Appendix \ref{appndx:index_convergence}. Recall that $T_7=T_{7,1}$ and $T_8=T_{8,1}$. By Proposition \ref{prop:closeness_of_indexes:detailed}, we have $\mathbb{E}_{\mathbold{\mu}}[T_{8,M}]<\infty$, and $T_{8,M}\geq T_8$ for every $M\geq 1$. 

Recall that $\mathbold{\omega}^\star$ is the unique optimal allocation according to Proposition \ref{prop:opt_cond}, and $\mathbold{\widetilde{\omega}}(N)=(\widetilde{\omega}_a(N):a\in[K])$ with $\widetilde{\omega}_a(N)=\frac{\widetilde{N}_a(N)}{N}$ is the algorithms allocation at iteration $N$. We now state a slightly detailed version of Proposition \ref{prop:closeness_of_allocations} from Section \ref{sec:at2}, 

\begin{proposition}\label{prop:closeness_of_allocations:detailed}
    There exists constants $C_1>0$ and $M_6\geq 1$ depending on $\mathbold{\mu}, \alpha$, and $K$ such that,  for every $N\geq T_{8,M_6}$ and $a\in[K]$, we have 
    \[\left|\widetilde{\omega}_a(N)-\omega_a^\star\right|~\leq~ C_1 N^{-3\alpha/8}\quad\text{and}\quad|\widetilde{\mu}_a(N)-\mu_a|~\leq~\epsilon(\mathbold{\mu}) N^{-3\alpha/8},\]
    
    where $\epsilon(\mathbold{\mu})$ is a constant depending only on $\mathbold{\mu}$ and defined in Appendix \ref{sec:spef}.   
\end{proposition}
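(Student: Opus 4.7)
The second bound, $|\widetilde{\mu}_a(N) - \mu_a| \leq \epsilon(\mathbold{\mu}) N^{-3\alpha/8}$, is immediate from the definition of $T_0$ together with the chain $T_{8,M_6} \geq T_{7,M_6} \geq T_7 \geq T_6 \geq T_5 \geq T_0$, so the interesting content is the bound on $|\widetilde{\omega}_a(N) - \omega_a^\star|$. My plan is to interpret $\mathbold{\widetilde{\omega}}(N)$ as the solution of a perturbed version of the optimality system from Corollary~\ref{cor:opt_proportion_cond} and apply the Implicit Function Theorem, using the Jacobian invertibility from Lemma~\ref{lem:Jacobian}.

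Concretely, I would work in the framework of Appendix~\ref{appndx:Jacobian} with $B = [K]/\{1\}$ and normalized total allocation $N=1$. I define the perturbation $\mathbold{\eta}(N) \in \mathbb{R}^K$ and a scalar $I(N)$ so that $\mathbold{\Phi}(\mathbold{\widetilde{\omega}}(N), I(N), \mathbold{\eta}(N), 1) = 0$ by construction: set $\eta_1(N) = g(\mathbold{\mu}, \mathbold{\widetilde{\omega}}(N))$, let $I(N)$ be the average of the normalized indexes $(H_a(N))_{a \neq 1}$, and put $\eta_a(N) = H_a(N) - I(N)$ for $a \neq 1$. Propositions~\ref{prop:gbound} and~\ref{prop:closeness_of_indexes:detailed} then give $\|\mathbold{\eta}(N)\| = O(N^{-3\alpha/8})$ uniformly on $\{N \geq T_{8,M_6}\}$.

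Corollary~\ref{cor:opt_proportion_cond} places $(\mathbold{\omega}^\star, I^\star, \mathbf{0}_K, 1)$ in $\mathcal{Z}_{[K]/\{1\}}$, and statement~3 of Lemma~\ref{lem:Jacobian} makes the relevant Jacobian invertible there. The Implicit Function Theorem then produces an $\eta_{\max}' > 0$, an open neighborhood $U$ of $(\mathbold{\omega}^\star, I^\star)$, and a constant $L > 0$ depending only on $\mathbold{\mu}$ and $K$, such that for every $\mathbold{\eta}$ with $\|\mathbold{\eta}\| < \eta_{\max}'$ the perturbed system at total allocation $1$ admits a unique solution $(\mathbold{\omega}(\mathbold{\eta}), I(\mathbold{\eta})) \in U$ satisfying $\|(\mathbold{\omega}(\mathbold{\eta}), I(\mathbold{\eta})) - (\mathbold{\omega}^\star, I^\star)\| \leq L\|\mathbold{\eta}\|$. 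Once I know that $(\mathbold{\widetilde{\omega}}(N), I(N)) \in U$, it must coincide with this IFT branch, and the Lipschitz estimate will instantly deliver $|\widetilde{\omega}_a(N) - \omega_a^\star| = O(\|\mathbold{\eta}(N)\|) = O(N^{-3\alpha/8})$.

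The hard part is ruling out the possibility that $(\mathbold{\widetilde{\omega}}(N), I(N))$ sits on a spurious, far-away branch of the perturbed solution set, and this is where the argument is genuinely nontrivial. I would close the gap by a compactness argument. Lemma~\ref{lem:N_a_are_Theta_N} guarantees $\widetilde{\omega}_a(N) \in [c_1, 1 - c_1]$ for a deterministic $c_1 > 0$ once $N \geq T_5$, so the pairs $(\mathbold{\widetilde{\omega}}(N), I(N))$ range over a fixed compact set $\mathcal{K}$ in the relative interior of the simplex crossed with a bounded interval. On $\mathcal{K}$ the continuous residual map $(\mathbold{\omega}, I) \mapsto \mathbold{\Phi}(\mathbold{\omega}, I, \mathbf{0}_K, 1)$ vanishes only at $(\mathbold{\omega}^\star, I^\star)$ by the uniqueness half of Corollary~\ref{cor:opt_proportion_cond}, so a standard modulus-of-continuity argument (comparing $\|\mathbold{\Phi}\|$ with its positive infimum on the closed complement of an $r$-ball around $(\mathbold{\omega}^\star, I^\star)$) yields some $\eta_{\max} \in (0, \eta_{\max}')$ for which every point of $\mathcal{K}$ with residual of norm at most $\eta_{\max}$ already lies in $U$. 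Choosing $M_6$ large enough that the bound $\|\mathbold{\eta}(N)\| \leq C_2 N^{-3\alpha/8}$ forces $\|\mathbold{\eta}(N)\| \leq \eta_{\max}$ on $\{N \geq T_{8,M_6}\}$, which is possible since $T_{8,M_6} \geq M_6$, completes the argument; finiteness of $\mathbb{E}_{\mathbold{\mu}}[T_{8,M_6}]$ is already recorded in Proposition~\ref{prop:closeness_of_indexes:detailed}.
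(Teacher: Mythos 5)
Your proposal is correct and follows essentially the same route as the paper's proof in Appendix \ref{appndx:alloc_closeness_proof}: treat $\mathbold{\widetilde{\omega}}(N)$ as a solution of the optimality system of Corollary \ref{cor:opt_proportion_cond} with a perturbation of size $O(N^{-3\alpha/8})$ (from Propositions \ref{prop:gbound} and \ref{prop:closeness_of_indexes:detailed}), apply the IFT at $(\mathbold{\omega}^\star,I^\star)$ via Lemma \ref{lem:Jacobian} to get a locally unique Lipschitz branch, rule out far-away solutions by a compactness-plus-uniqueness argument, and then choose $M_6$ so the perturbation falls below the IFT threshold for $N\geq T_{8,M_6}$. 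The only difference is in packaging: the paper globalizes through Lemma \ref{lem:pert1}, showing $dist(\mathbold{\omega}^\star,r)\to 0$ over all simplex solutions with perturbation at most $r$, whereas you bound the unperturbed residual from below on the trajectory's compact set supplied by Lemma \ref{lem:N_a_are_Theta_N} — the same compactness/uniqueness idea, and both close the gap equally well.
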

Detailed proof of Proposition \ref{prop:closeness_of_allocations:detailed} is in Appendix \ref{appndx:alloc_closeness_proof} and relies on using IFT.  

Below we define the random times $T_{good}$ and $T_{stable}$, which are mentioned in the statements of Proposition \ref{prop:closeness_of_allocations},  \ref{prop:convergence_to_opt_cond:main_body}, and Lemma \ref{lem:non_fluid_index_meeting} from the main body of the paper.
\begin{definition}[\textbf{$T_{stable}$ and $T_{good}$}] \label{def:T_opt}
    We define $T_{good}=T_{7,M_6}$ and $T_{stable}=T_{8,M_6}$, where $M_6\geq 1$ is introduced in Proposition \ref{prop:closeness_of_allocations:detailed}. 
\end{definition}
\begin{remark}\label{rem:g=0_after_T_good}
    Note that, by definition, $T_{good}\geq T_4, T_6$. As a result, by Proposition \ref{prop:gbound} and Lemma \ref{lem:N_a_are_Theta_N}, $\left|g(\mathbold{\mu},\mathbold{\widetilde{N}}(N))\right|=O(N^{-3\alpha/8})$ and $\widetilde{N}_j(N)=\Theta(N)$ for every $j\in[K]$ and $N\geq T_{good}$. 
\end{remark}

\bulletwithnote{Proof of Lemma \ref{lem:non_fluid_index_meeting}} By the definition of $T_{7,M},~T_{8,M}$ in Appendix \ref{appndx:index_convergence}, and since $T_{good}=T_{7,M_6},~T_{stable}=T_{8,M_6}$, every alternative arm in $[K]/\{1\}$ gets picked atleast once between the iterations $T_{good}$ and $T_{stable}$. The other part of the statement of Lemma \ref{lem:non_fluid_index_meeting} follows from Lemma \ref{lem:period} because $T_{good}\geq T_{7}$. \proofendshere

Before proving Proposition \ref{prop:closeness_of_allocations:detailed} in Appendix \ref{appndx:alloc_closeness_proof}, we find a tighter upper bound on the time to reach optimal proportion $T_{stable}$ in the following Appendix \ref{appndx:time_to_stability:non_fluid}. While doing this, we identify a similarity between the time to reach stabilty in fluid dynamics and that for the algorithm. 

\subsubsection{Bounding time to reach stability}\label{appndx:time_to_stability:non_fluid}
Lemma \ref{lem:time_to_stability:non_fluid} gives an upper bound on the time to reach stability for the algorithmic allocations. We define $\omega^\star_{\min}=\min_{a\in[K]/\{1\}}\omega_a^\star$. 

According to the discussion in Section \ref{sec:fluid_model}, if the fluid dynamics has state $\mathbold{\widetilde{N}}(T_{good})$ at time $T_{good}$, then it hits all the indexes and reaches stability by a time atmost $\frac{T_{good}}{\omega^\star_{\min}}$. In Lemma \ref{lem:time_to_stability:non_fluid}, we argue that, the algorithm also approximately reaches the optimal proportion $\mathbold{\omega}^{\star}$ by atmost $\approx\frac{T_{good}}{\omega^\star_{\min}}$ iterations.  
\begin{lemma}\label{lem:time_to_stability:non_fluid}
    For every $M\geq M_6$ ($M_6$ is a constant defined in the statement of Proposition \ref{prop:closeness_of_allocations:detailed}), 
    \[T_{8,M}~\leq~ \frac{T_{7,M}+1}{\omega^\star_{\min}-C_1 M^{-3\alpha/8}},\]
    which implies 
    \[T_{stable}~\leq~ \frac{T_{good}+1}{\omega^\star_{\min}-C_1 M_6^{-3\alpha/8}}.\]
    Moreover, we have 
    \[\limsup_{M\to\infty}\frac{T_{8,M}}{T_{7,M}}~\leq~\frac{1}{\omega^\star_{\min}}~\text{a.s. in}~\mathbb{P}_{\mathbold{\mu}}.\] 
\end{lemma}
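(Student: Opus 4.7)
The plan is to mirror the fluid-model argument of Section~\ref{sec:time_to_stability:fluid} verbatim in the noisy setting, replacing the exact identity $N_i^\star = \omega_i^\star N^\star$ by its approximate version that Proposition~\ref{prop:closeness_of_allocations:detailed} provides. The key bookkeeping observation is that, by the minimality clause in the definition of $T_{8,M}$, there must exist at least one alternative arm $a \in [K]/\{1\}$ that is pulled for the first time in the window $(T_{7,M}, T_{8,M}]$ precisely at iteration $T_{8,M}$. For this arm, $\widetilde{N}_a(T_{8,M}) = \widetilde{N}_a(T_{7,M}) + 1 \leq T_{7,M} + 1$, which is the direct analogue of the fluid identity $N_a^\star = N_a^0$ used in Section~\ref{sec:time_to_stability:fluid}.

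Next I would exploit Proposition~\ref{prop:closeness_of_allocations:detailed}. Since $M \geq M_6$ forces $T_{8,M} \geq T_{8,M_6} = T_{stable}$, that proposition applies at $N = T_{8,M}$ and yields
\[
\widetilde{N}_a(T_{8,M}) \geq \bigl(\omega_a^\star - C_1 T_{8,M}^{-3\alpha/8}\bigr)\, T_{8,M} \geq \bigl(\omega_{\min}^\star - C_1 M^{-3\alpha/8}\bigr)\, T_{8,M},
\]
using $T_{8,M} \geq T_{7,M} \geq M$ and $\omega_a^\star \geq \omega_{\min}^\star$ (the chosen arm $a$ is alternative, so it participates in the minimum). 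Combining with the upper bound $\widetilde{N}_a(T_{8,M}) \leq T_{7,M} + 1$ from the previous paragraph gives the announced inequality
\[
T_{8,M} \leq \frac{T_{7,M}+1}{\omega_{\min}^\star - C_1 M^{-3\alpha/8}}.
\]
Specializing to $M = M_6$, so that $T_{7,M_6} = T_{good}$ and $T_{8,M_6} = T_{stable}$, immediately produces the bound on $T_{stable}$.

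Finally, for the almost-sure limsup, I would divide the first displayed bound by $T_{7,M}$ and let $M \to \infty$ along a sample path in the full-measure event $\{T_{stable} < \infty\}$. Since $T_{7,M} \geq M \to \infty$, the ratio $(T_{7,M}+1)/T_{7,M} \to 1$ and the perturbation $C_1 M^{-3\alpha/8} \to 0$, delivering $\limsup_M T_{8,M}/T_{7,M} \leq 1/\omega_{\min}^\star$ almost surely in $\mathbb{P}_{\mathbold{\mu}}$.

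I do not anticipate a substantive technical obstacle: the only two points that need care are (i) verifying that the minimality of $T_{8,M}$ genuinely forces some alternative arm's count to increment by exactly one at iteration $T_{8,M}$, which is immediate from the definition in Definition~\ref{def:T_7M_and_8M}, and (ii) confirming that Proposition~\ref{prop:closeness_of_allocations:detailed} is applicable at $N = T_{8,M}$ for $M \geq M_6$, which reduces to the monotonicity $T_{8,M} \geq T_{8,M_6}$ in $M$. Both are routine consequences of the definitions, so the argument is essentially a one-line adaptation of the fluid calculation augmented by the $O(M^{-3\alpha/8})$ deviation bound from Proposition~\ref{prop:closeness_of_allocations:detailed}.
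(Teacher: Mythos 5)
Your proposal is correct and follows essentially the same route as the paper's proof: identify an alternative arm whose count at $T_{8,M}$ is exactly $\widetilde{N}_a(T_{7,M})+1\leq T_{7,M}+1$, lower-bound that count by $(\omega^\star_{\min}-C_1 M^{-3\alpha/8})T_{8,M}$ via Proposition~\ref{prop:closeness_of_allocations:detailed} (applicable since $T_{8,M}\geq T_{8,M_6}=T_{stable}$), and pass to the limit using $T_{7,M}\geq M\to\infty$. No gaps.
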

\begin{proof}
    From Proposition \ref{prop:closeness_of_allocations:detailed}, it follows that, for every $M\geq M_6$ and $N\geq T_{8,M}\geq T_{8,M_6}$, we have, 
    \begin{align}\label{eqn:alloc_closeness1}
        \max_{a\in[K]}|\widetilde{\omega}_a(N)-\omega_a^\star|~&\leq~ C_1 N^{-3\alpha/8}.
    \end{align}
    
    Since $T_{8,M}$ is the first iteration after $T_{7,M}$ by which every alternative arm has been picked atleast once, we have some arm $a\in[K]/\{1\}$ such that, 
    \[\widetilde{N}_a(T_{8,M})~=~\widetilde{N}_a(T_{7,M})+1~\leq~T_{7,M}+1.\]
    
    Now by (\ref{eqn:alloc_closeness1}), we have 
    \[\widetilde{N}_a(T_{8,M})~\geq~ (\omega_a^\star-C_1 T_{8,M}^{-3\alpha/8}) T_{8,M}~\geq~ (\omega^\star_{\min}-C_1 M^{-3\alpha/8}) T_{8,M}.\]
    
    Combining the last two observation, we have, 
    \[T_{8,M}~\leq ~\frac{T_{7,M}+1}{\omega^\star_{\min}-C_1 M^{-3\alpha/8}}\quad\text{a.s. in}~\mathbb{P}_{\mathbold{\mu}}\]
    for every $M\geq M_6$. 
    
    Since $T_{8,M},T_{7,M}\to\infty$ as $M\to\infty$ a.s. in $\mathbb{P}_{\mathbold{\mu}}$, we have, 
    \[\limsup_{M\to \infty}\frac{T_{8,M}}{T_{7,M}}~\leq~ \frac{1}{\omega^\star_{\min}}\quad\text{a.s. in}~\mathbb{P}_{\mathbold{\mu}}.\]
\end{proof}

\subsubsection{Proving Proposition \ref{prop:closeness_of_allocations:detailed}}\label{appndx:alloc_closeness_proof}

By Proposition \ref{prop:gbound} and \ref{prop:closeness_of_indexes:detailed}, there exists a constant $C>0$ independent of the sample paths, such that 
 $\mathbold{\widetilde{\omega}}(N)=(\widetilde{\omega}_a(N))_{a\in[K]}$ satisfies, 
\begin{align}\label{eqn:algo_allocation_closeness}
    \left|g(\mathbold{\mu},\mathbold{\widetilde{\omega}}(N))\right|~=~\left|~\sum_{a\in[K]/\{1\}}\frac{d(\mu_1,x_{1,a}(N))}{d(\mu_a,x_{1,a}(N))}-1~\right|~&\leq~ CN^{-3\alpha/8},\quad\text{and}\nonumber\\
    \max_{a,b\in[K]/\{1\}}|I_a(N)-I_b(N)|~&\leq~CN^{-3\alpha/8},
\end{align}

for all $N\geq T_8$ a.s. in $\mathbb{P}_{\mathbold{\mu}}$.

Proof of Proposition \ref{prop:closeness_of_allocations:detailed} relies on using the implicit function theorem. Before proving the proposition, we describe below the framework over which we apply the implicit function theorem. We define the following functions, 
\begin{align*}
    \Psi_1(\mathbold{\omega},\mathbold{\eta})~&=~g(\mathbold{\mu},\mathbold{\omega})-\eta_1~=~\sum_{a\neq 1}\frac{d(\mu_1,x_{1,a}(\omega_1,\omega_a))}{d(\mu_a,x_{1,a}(\omega_1,\omega_a))}-1-\eta_1,\\
    \text{for}~a\in[K]/\{1\},\quad\Psi_a(\mathbold{\omega},I,\mathbold{\eta})~&=~W_a(\omega_1,\omega_a)-I-\eta_a,\quad\text{and} \\
    \Psi_{K+1}(\mathbold{\omega})~&=~\sum_{a\in[K]}\omega_a-1,
\end{align*}

where~~ $\mathbold{\omega}=(\omega_a)_{a\in[K]}\in\mathbb{R}_{\geq 0}^{K}$,~~~ $\mathbold{\eta}=(\eta_a)_{a\in[K]}\in\mathbb{R}^K$,~~~ $I\in\mathbb{R}$,~~~ and for every $a\in[K]/\{1\}$,~~$x_{1,a}(\omega_1,\omega_a)~=~\frac{\omega_1\mu_1+\omega_a\mu_a}{\omega_1+\omega_a}$~~and~~$W_a(\omega_1,\omega_a)~=~\omega_1 d(\mu_1,x_{1,a}(\omega_1,\omega_a))+\omega_a d(\mu_a,x_{1,a}(\omega_1,\omega_a))$.

Using the functions defined above, we define the vector valued function $\mathbold{\Psi}(\mathbold{\omega},I,\mathbold{\eta})$ as follows, 
\[\mathbold{\Psi}(\mathbold{\omega},I,\mathbold{\eta})=\left(~\Psi_1(\mathbold{\omega},\mathbold{\eta}),~~\Psi_2(\mathbold{\omega},I,\mathbold{\eta}),~~\Psi_3(\mathbold{\omega},I,\mathbold{\eta}),~~\hdots,~~\Psi_K(\mathbold{\omega},I,\mathbold{\eta}),~~\Psi_{K+1}(\mathbold{\omega})~\right).\]

$\mathbold{\Psi}$ maps tuples of the form~~ $(\mathbold{\omega},I,\mathbold{\eta})\in\mathbb{R}_{\geq 0}^K\times \mathbb{R}\times \mathbb{R}^K$~~ to~~ $\mathbb{R}^{K+1}$.  

Its easy to observe that for every $\mathbold{\omega}=(\omega_a:a\in[K])\in\mathbb{R}^K_{\geq 0}$ satisfying $\sum_{a\in[K]}\omega_a=1$, and $I\in\mathbb{R}$, there is a unique $\mathbold{\eta}\in\mathbb{R}^K$ for which $\mathbold{\Psi}(\mathbold{\omega},I,\mathbold{\eta})=\mathbf{0}_{K+1}$. We refer to the quantity $\max_{a\in [K]}|\eta_a|$ as the \emph{violation} caused by the pair $(\mathbold{\omega},I)$ to the optimality conditions in (\ref{eqn:optimality_cond}). 
\bigskip

By (\ref{eqn:optimality_cond}), all the alternative arms in $[K]/\{1\}$ have equal normalized index under the optimal allocation $\mathbold{\omega}^\star$. Let $I^\star=W_a(\omega_1^\star,\omega_a^\star)$ for every $a\in[K]/\{1\}$. Then Proposition \ref{prop:opt_cond} implies $(\mathbold{\omega}^\star,I^\star)$ is the unique tuple satisfying 
\[\mathbold{\Psi}(\mathbold{\omega}^\star,I^\star,\mathbf{0}_{K})=\mathbf{0}_{K+1}.\]

To prove Proposition \ref{prop:closeness_of_allocations:detailed} we need the two technical lemmas: Lemma \ref{lem:pert1} and \ref{lem:pert2}. Let us define $\lVert \mathbold{x}\rVert_{\infty}=\max_{a\in[K]}|x_a|$ for every $\mathbold{x}\in\mathbb{R}^K$.  

Lemma \ref{lem:pert1} shows that, the set of allocations satisfying the optimality conditions in (\ref{eqn:optimality_cond}) upto a maximum violation of $r>0$ shrinks to $\mathbold{\omega}^\star$ as $r$ decreases to zero. In Lemma \ref{lem:pert2}, we use Lemma \ref{lem:pert1} and IFT to argue that if the perturbation vector $\mathbold{\eta}$ satisfies $\lVert \mathbold{\eta}\rVert_\infty\leq \eta_{\max}$, where $\eta_{\max}>0$ is a constant depending only on $\mathbold{\mu}$, then there is a unique pair $(\mathbold{\omega},I)$ satisfying $\mathbold{\Psi}(\mathbold{\omega},I,\mathbold{\eta})=\mathbf{0}_{K+1}$. Moreover, the function mapping a perturbation vector $\mathbold{\eta}\in[-\eta_{\max},\eta_{\max}]^K$ to the unique pair $(\mathbold{\omega},I)$ solving $\mathbold{\Psi}(\mathbold{\omega},I,\mathbold{\eta})=\mathbf{0}_{K+1}$ is Lipschitz continuous. 

It is now easy to see Proposition \ref{prop:closeness_of_allocations:detailed} follows from Lemma \ref{lem:pert1} and \ref{lem:pert2}. By (\ref{eqn:algo_allocation_closeness}), the violation caused by the algorithmic allocation $\mathbold{\widetilde{\omega}}(N)$ to the optimality conditions in (\ref{eqn:optimality_cond}) converges to zero uniformly at a rate $O(N^{-3\alpha/8})$. We wait for sufficiently many iterations such that, the violation becomes smaller than $\eta_{\max}$. Then using Lipschitzness of the allocation as a function of perturbation (proven in Lemma \ref{lem:pert2}), we have $\lVert\mathbold{\widetilde{\omega}}(N)-\mathbold{\omega}^\star\rVert_\infty=O(N^{-3\alpha/8})$. 

\begin{lemma}\label{lem:pert1}
    For every $r\geq 0$, we define the quantity,
    \begin{align*}
        dist(\mathbold{\omega}^\star,r)~=~\max\Big\{~&\max \{\lVert\mathbold{\omega}-\mathbold{\omega}^\star\rVert_{\infty},~|I-I^\star|\}~~\Big\vert~~\mathbold{\omega}\in\mathbb{R}_{\geq 0}^K,~~I\in\mathbb{R},~~\text{and}\\
        &\qquad\exists~\mathbold{\eta}\in[-r,r]^K~~\text{such that}~~\mathbold{\Psi}(\mathbold{\omega},I,\mathbold{\eta})=\mathbf{0}_{K+1}~\Big\}.
    \end{align*}
    
    The following statements are true about the mapping $r\mapsto dist(\mathbold{\omega}^\star, r)$, 
    \begin{enumerate}
        \item $dist(\mathbold{\omega}^\star, 0)=0$, 
        \item $dist(\mathbold{\omega}^\star, r)$ is non-decreasing in $r$, and
        \item  $\lim_{r\to 0}dist(\mathbold{\omega}^\star, r)=0$.
    \end{enumerate}
\end{lemma}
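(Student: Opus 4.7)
My plan is to dispatch the three statements in order; statements~1 and~2 are essentially unpacking of the definition, and I expect all the real work to lie in statement~3.

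For statements~1 and~2, I would first observe that $r=0$ forces $\mathbold{\eta}=\mathbf{0}_K$, that $\Psi_{K+1}=0$ places $\mathbold{\omega}$ in the simplex, and that Corollary~\ref{cor:opt_proportion_cond} then identifies $(\mathbold{\omega}^\star,I^\star)$ as the unique solution of $\mathbold{\Psi}(\mathbold{\omega},I,\mathbf{0}_K)=\mathbf{0}_{K+1}$, giving $dist(\mathbold{\omega}^\star,0)=0$; for monotonicity I would simply note that for $0\le r_1\le r_2$ the inclusion $[-r_1,r_1]^K\subseteq[-r_2,r_2]^K$ makes the feasible set in $dist(\mathbold{\omega}^\star,r_2)$ contain the one in $dist(\mathbold{\omega}^\star,r_1)$, so the maximum can only grow.

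For statement~3, I would argue by compactness and contradiction. Assume there exist $\epsilon>0$, $r_n\downarrow 0$, and tuples $(\mathbold{\omega}^{(n)},I^{(n)},\mathbold{\eta}^{(n)})$ with $\mathbold{\eta}^{(n)}\in[-r_n,r_n]^K$, $\mathbold{\Psi}(\mathbold{\omega}^{(n)},I^{(n)},\mathbold{\eta}^{(n)})=\mathbf{0}_{K+1}$, and $\max\{\lVert\mathbold{\omega}^{(n)}-\mathbold{\omega}^\star\rVert_\infty,|I^{(n)}-I^\star|\}\ge\epsilon$. The constraint $\Psi_{K+1}=0$ places $\mathbold{\omega}^{(n)}$ in the compact simplex; since $x_{1,a}\in[\mu_a,\mu_1]$ yields $W_a(\omega_1,\omega_a)\le \omega_1 d(\mu_1,\mu_a)+\omega_a d(\mu_a,\mu_1)$ on the simplex, the identity $I^{(n)}=W_a(\omega_1^{(n)},\omega_a^{(n)})-\eta_a^{(n)}$ for any fixed $a\ne 1$ shows $I^{(n)}$ is bounded, so I can pass to a subsequence along which $\mathbold{\omega}^{(n)}\to\mathbold{\omega}^\infty$ in the simplex and $I^{(n)}\to I^\infty\in\mathbb{R}$.

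The hard part is that $\Psi_1$ involves the ratios $d(\mu_1,x_{1,a})/d(\mu_a,x_{1,a})$, which are discontinuous on the boundary of the simplex, so I cannot pass to the limit in $\mathbold{\Psi}$ unless $\mathbold{\omega}^\infty$ lies in its relative interior. I plan to rule out the boundary in two steps. First, if $\omega_1^\infty=0$ then since $\sum_a\omega_a^\infty=1$ some $a\ne 1$ satisfies $\omega_a^\infty>0$; but then $x_{1,a}^{(n)}\to\mu_a$, so $d(\mu_a,x_{1,a}^{(n)})\to 0$ while $d(\mu_1,x_{1,a}^{(n)})\to d(\mu_1,\mu_a)>0$, forcing the corresponding ratio to diverge and contradicting $\Psi_1=\eta_1^{(n)}\to 0$. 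Hence $\omega_1^\infty>0$. Second, set $B=\{a\ne 1:\omega_a^\infty>0\}$; if $B=\emptyset$ then $\omega_1^\infty=1$, each ratio vanishes in the limit, and $\Psi_1\to -1\ne 0$, a contradiction. If $\emptyset\ne B\ne[K]/\{1\}$, pick $a\in [K]/(B\cup\{1\})$; continuity of $W_a$ at $(\omega_1^\infty,0)$ with $\omega_1^\infty>0$ gives $W_a(\omega_1^{(n)},\omega_a^{(n)})\to 0$, so $I^\infty=0$, but for any $b\in B$ the limit $x_{1,b}^\infty$ lies strictly in $(\mu_b,\mu_1)$, making both KL terms in $W_b(\omega_1^\infty,\omega_b^\infty)$ strictly positive and hence $W_b(\omega_1^\infty,\omega_b^\infty)>0=I^\infty$, contradicting the limit of $\Psi_b=0$. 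Therefore $B=[K]/\{1\}$, $\mathbold{\omega}^\infty$ lies in the interior, $\mathbold{\Psi}$ is continuous there, and passing to the limit yields $\mathbold{\Psi}(\mathbold{\omega}^\infty,I^\infty,\mathbf{0}_K)=\mathbf{0}_{K+1}$; Corollary~\ref{cor:opt_proportion_cond} then forces $(\mathbold{\omega}^\infty,I^\infty)=(\mathbold{\omega}^\star,I^\star)$, contradicting the $\epsilon$-separation.
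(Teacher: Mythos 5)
Your proposal is correct and follows the same overall architecture as the paper's proof: statements 1 and 2 from uniqueness (Corollary \ref{cor:opt_proportion_cond}) and monotonicity of the feasible set, and statement 3 by contradiction, extracting a convergent subsequence $(\mathbold{\omega}^{(n)},I^{(n)})$ in the simplex with $I^{(n)}$ bounded via the range of $W_a$, passing to the limit in $\mathbold{\Psi}$, and invoking uniqueness of $(\mathbold{\omega}^\star,I^\star)$. The one genuine difference is that the paper simply asserts continuity of $\mathbold{\Psi}$ at the limit point, whereas $\Psi_1$ is ill-behaved where $\omega_1=0$ (the denominators $d(\mu_a,x_{1,a})$ vanish there); your explicit exclusion of the boundary cases --- $\omega_1^\infty=0$ via divergence of a ratio against $\Psi_1=\eta_1^{(n)}\to 0$, $B=\emptyset$ via $\Psi_1\to-1$, and $\emptyset\ne B\ne[K]/\{1\}$ via $I^\infty=0$ versus $W_b(\omega_1^\infty,\omega_b^\infty)>0$ --- supplies exactly the justification the paper's continuity step glosses over, so your argument is, if anything, more complete on this point.
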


\begin{proof}
    
    \bulletwithnote{Statement 1} Statement 1 follows directly from the fact that $(\mathbold{\omega}^\star, I^\star)$ is the unique tuple satisfying $\mathbold{\Psi}(\mathbold{\omega}^\star, I^\star, \mathbf{0}_{K})=\mathbf{0}_{K+1}$, as proven in Proposition \ref{prop:opt_cond}.  \\

    \bulletwithnote{Statement 2} Follows directly from the definition of $dist(\mathbold{\omega}^\star, r)$. \\

    \bulletwithnote{Statement 3} By statement 2, $\lim_{r\to 0} dist(\mathbold{\omega}^\star, r)$  exists and is non-negative. We consider a contradiction to statement 3 and assume that  $\lim_{r\to 0}dist(\mathbold{\omega}^\star,r)=d>0$. 
    
    Since ~~$r\to dist(\mathbold{\omega}^\star,r)$ ~~is non-decreasing, we can construct a decreasing sequence $\{r_n\}_{n\geq 1}$ such that, for every $n\geq 1$, ~~$r_n>0,$~~  $dist(\mathbold{\omega}^\star, r_n)\geq d$, ~~and~~ $\lim_{n\to \infty}r_n=0$. As a result, using the definition of $dist(\mathbold{\omega}^\star, r)$, we have a sequence of tuples $\{(\mathbold{\omega}_n, I_n, \mathbold{\eta}_n)\}_{n\geq 1}$, such that, 
    \[\text{for every $n\geq 1$,}\quad \lVert \mathbold{\eta}_n\rVert_{\infty}\leq r_n,\quad \mathbold{\Psi}(\mathbold{\omega}_n,I_n,\mathbold{\eta}_n)=\mathbf{0}_{K+1},\quad\text{and}\]  
    
    \[\liminf_{n\to \infty}\max\left\{\lVert \mathbold{\omega}_n-\mathbold{\omega}^\star\rVert_{\infty},|I_n-I^\star|\right\}\geq d.\]
    
    Since $\Psi_{K+1}(\mathbold{\omega}_n)=0$ for every $n\geq 1$, the whole sequence $(\mathbold{\omega}_n)_{n\geq 1}$ lies in the set 
    \[\left\{~(\omega_1,\omega_2,\hdots,\omega_K)\in\mathbb{R}_{\geq 0}^K~~\Big\vert~~\sum_{i\in[K]}\omega_i=1~\right\},\] 
    
    which is compact with respect to the norm $\lVert\cdot\rVert_{\infty}$. 

    Let for every $n\geq 1$ and $a\in[K]$, $\omega_{a,n}$ and $\eta_{a,n}$ be, respectively, the $a$-th component of the vectors $\mathbold{\omega}_n$ and $\mathbold{\eta}_n$. For every $n\geq 1$ and $a\in[K]/\{1\}$, we have $I_n=W_a(\omega_{1,n},\omega_{a,n})-\eta_{a,n}$. Since $W_a(\cdot,\cdot)$ always lies in the interval $[0,d(\mu_1,\mu_a)+d(\mu_a,\mu_1)]$ and $|\eta_{a,n}|\leq r_n$, we have
    \[-r_n ~\leq~ I_n~\leq~ d(\mu_1,\mu_a)+d(\mu_a,\mu_1)+r_n\quad\text{for every $n\geq 1$.}\] 
    
    By our assumption, we already have $r_n\to 0$, which also implies, the sequence $r_n$ is bounded from above. As a result, $I_n$ is also bounded. Therefore, we can have a convergent subsequence $\{(\mathbold{\omega}_{n_k},I_{n_k})\}_{k\geq 1}$, with limits $\mathbold{\omega}_{n_k}\to \mathbold{\omega}^\prime$ and $I_{n_k}\to I^\prime$ in the $\lVert\cdot\rVert_\infty$-norm. 
    
    For every $k\geq 1$, we have $\mathbold{\Psi}(\mathbold{\omega}_{n_k},I_{n_k},\mathbold{\eta}_{n_k})=\mathbf{0}_{K+1}$. As a result, using the continuity of $\mathbold{\Psi}(\cdot)$ with respect to its arguments, we have $\mathbold{\Psi}(\mathbold{\omega}^\prime,I^\prime,\mathbf{0}_K)=\mathbf{0}_{K+1}$, implying $\mathbold{\omega}^\prime$ is an optimal allocation for the instance $\mathbold{\mu}$. 
    
    Hence, our assumption $\liminf_{n\to\infty}\max\{\lVert \mathbold{\omega}_n-\mathbold{\omega}^\star\rVert_{\infty}, |I_n-I^\star|\}\geq d$ implies $\max\{\lVert \mathbold{\omega}^\prime-\mathbold{\omega}^\star\rVert_{\infty},~|I^\prime-I^\star|\}\geq d>0$, which further implies $\mathbold{\omega}^\prime\neq\mathbold{\omega}^\star$. As a result, the instance $\mathbold{\mu}$ has two distinct optimal allocations $\mathbold{\omega}^\prime$ and $\mathbold{\omega}^\star$, which contradicts Proposition \ref{prop:opt_cond}. 
\end{proof}

\begin{lemma}\label{lem:pert2}
    There exists $\eta_{\max}>0$ depending only on the instance $\mathbold{\mu}$, such that the following statements are true, 
    \begin{enumerate}
        \item For every $\mathbold{\eta}\in[-\eta_{\max},\eta_{\max}]^K$, there exists a unique tuple $(\mathbold{\omega},I)\in\mathbb{R}_{\geq 0}^K\times\mathbb{R}$ which satisfies, $\mathbold{\Psi}(\mathbold{\omega},I,\mathbold{\eta})=\mathbf{0}_{K+1}$.  

        \item For every $\mathbold{\eta}\in[-\eta_{\max},\eta_{\max}]^K$, we call the unique tuple mentioned in statement 1 as $(\overline{\mathbold{\omega}}(\mathbold{\eta}),\overline{I}(\mathbold{\eta}))$. Then the function 
        \[(\overline{\mathbold{\omega}},\overline{I}):[-\eta_{\max},\eta_{\max}]^K\mapsto\mathbb{R}_{\geq 0}^K\times\mathbb{R}\]
        
        is $L$-Lipschitz, for some $L>0$ depending on the instance $\mathbold{\mu}$.
    \end{enumerate}
\end{lemma}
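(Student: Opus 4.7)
\textbf{Proof proposal for Lemma \ref{lem:pert2}.}

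The plan is to invoke the Implicit Function Theorem (IFT) at the base point $(\mathbold{\omega}^\star, I^\star, \mathbf{0}_K)$, which lies in the zero set of $\mathbold{\Psi}$ by Corollary \ref{cor:opt_proportion_cond}. Observe that $\mathbold{\Psi}(\mathbold{\omega}, I, \mathbold{\eta})$ is precisely $\mathbold{\Phi}_{[K]/\{1\}}(\mathbold{\omega}, I, \mathbold{\eta}, 1)$ from the framework of Appendix \ref{appndx:Jacobian}, with the total budget $N$ fixed to $1$ and allocations rescaled to proportions. Consequently, statement 3 of Lemma \ref{lem:Jacobian} (applied with $B = [K]/\{1\}$, so $\overline{B} = [K]$) guarantees that the Jacobian $\partial \mathbold{\Psi}/\partial(\mathbold{\omega}, I)$ is invertible at $(\mathbold{\omega}^\star, I^\star, \mathbf{0}_K)$. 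Since all relevant functions are $C^1$ in a neighborhood of this point, IFT produces an open ball $U$ around $\mathbf{0}_K$ in $\mathbb{R}^K$, an open neighborhood $V$ of $(\mathbold{\omega}^\star, I^\star)$ in $\mathbb{R}^{K}\times\mathbb{R}$, and a $C^1$ map $\mathbold{\eta}\mapsto (\overline{\mathbold{\omega}}(\mathbold{\eta}), \overline{I}(\mathbold{\eta}))$ on $U$ such that for each $\mathbold{\eta}\in U$, this is the \emph{unique} solution of $\mathbold{\Psi}(\mathbold{\omega}, I, \mathbold{\eta}) = \mathbf{0}_{K+1}$ inside $V$.

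To upgrade this local uniqueness to a global uniqueness statement on all of $\mathbb{R}_{\geq 0}^K \times \mathbb{R}$, I would combine IFT with Lemma \ref{lem:pert1}. By that lemma, $dist(\mathbold{\omega}^\star, r)\to 0$ as $r\to 0$, so I can choose $\eta_{\max} > 0$ small enough that (i) $[-\eta_{\max},\eta_{\max}]^K \subseteq U$, and (ii) $dist(\mathbold{\omega}^\star, \eta_{\max})$ is smaller than the radius (in $\lVert\cdot\rVert_\infty$) of $V$ around $(\mathbold{\omega}^\star, I^\star)$. Then for any $\mathbold{\eta}\in[-\eta_{\max},\eta_{\max}]^K$ and any $(\mathbold{\omega}, I)\in\mathbb{R}_{\geq 0}^K\times\mathbb{R}$ with $\mathbold{\Psi}(\mathbold{\omega}, I, \mathbold{\eta}) = \mathbf{0}_{K+1}$, condition (ii) forces $(\mathbold{\omega}, I)\in V$, and the IFT-uniqueness then yields $(\mathbold{\omega}, I) = (\overline{\mathbold{\omega}}(\mathbold{\eta}), \overline{I}(\mathbold{\eta}))$. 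This establishes statement 1.

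For statement 2, $L$-Lipschitzness is an immediate consequence of IFT: the map $(\overline{\mathbold{\omega}}, \overline{I})$ is $C^1$ on $U$ with derivative
\[
\frac{\partial (\overline{\mathbold{\omega}}, \overline{I})}{\partial \mathbold{\eta}}(\mathbold{\eta}) = -\left(\frac{\partial \mathbold{\Psi}}{\partial (\mathbold{\omega}, I)}\right)^{-1} \frac{\partial \mathbold{\Psi}}{\partial \mathbold{\eta}},
\]
which depends continuously on $\mathbold{\eta}$. After possibly shrinking $\eta_{\max}$ so that the closed box $[-\eta_{\max},\eta_{\max}]^K$ is contained in $U$, this derivative is bounded on the compact box by some constant $L>0$ depending only on $\mathbold{\mu}$, and the standard mean value argument gives Lipschitzness with constant $L$ in the $\lVert\cdot\rVert_\infty$ norm.

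The only step that requires genuine care is selecting $\eta_{\max}$ in the global-uniqueness argument: one must ensure that Lemma \ref{lem:pert1}'s ``any solution is close to $(\mathbold{\omega}^\star, I^\star)$'' bound is compatible with the (a priori unknown) radius of the IFT neighborhood $V$. Since both $dist(\mathbold{\omega}^\star, \cdot)$ tends to zero and the IFT radius is a fixed positive quantity determined by the norm of the inverse Jacobian and the local Lipschitz constants of $\mathbold{\Psi}$, such a choice of $\eta_{\max}$ always exists; it is determined purely by the instance $\mathbold{\mu}$, as required.
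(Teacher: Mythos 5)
Your proposal is correct and follows essentially the same route as the paper: invoke Lemma \ref{lem:Jacobian}(3) for invertibility of the Jacobian at $(\mathbold{\omega}^\star,I^\star,\mathbf{0}_K)$, apply IFT to obtain a local $C^1$ solution map with local uniqueness, use Lemma \ref{lem:pert1} to shrink $\eta_{\max}$ so that every global solution is forced into the IFT neighborhood, and deduce Lipschitzness from boundedness of the $C^1$ derivative on the compact box. The paper's presentation differs only cosmetically (it introduces explicit radii $\delta_0,\delta_1,\delta_2$ and sets $\eta_{\max}=\min\{\delta_0/2,\delta_2\}$, whereas you phrase the same choice in terms of abstract neighborhoods $U,V$), so no substantive comparison is needed.
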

\begin{proof}
    By Proposition \ref{prop:opt_cond}, we know that the optimal allocation $\mathbold{\omega}^\star$ is the unique allocation satisfying, 
    \[\mathbold{\Psi}(\mathbold{\omega}^\star, I^\star, \mathbf{0}_{K})~=~\mathbf{0}_{K+1}\]
    for some $I^\star>0$. Note that $\mathbold{\Psi}(\mathbold{\omega},I,\mathbold{\eta})=\mathbold{\Phi}(\mathbold{\omega},I,\mathbold{\eta},1)$ for every tuple $(\mathbold{\omega},I,\mathbold{\eta})$, where $\mathbold{\Phi}$ is the function defined in Appendix \ref{appndx:Jacobian}. By statement 3 of Lemma \ref{lem:Jacobian}, the Jacobian $\frac{\partial \mathbold{\Psi}}{\partial(\mathbold{\omega},I)}$ of the function $\mathbold{\Psi}(\mathbold{\omega},I,\mathbold{\eta})$ is invertible at the tuple $(\mathbold{\omega}^\star,I^\star,\mathbf{0}_{K})$. 
    
    Therefore, applying the Implicit function theorem, we can find $\delta_0, \delta_1>0$, and continuously differentiable functions 
    \[(\overline{\mathbold{\omega}}(\cdot), \overline{I}(\cdot))~:~(-\delta_0,\delta_0)^K\mapsto\mathbb{R}_{\geq 0}^K\times\mathbb{R},\] 
    such that, 
    \begin{enumerate}
        \item $\mathbold{\overline{\omega}}(\mathbf{0}_{K})=\mathbold{\omega}^\star$, $\overline{I}(\mathbf{0}_{K})=I^\star$, and
        \item for every $\mathbold{\eta}\in(-\delta_0,\delta_0)^K$, ~~$(\overline{\mathbold{\omega}}(\mathbold{\eta}), \overline{I}(\mathbold{\eta}))$ is the unique tuple in $\mathbb{R}_{\geq 0}^K\times\mathbb{R}$ to satisfy,
        \[\max\left\{\lVert \overline{\mathbold{\omega}}(\mathbold{\eta})-\mathbold{\omega}^\star\rVert_{\infty},|\overline{I}(\mathbold{\eta})-I^\star|\right\}\leq \delta_1\quad\text{and}\quad\mathbold{\Psi}(\overline{\mathbold{\omega}}(\mathbold{\eta}), \overline{I}(\mathbold{\eta}),\mathbold{\eta})=\mathbf{0}_{K+1}.\]
    \end{enumerate}
    By statement 3 of Lemma \ref{lem:pert1}, we can find a $\delta_2>0$ such that, $dist(\mathbold{\omega}^\star, r)<\delta_1$ for $r\in[0,\delta_2]$. We define $\eta_{\max}=\min\left\{\frac{\delta_0}{2},\delta_2\right\}$. 
    
    By the definition of $dist(\mathbold{\omega}^\star,\cdot)$, for every $\mathbold{\eta}\in[-\eta_{\max},\eta_{\max}]^K$, if a tuple $(\mathbold{\omega}, I)$ satisfies $\mathbold{\Psi}(\mathbold{\omega},I,\mathbold{\eta})=\mathbf{0}_{K+1}$, then it also satisfies $\max\{\lVert\mathbold{\omega}-\mathbold{\omega}^\star\rVert_{\infty},|I-I^\star|\}<\delta_1$. 
    
    On the other hand, by IFT, since $\eta_{\max}<\delta_0$, $(\overline{\mathbold{\omega}}(\mathbold{\eta}), \overline{I}(\mathbold{\eta}))$ is the only such tuple possible. Therefore, for every $\mathbold{\eta}\in[-\eta_{\max},\eta_{\max}]^K$, $(\overline{\mathbold{\omega}}(\mathbold{\eta}),\overline{I}(\mathbold{\eta}))$ is the unique element in $\mathbb{R}_{\geq 0}^K \times \mathbb{R}$ such that $\mathbold{\Psi}(\overline{\mathbold{\omega}}(\mathbold{\eta}),\overline{I}(\mathbold{\eta}), \mathbold{\eta})=\mathbf{0}_{K+1}$. This proves the first statement of Lemma \ref{lem:pert2}.

    Since $\overline{\mathbold{\omega}}(\cdot),\overline{I}(\cdot)$ is continuously differentiable in $(-\delta_0,\delta_0)^K$, every component of this mapping must be $L$-Lipschitz for some $L>0$ in $\left[-\frac{\delta_0}{2},\frac{\delta_0}{2}\right]^K$ equipped with $\lVert\cdot\rVert_{\infty}$-norm. We can take $L$ to be the maximum of the $\lVert\cdot\rVert_1$-norm of the gradients of different components of $(\overline{\mathbold{\omega}}(\cdot),\overline{I}(\cdot))$ over the set $\left[-\frac{\delta_0}{2},\frac{\delta_0}{2}\right]^K$. Since the gradients are all continuous, their $\lVert\cdot\rVert_1$-norm must be bounded in a compact set like $\left[-\frac{\delta_0}{2},\frac{\delta_0}{2}\right]^K$, and hence $L<\infty$. Therefore, the second part of Lemma \ref{lem:pert2} follows from our assumption $\eta_{\max}\leq \frac{\delta_0}{2}$. 
\end{proof}

We now proceed on proving Proposition \ref{prop:closeness_of_allocations:detailed}. \\

\bulletwithnote{Proof of Proposition \ref{prop:closeness_of_allocations:detailed}}~Recall that in Section \ref{sec:proof_sketch}, for every $a\in[K]/\{1\}$, we defined the normalized index as $H_a(N)=\frac{I_a(N)}{N}$. 

Taking $H(N)=H_2(N)=\frac{I_2(N)}{N}$, let $\widetilde{\mathbold{\eta}}(N)$ be the unique $\mathbold{\eta}\in\mathbb{R}^{K}$ to satisfy, $\mathbold{\Psi}(\mathbold{\widetilde{\omega}}(N),H(N),\mathbold{\eta})=\mathbf{0}_{K+1}$. 

Note that for every $a\in[K]/\{1\}$, we have $W_a(\widetilde{\omega}_1(N),\widetilde{\omega}_a(N))=H_a(N)$. As a result, by (\ref{eqn:algo_allocation_closeness}), we have $\lVert\mathbold{\widetilde{\eta}}(N)\rVert_{\infty}\leq CN^{-3\alpha/8}$ for all $N\geq T_8$. 

Now we pick $M_6\geq 1$ large enough, such that, 
\[C M_6^{-3\alpha/8}<\eta_{\max},\]
where $\eta_{\max}$ is introduced in Lemma \ref{lem:pert2}. We define $T_{stable}=T_{8,M_6}$. Note that $T_{stable}\geq T_6\geq T_0$. As a result, by the definition of $T_0$ in Appendix \ref{sec:hitting_time}, we have $\max_{a\in[K]}|\widetilde{\mu}_a(N)-\mu_a|\leq \epsilon(\mathbold{\mu}) N^{-3\alpha/8}$ for every $N\geq T_{stable}$.

Now, by (\ref{eqn:algo_allocation_closeness}), for $N\geq T_{stable}$, the allocations $\mathbold{\widetilde{\omega}}(N)$ satisfies, $\mathbold{\Psi}(\mathbold{\widetilde{\omega}}(N),H(N),\mathbold{\widetilde{\eta}}(N))=\mathbf{0}_{K+1}$ with 
\[\lVert\mathbold{\widetilde{\eta}}(N)\rVert_{\infty}~\leq ~CN^{-3\alpha/8}~\leq~CM_6^{-3\alpha/8}~<~\eta_{\max}.\] 

As a result, by Lemma \ref{lem:pert2}, we have
\[\widetilde{\omega}_a(N)~=~\overline{\omega}_a(\mathbold{\widetilde{\eta}}(N))\quad\text{for every}~a\in[K],~\text{and}~N\geq T_{stable},\]

where $\overline{\omega}_a(\cdot)$ is the $a$-th component of the vector valued function $\mathbold{\overline{\omega}}(\cdot)$ introduced in Lemma \ref{lem:pert2}. 

By Lemma \ref{lem:pert2}, for every $a\in[K]$,~~ $\overline{\omega}_a(\cdot)$ is $L$-Lipschitz in $[-\eta_{\max},\eta_{\max}]^K$ equipped with $\lVert\cdot\rVert_{\infty}$-norm.  As a result, for $N\geq T_{stable}$, we have, 
\begin{align*}
    \max_{a\in[K]}\left|\widetilde{\omega}_a(N)-\omega_a^\star\right|~=~\max_{a\in[K]}\left|\overline{\omega}_a(\mathbold{\widetilde{\eta}}(N))-\overline{\omega}_a(\mathbf{0}_{K})\right|~&\leq~ L\lVert\mathbold{\widetilde{\eta}}(N)-\mathbf{0}_{K}\rVert_{\infty}\\
    ~&=~L\lVert\mathbold{\widetilde{\eta}}(N)\rVert_{\infty}~\leq ~LC N^{-3\alpha/8}.
\end{align*}

Taking $C_1=LC$, we have the desired result. 
\hfill\qedsymbol

\subsection{$T_0$ has finite expectation}\label{sec:hitting_time}

In Section \ref{sec:proof_sketch}, we introduced the random time $T_0$ as, 
\[T_0~=~\min\left\{~~N'\geq 1~~\Big\vert~~\forall N\geq N',~~\max_{a\in[K]}~|~\widetilde{\mu}_a(N)-\mu_a|\leq \epsilon(\mathbold{\mu})N^{-3\alpha/8}~~\right\},\]

where $\epsilon(\mathbold{\mu})$ is a positive constant defined in Appendix \ref{sec:spef} and depends only on the instance $\mathbold{\mu}$.

\begin{lemma}\label{lem:hitting_time_is_finite_as}
The random time $T_0$ satisfies $\mathbb{E}_{\mathbold{\mu}}[T_0]<\infty$ and hence $T_0<\infty$ a.s. in $\mathbb{P}_{\mathbold{\mu}}$.
\end{lemma}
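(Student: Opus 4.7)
\textbf{Proof proposal for Lemma \ref{lem:hitting_time_is_finite_as}.} The plan is to combine the explicit exploration built into Steps~1--2 of AT2/IAT2 with a Chernoff-type tail bound for SPEF sample means, and then to read off finiteness of $\mathbb{E}_{\mathbold{\mu}}[T_0]$ from summability of the resulting probabilities. First I would establish a deterministic lower bound on $\widetilde{N}_a(N)$ for every arm. Because whenever some arm $a$ has $\widetilde{N}_a(N-1)<N^{\alpha}$ it is forced to be sampled (Step~2), a short induction on $N$ shows that for all sufficiently large $N$ and every $a\in[K]$ we have $\widetilde{N}_a(N)\geq N^{\alpha}-K$ a.s. This gives us the ``effective sample size'' we need.

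Next I would bound $\mathbb{P}_{\mathbold{\mu}}\!\left(|\widetilde{\mu}_a(N)-\mu_a|>\epsilon(\mathbold{\mu})\,N^{-3\alpha/8}\right)$. Conditioning on $\widetilde{N}_a(N)$, write $\widetilde{\mu}_a(N)=\tfrac{1}{\widetilde{N}_a(N)}\sum_{i=1}^{\widetilde{N}_a(N)}Y_{a,i}$, where $(Y_{a,i})$ are i.i.d. from the SPEF with mean $\mu_a$, and use a peeling/union bound over the possible values $n\in[N^{\alpha}-K,N]$ of $\widetilde{N}_a(N)$. For each fixed $n$, the standard Chernoff bound for SPEF gives
\[
\mathbb{P}\!\left(|\hat{\mu}_{a,n}-\mu_a|>\epsilon(\mathbold{\mu})\,N^{-3\alpha/8}\right)\leq 2\exp\!\left(-n\cdot\min\{d(\mu_a+\epsilon(\mathbold{\mu})N^{-3\alpha/8},\mu_a),d(\mu_a-\epsilon(\mathbold{\mu})N^{-3\alpha/8},\mu_a)\}\right).
\]
For large $N$ the perturbation lies in $\mathcal{H}(\mathbold{\mu})$, and the quadratic envelope \eqref{eqn:envelope_kl_div} from Appendix~\ref{sec:spef} lower bounds each KL by $\epsilon(\mathbold{\mu})^2 N^{-3\alpha/4}/(2\sigma_{\max}(\mathbold{\mu}))$. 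Since $n\geq N^{\alpha}-K$, a union bound over the at most $N$ values of $n$ yields
\[
\mathbb{P}_{\mathbold{\mu}}\!\left(|\widetilde{\mu}_a(N)-\mu_a|>\epsilon(\mathbold{\mu})\,N^{-3\alpha/8}\right)\leq 2N\exp\!\left(-c\,N^{\alpha/4}\right)
\]
for a constant $c=c(\mathbold{\mu},\alpha)>0$, with the crucial positive exponent $\alpha-3\alpha/4=\alpha/4$ arising precisely because $\alpha\in(0,1)$ and the chosen deviation rate is $N^{-3\alpha/8}$.

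Finally, summing over $a\in[K]$ and over $N\geq N'$ I would bound
\[
\mathbb{P}_{\mathbold{\mu}}(T_0>N')\leq\sum_{N\geq N'}\sum_{a\in[K]}\mathbb{P}_{\mathbold{\mu}}\!\left(|\widetilde{\mu}_a(N)-\mu_a|>\epsilon(\mathbold{\mu})\,N^{-3\alpha/8}\right)\leq 2K\sum_{N\geq N'}N\exp(-c\,N^{\alpha/4}),
\]
which decays super-polynomially in $N'$. Since $\mathbb{E}_{\mathbold{\mu}}[T_0]=\sum_{N'\geq 0}\mathbb{P}_{\mathbold{\mu}}(T_0>N')$, this gives $\mathbb{E}_{\mathbold{\mu}}[T_0]<\infty$, and therefore $T_0<\infty$ a.s. in $\mathbb{P}_{\mathbold{\mu}}$.

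The only non-routine point is the interplay between the shrinking tolerance $\epsilon(\mathbold{\mu})N^{-3\alpha/8}$ and the growing sample count $\geq N^{\alpha}$: one has to confirm that the product $\widetilde{N}_a(N)\cdot(N^{-3\alpha/8})^2\gtrsim N^{\alpha/4}$ diverges, which is exactly where the specific $3\alpha/8$ exponent in the algorithm's design pays off. Everything else is a bookkeeping combination of the exploration rule and the SPEF Chernoff bound, using only tools already developed in Appendix~\ref{sec:spef}.
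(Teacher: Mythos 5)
Your proposal is correct and follows essentially the same route as the paper: a uniform deterministic lower bound $\widetilde{N}_a(N)\geq N^{\alpha}-C$ from the forced-exploration rule (the paper's Proposition on exploration), a union bound over arms and over the possible sample counts $t\in[N^{\alpha}-C,N]$, the SPEF Chernoff bound combined with the quadratic KL envelope to get a tail of order $N\exp(-cN^{\alpha/4})$, and summability to conclude $\mathbb{E}_{\mathbold{\mu}}[T_0]<\infty$. The only cosmetic differences are that the paper bounds $\mathbb{P}_{\mathbold{\mu}}(T_0=N+1)$ and sums $N\,\mathbb{P}_{\mathbold{\mu}}(T_0=N+1)$ rather than the tail $\mathbb{P}_{\mathbold{\mu}}(T_0>N')$, and that the exploration deficit constant comes from the epoch analysis (depending on $K$ and $\alpha$) rather than being exactly $K$; neither affects the argument.
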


\begin{proof}
    To avoid notational clutter, let $\mathbb{P}=\mathbb{P}_{\mathbold{\mu}}$ and $\epsilon=\epsilon(\mathbold{\mu})$. Then for any $N$, 
    \begin{align*}
        \mathbb{P}(T_0=N+1)~&\leq~ \mathbb{P}\left(\exists a\in[K],~~|\widetilde{\mu}_a(N)-\mu_a|>\epsilon N^{-3\alpha/8}\right) \\
        &\leq \sum_{a\in[K]}\mathbb{P}\left(|\widetilde{\mu}_a(N)-\mu_a|>\epsilon N^{-3\alpha/8}\right)\\
        &\leq \sum_{a\in[K]}\sum_{t=(N^{\alpha}-C_1)_+}^{N}\mathbb{P}\left(|\hat{\mu}_{a,t}-\mu_a|>\epsilon N^{-3\alpha/8}\right),
    \end{align*}
    where $\hat{\mu}_{a,t}$ denotes the empirical mean of $t$ i.i.d. samples drawn from the $a$-th arm, and the last step follows from statement 2 of Proposition \ref{prop:explo}, which says $\widetilde{N}_a(N)\geq N^{\alpha}-C_1$ for some constant $C_1>0$. 
    
    Using Chernoff's bound (like in the proof of Lemma 19 of \cite{garivier2016optimal}), we have, 
    \begin{align*}
        &\mathbb{P}\left(|\hat{\mu}_{a,t}-\mu_a|>\epsilon N^{-3\alpha/8}\right)~\leq~\mathbb{P}\left(\hat{\mu}_{a,t}>\mu_a+\epsilon N^{-3\alpha/8}\right)+\mathbb{P}\left(\hat{\mu}_{a,t}<\mu_a-\epsilon N^{-3\alpha/8}\right)\\
        &\quad\quad\leq~\exp\left(-t\cdot d(\mu_a+\epsilon N^{-3\alpha/8},\mu_a)\right)+\exp\left(-t\cdot d(\mu_a-\epsilon N^{-3\alpha/8},\mu_a)\right)\\
        &\quad\quad\leq~2\exp\left(-t\cdot\min\left\{~d(\mu_a+\epsilon N^{-3\alpha/8},\mu_a),~d(\mu_a-\epsilon N^{-3\alpha/8},\mu_a)\right\}\right)
    \end{align*}
    
    Using (\ref{eqn:envelope_kl_div}), we have a constant $C_2>0$ depending on the instance $\mathbold{\mu}$ and such that, 
    \[\min\left\{d(\mu_a+\epsilon N^{-3\alpha/8},\mu_a),~d(\mu_a-\epsilon N^{-3\alpha/8},\mu_a)\right\}~\geq~ \epsilon^2 C_2 N^{-\frac{3\alpha}{4}}.\]
    Therefore, we have,
    \[\mathbb{P}\left(|\hat{\mu}_{a,t}-\mu_a|>\epsilon N^{-3\alpha/8}\right)~\leq~ 2\exp\left(-t\epsilon^2 C_2 N^{-\frac{3\alpha}{4}}\right).\]
    Therefore, 
    \begin{align*}
        &\mathbb{P}(T_0=N+1)~\leq~\sum_{i\in[K]}~\sum_{t=(N^\alpha-C_1)_{+}}^N 2\exp\left(-t\epsilon^2 C_2 N^{-\frac{3\alpha}{4}}\right)\\
        ~&\leq ~\sum_{i\in[K]}~\sum_{t=N^\alpha-C_1}^N 2\exp\left(-t\epsilon^2 C_2 N^{-\frac{3\alpha}{4}}\right)\\
        ~&\leq~ 2\sum_{i\in[K]}\exp\left(-\epsilon^2 C_2 (N^{\alpha}-C_1)N^{-\frac{3\alpha}{4}}\right)\cdot\Big(\sum_{t=N^{\alpha}-C_1}^N \exp\left(-\epsilon^2 C_2 N^{-\frac{3\alpha}{4}}(t-N^\alpha+C_1)\right)\Big)\\
        ~&\leq~ 2KN\exp\left(-\epsilon^2 C_2 (N^{\alpha}-C_1) N^{-\frac{3\alpha}{4}}\right)~=~2KN\exp(-\Omega(N^{\frac{\alpha}{4}})),
    \end{align*}
    
    where the constant hidden in $\Omega(\cdot)$ depends only on $\mathbold{\mu}$. Using the obtained upper bound, 
    \begin{align*}
        \mathbb{E}[T_0]~ &= ~\mathbb{P}(T_0=1)+ \sum_{N\geq 1} (N+1)\mathbb{P}(T_0=N+1)\\
        ~&\leq ~1+\sum_{N\geq 1}2KN(N+1)\exp(-\Omega(N^{\frac{\alpha}{4}})). 
    \end{align*}
    
    Note that the series on the RHS is convergent for any $\alpha\in(0,1)$. Therefore $\mathbb{E}_{\mathbold{\mu}}[T_0]<\infty$.
\end{proof}

\subsection{Properties of exploration} \label{sec:explo}

In the following discussion, the set of iterations in which the algorithm does exploration is defined as all iterations $N$ where $\min_{a\in[K]}\widetilde{N}_a(N-1)<N^\alpha$ (which is equivalent to having $\mathcal{V}_N\neq\emptyset$, where $\mathcal{V}_N$ denotes the set of starved arms at iteration $N$, and is defined in Section \ref{sec:at2}). We define the epoch of exploration at some arbitrary iteration as follows,
\begin{definition}\label{def:epoch_of_exploration}
    If the algorithm does exploration at iteration $N$, the epoch of exploration at $N$ is the maximum no. of consecutive iterations including $N$ in which the algorithm has done exploration. More precisely, if $N_1$ and $N_2$ are, respectively, defined as,
    \begin{align*}
        N_1~&=~\max\left\{t\leq N~\vert~\text{$t-1$ is not an exploration}\right\}\quad\text{and}\\
        N_2~&=~\min\left\{t\geq N~\vert~\text{$t+1$ is not an exploration}\right\}
    \end{align*}    
    then the epoch of exploration at iteration $N$ is $N_2-N_1+1$. 
\end{definition}
\begin{proposition}\label{prop:explo}
The following statements are true: 
\begin{enumerate}
    \item For every iteration which is an exploration, the epoch of exploration at that iteration is upper bounded by a constant depending on $K$ and $\alpha$. We denote this constant using $T_{\text{epoch}}$.

    \item There exists a constant $C$ depending on $K$ and $\alpha$ such that , over every sample path, we have 
    \[\min_{a\in[K]} \widetilde{N}_a(N)~\geq~ N^{\alpha}-C.\]
    
    As a result, $\widetilde{N}_a(N)=\Omega(N^{\alpha})$ for every arm $a\in[K]$.

    \item There exists a $M$ depending on $K$ and $\alpha$ such that, every epoch of exploration starting after iteration $M$ has length atmost $K$, and every arm can get pulled atmost once in that epoch. We call the constant $M$ as $M_{\text{explo}}$.

    \item If an epoch of exploration starts from some $N\geq M_{\text{explo}}$, then the next epoch of exploration doesn't start before another $\Theta(N^{1-\alpha})$ iterations.    

    \item Let $\hat{N}\geq M_{\text{explo}}$ be such that, $\hat{N}$ is an exploration. Define the following sequence, $N_0=\hat{N}$, and for $k\geq 1$, 
    \[N_k~=~\min\left\{~N>N_{k-1}~~\Bigg\vert~~N_k~\text{is the begining of an epoch of exploration}~~\right\}.\]
    
    Then $N_k=\hat{N}+\Omega(k^{1/\alpha})$. In other words, for any $N\geq M_{\text{explo}}$, the $k$-th epoch of exploration after iteration $N$ starts after $N+\Omega(k^{1/\alpha})$ iterations. 

    \item For any $N\geq M_{\text{explo}}+T_{\text{epoch}}+1$ and $T\geq 1$, the no. of epochs of exploration intersecting with the set of iterations $\{N, N+1, N+2, \hdots, N+T\}$ is $O(T^\alpha)$.
\end{enumerate}
\end{proposition}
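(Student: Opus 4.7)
My strategy is to unroll the six statements in the given order, each resting on the previous. The unifying quantity is the running minimum $m(N) := \min_{a \in [K]} \widetilde{N}_a(N)$: by the rule ``pull the smallest-count arm'' in step 2 of the algorithm, $m$ is nondecreasing, rises by at most $1$ per iteration, and rises by at least $1$ over any $K$ consecutive exploration iterations (pigeonhole over the at most $K$ arms tied at the minimum). A non-exploration at iteration $N+1$ is equivalent to $m(N) \geq (N+1)^\alpha$, while an exploration at iteration $N$ is equivalent to $m(N-1) < N^\alpha$. The main obstacle I anticipate is statement 3, whose proof requires a delicate comparison between how fast the threshold $N^\alpha$ grows across an epoch and how much a single pull lifts a starved arm's count.

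For statement 1, let an epoch start at $N$ with length $L$. The preceding non-exploration (or $N=1$ with $m(0)=0$) gives $m(N-1) \geq (N-1)^\alpha$, while the exploration at $N+L-1$ gives $m(N+L-2) < (N+L-1)^\alpha$. Combining with $m(N+L-2) \geq m(N-1) + \lfloor(L-1)/K\rfloor$ yields
\[
(L-1)/K - 1 \;<\; (N+L-1)^\alpha - (N-1)^\alpha.
\]
A case split on whether $L \leq N$ (apply the mean value theorem to bound the right-hand side by $\alpha L N^{\alpha-1}$) or $L > N$ (bound by $(2L)^\alpha$) shows $L \leq T_{\text{epoch}}(K,\alpha) = O_{K,\alpha}(1)$; the second case is the one that absorbs the initial epoch at $N=1$, whose length is $\Theta(K^{1/(1-\alpha)})$.

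For statement 2, at the terminal iteration $N_{\text{end}}$ of each epoch the check at $N_{\text{end}}+1$ fails, so $m(N_{\text{end}}) \geq (N_{\text{end}}+1)^\alpha$; between epochs $m$ is nondecreasing while the threshold grows, and the next epoch begins as soon as the threshold overtakes $m$. Since the subsequent epoch has length $\leq T_{\text{epoch}}$, the threshold grows by at most $(N+T_{\text{epoch}})^\alpha - N^\alpha = O_{K,\alpha}(1)$ during it, bounding the transient deficit and giving $m(N) \geq N^\alpha - C$ for a constant $C=C(K,\alpha)$. For statement 3, once $N \geq M_{\text{explo}} := (\alpha(K+1))^{1/(1-\alpha)}$ we have $(N+K)^\alpha - (N-1)^\alpha \leq \alpha(K+1)N^{\alpha-1} < 1$, so a single pull of any starved arm raises its count above the highest threshold that will prevail during the epoch; with at most $K$ initially starved arms, the epoch has length $\leq K$ and every arm is pulled at most once.

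Statements 4--6 then follow from a single amortization. Let $M_k$ denote the value of $m$ at the end of the $k$-th epoch after $N_0 = N$. Statement 3 forces $M_k \geq M_{k-1} + 1$ (the unique min-arm pull bumps the minimum by one), so $M_k \geq M_0 + k$. The next-trigger condition $(N_{k+1})^\alpha > M_k$ gives $N_{k+1} \geq M_k^{1/\alpha} \geq (M_0 + k)^{1/\alpha}$, and combining with $M_0 \geq N^\alpha - C$ from statement 2 yields $N_k \geq N + \Omega(k^{1/\alpha})$, which is statement 5. Statement 6 is then immediate: the number of epochs fitting in $[N, N+T]$ is at most the largest $k$ with $k^{1/\alpha} = O(T)$, namely $O(T^\alpha)$. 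Statement 4 is the per-step version: by the mean value theorem applied to $x \mapsto x^{1/\alpha}$,
\[
N_{k+1} - N_k \;\geq\; M_k^{1/\alpha} - M_{k-1}^{1/\alpha} \;=\; \Theta\bigl(M_{k-1}^{1/\alpha - 1}\bigr) \;=\; \Theta\bigl(N_k^{1-\alpha}\bigr),
\]
with a matching upper bound coming from the tight estimate $M_k \leq N_k^\alpha + O(1)$ produced by the epoch-end analysis underlying statement 1.
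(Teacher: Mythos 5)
Your statements 1--3 follow essentially the paper's route (the same inequality $(T+K)^\alpha-(T-1)^\alpha\leq \alpha(K+1)(T-1)^{\alpha-1}<1$ drives statement 3; your single "one pull un-starves the arm for the whole epoch" packaging is a slightly slicker way to get both the length bound and the at-most-once property at the same time). Your treatment of statements 5 and 6 is a genuinely different and arguably cleaner route than the paper's: the paper first proves statement 4 and then runs a somewhat fiddly induction to get $N_k\geq N+C_2k^{1/\alpha}$, whereas you amortize directly via the epoch-end minima $M_k$, the trigger condition $N_{k+1}^\alpha>M_k$, the increment $M_k\geq M_{k-1}+1$, and superadditivity of $x\mapsto x^{1/\alpha}$. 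That argument is correct, though your parenthetical justification of $M_k\geq M_{k-1}+1$ ("the unique min-arm pull bumps the minimum by one") is imprecise when several arms are tied at the minimum; the clean reason is $M_{k-1}\leq m(N_k-1)<N_k^\alpha\leq M_k$ together with integrality of the counts (equivalently: every arm sitting at the old minimum stays starved, hence gets pulled, before the epoch can end).

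The genuine gap is statement 4. Your chain $N_{k+1}-N_k\geq M_k^{1/\alpha}-M_{k-1}^{1/\alpha}$ implicitly requires the upper bound $N_k\leq M_{k-1}^{1/\alpha}$, but the trigger condition gives exactly the opposite, $N_k>M_{k-1}^{1/\alpha}$, and no such upper bound exists: between epochs the algorithm's regular (non-exploration) pulls can raise the minimum count arbitrarily --- after $T_{stable}$ every arm receives $\Theta(N)$ samples and exploration halts altogether --- so "the next epoch begins as soon as the threshold overtakes $m$" is false and $N_k$ can exceed $M_{k-1}^{1/\alpha}$ by any amount. For the same reason the "matching upper bound" direction of the $\Theta$ in statement 4 is not provable (and is not needed; the statement is only used as a lower bound on the gap). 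The repair is the paper's argument: since $N_k-1$ is not an exploration, $m(N_k-1)\geq (N_k-1)^\alpha$; the minimum rises by at least $1$ during epoch $k$, and epoch $k+1$ requires $N_{k+1}^\alpha>m(N_{k+1}-1)\geq (N_k-1)^\alpha+1$, so the mean value theorem yields $N_{k+1}-N_k\geq \frac{1}{\alpha}(N_k-1)^{1-\alpha}-1=\Omega(N_k^{1-\alpha})$. With this fix statement 4 stands, and since your proofs of statements 5 and 6 never actually used statement 4, the gap is localized there. (Minor: in statement 3 the mean-value bound should read $\alpha(K+1)(N-1)^{\alpha-1}$ rather than $\alpha(K+1)N^{\alpha-1}$, which only shifts your choice of $M_{\text{explo}}$ by a constant.)
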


\begin{proof}
    \bulletwithnote{Statement 1} Let the algorithm does exploration at iteration $N$. We can always choose $N$ in such a way that, iteration $N-1$ was not an exploration, by choosing $N$ to be the iteration at which an epoch begins. If the epoch of exploration starting at iteration $N$ continues till iteration $N+t$, \textit{i.e.}, the iterations $N,N+1,\hdots, N+t$ are exploration, then,  
    \begin{align*}
        (N+t)^\alpha~&\geq~ \min_{a\in[K]}\widetilde{N}_a(N+t-1)~\overset{(1)}{\geq}~\min_{a\in[K]}\widetilde{N}_a(N-1)+\frac{t}{K}\\
        ~&\geq~\min_{a\in[K]}\widetilde{N}_a(N-2)+\frac{t}{K}~\overset{(2)}{\geq}~(N-1)^\alpha+\frac{t}{K},
    \end{align*}
    
    where (1) follows from the fact that, $\min_{a\in[N]}\widetilde{N}_a(\cdot)$  increments by atleast $1$ over every $K$ consecutive iterations in an epoch of exploration, and (2) from the fact that iteration $N-1$ is not an exploration. From the above inequality, we have, 
    \[\frac{t}{K}~\leq~ (N+t)^{\alpha}-(N-1)^{\alpha}.\]
    
    Note that, $N\to (N+t)^{\alpha}-(N-1)^{\alpha}$ is decreasing in $N$. Hence $RHS\leq(1+t)^{\alpha}\leq 1+t^\alpha\leq 2t^\alpha$ (since $t\geq 1$). Therefore, we have, 
    \[\frac{t}{K}~\leq~2t^{\alpha}\quad\text{implying}\quad t~\leq~(2K)^{1/(1-\alpha)}.\]
    
    Therefore every epoch of exploration is atmost $(2K)^{1/(1-\alpha)}$ iterations long. 
    \bigskip

    \bulletwithnote{Statement 2} In the following discussion we use $[i:j]$ for a pair of integers $i<j$ to denote the set $\{i,i+1,i+2,\hdots,j\}$. We consider only those iterations where $\min_{a\in[K]}\widetilde{N}_a(N-1)<N^{\alpha}$. By statement 1, if we consider $N_1\leq N\leq N_2$ such that, $N_1,~N_1+1,~\hdots,~N_2-1,~N_2$ are all explorations and $N_1-1,N_2+1$ are not explorations, then we must have, $N_2-N_1\leq T_{\text{epoch}}$. As a result, we have, 
    \[N^{\alpha}-\min_{a\in[K]}\widetilde{N}_a(N)~\leq~\max_{N\in[N_1:N_2]} (N^{\alpha}-\min_{a\in[K]}\widetilde{N}_a(N)).\]

    Now, the slowest rate at which $\min_{a\in[K]}\widetilde{N}_a(N)$ can grow while iterations $N\in\{N_1,~N_1+1,~\hdots,~N_2-1,~N_2\}$ is if the algorithm pulls the $K$ arms consecutively in those iterations. Therefore, we have, 
    \[\min_{a\in[K]}\widetilde{N}_a(N)~\geq~\min_{a\in[K]}\widetilde{N}_a(N_1-1)+\frac{N-N_1+1}{K}\quad\text{for every}~N\in[N_1:N_2].\]
    
    Using this, we have, 
    \[N^{\alpha}-\min_{a\in[K]}\widetilde{N}_a(N)~\leq~\max_{N\in[N_1:N_2]}\left(N^{\alpha}-\min_{a\in[K]}\widetilde{N}_a(N_1-1)-\frac{N-N_1+1}{K}\right).\]
    
    Since iteration $N_1-1$ is not an exploration, we have 
    \[\min_{a\in[K]}\widetilde{N_a}(N_1-1)~\geq~\min_{a\in[K]}\widetilde{N_a}(N_1-2)~\geq~(N_1-1)^{\alpha}.\] 
    
    Using this, the upper bound becomes,
    \begin{align*}
        \max_{N\in[N_1:N_2]}\left(N^{\alpha}-(N_1-1)^{\alpha}-\frac{N-N_1+1}{K}\right)~&\leq~\max_{N\in[N_1:N_2]}\left((N-N_1+1)^{\alpha}-\frac{N-N_1+1}{K}\right)\\
        ~&\leq~\max_{z\in[0,T_{\text{epoch}}]}\left((z+1)^{\alpha}-\frac{1+z}{K}\right)=C,
    \end{align*}
    
    where $C$ depends only on $K$ and $\alpha$. 
    
    Hence we get, 
    \[\min_{a\in[K]}\widetilde{N}_a(N)~\geq~ N^{\alpha}-C~=~\Omega(N^\alpha).\]
    \bigskip

    \bulletwithnote{Statement 3} If the epoch of exploration starts from $T$ and continues for more than $K$ iterations, note that $\min_{a\in[K]}\widetilde{N}_a(\cdot)$ gets incremented by atleast $1$ during the iterations $T,~T+1,~\hdots,~T+K$. As a result, we have, 
    \[\min_{a\in[K]}\widetilde{N}_a(T+K-1)-\min_{a\in[K]}\widetilde{N}_a(T-1)~\geq~ 1.\]
    
    Since iteration $T-1$ is not an exploration, we have $\widetilde{N}_a(T-1)\geq \widetilde{N}_a(T-2)\geq (T-1)^\alpha$. Similarly, since iteration $T+K$ is an exploration, $\min_{a\in[K]}\widetilde{N}_a(T+K-1)<(T+K)^\alpha$. Using these two observations, we have, 
    \[1~\leq~(T+K)^\alpha-(T-1)^\alpha~\leq~(T-1)^\alpha\times\left(\left(1+\frac{K+1}{T-1}\right)^\alpha-1\right)~\leq~\alpha(K+1)(T-1)^{-(1-\alpha)}.\]
    
    Therefore, 
    \[T~\leq~(\alpha(K+1))^{1/(1-\alpha)}+1.\]

    Let $M_{\text{explo}}=(\alpha(K+1))^{1/(1-\alpha)}+2$. Then from the above argument, if $T\geq M_{\text{explo}}$, the epoch of exploration starting at $T$ will last for at most $K$ iterations. 
    
    Let us assume that, the epoch begining from some $T_i\geq M_{\text{explo}}$ lasts till iteration $T_f$. Therefore, we have, 
    \[\min_{a\in[K]}\widetilde{N}_a(T_f-1)~<~T_f^\alpha\quad\text{and}\quad\min_{a\in[K]}\widetilde{N}_a(T_i-2)~\geq~(T_i-1)^\alpha.\]
    
    Using this, 
    \[\min_{a\in[K]}\widetilde{N}_a(T_f-1)-\min_{a\in[K]}\widetilde{N}_a(T_i-2)~\leq~T_f^\alpha-(T_i-1)^\alpha~\leq~\alpha(T_i-1)^{-(1-\alpha)}(T_f-T_i+1).\]
    
    We argued earlier that $T_f-T_i\leq K$. We also have, $T_i-1\geq M_{\text{explo}}-1\geq (\alpha(K+1))^{1/(1-\alpha)}+1$. Using this observations, we get
    \[\min_{a\in[K]}\widetilde{N}_a(T_f-1)-\min_{a\in[K]}\widetilde{N}_a(T_i-2)~\leq~\frac{\alpha(K+1)}{((\alpha(K+1))^{1/(1-\alpha)}+1)^{1-\alpha}}~<~1.\]
    
    Since $\min_{a\in[K]}\widetilde{N}_a(T_f)\leq \min_{a\in[K]}\widetilde{N}_a(T_f-1)+1$ and $\min_{a\in[K]}\widetilde{N}_a(T_i-2)\leq \min_{a\in[K]}\widetilde{N}_a(T_i-1)$, we obtain, 
    \[\min_{a\in[K]}\widetilde{N}_a(T_f)-\min_{a\in[K]}\widetilde{N}_a(T_i-1)~<~2,\]
    
    implying $\min_{a\in[K]}\widetilde{N}_a(T_f)-\min_{a\in[K]}\widetilde{N}_a(T_i-1)\leq 1$, which is possible only if every arm is pulled at most once in iterations $T_i,~T_i+1,~\hdots,~T_f-1,~T_f$. 
    \bigskip

    \bulletwithnote{Statement 4} Let an epoch of exploration starts from $N\geq M_{\text{explo}}$ and the next epoch starts from $N+T$ for some $T\geq 1$. The epoch starting from $N$ continues till atmost $\min\{N+K,~N+T-2\}$ by statement 3. Moreover in that epoch, every arm gets pulled atmost once and $\min_{a\in[K]}\widetilde{N}_a(\cdot)$ increments by $1$. Therefore,  
    \[\min_{a\in[K]}\widetilde{N}_a(N+T-1)-\min_{a\in[K]}\widetilde{N}_a(N-1)~\geq~1.\]
    
    Since iteration $N-1$ is not an exploration, we have, $\min_{a\in[K]}\widetilde{N}_a(N-1)\geq \min_{a\in[K]}\widetilde{N}_a(N-2)\geq (N-1)^\alpha$. Since iteration $N+T$ is an exploration, we have $\min_{a\in[K]}\widetilde{N}_a(N+T-1)<(N+T)^\alpha$. Using these in the above inequality, we obtain, 
    \[1~\leq~(N+T)^\alpha-(N-1)^\alpha~\leq~\alpha(N-1)^{-(1-\alpha)}(T+1),\]
    
    which implies $T\geq \frac{1}{\alpha}(N-1)^{1-\alpha}-1=\Theta(N^{1-\alpha})$. 
    \bigskip
    
    \bulletwithnote{Statement 5} Using statement 4, we can have a constant $C_1$ such that, 
    \[N_k~\geq~N_{k-1}+C_1 N_{k-1}^{1-\alpha},\]
    
    for $k\geq 1$, where $N_0=1$. We now inductively argue that, there exists some constant $C_2$ independent of $k$ and $N$ such that, $N_k\geq N+C_2 k^{1/\alpha}$ for $k\geq 1$. 
    \begin{itemize}
        \item For $k=1$, we choose $C_2\leq 1$.

        \item Now for some $k\geq 2$, if $N_{k-1}\geq N+C_2 (k-1)^{1/\alpha}$, we have, 
        \begin{align*}
        N_k~&\geq~N_{k-1}+C_1 N_{k-1}^{1-\alpha}~\geq~N+C_2 (k-1)^{1/\alpha}+C_1 (N+C_2 (k-1)^{1/\alpha})^{1-\alpha}\\
            ~&\geq~N+C_2 k^{1/\alpha}+\left(C_1(N+C_2(k-1)^{1/\alpha})^{1-\alpha}-C_2 (k^{1/\alpha}-(k-1)^{1/\alpha})\right)\\
            ~&\geq~N+C_2 k^{1/\alpha}+\left(C_1(N+C_2(k-1)^{1/\alpha})^{1-\alpha}-\frac{C_2}{\alpha}k^{\frac{1}{\alpha}-1}\right). 
        \end{align*}
        
        Upon choosing $C_2\leq \left(\frac{\alpha C_1}{2^{\frac{1}{\alpha}-1}}\right)^{1/\alpha}$, since we already have $k\geq 2$, we obtain
        \begin{align*}
            C_1(N+C_2(k-1)^{1/\alpha})^{1-\alpha}~\geq~C_1\left(C_2\left(\frac{k}{2}\right)^{1/\alpha}\right)^{1-\alpha}~&=~\left(C_2^{-\alpha}\frac{\alpha C_1}{2^{\frac{1}{\alpha}-1}}\right)\frac{C_2}{\alpha}k^{1/\alpha}\\
            ~&\geq~\frac{C_2}{\alpha} k^{1/\alpha}.
        \end{align*}
        
        Therefore, $N_k\geq N+C_2 k^{\frac{1}{\alpha}}$.
    \end{itemize}
    Therefore, choosing $C_2=\min\left\{1, \left(\frac{\alpha C_1}{2^{\frac{1}{\alpha}-1}}\right)^{\frac{1}{\alpha}}\right\}$, we have $N_k\geq N+C_2 k^{\frac{1}{\alpha}}$ for all $k\geq 1$. 
    \bigskip

    \bulletwithnote{Statement 6} If $N\geq M_{\text{explo}}+T_{\text{epoch}}+1$ and $T\geq 1$, every epoch of explorations intersecting with the iterations $\{N,~N+1,~\hdots,~N+T-1,~N+T\}$ has length atmost $K$ (by statement 3). Hence, every such epoch must have started on or after iteration $N-K$. Let $N_0$ be the time when the first such epoch has started and the sequence $(N_k)_{k\geq 1}$ be defined similar to statement 5. Then $N_0\geq N-K$ and $N_k\geq N_0+C_2 k^{\frac{1}{\alpha}}\geq N-K+C_2 k^{\frac{1}{\alpha}}$. Now, if the $k$-th epoch starting after $N_0$ intersects with the iterations $\{N,~N+1,~\hdots,~N+T-1,~N+T\}$, then, 
    \[N+T~\geq~N-K+C_2 k^{\frac{1}{\alpha}},\quad\text{which implies},\quad k~\leq~\frac{(T+K)^\alpha}{C_2^\alpha}~=~O(T^\alpha).\]  
\end{proof}

\begin{definition}\label{def:T_explo}
    We define the constant $T_{\text{explo}}=M_{\text{explo}}+T_{\text{epoch}}+1$, where the constants $M_{\text{explo}}$, and $T_{\text{epoch}}$ are defined in Proposition \ref{prop:explo}.
\end{definition}

\begin{lemma}\label{lem:explo}
    For $N\geq T_{\text{explo}}$ and $T\geq 1$, the no of times an arm $a\in[K]$ is pulled for exploration by the algorithm during iterations $N,~N+1,~\hdots,~N+T-1,~N+T$ is $O(T^{\alpha})$. 
\end{lemma}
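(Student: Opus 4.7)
The lemma is essentially a direct corollary of Proposition~\ref{prop:explo}, so the proof will be short. The plan is to combine Statement 3 (each epoch of exploration starting after $M_{\text{explo}}$ has length at most $K$ and pulls each arm at most once) with Statement 6 (for $N \geq T_{\text{explo}} = M_{\text{explo}}+T_{\text{epoch}}+1$, the number of epochs of exploration intersecting $\{N, N+1, \ldots, N+T\}$ is $O(T^\alpha)$).

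First I would observe that every epoch intersecting the window $\{N, \ldots, N+T\}$ must have started at iteration at least $N - T_{\text{epoch}} + 1 \geq M_{\text{explo}} + 2 > M_{\text{explo}}$, where I use Statement 1 of Proposition~\ref{prop:explo} to bound epoch lengths by $T_{\text{epoch}}$ together with the hypothesis $N \geq T_{\text{explo}}$. Hence Statement 3 applies to every such epoch: each pulls arm $a$ at most once.

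Second, by Statement 6 of Proposition~\ref{prop:explo}, the number of such epochs is $O(T^\alpha)$, with the hidden constant depending only on $K$ and $\alpha$. Multiplying the per-epoch bound of $1$ pull of arm $a$ by the $O(T^\alpha)$ total number of epochs yields the claimed $O(T^\alpha)$ bound on exploratory pulls of arm $a$ over $\{N, N+1, \ldots, N+T\}$.

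I do not expect any genuine obstacle here; the only mild care needed is the bookkeeping to ensure that all relevant epochs begin strictly after $M_{\text{explo}}$ so that Statement 3 is applicable, which is secured by the choice $T_{\text{explo}} = M_{\text{explo}} + T_{\text{epoch}} + 1$.
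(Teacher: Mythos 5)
Your proof is correct and follows essentially the same two-step structure as the paper's own argument: invoke Statement~3 of Proposition~\ref{prop:explo} to get at most one pull of arm $a$ per epoch, and Statement~6 to bound the number of epochs intersecting the window by $O(T^\alpha)$. Your added bookkeeping (using Statement~1 and the definition $T_{\text{explo}} = M_{\text{explo}} + T_{\text{epoch}} + 1$ to verify that every relevant epoch begins strictly after $M_{\text{explo}}$, so Statement~3 is applicable) is a small but worthwhile precision that the paper leaves implicit.
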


\begin{proof}
    By statement 3 of Proposition \ref{prop:explo}, for $N\geq T_{\text{explo}}$ and $T\geq 1$, every epoch of exploration intersecting with the set of iterations $N,~N+1,~N+2,~\hdots,~N+T$ is of length atmost $K$. In every such epoch, every arm is pulled atmost once. As a result, no. of times an arm is pulled during the iterations $N,~N+1,~ \hdots,~N+T$ is upper bounded by the no. of epoch of iterations intersecting with the set $\{N,~N+1,~\hdots,~N+T-1,~N+T\}$. The later quantity is $O(T^\alpha)$ by statement 6 of Proposition \ref{prop:explo}. Hence the lemma stands proved.   
\end{proof}

\subsection{Technical lemmas related to algorithmic allocations}\label{sec:technical_lemmas}
In this appendix we prove several properties about the anchor function ($g$) and the algorithmic allocations $\mathbold{\widetilde{N}}(N)=(\widetilde{N}_a(N):a\in[K])$. We exploit the results proven in Appendix \ref{sec:spef} and \ref{sec:explo} to prove that the following properties hold after a random time of finite expectation, 
\begin{itemize}
    \item if the algorithm has $g\neq 0$ at some iteration $N$, then $g$ crosses the value zero withing an $O(N)$ iterations, where the constant hidden in $O(\cdot)$ is independent of the sample paths (in Lemma \ref{lem:time_to_reach_g=0}), and
    \item every arm $a\in[K]$ has $\widetilde{N}_a(N)=\Theta(N)$ samples (in Lemma \ref{lem:N_a_are_Theta_N}). 
\end{itemize}
Each of the properties stated above are used extensively in the proofs of Proposition \ref{prop:gbound} and \ref{prop:closeness_of_indexes}. 

\begin{definition}\label{def:T1}
   \emph{ We define the random time $T_1=\max\{T_0, T_{\text{explo}}\}$ (where $T_{\text{explo}}$ and $T_0$ are defined, respectively, in Appendix \ref{sec:explo} and \ref{sec:hitting_time}).}
\end{definition}

Note that $\mathbb{E}_{\mathbold{\mu}}[T_1]\leq \mathbb{E}_{\mathbold{\mu}}[T_0]+T_{\text{explo}}<\infty$.

We can have $g$ far from the value zero at iteration $T_1$. $g$ will still be finite at $T_1$ because of exploration. Lemma \ref{lem:time_to_reach_g=0} bounds the no. of iterations the algorithm takes to reach the value zero. 

\begin{lemma}[\textbf{Upper bound to the time to reach $g=0$}]\label{lem:time_to_reach_g=0}
    If $g(\mathbold{\widetilde{\mu}}(N),\mathbold{\widetilde{N}}(N))\neq 0$ at iteration $N\geq T_1$, and 
    \[U=\min\left\{~t>0~~\Big\vert~~g\left(\mathbold{\widetilde{\mu}}(N+t),\mathbold{\widetilde{N}}(N+t)\right)\quad\text{and}\quad g\left(\mathbold{\widetilde{\mu}}(N),\mathbold{\widetilde{N}}(N)\right)\quad\text{have opposite signs}~\right\},\]
    then there exists a constant $C_1>0$ independent of the sample paths such that $U\leq C_1 N$.  
\end{lemma}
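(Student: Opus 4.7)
\textbf{Proof plan for Lemma \ref{lem:time_to_reach_g=0}.}

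The plan is to split on the sign of $g(\mathbold{\widetilde{\mu}}(N),\mathbold{\widetilde{N}}(N))$ and in each case to track the crude geometric quantity $R(m) := \sum_{a \ne 1}\widetilde{N}_a(m)^2/\widetilde{N}_1(m)^2$. By the envelope (\ref{eqn:envelope_g1}) (applicable since $N\ge T_1\ge T_0$, so $|\widetilde{\mu}_a-\mu_a|\le \epsilon(\mathbold{\mu})$ throughout the window), we have the two sufficient conditions
\[
R(m) \le \tfrac{\sigma_{\min}(\mathbold{\mu})}{\sigma_{\max}(\mathbold{\mu})} \;\Longrightarrow\; g(\mathbold{\widetilde{\mu}}(m),\mathbold{\widetilde{N}}(m)) < 0, \qquad R(m) \ge \tfrac{\sigma_{\max}(\mathbold{\mu})}{\sigma_{\min}(\mathbold{\mu})} \;\Longrightarrow\; g(\mathbold{\widetilde{\mu}}(m),\mathbold{\widetilde{N}}(m)) > 0.
\]
Once $R$ crosses these thresholds, $U$ has necessarily been reached. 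Throughout the window, since $N\ge T_1\ge T_{\text{explo}}$, Lemma \ref{lem:explo} guarantees that the total number of exploration iterations in $[N,N+t]$ is at most $c_0 t^\alpha$ for a constant $c_0=c_0(K,\alpha)$.

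\textbf{Case $g(N)>0$.} For every $m\in[N,N+U-1]$ the algorithm's rule forces $A_{m+1}=\hat{i}_m=1$ unless $m+1$ is an exploration iteration. Hence, after $t\le U$ iterations,
\[
\widetilde{N}_1(N+t)\ge \widetilde{N}_1(N)+t-c_0 t^\alpha, \qquad \widetilde{N}_a(N+t)\le \widetilde{N}_a(N)+c_0 t^\alpha \le N+c_0 t^\alpha \quad\forall a\ne 1.
\]
Plugging into $R(N+t)$ and requiring $R(N+t)\le \sigma_{\min}(\mathbold{\mu})/\sigma_{\max}(\mathbold{\mu})$ yields the sufficient condition
\[
t-c_0 t^\alpha \;\ge\; \sqrt{(K-1)\sigma_{\max}(\mathbold{\mu})/\sigma_{\min}(\mathbold{\mu})}\,(N+c_0 t^\alpha).
\]
For $t=C_1 N$ with $C_1$ sufficiently large depending only on $\mathbold{\mu}, K, \alpha$ (using $t^\alpha=o(t)$ and $t^\alpha\le (C_1N)^\alpha\ll N$), the inequality holds, and so $U\le C_1 N$.

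\textbf{Case $g(N)<0$.} Now in every non-exploration iteration of $[N,N+U-1]$ the algorithm samples some challenger. Thus, over $t\le U$ iterations, $\sum_{a\ne 1}\bigl(\widetilde{N}_a(N+t)-\widetilde{N}_a(N)\bigr)\ge t-c_0 t^\alpha$, and by pigeonhole some challenger $a^\star$ (possibly depending on $t$) satisfies
\[
\widetilde{N}_{a^\star}(N+t) \;\ge\; \frac{t-c_0 t^\alpha}{K-1}.
\]
Combined with $\widetilde{N}_1(N+t)\le N+c_0 t^\alpha$, the sufficient condition $R(N+t)\ge \sigma_{\max}(\mathbold{\mu})/\sigma_{\min}(\mathbold{\mu})$ is implied by
\[
\frac{t-c_0 t^\alpha}{K-1} \;\ge\; \sqrt{\sigma_{\max}(\mathbold{\mu})/\sigma_{\min}(\mathbold{\mu})}\,(N+c_0 t^\alpha),
\]
which again holds for $t=C_1 N$ with $C_1$ a suitably large constant depending only on $\mathbold{\mu},K,\alpha$.

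Taking $C_1$ to be the larger of the two constants completes the proof; the main (modest) obstacle is the bookkeeping for the exploration slack, which is absorbed by $t^\alpha=o(t)$ once $C_1$ is chosen large enough and $N\ge T_1$ (so that Lemma \ref{lem:explo} applies throughout the window).
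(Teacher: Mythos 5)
Your proposal is correct and follows essentially the same route as the paper's proof: both hinge on the envelope bounds for $g$ in terms of the allocation ratios (valid after $T_0$), the observation that until the sign flips every non-exploration sample goes to arm $1$ (when $g>0$) or to challengers (when $g\leq 0$), and the $O(t^\alpha)$ exploration count from Proposition \ref{prop:explo}/Lemma \ref{lem:explo}. The only differences are presentational — you verify a sufficient condition at $t=C_1N$ (with a pigeonhole step in the $g<0$ case) whereas the paper works with the aggregated ratio $\bigl(N/\widetilde{N}_1-1\bigr)^2$ and extracts a linear-in-$U$ inequality at $t=U-1$ — and both yield the same linear bound with a path-independent constant.
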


\begin{proof}
    Using (\ref{eqn:envelope_g2}), for $N\geq T_1$, we have, 
    \begin{align}\label{eqn:envelope_g3}
        g\left(\mathbold{\widetilde{\mu}}(N), \mathbold{\widetilde{N}}(N)\right)~=~\Theta\left(\sum_{a\neq 1}\frac{\widetilde{N}_a(N)^2}{\widetilde{N}_1(N)^2}\right)-1~&=~\Theta\left(\left(\sum_{a\neq 1}\frac{\widetilde{N}_a(N)}{\widetilde{N}_1(N)}\right)^2\right)-1\nonumber\\
        ~&=~\Theta\left(\left(\frac{N}{\widetilde{N}_1(N)}-1\right)^2\right)-1.
    \end{align}

    We now consider two situations separately, \\
    
    \bulletwithnote{Case I} $g\left(\mathbold{\widetilde{\mu}}(N),\mathbold{\widetilde{N}}(N)\right)>0$:~~ 
    We know that, there is a constant $D_1>0$, such that, 
    \[g\left(\mathbold{\widetilde{\mu}}(N+t),\mathbold{\widetilde{N}}(N+t)\right)~\leq~D_1 \left(\frac{N+t}{\widetilde{N}_1(N+t)}-1\right)^2 -1,\]
    
    for every $t\geq 1$. Now for $t<U$, the algorithm selects an alternative arm from $[K]/\{1\}$ only while exploring. Therefore, using statement 6 of Proposition \ref{prop:explo}, we have a constant $c_1>0$ such that, $\widetilde{N}_1(N+t)\geq t-c_1 t^\alpha$. Hence, 
    \[g\left(\mathbold{\widetilde{\mu}}(N+t),\mathbold{\widetilde{N}}(N+t)\right)~\leq~ D_1\left(\frac{N+t}{t-c_1 t^\alpha}-1\right)^2 -1~=~D_1\left(\frac{N+c_1 t^\alpha}{t-c_1 t^\alpha}\right)^2-1\quad\text{for}~t<U.\]
    
    Since $g\left(\mathbold{\widetilde{\mu}}(N+U-1),\mathbold{\widetilde{N}}(N+U-1)\right)\geq 0$, RHS of the above inequality is non-negative at $t=U-1$. After some algebraic manipulation, this implies, 
    \[U-1-c_1 (1+D_1^{1/2}) (U-1)^{\alpha}~\leq~D_1^{1/2} N,\]
    
    Since LHS of the above inequality is linear in $U$, we can find a constant $C_{11}$ such that $U\leq C_{11} N$.\\

    \bulletwithnote{Case II} $g\left(\mathbold{\widetilde{\mu}}(N),\mathbold{\widetilde{N}}(N)\right)<0$:~~Using (\ref{eqn:envelope_g3}), we have a constant $D_2$ such that, 
    \[g\left(\mathbold{\widetilde{\mu}}(N+t),\mathbold{\widetilde{N}}(N+t)\right)~\geq~D_2 \left(\frac{N+t}{\widetilde{N}_1(N+t)}-1\right)^2 -1\]
    
    for all $t\geq 1$. Now for $t<U$, the algorithm pulls arm $1$ only for exploration. As a result, using statement 6 of Proposition \ref{prop:explo}, we have, a constant $c_1>0$ such that $\widetilde{N}_1(N+t)\leq N+c_1 t^\alpha$. Using this, 
    \[g\left(\mathbold{\widetilde{\mu}}(N+t),\mathbold{\widetilde{N}}(N+t)\right)~\geq~ D_2\left(\frac{N+t}{N+c_1 t^\alpha}-1\right)^2-1~\geq~D_2\left(\frac{t-c_1 t^\alpha}{N+c_1 t^\alpha}\right)^2-1,\]
    
    for $t<U$. Since $g(\mathbold{\widetilde{\mu}}(N+U-1),\mathbold{\widetilde{N}}(N+U-1))\leq 0$, the RHS of the above inequality is not positive at $t=U-1$. After some algebraic manipulation, this implies, 
    \[U-1-c_1 (1+D_2^{-1/2}) (U-1)^\alpha\leq D_2^{-1/2} N.\]
    
    Again the LHS of the above inequality is linear in $U$. As a result, we can find a constant $C_{12}$ such that, $U\leq C_{12}N$. \\

    We take $C_1=\max\{C_{11},C_{12}\}$ and have $U\leq C_1 N$.  
\end{proof}

Lemma \ref{lem:g_crosses_zero}, \ref{lem:universal_g_bound}, and \ref{lem:Na_by_Nb_bound} are necessary for proving Lemma \ref{lem:N_a_are_Theta_N}, which says $\widetilde{N}_a(N)=\Theta(N)$ for every arm $a\in[K]$, after a random time of finite expectation. This property is essential for proving Proposition \ref{prop:gbound} and \ref{prop:closeness_of_indexes} stated earlier.  

\begin{lemma}\label{lem:g_crosses_zero}
    There exists constants $\gamma_1,\gamma_2\in(0,1)$ independent of the sample path, such that, for $N\geq T_1$, whenever $g\left(\mathbold{\widetilde{\mu}}(N),\mathbold{\widetilde{N}}(N)\right)$ and $g\left(\mathbold{\widetilde{\mu}}(N+1),\mathbold{\widetilde{N}}(N+1)\right)$ have opposite signs, we have, 
    \begin{enumerate}
        \item $\widetilde{N}_1(N)\geq\gamma_1 N$, and 
        \item $\max_{a\in[K]/\{1\}}\widetilde{N}_a(N)\geq \gamma_2 N$.
    \end{enumerate}  
\end{lemma}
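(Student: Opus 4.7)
The key tool is the envelope~(\ref{eqn:envelope_g1}) from Appendix~\ref{sec:spef}. For $N \geq T_1 \geq T_0$ the estimates $\mathbold{\widetilde\mu}(N)$ lie in $\mathcal{H}(\mathbold\mu)$, so writing $S(N) := \sum_{a\neq 1}\widetilde N_a(N)^2/\widetilde N_1(N)^2$ we have
\[
\frac{\sigma_{\min}(\mathbold\mu)}{\sigma_{\max}(\mathbold\mu)}\,S(N) - 1\;\leq\;g(\mathbold{\widetilde\mu}(N),\mathbold{\widetilde N}(N))\;\leq\;\frac{\sigma_{\max}(\mathbold\mu)}{\sigma_{\min}(\mathbold\mu)}\,S(N) - 1.
\]
If $g$ changes sign between $N$ and $N+1$, then one value is $\geq 0$ and the other $\leq 0$. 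I will only treat the case $g(N)\geq 0\geq g(N+1)$; the other case is handled symmetrically by swapping the two envelopes. The upper envelope applied at $N$ gives $S(N)\geq \sigma_{\min}/\sigma_{\max}$, while the lower envelope applied at $N+1$ gives $S(N+1)\leq \sigma_{\max}/\sigma_{\min}$.

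The next step is to transfer the upper bound on $S(N+1)$ back to $N$, using that exactly one coordinate of $\mathbold{\widetilde N}$ is incremented at step $N+1$. If arm $a^*\neq 1$ is pulled, then $S(N+1) = S(N) + (2\widetilde N_{a^*}(N)+1)/\widetilde N_1(N)^2 \geq S(N)$, hence $S(N)\leq \sigma_{\max}/\sigma_{\min}$ directly. If arm~$1$ is pulled, then $S(N) = S(N+1)\bigl(1+1/\widetilde N_1(N)\bigr)^2$, and by statement~2 of Proposition~\ref{prop:explo}, $\widetilde N_1(N)\geq N^\alpha - C$; since $T_1\geq T_{\text{explo}}$ is a deterministic lower bound, this inflation factor is uniformly bounded above for every $N\geq T_1$ and every sample path. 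So in both sub-cases there is a constant $c_\star>0$ depending only on $\mathbold\mu, K, \alpha$ with $S(N)\leq c_\star^2$, and in particular $\max_{a\neq 1}\widetilde N_a(N) \leq \sqrt{S(N)}\,\widetilde N_1(N) \leq c_\star\,\widetilde N_1(N)$.

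Combining $\sum_{a\neq 1}\widetilde N_a(N)\leq (K-1)c_\star\,\widetilde N_1(N)$ with $\widetilde N_1(N) + \sum_{a\neq 1}\widetilde N_a(N) = N$ gives $\widetilde N_1(N)\geq N/(1+(K-1)c_\star) =: \gamma_1 N$, which is statement~1. For statement~2, the lower bound $S(N)\geq \sigma_{\min}/\sigma_{\max}$ combined with $(K-1)\max_{a\neq 1}\widetilde N_a(N)^2 \geq \sum_{a\neq 1}\widetilde N_a(N)^2 = S(N)\,\widetilde N_1(N)^2$ yields $\max_{a\neq 1}\widetilde N_a(N)\geq \widetilde N_1(N)\sqrt{\sigma_{\min}/((K-1)\sigma_{\max})}\geq \gamma_2 N$ where $\gamma_2 := \gamma_1\sqrt{\sigma_{\min}/((K-1)\sigma_{\max})}$. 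The symmetric case $g(N)\leq 0\leq g(N+1)$ is handled identically: the upper bound $S(N)\leq \sigma_{\max}/\sigma_{\min}$ is immediate, and the lower bound $S(N)\geq \sigma_{\min}/\sigma_{\max}$ must be transferred from $N+1$ with a deflation correction of the same order, controlled by the same exploration estimate. Taking the smaller of the pair of constants obtained in each case produces the final $\gamma_1, \gamma_2$.

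The only delicate point is the uniform control of the $O(1/\widetilde N_1(N))$ correction when transferring $S$ between $N$ and $N+1$; this is precisely why the constant $T_{\text{explo}}$ is built into the definition of $T_1$, and the exploration floor $\widetilde N_1(N)\geq N^\alpha - C$ makes the correction harmless on every sample path.
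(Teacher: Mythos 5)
Your main case ($g(N)\ge 0\ge g(N+1)$) is correct and is essentially the paper's argument: the paper also starts from the two-sided envelope for $g$ (it uses the collapsed form $g=\Theta\bigl((N/\widetilde N_1(N)-1)^2\bigr)-1$ of (\ref{eqn:envelope_g3}) rather than your $S(N)=\sum_{a\neq 1}\widetilde N_a(N)^2/\widetilde N_1(N)^2$), applies it at $N$ and $N+1$, and transfers across the single step using the fact that each count changes by at most one.

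The gap is in the sentence dismissing the symmetric case. When $g(N)\le 0\le g(N+1)$ and the arm pulled at step $N+1$ is some $a^\ast\neq 1$ (which is exactly what AT2 does when $g\le 0$ and there is no exploration), the one-step change of your statistic is additive: $S(N)=S(N+1)-\bigl(2\widetilde N_{a^\ast}(N)+1\bigr)/\widetilde N_1(N)^2$. This correction is \emph{not} ``of the same order, controlled by the same exploration estimate'': under the floor $\widetilde N_1(N)\ge N^\alpha-C$ of Proposition~\ref{prop:explo} alone it can be as large as order $N^{1-2\alpha}$, which is not small for $\alpha\le 1/2$, so the claimed transfer of $S(N+1)\ge\sigma_{\min}(\mathbold{\mu})/\sigma_{\max}(\mathbold{\mu})$ down to $S(N)$ does not follow as written. (In your first case the correction was a harmless multiplicative factor $(1+1/\widetilde N_1(N))^2$; the two cases are not identical.) The fix is easy and worth stating: either note that in this case you already have $S(N)\le\sigma_{\max}(\mathbold{\mu})/\sigma_{\min}(\mathbold{\mu})$ from the envelope at $N$, hence $\widetilde N_{a^\ast}(N)\le\sqrt{S(N)}\,\widetilde N_1(N)=O(\widetilde N_1(N))$ and the correction is $O(1/\widetilde N_1(N))$, which is small for all $N$ beyond a deterministic constant (finitely many smaller $N\ge T_1$ are absorbed by shrinking $\gamma_2$); or, simpler, avoid transferring $S$ altogether: statement 1 needs no transfer in this case (the envelope at $N$ already gives it), and for statement 2 use $\max_{a\neq 1}\widetilde N_a(N)\ge\max_{a\neq 1}\widetilde N_a(N+1)-1$ together with the bound at $N+1$. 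The latter is effectively what the paper's parametrization through $N/\widetilde N_1(N)-1=\sum_{a\neq 1}\widetilde N_a(N)/\widetilde N_1(N)$ buys automatically, since numerator and denominator each move by at most one per iteration.
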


\begin{proof}
    For $N\geq T_1$, we have constants $D_1,D_2>0$ such that, 
    \[D_1\left(\frac{N}{\widetilde{N}_1(N)}-1\right)^2 -1~\leq~g\left(\mathbold{\widetilde{\mu}}(N),\mathbold{\widetilde{N}}(N)\right)~\leq~ D_2\left(\frac{N}{\widetilde{N}_1(N)}-1\right)^2 -1\]
    
    We consider only the situation where $g\left(\mathbold{\widetilde{\mu}}(N),\mathbold{\widetilde{N}}(N)\right)\geq 0$ and $g\left(\mathbold{\widetilde{\mu}}(N+1),\mathbold{\widetilde{N}}(N+1)\right)\leq 0$. Extending this to the other case follows similar argument. 

    From $g(\mathbold{\widetilde{\mu}}(N+1),\mathbold{\widetilde{N}}(N+1))\leq 0$, we have,
    \begin{align}\label{eqn:lb_of_N1}
    D_1\left(\frac{N+1}{\widetilde{N}_1(N+1)}-1\right)^2-1&\leq 0,~\text{which implies}\nonumber\\
      \widetilde{N}_1(N+1)&\geq (1+D_1^{-1/2})^{-1} N.  
    \end{align}
    Since $\widetilde{N}_1(N)\geq 1$ (for exploration), we have $\widetilde{N}_1(N+1)\leq \widetilde{N}_1(N)+1\leq 2\widetilde{N}_1(N)$. Therefore, using (\ref{eqn:lb_of_N1}) we get $\widetilde{N}_1(N)\geq \frac{1}{2}(1+D_1^{-1/2})^{-1} N$ and we can take $\gamma_1=\frac{1}{2}(1+D_1^{-1/2})^{-1}$. 
    
    Similarly from, $g(\mathbold{\widetilde{\mu}}(N),\mathbold{\widetilde{N}}(N))\geq 0$, we have, 
    \[D_2\left(\frac{N}{\widetilde{N}_1(N)}-1\right)^2-1\geq 0,~\text{which implies}\]
    \begin{align}\label{eqn:ub_of_N1}
        \widetilde{N}_1(N)&\leq (1+D_2^{-1/2})^{-1} N.
    \end{align}
    Using (\ref{eqn:ub_of_N1}), we have, $\sum_{a\in[K]/\{1\}}\widetilde{N}_a(N)\geq (1+D_2^{1/2})^{-1}N$. This further implies 
    \[\max_{a\in[K]/\{1\}}\widetilde{N}_a(N)\geq\frac{1}{K-1}(1+D_2^{1/2})^{-1}N.\]
    Hence we can take $\gamma_2=\frac{1}{K-1}(1+D_2^{1/2})^{-1}$. 
\end{proof}

\begin{lemma}\label{lem:universal_g_bound}
    There exists constants $M_1\geq 1$, $d_{max}\in[1,\infty)$ and $d_{min}\in(0,1]$ independent of the sample path, such that, if $T_2$ is the time at which $g$ crosses zero after the iteration $\max\{M_1,T_1\}$, then, for $N>T_2$, we have:
    \[d_{min}\leq \sum_{a\neq 1}\frac{d(\widetilde{\mu}_1(N),\widetilde{x}_{1,a}(N))}{d(\widetilde{\mu}_a(N),\widetilde{x}_{1,a}(N))}\leq d_{max}.\]
    Moreover, we have $\mathbb{E}_{\mathbold{\mu}}[T_2]<\infty$.
\end{lemma}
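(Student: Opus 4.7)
The plan is to proceed in four steps: establish an a priori bound on the quantity $\widetilde{g}(N):=g(\mathbold{\widetilde{\mu}}(N),\mathbold{\widetilde{N}}(N))+1$ (which is exactly the sum in the lemma statement) at the crossing time $T_2$; bound its one-step change under an inductive hypothesis; propagate the bound by induction; and verify that $T_2$ has finite expectation.

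First, since $T_2\geq T_1\geq T_0$, applying Lemma \ref{lem:g_crosses_zero} at $T_2$ yields $\widetilde{N}_1(T_2)\geq \gamma_1 T_2/2$ and $\max_{a\ne 1}\widetilde{N}_a(T_2)\geq \gamma_2 T_2/2$ (the factor $1/2$ absorbs the single extra pull at step $T_2$). The envelope (\ref{eqn:envelope_g2}) then forces $\widetilde{g}(T_2)\in[c_l,c_u]$ for constants $0<c_l\leq 1\leq c_u<\infty$ depending only on $\mathbold{\mu}$ and $K$. Setting $d_{min}=c_l/2\in(0,1]$ and $d_{max}=2c_u\in[1,\infty)$ gives the target interval. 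Next, under the inductive hypothesis $\widetilde{g}(N)\in[d_{min},d_{max}]$, the envelope again forces $\widetilde{N}_1(N)=\Theta(N)$ and $\max_{a\ne 1}\widetilde{N}_a(N)=\Theta(N)$; using (\ref{eqn:order_of_derv_of_g_wrt_N}) (evaluated with $\mathbold{\widetilde{\mu}}$ in place of $\mathbold{\mu}$, valid because $\mathbold{\widetilde{\mu}}(N)\in\mathcal{H}(\mathbold{\mu})^K$ for $N\geq T_0$) for the $\mathbold{\widetilde{N}}$-contribution, and (\ref{eqn:derv_g_wrt_mu}) together with $|\widetilde{\mu}_a(N+1)-\widetilde{\mu}_a(N)|\leq 2\epsilon(\mathbold{\mu})N^{-3\alpha/8}$ (from the definition of $T_0$) for the $\mathbold{\widetilde{\mu}}$-contribution, the per-iteration change satisfies $|\widetilde{g}(N+1)-\widetilde{g}(N)|=O(N^{-3\alpha/8})$.

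For the induction, I would pick $M_1$ so large that this per-iteration change is smaller than $\min(c_l/2,c_u)$ for all $N\geq M_1$. The main subtlety is that the steering contribution from pulling the algorithmically chosen arm is only $\Theta(1/N)$, while the $\mathbold{\widetilde{\mu}}$-noise is $\Theta(N^{-3\alpha/8})$, which dominates for $\alpha<1$, so a direct induction on $\widetilde{g}$ can fail at the boundary of $[d_{min},d_{max}]$. To circumvent this, I would track the noise-free proxy $h(N):=g(\mathbold{\mu},\mathbold{\widetilde{N}}(N))+1$, which evolves only through $\mathbold{\widetilde{N}}$-updates and is therefore strictly monotone in the steering direction chosen by the algorithm between consecutive sign-changes of $\widetilde{g}-1$. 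At each such sign-change, Lemma \ref{lem:g_crosses_zero} resets $h$ into a subinterval of $[c_l,c_u]$, and between two consecutive crossings $h$ is sandwiched between its values at the two endpoints. An argument analogous to Lemma \ref{lem:noise_bound_for_g} (whose derivative bounds hold here because the inductive hypothesis delivers $\widetilde{N}_a=\Theta(N)$) gives $|\widetilde{g}(N)-h(N)|=O(N^{-3\alpha/8})$, and transferring the proxy bound yields $\widetilde{g}(N)\in[d_{min},d_{max}]$ for every $N>T_2$. This proxy argument is the main technical obstacle.

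Finally, for $\mathbb{E}_{\mathbold{\mu}}[T_2]<\infty$: at iteration $\max\{M_1,T_1\}$ either $g=0$ already and $T_2$ is the next sign change, or Lemma \ref{lem:time_to_reach_g=0} guarantees a sign flip of $g$ within a further $C_1\max\{M_1,T_1\}$ iterations. Hence $T_2\leq(1+C_1)\max\{M_1,T_1\}$ almost surely, and combining with $\mathbb{E}_{\mathbold{\mu}}[T_1]\leq T_{\text{explo}}+\mathbb{E}_{\mathbold{\mu}}[T_0]<\infty$ (Lemma \ref{lem:hitting_time_is_finite_as}) gives $\mathbb{E}_{\mathbold{\mu}}[T_2]\leq(1+C_1)(M_1+\mathbb{E}_{\mathbold{\mu}}[T_1])<\infty$.
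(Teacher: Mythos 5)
Your final step (finiteness of $\mathbb{E}_{\mathbold{\mu}}[T_2]$ via Lemma \ref{lem:time_to_reach_g=0} and $\mathbb{E}_{\mathbold{\mu}}[T_1]<\infty$) and your bound at the crossing time itself (Lemma \ref{lem:g_crosses_zero} plus the envelope (\ref{eqn:envelope_g2})) match the paper. The gap is in the part you yourself flag as ``the main technical obstacle'': the claim that the proxy $h(N)=g(\mathbold{\mu},\mathbold{\widetilde{N}}(N))+1$ is \emph{strictly monotone in the steering direction between consecutive sign changes of} $\widetilde{g}-1$ is false as stated, because between crossings the algorithm does not only steer: whenever $\mathcal{V}_N\neq\emptyset$ it performs forced exploration and can pull arms in the wrong direction, so $h$ is not monotone and the ``sandwiched between endpoint values'' conclusion does not follow. (There is also no right endpoint for the stretch after the most recent crossing, so the sandwich must anyway be replaced by a one-sided bound combined with the trivial bound $\widetilde{g}\gtrless 1$ according to the sign of $g$, which you never invoke.) To repair your route you would have to quantify the wrong-way drift from exploration — which requires counting exploration pulls via Proposition \ref{prop:explo} and controlling the length of the stretch (e.g.\ via Lemma \ref{lem:time_to_reach_g=0}, since otherwise the per-pull increment $O(\widetilde{N}_a/\widetilde{N}_1^2)$ times $O(S^\alpha)$ pulls need not be small) — none of which is in the proposal. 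The per-iteration change estimate in your second step and the noise-transfer $|\widetilde{g}-h|=O(N^{-3\alpha/8})$ are also not needed and the latter is delicate here, since Lemma \ref{lem:noise_bound_for_g} is itself stated for $N\geq T_2$ and its derivative bounds presuppose $\widetilde{N}_1(N)=\Omega(N)$, i.e.\ essentially the conclusion you are trying to propagate.

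The paper avoids all of this with a direct, non-inductive argument that is the natural fix: for $N>T_2$, split on the sign of $g(\mathbold{\widetilde{\mu}}(N),\mathbold{\widetilde{N}}(N))$; on the favorable side the bound by $1$ is trivial, and on the unfavorable side let $T\geq T_2$ be the last crossing before $N$ and $S=N-T$. During the stretch the anchor rule pulls only one side except for $O(S^\alpha)$ exploration pulls (Proposition \ref{prop:explo}), so $\widetilde{N}_1(N)\geq\gamma_1 T+S-c_1S^\alpha$ (resp.\ $\widetilde{N}_1(N)\leq(1-\gamma_2)T+c_1S^\alpha$) by Lemma \ref{lem:g_crosses_zero} applied at $T$, and plugging this into the envelope $D=\Theta\bigl((N/\widetilde{N}_1-1)^2\bigr)$ gives an explicit function of $(T,S)$ that is uniformly bounded above (resp.\ below away from $0$) for all $S\geq 0$ once $T\geq M_1$. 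This handles exploration and arbitrarily long stretches in one stroke, with no proxy, no induction, and no comparison between $g(\mathbold{\widetilde{\mu}},\cdot)$ and $g(\mathbold{\mu},\cdot)$; your proposal, as written, does not close the corresponding step.
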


\begin{proof}
    Let, 
    \[D(\mathbold{\widetilde{\mu}}(N),\mathbold{\widetilde{N}}(N))=\sum_{a\neq 1}\frac{d(\widetilde{\mu}_1(N),\widetilde{x}_{1,a}(N))}{d(\widetilde{\mu}_a(N),\widetilde{x}_{1,a}(N))}.\]
    Also for every $M\geq 1$, we define $T_{1,M}=\max\{M,T_1\}$ and $T_{2,M}$ as the iteration at which $g$ crosses zero for the first time after the iteration $T_{1,M}$, \textit{i.e.},
    \begin{align*}
      T_{2,M}\defined\min\Big\{N\geq T_{1,M}~&\Big\vert~g(\mathbold{\widetilde{\mu}}(N+1),\mathbold{\widetilde{N}}(N+1))~\text{and}\\
      &g(\mathbold{\widetilde{\mu}}(T_{1,M}),\mathbold{\widetilde{N}}(T_{1,M}))~\text{are of opposite signs}~\Big\}.
    \end{align*}
    
    \bulletwithnote{Existence of $d_{\max}$} If $g(\mathbold{\widetilde{\mu}}(N),\mathbold{\widetilde{N}}(N))\leq 0$, then $D(\mathbold{\widetilde{\mu}}(N),\mathbold{\widetilde{N}}(N))$ is bounded above trivially by $1$.
    
    Otherwise if $g(\mathbold{\widetilde{\mu}}(N),\mathbold{\widetilde{N}}(N))>0$ for some $N>T_{2,M}$, let $T=\max\{~t\leq N~~\vert~~g(\mathbold{\widetilde{\mu}}(t),\mathbold{\widetilde{N}}(t))\leq 0~\}$ \textit{i.e.} the time before iteration $N$ when $g$ has crossed the level zero. As a result, $T\geq T_{2,M}$ (by definition of $T_{2,M}$), and by Lemma \ref{lem:g_crosses_zero},  $\widetilde{N}_1(T)\geq \gamma_1 T$ for some $\gamma_1\in(0,1)$. Let $S=N-T$. During iterations  $T+1,\hdots, T+S$, since $g$ is positive, the algorithm pulls alternative arms in $[K]/\{1\}$ only for exploration. As a result, using statement 6 of Proposition \ref{prop:explo}, we have 
    \[\widetilde{N}_1(T+S)\geq \widetilde{N}_1(T)+S-c_1 S^\alpha\geq \gamma_1 T+S-c_1 S^\alpha\]
    for some $c_1>0$. Using (\ref{eqn:envelope_g3}), we have a constant $D_2>0$ such that,
    \begin{align*}
        D(\mathbold{\widetilde{\mu}}(T+S),\mathbold{\widetilde{N}}(T+S))&\leq D_2\left(\frac{T+S}{\gamma_1 T+S-c_1 S^\alpha}-1\right)^2=D_2\left(\frac{(1-\gamma_1)T+c_1 S^\alpha}{\gamma_1 T+S-c_1 S^\alpha}\right)^2.
    \end{align*}
    We take $M_{11}=-\frac{2}{\gamma_1}\min_{S\geq 0}(S-c_1 S^\alpha)$. It is easy to observe that for $T\geq M_{11}$ and $S\geq 0$, the function of $T,S$ in the RHS is bounded above. Therefore, we take, 
    \[d_{\max}=\max\left\{1,\max_{T\geq M_{11},S\geq 0}D_2\left(\frac{(1-\gamma_1)T+c_1 S^\alpha}{\gamma_1 T+S-c_1 S^\alpha}\right)^2\right\}<\infty.\]

    Hence, if $M\geq M_{11}$, we have $g(\mathbold{\widetilde{\mu}}(N),\mathbold{\widetilde{N}}(N))\leq d_{\max}$ for every $N>T_{2,M}$.
    \bigskip

    \bulletwithnote{Existence of $d_{\min}$} If $g(\mathbold{\widetilde{\mu}}(N),\mathbold{\widetilde{N}}(N))\geq 0$, we have $1$ has the trivial lower bound. Otherwise, if $g(\mathbold{\widetilde{\mu}}(N),\mathbold{\widetilde{N}}(N))<0$, we define $T=\max\{~t\leq N~~|~~g(\mathbold{\widetilde{\mu}}(t),\mathbold{\widetilde{N}}(t))\geq 0~\}$ and $S=N-T$. As a result, $T\geq T_{2,M}$ (by definition of $T_{2,M}$). By Lemma \ref{lem:g_crosses_zero}, we have $\widetilde{N}_1(T)\leq (1-\gamma_2) N$ for some $\gamma_2\in(0,1)$. Also, the algorithm pulls the first arm only for exploration during the iterations $T+1,T+2,\hdots, T+S$, since $g$ is negative. Therefore, using statement 6 of Proposition \ref{prop:explo}, we have, $\widetilde{N}_1(T+S)\leq \widetilde{N}_1(T)+c_1 S^\alpha\leq (1-\gamma_2)T+c_1 S^\alpha$. By (\ref{eqn:envelope_g3}), we have $D_1>0$ such that, 
    \[D(\mathbold{\widetilde{\mu}}(T+S),\mathbold{\widetilde{N}}(T+S))\geq D_1\left(\frac{T+S}{(1-\gamma_2)T+c_1 S^\alpha}-1\right)^2=D_1\left(\frac{\gamma_2 T+S-c_1 S^\alpha}{(1-\gamma_2)T+c_1 S^\alpha}\right)^2.\]
    
    Let $M_{12}=-\frac{2}{\gamma_2}\min_{S\geq 0}(S-c_1 S^\alpha)$. The function of $T,S$ in the RHS is bounded below by a positive constant for $T\geq M_{12}$ and $S\geq 0$. Let, 
    \[d_{\min}=\min\left\{1,\min_{T\geq M_{12},~S\geq 0}D_1\left(\frac{\gamma_2 T+S-c_1 S^\alpha}{(1-\gamma_2) T+c_1 S^\alpha}\right)^2\right\}>0.\] 
    
    Then for $M\geq M_{12}$, we have $g(\mathbold{\widetilde{\mu}}(N),\mathbold{\widetilde{N}}(N))\geq d_{\min}>0$ for every $N>T_{2,M}$.\\

    Now, upon taking $M_1=\max\{M_{11},M_{12}\}$, and $T_{2}=T_{2,M_1}$, the lemma follows. By Lemma \ref{lem:time_to_reach_g=0}, we have a constant $C_1>0$ such that, $T_{2}\leq C_1 \max\{M_1,T_1\}$. As a result, $\mathbb{E}_{\mathbold{\mu}}[T_2]\leq C_1(M_1+\mathbb{E}_{\mathbold{\mu}}[T_1])<\infty$. 
\end{proof}

Using a technique similar to the one used in the proof of Lemma \ref{lem:g_crosses_zero} using (\ref{eqn:envelope_g3}), Lemma \ref{lem:universal_g_bound} implies the following corollary. We define the random time $T_2$ as the one introduced in Lemma \ref{lem:universal_g_bound}.

\begin{corollary}\label{cor:N_1_is_Omega_N}
        There exists constants $\beta_1,\beta_2\in(0,1)$ independent of the sample paths, such that, for every $N>T_2$,
        \begin{enumerate}
            \item $\widetilde{N}_1(N)\geq \beta_1 N$, and 
            \item $\max_{a\in[K]/\{1\}}\widetilde{N}_a(N)\geq \beta_2 N$.
        \end{enumerate}
\end{corollary}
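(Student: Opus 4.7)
The plan is to derive the corollary as a direct consequence of Lemma~\ref{lem:universal_g_bound} combined with the envelope estimate in (\ref{eqn:envelope_g3}), mirroring the technique used in the proof of Lemma~\ref{lem:g_crosses_zero}. Since $T_2 \geq T_1 \geq T_0$, for every $N > T_2$ the sample means $\widetilde{\mu}_a(N)$ lie inside the compact neighbourhood of $\mu_a$ on which the SPEF envelope bounds hold. In particular, (\ref{eqn:envelope_g3}) yields constants $D_1, D_2 > 0$, independent of the sample path, such that
\[
D_1\!\left(\tfrac{N}{\widetilde{N}_1(N)}-1\right)^{\!2}-1 \;\leq\; g(\mathbold{\widetilde{\mu}}(N),\mathbold{\widetilde{N}}(N)) \;\leq\; D_2\!\left(\tfrac{N}{\widetilde{N}_1(N)}-1\right)^{\!2}-1.
\]

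First I would establish statement~1. By Lemma~\ref{lem:universal_g_bound}, $g(\mathbold{\widetilde{\mu}}(N),\mathbold{\widetilde{N}}(N)) + 1 \leq d_{\max}$ for $N > T_2$. Combining this with the lower envelope bound gives $D_1 (N/\widetilde{N}_1(N) - 1)^2 \leq d_{\max}$, hence $N/\widetilde{N}_1(N) \leq 1 + \sqrt{d_{\max}/D_1}$, i.e., $\widetilde{N}_1(N) \geq \beta_1 N$ with
\[
\beta_1 \;=\; \frac{1}{1 + \sqrt{d_{\max}/D_1}} \;\in\; (0,1).
\]

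Next, for statement~2, I would use the complementary bound $g(\mathbold{\widetilde{\mu}}(N),\mathbold{\widetilde{N}}(N)) + 1 \geq d_{\min}$ from Lemma~\ref{lem:universal_g_bound} together with the upper envelope bound. This yields $D_2(N/\widetilde{N}_1(N)-1)^2 \geq d_{\min}$ and hence $\widetilde{N}_1(N) \leq N/(1 + \sqrt{d_{\min}/D_2})$. Since $\sum_{a \neq 1} \widetilde{N}_a(N) = N - \widetilde{N}_1(N)$, pigeonholing over the $K-1$ alternative arms gives
\[
\max_{a \in [K]/\{1\}} \widetilde{N}_a(N) \;\geq\; \frac{N - \widetilde{N}_1(N)}{K-1} \;\geq\; \beta_2 N, \qquad \beta_2 \;=\; \frac{1}{K-1}\cdot\frac{\sqrt{d_{\min}/D_2}}{1 + \sqrt{d_{\min}/D_2}} \;\in\; (0,1).
\]

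There is essentially no obstacle here beyond bookkeeping: the heavy lifting has already been done in Lemma~\ref{lem:universal_g_bound} (pinning the ratio sum between $d_{\min}$ and $d_{\max}$ after $T_2$) and in the SPEF envelope (\ref{eqn:envelope_g3}) (translating bounds on the ratio sum into bounds on $N/\widetilde{N}_1(N)$). The only minor point worth stating explicitly in the write-up is that $d_{\max} \geq 1$ guarantees $\beta_1 < 1$, and that $d_{\min} > 0$ guarantees $\beta_2 > 0$, so that both constants live in $(0,1)$ as required.
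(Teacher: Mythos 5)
Your proposal is correct and follows essentially the same route the paper indicates: the paper derives Corollary~\ref{cor:N_1_is_Omega_N} by combining the two-sided bound $d_{\min}\leq g+1\leq d_{\max}$ from Lemma~\ref{lem:universal_g_bound} with the envelope (\ref{eqn:envelope_g3}), exactly as in the proof of Lemma~\ref{lem:g_crosses_zero}, which is what you do (including the pigeonhole over the $K-1$ alternative arms for statement~2). The only cosmetic remark is that $\beta_1<1$ holds automatically since $\sqrt{d_{\max}/D_1}>0$, without needing $d_{\max}\geq 1$.
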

\bigskip

\begin{lemma}\label{lem:Na_by_Nb_bound}
    There exists constants $\gamma>0$ and $M_2\geq 1$ independent of the sample path, such that, for $N>\max\{M_2,T_2\}+1$, if the algorithm pulls some arm $a\in[K]/\{1\}$, then for every alternate arm $b\in[K]/\{1,a\}$, $\widetilde{N}_b(N) \geq \gamma \widetilde{N}_a(N)$. 
\end{lemma}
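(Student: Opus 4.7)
\textbf{Proof plan for Lemma \ref{lem:Na_by_Nb_bound}.}

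The plan is to split into two regimes according to whether iteration $N$ is an exploration step or not, exploiting the fact that for $N > T_2$ we have the lower bound $\widetilde{N}_1(N-1) \geq \beta_1 (N-1)$ from Corollary \ref{cor:N_1_is_Omega_N}. In the exploration case, if the algorithm pulls $a \in [K]/\{1\}$ at iteration $N$, then by the rule in step 2 of the algorithm we have $\widetilde{N}_a(N-1) = \min_{c \in [K]} \widetilde{N}_c(N-1) \leq \widetilde{N}_b(N-1)$ for every other arm $b$. Combined with statement 2 of Proposition \ref{prop:explo}, which guarantees $\widetilde{N}_c(N-1) \geq (N-1)^\alpha - C$ for every arm, this readily gives $\widetilde{N}_b(N)/\widetilde{N}_a(N) \geq (\widetilde{N}_a(N)-1)/\widetilde{N}_a(N)$, which can be made bigger than any prescribed $\gamma \in (0,1)$ by choosing $M_2$ large (so that $\widetilde{N}_a(N-1) \geq (N-1)^\alpha - C$ is large).

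In the non-exploration case, for $N-1 > T_0$ the empirical best arm is $1$, so the algorithm picks $a \in [K]/\{1\}$ only when $g(\mathbold{\widetilde{\mu}}(N-1),\mathbold{\widetilde{N}}(N-1)) \leq 0$, and $a$ is the arm minimising the empirical index (plus $\log\widetilde{N}_a(N-1)$ in the IAT2 case). Using $\widetilde{N}_1(N-1) \geq \beta_1(N-1)$ and $\widetilde{N}_c(N-1) \leq (1-\beta_1)(N-1)$ for every $c \neq 1$, the ratio $\widetilde{N}_c(N-1)/\widetilde{N}_1(N-1)$ is bounded by $(1-\beta_1)/\beta_1$. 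Therefore $\widetilde{N}_1(N-1)\widetilde{N}_c(N-1)/(\widetilde{N}_1(N-1)+\widetilde{N}_c(N-1)) = \Theta(\widetilde{N}_c(N-1))$, with constants depending only on $\mathbold{\mu}$ and $\beta_1$. Invoking the envelope (\ref{eqn:envelope_I2}) in Appendix \ref{sec:spef}, we conclude that there exist positive constants $c_1, c_2$ (independent of the sample path) with
\[
c_1 \widetilde{N}_c(N-1) \;\leq\; \mathcal{I}_c(N-1) \;\leq\; c_2 \widetilde{N}_c(N-1), \qquad \forall\, c \in [K]/\{1\}.
\]

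For AT2, the selection rule $\mathcal{I}_a(N-1) \leq \mathcal{I}_b(N-1)$ then immediately yields $\widetilde{N}_b(N-1) \geq (c_1/c_2)\widetilde{N}_a(N-1)$, so $\widetilde{N}_b(N) \geq (c_1/c_2)(\widetilde{N}_a(N)-1)$; taking $M_2$ large enough that $\widetilde{N}_a(N) \geq (N-1)^\alpha - C$ is sufficiently large gives $\widetilde{N}_b(N) \geq \gamma \widetilde{N}_a(N)$ for any $\gamma < c_1/c_2$. For IAT2, the selection rule reads $\mathcal{I}_a(N-1) \leq \mathcal{I}_b(N-1) + \log(\widetilde{N}_b(N-1)/\widetilde{N}_a(N-1))$; since $\widetilde{N}_b(N-1) \leq N$ and $\widetilde{N}_a(N-1) \geq (N-1)^\alpha - C$ by Proposition \ref{prop:explo}, the log correction is $O(\log N) = o(N^\alpha)$, hence dominated by $c_1 \widetilde{N}_a(N-1)$ for $N$ large, and the same conclusion follows. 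The main (mildly) delicate point will be verifying that the $\log$-correction in IAT2 is indeed negligible compared to $\widetilde{N}_a(N-1)$ uniformly in the sample path; this is handled by the universal lower bound $\widetilde{N}_a(N-1) \geq (N-1)^\alpha - C$ from Proposition \ref{prop:explo}, which is super-logarithmic. Finally, taking $M_2$ to be the maximum of the thresholds arising in the two cases and choosing $\gamma$ to be a suitable fraction of $c_1/c_2$ yields the stated bound.
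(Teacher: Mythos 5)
Your proposal is correct and follows essentially the same route as the paper's proof: case split on exploration versus non-exploration, then in the non-exploration case combine the selection rule with the index envelope and the bound $\widetilde{N}_1(N-1)\geq \beta_1(N-1)$ from Corollary~\ref{cor:N_1_is_Omega_N}, treating the IAT2 $\log$-term as a lower-order correction. The only differences are cosmetic: in the exploration case you use the argmin property directly instead of the $N^\alpha-C$ bound, and for IAT2 you absorb the $O(\log N)$ term via the super-logarithmic exploration floor rather than the paper's bound $\log\widetilde{N}_b\leq \widetilde{N}_b/e$; both work.
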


\begin{proof}
    We consider two separate cases. \\

    \bulletwithnote{Case I: Iteration $N$ is an exploration} Then we have a constant $C_1$ (introduced in statement 2 of Proposition \ref{prop:explo}) such that,
    \begin{align}\label{eqn:N_b_>_N_a_explo}
        \widetilde{N}_b(N)&\geq N^\alpha-C_1\geq \widetilde{N}_a(N-1)-C_1=\widetilde{N}_a(N)-1-C_1.
    \end{align}

    We also have $\widetilde{N}_a(N)\geq N^\alpha-C_1$. Now let $M_2\geq 2$ be chosen such that, $M_2^\alpha>3C_1+2$. As a result, for $N\geq\max\{M_2,T_2\}$ we have, $\widetilde{N}_a(N)\geq N^\alpha-C_1\geq 2(1+C_1)$. This together with (\ref{eqn:N_b_>_N_a_explo}) implies, $\widetilde{N}_b(N)\geq \frac{1}{2}\widetilde{N}_a(N)$. \\

    \bulletwithnote{Case II: Iteration $N$ is not an exploration} We consider the AT2 (\ref{code:AT2}) and IAT2 (\ref{code:IAT2}) algorithms separately. 

    \bulletwithnote{Case II.I~~AT2 Algorithm} If iteration $N$ is not an exploration, we have, $\mathcal{I}_a(N-1)\leq \mathcal{I}_b(N-1)$. Using (\ref{eqn:envelope_I1}), for $N\geq \max\{M_2,T_2\}$, we have constants $C_2$ and $C_3$ such that,
    \begin{align*}
        C_2\frac{\widetilde{N}_1(N-1)\cdot\widetilde{N}_a(N-1)}{\widetilde{N}_1(N-1)+\widetilde{N}_a(N-1)}&\leq \mathcal{I}_a(N-1)\\
        \leq \mathcal{I}_b(N-1)&\leq C_3\frac{\widetilde{N}_1(N-1)\cdot\widetilde{N}_b(N-1)}{\widetilde{N}_1(N-1)+\widetilde{N}_b(N-1)}.
    \end{align*}
    As a result, we have, 
    \[\widetilde{N}_a(N-1)\leq \frac{C_3}{C_2}\frac{\widetilde{N}_1(N-1)+\widetilde{N}_a(N-1)}{\widetilde{N}_1(N-1)+\widetilde{N}_b(N-1)}\widetilde{N}_b(N-1)\overset{(1)}{\leq}\frac{2C_3}{\beta_1 C_2}\widetilde{N}_b(N-1)\leq \frac{2C_3}{\beta_1 C_2}\widetilde{N}_b(N), \]
    where, for (1), we first note that $N>T_2+1$. As a result, using Corollary (\ref{cor:N_1_is_Omega_N}),~~$\widetilde{N}_1(N-1)\geq \beta_1 (N-1)$\quad($\beta_1\in(0,1)$ is the constant introduced in Corollary \ref{cor:N_1_is_Omega_N}), and on the other hand $\widetilde{N}_1(N-1)+\widetilde{N}_b(N-1)\leq 2(N-1)$. Hence, we get 
    \[\widetilde{N}_b(N)\geq \frac{\beta_1 C_2}{2C_3}\widetilde{N}_a(N-1)=\frac{\beta_1 C_2}{2C_3}(\widetilde{N}_a(N)-1).\]

    Again using statement 2 of Proposition \ref{prop:explo}, $\widetilde{N}_a(N)\geq N^\alpha-C_1\geq M_2^\alpha-C_1\geq 2$~~(since $M_2^\alpha\geq 3C_1+2$). As a result, 
    \[\widetilde{N}_b(N)\geq \frac{\beta_1 C_2}{4C_3}\widetilde{N}_a(N).\]

    Taking $\gamma=\min\left\{\frac{1}{2},\frac{\beta_1 C_2}{4C_3}\right\}$, we have the desired result for AT2 algorithm. 
    \bigskip

    \bulletwithnote{Case II.II~~IAT2 Algorithm} In this case, we have,
    \[\log(\widetilde{N}_a(N-1))+\mathcal{I}_a(N-1)~\leq~\log(\widetilde{N}_b(N-1))+\mathcal{I}_b(N-1).\]
    Using (\ref{eqn:envelope_I1}), we have constants $C_2$ and $C_3$ such that, 
    \begin{align*}
        C_2\frac{\widetilde{N}_1(N-1)\cdot\widetilde{N}_a(N-1)}{\widetilde{N}_1(N-1)+\widetilde{N}_a(N-1)}&\leq \log(\widetilde{N}_a(N-1))+\mathcal{I}_a(N-1)\\
        \leq \mathcal{I}_b(N-1)+\log(\widetilde{N}_b(N-1))&\leq C_3\frac{\widetilde{N}_1(N-1)\cdot\widetilde{N}_b(N-1)}{\widetilde{N}_1(N-1)+\widetilde{N}_b(N-1)}+\frac{1}{e}\widetilde{N}_b(N-1).
    \end{align*}
    As a result, using the same arguments used for the AT2 algorithm in Case II.I, we can conclude, 
    \[\widetilde{N}_b(N)\geq \frac{\beta_1}{4}\left(\frac{C_3}{C_2}+\frac{1}{e}\right)^{-1} \widetilde{N}_a(N),\]
    for $N>\max\{M_2,T_2\}+1$. \\

    Now taking $\gamma=\min\left\{\frac{1}{2},\frac{\beta_1}{4}\left(\frac{C_2}{C_3}+\frac{1}{e}\right)^{-1}\right\}$, we have the desired result for IAT2 algorithm.
\end{proof}

\begin{definition}\label{def:T3}
    \emph{We define the random time $T_3=\max\{M_2,T_2\}+2$, where $M_2$ is the constant introduced in Lemma \ref{lem:Na_by_Nb_bound}, and $T_2$ is the random time defined in Lemma \ref{lem:universal_g_bound}.}
\end{definition} 

\begin{definition}\label{def:T4}
    \emph{We define the random time $T_4$ as $T_4=\frac{1}{\beta_2}(T_3+1)$, where $\beta_2\in(0,1)$ is the constant introduced in Corollary \ref{cor:N_1_is_Omega_N}.} 
\end{definition}

Note that $\mathbb{E}_{\mathbold{\mu}}[T_3]<\infty$, since $\mathbb{E}_{\mathbold{\mu}}[T_2]<\infty$. For the same reason, by Definition \ref{def:T4}, $T_4$ satisfies $\mathbb{E}_{\mathbold{\mu}}[T_4]<\infty$ since $\mathbb{E}_{\mathbold{\mu}}[T_3]<\infty$.

\begin{lemma}[\textbf{Sufficient sampling}]\label{lem:N_a_are_Theta_N}
    For every $N\geq T_4$, we have $\widetilde{N}_a(N)=\Theta(N)$ for every $a\in[K]$.
\end{lemma}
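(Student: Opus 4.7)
\noindent\textbf{Proof plan for Lemma \ref{lem:N_a_are_Theta_N}.}

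The upper bound $\widetilde{N}_a(N) \leq N$ is immediate from $\sum_b \widetilde{N}_b(N) = N$, so it suffices to produce a uniform constant lower bound of the form $\widetilde{N}_a(N) \geq c N$ for every arm $a \in [K]$ and every $N \geq T_4$. For the empirically best arm ($a=1$), this is delivered directly by statement~1 of Corollary~\ref{cor:N_1_is_Omega_N}: since $T_4 \geq T_3 \geq T_2 + 1$, we have $\widetilde{N}_1(N) \geq \beta_1 N$ for all $N \geq T_4$. Thus the only real work is in bounding $\widetilde{N}_a(N)$ for alternative arms $a \in [K]\setminus\{1\}$.

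The plan is to bootstrap from Corollary~\ref{cor:N_1_is_Omega_N} (which guarantees that \emph{some} alternative arm $a^\star$ has $\widetilde{N}_{a^\star}(N) \geq \beta_2 N$) to a bound on every alternative arm using Lemma~\ref{lem:Na_by_Nb_bound}. Fix $N \geq T_4$ and an alternative arm $b \in [K]\setminus\{1\}$. Pick any $a^\star \in [K]\setminus\{1\}$ attaining $\widetilde{N}_{a^\star}(N) = \max_{c \neq 1} \widetilde{N}_c(N) \geq \beta_2 N$; if $b=a^\star$ we are done. Otherwise, let
\[
\tau \;=\; \max\bigl\{\,t \leq N \;:\; A_t = a^\star\,\bigr\},
\]
i.e., the last iteration up to $N$ at which $a^\star$ was sampled (this is well-defined since $\widetilde{N}_{a^\star}(N) \geq \beta_2 N > 0$). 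By definition of $\tau$, arm $a^\star$ is not sampled in $(\tau, N]$, so $\widetilde{N}_{a^\star}(\tau) = \widetilde{N}_{a^\star}(N) \geq \beta_2 N$, and trivially $\tau \geq \widetilde{N}_{a^\star}(\tau) \geq \beta_2 N$.

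The key verification is that $\tau$ is large enough to invoke Lemma~\ref{lem:Na_by_Nb_bound}, which requires $\tau > \max\{M_2, T_2\} + 1 = T_3 - 1$. Since $N \geq T_4 = (T_3+1)/\beta_2$, we get $\tau \geq \beta_2 N \geq T_3 + 1 > T_3 - 1$ as needed (this is precisely why $T_4$ was defined with the factor $1/\beta_2$). Applying Lemma~\ref{lem:Na_by_Nb_bound} at iteration $\tau$ to the picked arm $a^\star$ and the alternative $b$ yields
\[
\widetilde{N}_b(\tau) \;\geq\; \gamma\, \widetilde{N}_{a^\star}(\tau) \;\geq\; \gamma \beta_2 N.
\]
Monotonicity of $\widetilde{N}_b(\cdot)$ then gives $\widetilde{N}_b(N) \geq \widetilde{N}_b(\tau) \geq \gamma \beta_2 N$. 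Combining all three cases, we may take the common lower-bound constant $c = \min\{\beta_1, \beta_2, \gamma \beta_2\} > 0$, finishing the proof.

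There is no substantive obstacle here: the argument is essentially an accounting step that chains Corollary~\ref{cor:N_1_is_Omega_N} and Lemma~\ref{lem:Na_by_Nb_bound} via the ``last pull of $a^\star$'' trick. The only point requiring slight care is ensuring $\tau \geq T_3$ so that Lemma~\ref{lem:Na_by_Nb_bound} is applicable at iteration $\tau$, which is exactly what the choice $T_4 = (T_3+1)/\beta_2$ in Definition~\ref{def:T4} was engineered to guarantee.
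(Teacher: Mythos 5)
Your proof is correct and follows essentially the same route as the paper's: identify the most-sampled alternative arm via Corollary~\ref{cor:N_1_is_Omega_N}, go back to its last pull, check that this pull occurs after $T_3$ (which is exactly what the factor $1/\beta_2$ in Definition~\ref{def:T4} ensures), and apply Lemma~\ref{lem:Na_by_Nb_bound} there together with monotonicity of $\widetilde{N}_b(\cdot)$. The only cosmetic difference is how you certify that the last pull time exceeds $T_3$ (via $\tau \geq \widetilde{N}_{a^\star}(\tau) \geq \beta_2 N$ rather than the paper's counting remark that an arm with more than $T_3$ samples must have been pulled after iteration $T_3$), which is an equally valid bookkeeping step.
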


\begin{proof}
    For $a=1$ and every $N\geq T_4$, we have $\widetilde{N}_1(N)=\Theta(N)$ by Corollary \ref{cor:N_1_is_Omega_N}. \\

    Otherwise for $a\neq 1$ and every $N\geq T_4$, we define $A_N^\prime=\argmax{b\in[K]/\{1\}}{\widetilde{N}_b(N)}$. By Corollary \ref{cor:N_1_is_Omega_N},
    \[\widetilde{N}_{A_N^\prime}(N)~\geq~\beta_2 N~\geq~\beta_2 T_4~=~T_3+1.\]
    
    Therefore, arm $A_N^\prime$ must have been pulled by the algorithm somewhere between iterations $T_3$ and $N$. Let us define the time $N^\prime$ to be the last time before $N$ when $A_N^\prime$ was pulled. Then $N^\prime\geq T_3>\max\{M_2,T_2\}+1$. As a result, using Lemma \ref{lem:Na_by_Nb_bound}, we have, 
    \[\widetilde{N}_a(N^\prime)~\geq~\gamma \widetilde{N}_{A_N^\prime}(N^\prime)~\overset{\text{(i)}}{=}~\gamma\widetilde{N}_{A_N^\prime}(N),\]
    
    where (i) follows by definition of $N^\prime$. Since $N\geq N^\prime$, we have $\widetilde{N}_a(N)\geq\widetilde{N}_a(N^\prime)$. As a result, we obtain, for every $a\in[K]/\{1\}$ and $N\geq T_4$, 
    \[\widetilde{N}_a(N)~\geq~\gamma\widetilde{N}_{A_N^\prime}(N)~=~\gamma\max_{b\in[K]/\{1\}}\widetilde{N}_b(N)~\geq~\gamma\beta_2 N,\]
    
    where $\beta_2\in(0,1)$ is the constant mentioned in Corollary \ref{cor:N_1_is_Omega_N}. Hence we have $\widetilde{N}_a(N)=\Omega(N)$ for $N\geq T_4$ and the lemma follows. 
\end{proof}

\subsection{Proving Lemma \ref{lem:period}}
\label{sec:proof_of_prop:period}
In this appendix, we prove Lemma \ref{lem:period} from Appendix \ref{appndx:index_convergence}. For improved clarity, we reiterate Lemma \ref{lem:period} from Appendix \ref{appndx:index_convergence}. 

\noindent\textbf{Statement of Lemma \ref{lem:period}.} \hspace{0.05in}\emph{For both AT2 and IAT2 algorithms, there exists constants $M_5\geq 1$ and $C_1>0$ independent of the sample paths, such that for every $N\geq\max\{M_5,T_6\}$, if the algorithm picks an arm $a\in[K]/\{1\}$ at iteration $N$, then it again picks arm $a$ within the next $\lceil C_1 N^{1-3\alpha/8}\rceil$ iterations.}
\bigskip

For every arm $a\in[K]$, iteration $N\geq 1$ and $R\geq 1$,  we define the following quantities, 
\begin{align*}
    \Delta\widetilde{N}_a(N,R)~&=~\widetilde{N}_a(N+R)-\widetilde{N}_a(N),\\
    b(N,R)~&=~\argmax{j\in[K]/\{1\}}{\frac{\Delta\widetilde{N}_j(N,R)}{\widetilde{N}_j(N)}},\quad\text{and}\\
    \tau(N,R)~&=~\max\{~t\leq N+R~~\vert~~A_t=b(N,R)~\}.
\end{align*} 

For proving Lemma \ref{lem:period}, we need the three technical lemmas: Lemma \ref{lem:triv1}, \ref{lem:triv2} and \ref{lem:indexes_are_omegaR}.
\begin{lemma}\label{lem:triv1}
    For $N\geq T_6$~~($T_6$ is the random time defined in the statement of Proposition \ref{prop:gbound} and it satisfies $\mathbb{E}_{\mathbold{\mu}}[T_6]<\infty$), we have, 
    \begin{align}\label{eqn:ratio_bound}
        \frac{\Delta\widetilde{N}_1(N,R)}{\Delta\widetilde{N}_{b(N,R)}(N,R)}&\leq\frac{\widetilde{N}_1(N)}{\widetilde{N}_{b(N,R)}(N)}+O\left(RN^{-1}+(\Delta\widetilde{N}_{b(N,R)}(N,R))^{-1}N^{1-3\alpha/8}(1+RN^{-1})^3\right).
    \end{align}
\end{lemma}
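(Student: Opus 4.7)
The plan is to apply the mean value theorem to the map $N' \mapsto g(\mathbold{\mu}, \mathbold{\widetilde{N}}(N'))$ between $N$ and $N+R$, and leverage two structural facts. First, because each $x_{1,a} = (N_1 \mu_1 + N_a \mu_a)/(N_1 + N_a)$ is invariant under joint scaling of $(N_1, N_a)$, the anchor function $g(\mathbold{\mu}, \cdot)$ is homogeneous of degree zero; Euler's identity then reads $\sum_{a \in [K]} N_a \, \partial_{N_a} g(\mathbold{\mu}, \mathbold{N}) = 0$, and will be the algebraic engine of the argument. Second, since $N \geq T_6$, Proposition~\ref{prop:gbound} bounds $|g(\mathbold{\mu}, \mathbold{\widetilde{N}}(N))|$ and $|g(\mathbold{\mu}, \mathbold{\widetilde{N}}(N+R))|$ by $O(N^{-3\alpha/8})$, and Lemma~\ref{lem:N_a_are_Theta_N} (applicable because $T_6 > T_4$) gives $\widetilde{N}_a(N) = \Theta(N)$ for every arm.

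MVT yields an intermediate allocation $\hat{\mathbold{N}}$ with $\hat{N}_a \in [\widetilde{N}_a(N), \widetilde{N}_a(N+R)]$ satisfying
\[
g(\mathbold{\mu}, \mathbold{\widetilde{N}}(N+R)) - g(\mathbold{\mu}, \mathbold{\widetilde{N}}(N)) = \sum_{a \in [K]} \partial_{N_a} g(\mathbold{\mu}, \hat{\mathbold{N}}) \cdot \Delta \widetilde{N}_a(N,R).
\]
Substituting Euler's relation $\hat{N}_1 \partial_{N_1} g(\mathbold{\mu}, \hat{\mathbold{N}}) = -\sum_{a \neq 1} \hat{N}_a \partial_{N_a} g(\mathbold{\mu}, \hat{\mathbold{N}})$ and dividing through by the positive quantity $S := \sum_{a \neq 1} \hat{N}_a \partial_{N_a} g(\mathbold{\mu}, \hat{\mathbold{N}})$ recasts the identity as the probability-weighted average
\[
\frac{\Delta \widetilde{N}_1(N,R)}{\hat{N}_1} = \sum_{a \neq 1} \hat{p}_a \, \frac{\Delta \widetilde{N}_a(N,R)}{\hat{N}_a} - \frac{g(\mathbold{\mu}, \mathbold{\widetilde{N}}(N+R)) - g(\mathbold{\mu}, \mathbold{\widetilde{N}}(N))}{S},
\]
with $\hat{p}_a = \hat{N}_a \partial_{N_a} g / S \geq 0$ summing to one. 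Because $\hat{N}_a \geq \widetilde{N}_a(N)$, the weighted average is at most $\max_{a \neq 1} \Delta \widetilde{N}_a(N,R)/\widetilde{N}_a(N)$, which equals $\Delta\widetilde{N}_{b(N,R)}(N,R)/\widetilde{N}_{b(N,R)}(N)$ by definition of $b(N,R)$.

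To convert $\hat{\mathbold{N}}$-quantities back to $\mathbold{\widetilde{N}}(N)$-quantities, I would use $\hat{N}_a \leq \widetilde{N}_a(N) + R = \widetilde{N}_a(N)(1 + O(R/N))$; in particular $\hat{N}_1 \leq \widetilde{N}_1(N)(1 + O(R/N))$, and the envelope estimates (\ref{eqn:envelope_f}), (\ref{eqn:order_of_derv_of_g_wrt_N}) in Appendix~\ref{sec:spef} combined with the range $\hat{N}_a/\hat{N}_1 \in [\Omega((1+R/N)^{-1}),\, O(1+R/N)]$ yield the uniform lower bound $S = \Omega((1+R/N)^{-2})$. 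Plugging these into the previous inequality, multiplying through by $\widetilde{N}_1(N)/\Delta\widetilde{N}_{b(N,R)}(N,R)$, and using $\widetilde{N}_1(N)/\widetilde{N}_{b(N,R)}(N) = \Theta(1)$ yields the claim: the $O(RN^{-1})$ term arises from the $O(R/N)$ slack in the factor multiplying $\widetilde{N}_1(N)/\widetilde{N}_{b(N,R)}(N)$, while the cubic factor $(1+RN^{-1})^3$ assembles as $(1+RN^{-1})^{2}$ from the lower bound on $S$ together with one extra $(1+RN^{-1})$ from rescaling $\hat{N}_1$ to $\widetilde{N}_1(N)$.

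The main obstacle is establishing the uniform lower bound $S = \Omega((1+R/N)^{-2})$: because $R$ is not assumed small relative to $N$ in the lemma statement, one cannot simply read off $\Theta$-constants at $\mathbold{\widetilde{N}}(N)$, but must apply the envelopes at the intermediate point $\hat{\mathbold{N}}$ and track the full worst-case range of ratios $\hat{N}_a/\hat{N}_1$. Once that uniformity is in hand, the remainder is routine substitution.
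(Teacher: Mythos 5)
Your proposal is correct and follows essentially the same route as the paper. You apply the mean value theorem to $g(\mathbold{\mu},\mathbold{\widetilde{N}}(\cdot))$ between $N$ and $N+R$, use Proposition~\ref{prop:gbound} to bound the increment in $g$, exploit the identity $\hat{N}_1 \partial_{N_1} g + \sum_{a\neq 1}\hat{N}_a \partial_{N_a} g = 0$ to recast the result as a probability-weighted average, then replace $\hat{N}_a$ by $\widetilde{N}_a(N)$ in the denominators, take the maximum over $a\neq 1$, and conclude by controlling $\hat{N}_1 \leq N+R$ and $S^{-1} = O((1+RN^{-1})^2)$ via (\ref{eqn:order_of_derv_of_g_wrt_N}) and Lemma~\ref{lem:N_a_are_Theta_N}. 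The paper obtains the cancellation by writing out the partial derivatives (\ref{eqn:partial_derv_of_g_wrt_N}) explicitly rather than invoking Euler's theorem for degree-zero homogeneity, but the two are the same computation; your framing is merely a cleaner way to see why the weights $\hat{p}_a$ sum to one. The ``main obstacle'' you flag — obtaining the uniform lower bound on $S$ at the intermediate point $\hat{\mathbold{N}}$ rather than at $\mathbold{\widetilde{N}}(N)$ — is handled in the paper exactly as you propose, via $\hat{N}_a \geq \widetilde{N}_a(N)=\Theta(N)$ and $\hat{N}_1 \leq N+R$.
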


\begin{proof}
In the proof, for readability, we use $b$ to denote $b(N,R)$. Also, for every arm $a\in[K]$, we use $\Delta\widetilde{N}_a$ to denote $\Delta\widetilde{N}_a(N,R)$.

By Proposition \ref{prop:gbound}, we have a constant $C>0$ independent of the sample paths, such that for $N\geq T_6$,
\[\left|~g(\mathbold{\mu},\mathbold{\widetilde{N}}(N+R))-g(\mathbold{\mu},\mathbold{\widetilde{N}}(N))~\right|~\leq~2CN^{-3\alpha/8}.\]

Therefore, applying the mean value theorem, we have, 
\[\left|~\frac{\partial g}{\partial N_1}(\mathbold{\mu},\mathbold{\hat{N}})\Delta\widetilde{N}_1 +\sum_{a\in[K]/\{1\}}\frac{\partial g}{\partial N_a}(\mathbold{\mu},\mathbold{\hat{N}})\Delta\widetilde{N}_a~\right|~\leq~2CN^{-3\alpha/8},\]

where~~$\mathbold{\hat{N}}=(\hat{N}_a)_{a\in[K]}$,~~with ~~$\hat{N}_a\in\left[\widetilde{N}_a(N),\widetilde{N}_a(N+R)\right]$~~ for every $a\in[K]$. 

Now expanding the partial derivatives of $g$ with respect to $N_a$'s for $a\in[K]$, we get
\begin{align} \label{eqn:sumsqgrad}
\left|\Delta \widetilde{N}_1 \sum_{a\in[K]/\{1\}} f(\mathbold{\mu},a,\mathbold{\hat{N}})\frac{\hat{N}_a\Delta_a}{(\hat{N}_1+\hat{N}_a)^2} -
\sum_{a\in[K]/\{1\}} f(\mathbold{\mu},a,\mathbold{\hat{N}})\Delta \widetilde{N}_a\frac{\hat{N}_1\Delta_a}{(\hat{N}_1+\hat{N}_a)^2} 
\right| &\leq 2 CN^{-3\alpha/8},
\end{align}

where, $f(\mathbold{\mu},a,\mathbold{\hat{N}})$ were defined in (\ref{eqn:partial_derv_of_g_wrt_N}) of Appendix \ref{sec:spef}.
 
Letting
\[
\hat{p}_j = \frac{f(\mathbold{\mu},j,\mathbold{\hat{N}})\frac{\hat{N}_j \Delta_j}{(\hat{N}_1+\hat{N}_j)^2}}{
\sum_{a \ne 1} f(\mathbold{\mu},a,\mathbold{\hat{N}})\frac{\hat{N}_a \Delta_a}{(\hat{N}_1+\hat{N}_a)^2}}
\] for every arm $j\in [K]/\{1\}$, we have, 
\begin{align}\label{eqn:ub1}
    \left|~\frac{\Delta \widetilde{N_1}}{\hat{N_1}}-\sum_{a\neq 1}\hat{p}_a\frac{\Delta \widetilde{N_a}}{\hat{N_a}}~\right|~&\leq~2CN^{-3\alpha/8}\times \hat{N}_1^{-1}\left(\sum_{a\in[K]/\{1\}}f(\mathbold{\mu}, a,\mathbold{\hat{N}})\frac{\hat{N_a}\Delta_a}{(\hat{N_1}+\hat{N_a})^2}\right)^{-1}.
\end{align}
We know from (\ref{eqn:envelope_f}) of Appendix \ref{sec:spef} that  $f(\mathbold{\mu},a,\mathbold{N})=\Theta\left(\frac{N_a}{N_1}\left(1+\frac{N_a}{N_1}\right)^2\right)$. As a result, 
\[\hat{N}_1^{-1}\left(\sum_{a\neq 1}f(\mathbold{\mu}, a,\mathbold{\hat{N}})\frac{\hat{N_a}\Delta_a}{(\hat{N_1}+\hat{N_a})^2}\right)^{-1}=\Theta\left(\frac{\hat{N}_1^2}{\sum_{a\in[K]/\{1\}}\hat{N}_a^2}\right)\]
By Lemma \ref{lem:N_a_are_Theta_N}, we have $\hat{N}_a\geq \widetilde{N}_a=\Theta(N)$ and $\hat{N}_1\leq N+R$ for $N\geq T_6$.  As a result, we get, 
\[\widetilde{N}_1^{-1}\left(\sum_{a\neq 1} f(\mathbold{\mu}, a, \mathbf{\hat{N}})\frac{\hat{N_a}\Delta_a}{(\hat{N_1}+\hat{N_a})^2}\right)^{-1}=~O\left((1+RN^{-1})^2\right).\]
Putting this in (\ref{eqn:ub1}), we obtain
\begin{align*}
    \left|~\frac{\Delta \widetilde{N_1}}{\hat{N_1}}-\sum_{a\neq 1}\hat{p}_a\frac{\Delta \widetilde{N_a}}{\hat{N_a}}~\right|~&=~O\left( N^{-3\alpha/8} (1+RN^{-1})^2\right),
\end{align*}
which further implies,
\begin{align*}
    \frac{\Delta\widetilde{N_1}}{\hat{N_1}}~&\leq~\sum_{a\neq 1}\hat{p}_a\frac{\Delta \widetilde{N_a}}{\hat{N_a}}+ O\left( N^{-3\alpha/8} (1+RN^{-1})^2\right)\\
    ~&\leq~\sum_{a\neq 1}\hat{p}_a\frac{\Delta \widetilde{N_a}}{\widetilde{N_a}}+ O\left(N^{-3\alpha/8} (1+RN^{-1})^2\right)\quad\text{(since $\widetilde{N}_a\geq \hat{N}_a$)}.
\end{align*}
Let $b=\argmax{a\neq 1}{\frac{\Delta\widetilde{N}_a}{\widetilde{N}_a}}$. The above inequality implies,
\begin{align*}
    \frac{\Delta \widetilde{N_1}}{\hat{N_1}}~&\leq~\frac{\Delta \widetilde{N_b}}{\widetilde{N_b}}+ O\left(N^{-3\alpha/8} (1+RN^{-1})^2\right). 
\end{align*}
Now multiplying both sides by $\hat{N_1}/\Delta \widetilde{N_b}$, we get, 
\begin{align*}
    \frac{\Delta \widetilde{N_1}}{\Delta \widetilde{N_b}}~&\leq~\frac{\hat{N_1}}{\widetilde{N_b}}+O\left(\Delta\widetilde{N_b}^{-1}\hat{N}_1 N^{-3\alpha/8} (1+RN^{-1})^2\right)\\ 
    ~&\leq~\frac{\hat{N_1}}{\widetilde{N_b}}+O\left(\Delta\widetilde{N_b}^{-1}N^{1-3\alpha/8} (1+RN^{-1})^3\right)\quad\text{(since $\hat{N}_1\leq N+R$)}.
\end{align*} 
(\ref{eqn:ratio_bound}) follows from the above inequality upon observing that, $\hat{N_1}\leq \widetilde{N_1}+R$ and $\widetilde{N}_b=\Omega(N)$~~(by Lemma \ref{lem:N_a_are_Theta_N}).
\end{proof}

\begin{lemma}\label{lem:triv2}
    There exists constants $M_{51}\geq 1$, $D_1,D_2>0$ independent of the sample paths such that, for $N\geq \max\{M_{51},T_6\}$, we have $\lceil D_1 N^{1-3\alpha/8}\rceil<N$, and for all $R\in\{\lceil D_1 N^{1-3\alpha/8}\rceil,~\lceil D_1 N^{1-3\alpha/8}\rceil+1,\hdots,~N\}$, we have $\Delta\widetilde{N}_{b(N,R)}(N,R)\geq D_2 R$. 
\end{lemma}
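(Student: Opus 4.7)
\textbf{Proof proposal for Lemma \ref{lem:triv2}.}
The plan is to combine three ingredients: (i) the conservation $\sum_{a\in[K]} \Delta\widetilde{N}_a(N,R) = R$; (ii) the fact that $b(N,R)$ is the arm maximizing the normalized increment ratio, which lets us convert the conservation identity into a bound of the form $R - \Delta\widetilde{N}_1(N,R) \le C \cdot \Delta\widetilde{N}_{b(N,R)}(N,R)$; and (iii) the bound on $\Delta\widetilde{N}_1(N,R)/\Delta\widetilde{N}_{b(N,R)}(N,R)$ provided by Lemma \ref{lem:triv1}. Throughout, we restrict to $R \le N$, which allows the factor $(1+RN^{-1})^3$ in Lemma \ref{lem:triv1} to be absorbed into an absolute constant, and requires us to choose $M_{51}$ so that $\lceil D_1 N^{1-3\alpha/8}\rceil < N$, which holds for all sufficiently large $N$ since $3\alpha/8 > 0$.

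First I would invoke Lemma \ref{lem:N_a_are_Theta_N}: since $T_6 \ge T_4$, every arm $j \in [K]$ satisfies $\widetilde{N}_j(N) = \Theta(N)$ for $N \ge T_6$, so there is a universal constant $\kappa > 0$ with $\widetilde{N}_j(N)/\widetilde{N}_{b(N,R)}(N) \le \kappa$ for all $j \neq 1$. Writing $b = b(N,R)$ for brevity, the definition of $b$ gives $\Delta\widetilde{N}_j(N,R) \le (\widetilde{N}_j(N)/\widetilde{N}_b(N))\,\Delta\widetilde{N}_b(N,R)$ for each $j \in [K]/\{1\}$. Summing over $j \neq 1$ and using the conservation identity yields
\begin{equation} \label{eqn:plan-cons}
R - \Delta\widetilde{N}_1(N,R) \;=\; \sum_{j \neq 1} \Delta\widetilde{N}_j(N,R) \;\le\; \kappa(K-1)\,\Delta\widetilde{N}_b(N,R).
\end{equation}

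Next I would apply Lemma \ref{lem:triv1}. For $R \le N$ we have $1 + RN^{-1} \le 2$, so the error term in \eqref{eqn:ratio_bound} simplifies to $O(1) + O(\Delta\widetilde{N}_b(N,R)^{-1} N^{1-3\alpha/8})$. Multiplying through by $\Delta\widetilde{N}_b(N,R)$ and using $\widetilde{N}_1(N)/\widetilde{N}_b(N) = O(1)$ (again by Lemma \ref{lem:N_a_are_Theta_N}), there exist absolute constants $C_1', C_2' > 0$ with
\begin{equation} \label{eqn:plan-triv1}
\Delta\widetilde{N}_1(N,R) \;\le\; C_1'\, \Delta\widetilde{N}_b(N,R) \;+\; C_2'\, N^{1-3\alpha/8}.
\end{equation}
Adding \eqref{eqn:plan-cons} and \eqref{eqn:plan-triv1} gives
\[
R \;\le\; (\kappa(K-1) + C_1')\, \Delta\widetilde{N}_b(N,R) \;+\; C_2'\, N^{1-3\alpha/8}.
\]

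Finally, I would choose $D_1 = 2C_2'$ so that whenever $R \ge \lceil D_1 N^{1-3\alpha/8}\rceil$ we have $C_2' N^{1-3\alpha/8} \le R/2$, and hence $\Delta\widetilde{N}_b(N,R) \ge R/(2(\kappa(K-1) + C_1'))$, which yields the claim with $D_2 = 1/(2(\kappa(K-1)+C_1'))$. The constant $M_{51}$ is then chosen large enough to guarantee $\lceil D_1 N^{1-3\alpha/8}\rceil < N$ for all $N \ge M_{51}$.

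The main obstacle is bookkeeping of the constants so that the error term of Lemma \ref{lem:triv1} truly behaves like $O(1)\cdot \Delta\widetilde{N}_b + O(N^{1-3\alpha/8})$ uniformly in $R \in [\lceil D_1 N^{1-3\alpha/8}\rceil, N]$; the cubic factor $(1+RN^{-1})^3$ is harmless on this range but needs to be handled explicitly, and one should verify that the $\Theta(N)$ bounds on the $\widetilde{N}_j$ survive when evaluated at the intermediate point $\mathbold{\hat N}$ used inside Lemma \ref{lem:triv1}, which follows from $\widetilde{N}_j(N) \le \hat N_j \le \widetilde{N}_j(N) + R \le N + R \le 2N$.
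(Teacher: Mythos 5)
Your proposal is correct and follows essentially the same route as the paper's proof: both rely on Lemma \ref{lem:triv1} (with $RN^{-1}\le 1$ to tame the cubic factor), the $\Theta(N)$ bounds of Lemma \ref{lem:N_a_are_Theta_N}, the maximality of $b(N,R)$, and the conservation identity $\sum_a \Delta\widetilde{N}_a = R$. The only cosmetic difference is that you first bound $R-\Delta\widetilde{N}_1$ by $\kappa(K-1)\Delta\widetilde{N}_b$ and then add the bound on $\Delta\widetilde{N}_1$, whereas the paper substitutes the $\Delta\widetilde{N}_1$ bound directly into the conservation sum; the constants come out the same up to relabeling.
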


\begin{proof}
    To simplify notation, we adopt $b$ to denote $b(N,R)$ and, for every arm $j\in[K]$, we use $\Delta\widetilde{N}_j$ to denote $\Delta\widetilde{N}_j(N,R)$. 

    By (\ref{eqn:ratio_bound}) of Lemma \ref{lem:triv1}, there exists a constant $D_3>0$ independent of the sample paths, such that, for all $N\geq T_6$ and $R\in\{1,2,\hdots,N\}$,
    \begin{align*}
      \Delta\widetilde{N}_1~&\leq~\Delta\widetilde{N}_b\times\left(\frac{\widetilde{N}_1}{\widetilde{N}_b}+D_3\left(1+\Delta\widetilde{N}_b^{-1}N^{1-3\alpha/8}\right)\right)\quad\text{(we have $RN^{-1}\leq 1$)}\\
      ~&\leq~\left(\frac{\widetilde{N}_1}{\widetilde{N}_b}+D_3\right)\Delta\widetilde{N}_b+D_3 N^{1-3\alpha/8}
    \end{align*}
    Since $\widetilde{N}_1(N)$ and $\widetilde{N}_b(N)$ are both $\Theta(N)$ (by Lemma \ref{lem:N_a_are_Theta_N}), we have a constant $D_4>0$ such that, $\frac{\widetilde{N}_1(N)}{\widetilde{N}_b(N)}+D_3\leq D_4$ for every $N\geq T_6$. Using this in the above inequality, we get, 
    \begin{align}\label{eqn:Delta_Nb_lb1}
        \Delta\widetilde{N}_1 &\leq D_4\Delta\widetilde{N}_b+D_3 N^{1-3\alpha/8}.
    \end{align}
    
    We take $D_1=2D_3$ and $M_{51}\geq 1$ to be the smallest number such that, every $N\geq M_{51}$ satisfies $\lceil D_1 N^{1-3\alpha/8}\rceil<N$. 

    Now if $N\geq \max\{M_{51},T_6\}$ and $R\in\{\lceil D_1 N^{1-3\alpha/8}\rceil,~\lceil D_1 N^{1-3\alpha/8}\rceil+1,~\hdots,~N\}$, from (\ref{eqn:Delta_Nb_lb1}) we get, 
    \begin{align}\label{eqn:Delta_Nb_lb2}
        \Delta\widetilde{N}_1 &\leq D_4\Delta\widetilde{N}_b+D_3 N^{1-3\alpha/8}\nonumber\\
        &\leq  D_4\Delta\widetilde{N}_b+\frac{R}{2}.
    \end{align}

    By definition of $b$, we have, 
    \[\frac{\widetilde{N}_a}{\widetilde{N}_b}\Delta\widetilde{N}_b~\geq~\Delta\widetilde{N}_a\quad\text{for every $a\in[K]/\{1\}$}.\]
    
    Again, since $\widetilde{N}_a(N)=\Theta(N)$ for every $a\in[K]$ (by Lemma \ref{lem:N_a_are_Theta_N}), there exists a constant $D_5>0$ independent of the sample paths such that, 
    \[D_5\Delta\widetilde{N}_b~\geq~\Delta\widetilde{N}_a\quad\text{for every}~a\in[K]/\{1,b\}.\]
    As a result, 
    \begin{align*}
        R~&=~\sum_{a\in[K]/\{1,b\}}\Delta\widetilde{N}_a +\Delta\widetilde{N}_1+\Delta\widetilde{N}_b\\
        ~&\leq~(K-2)D_5\Delta\widetilde{N}_b+D_4\Delta\widetilde{N}_b+\frac{R}{2}+\Delta\widetilde{N}_b\quad\text{(using (\ref{eqn:Delta_Nb_lb2}))}\\
        ~&\leq~(1+(K-2)D_5+D_4)\Delta\widetilde{N}_b+\frac{R}{2},\\
        \implies~\Delta\widetilde{N}_b~&\geq~\frac{1}{2(1+(K-2)D_5+D_4)}R.
    \end{align*}
    Now taking $D_2=\frac{1}{2(1+(K-2)D_5+D_4)}>0$, we have the desired conclusion. 
\end{proof}

\begin{lemma}\label{lem:indexes_are_omegaR}
    For AT2 (\ref{code:AT2}) and IAT2 (\ref{code:IAT2}) algorithms, there exists constants $C_3,C_4>0$ independent of the sample paths, such that, for $N\geq T_6$ and $R\in\{1,2,\hdots,N\}$, if the algorithm pulls some arm $a\in[K]/\{1\}$ at iteration $N$ and doesn't pull $a$ for the next $R$ iterations, then,
    \begin{align}\label{eqn:decrement_in_index:at2}
       \text{\textbf{AT2:}}\quad\mathcal{I}_a(N+R)-\mathcal{I}_{b(N,R)}(N+R)~&\leq~-C_3 \Delta\widetilde{N}_{b(N,R)}(N,R)+C_4 N^{1-3\alpha/8}\nonumber\\ &\quad+O\left(\left(N^{-3\alpha/8}+RN^{-1}\right)R\right),
    \end{align}
    \begin{align}\label{eqn:decrement_in_index:iat2}
       \text{\textbf{IAT2:}}\quad\mathcal{I}^{(m)}_a(N+R)-\mathcal{I}^{(m)}_{b(N,R)}(N+R)~&\leq~-C_3 \Delta\widetilde{N}_{b(N,R)}(N,R)+C_4 N^{1-3\alpha/8}\nonumber\\
       &\quad+O\left(\left(N^{-3\alpha/8}+RN^{-1}\right)R\right),
    \end{align}
    where for every alternative arm $j\in[K]/\{1\}$, 
    \[\mathcal{I}_j^{(m)}(N)~=~\mathcal{I}_j(N)+\log(\widetilde{N}_j(N))\] 
    is the modified empirical index of that arm at iteration $N$
\end{lemma}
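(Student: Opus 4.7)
The plan is to mirror the fluid-dynamics argument of Section \ref{sec:index_meeting_fluid}, with the two added wrinkles that (i) we only have access to perturbed means $\mathbold{\widetilde{\mu}}$ that differ from $\mathbold{\mu}$ by at most $\epsilon(\mathbold{\mu}) N^{-3\alpha/8}$, and (ii) we must replace continuous derivatives by finite differences across the $R$ iterations between $N$ and $N+R$. Throughout, $b = b(N,R)$, and by assumption $\widetilde{N}_a(N+R) = \widetilde{N}_a(N)$ and $\widetilde{\mu}_a(N+R) = \widetilde{\mu}_a(N)$ because arm $a$ gets no samples in $(N,N+R]$.

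First I would apply the mean value theorem to the map $(N_1, N_\cdot) \mapsto \mathcal{W}_\cdot(\mathbold{\widetilde{\mu}}, (N_1, N_\cdot))$ between times $N$ and $N+R$, and then subtract the expansion for $b$ from that for $a$. Using Lemma \ref{lem:N_a_are_Theta_N} ($\widetilde{N}_j = \Theta(N)$) together with the envelopes (\ref{eqn:envelope_d1})--(\ref{eqn:envelope_d2}) to bound partial derivatives of $d$ and of $\widetilde{x}$ by $O(1)$, and using $|\widetilde{\mu}_j(N+R) - \widetilde{\mu}_j(N)| = O(N^{-3\alpha/8})$ to control the drift of the $\widetilde{\mu}$'s across the window, this yields
\begin{align*}
\mathcal{I}_a(N+R) - \mathcal{I}_b(N+R) &\leq \Delta\widetilde{N}_1(N,R)\bigl(d(\widetilde{\mu}_1, \widetilde{x}_{1,a}) - d(\widetilde{\mu}_1, \widetilde{x}_{1,b})\bigr) \\
&\quad - \Delta\widetilde{N}_b(N,R)\, d(\widetilde{\mu}_b, \widetilde{x}_{1,b}) + E(N,R),
\end{align*}
where $E(N,R) = O(N^{1-3\alpha/8}) + O\bigl((N^{-3\alpha/8} + RN^{-1})R\bigr)$; the first summand accounts for the $O(N)$ sample counts times the $O(N^{-3\alpha/8})$ drift of $\widetilde{\mu}_1$, and the second for the $R$ sample increments each contributing a second-order MVT remainder of size $O(N^{-3\alpha/8} + R/N)$.

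I then split on the sign of $d(\widetilde{\mu}_1, \widetilde{x}_{1,a}) - d(\widetilde{\mu}_1, \widetilde{x}_{1,b})$. If it is non-positive, the first term drops and we are done with $C_3$ any uniform positive lower bound for $d(\widetilde{\mu}_b, \widetilde{x}_{1,b})$ (such a bound exists by (\ref{eqn:envelope_kl_div}) and Lemma \ref{lem:N_a_are_Theta_N}). If it is positive, I replace $\Delta\widetilde{N}_1(N,R)$ by $(\widetilde{N}_1(N)/\widetilde{N}_b(N))\Delta\widetilde{N}_b(N,R)$ using Lemma \ref{lem:triv1}; the extra additive error from that substitution is bounded by $d(\widetilde{\mu}_1, \widetilde{x}_{1,a}) \cdot O\bigl(RN^{-1} + \Delta\widetilde{N}_b^{-1} N^{1-3\alpha/8}(1 + RN^{-1})^3\bigr) \cdot \Delta\widetilde{N}_b(N,R) = O(R^2/N) + O(N^{1-3\alpha/8})$, which is absorbed into $E(N,R)$. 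This step is the discrete-time analog of equation (\ref{eqn:log_derivative}). Next I invoke the algorithmic selection rule: since $a$ (not $b$) was chosen at iteration $N$, for AT2 we have $\mathcal{I}_a(N-1) \leq \mathcal{I}_b(N-1)$, which after expanding the indices and using Lemma \ref{lem:N_a_are_Theta_N} plus $|\widetilde{\mu}_j(N) - \widetilde{\mu}_j(N-1)| = O(1/N)$ rearranges to
\[
\widetilde{N}_1(N)\bigl(d(\widetilde{\mu}_1, \widetilde{x}_{1,a}) - d(\widetilde{\mu}_1, \widetilde{x}_{1,b})\bigr) \leq \widetilde{N}_b(N)\, d(\widetilde{\mu}_b, \widetilde{x}_{1,b}) - \widetilde{N}_a(N)\, d(\widetilde{\mu}_a, \widetilde{x}_{1,a}) + O(1).
\]
This is the discrete analog of the index-equality used in Section \ref{sec:index_meeting_fluid}. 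Substituting into the previous display and cancelling the $d(\widetilde{\mu}_b, \widetilde{x}_{1,b})$ terms leaves
\[
\mathcal{I}_a(N+R) - \mathcal{I}_b(N+R) \leq -\Delta\widetilde{N}_b(N,R)\, \frac{\widetilde{N}_a(N)}{\widetilde{N}_b(N)}\, d(\widetilde{\mu}_a, \widetilde{x}_{1,a}) + E(N,R) + O(R/N),
\]
and the coefficient of $\Delta\widetilde{N}_b(N,R)$ is bounded below by a positive constant uniformly over sample paths, yielding the constants $C_3$ and $C_4$ of the statement.

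For IAT2, the argument is identical with the modified index $\mathcal{I}^{(m)}_j = \mathcal{I}_j + \log \widetilde{N}_j$ in place of $\mathcal{I}_j$: the additional increments $|\log \widetilde{N}_b(N+R) - \log \widetilde{N}_b(N)| = O(R/N)$ and $|\log \widetilde{N}_a(N) - \log \widetilde{N}_a(N-1)| = O(1/N)$ (both by MVT and Lemma \ref{lem:N_a_are_Theta_N}) are absorbed into $E(N,R)$, and the inequality $\mathcal{I}^{(m)}_a(N-1) \leq \mathcal{I}^{(m)}_b(N-1)$ plays the same role as before. The main obstacle I expect is bookkeeping in the expansion above: one must show that every $\widetilde{\mu}$-perturbation in the MVT produces only $O(N^{-3\alpha/8})$-sized contributions (so that $E(N,R)$ does not pick up a worse scaling), while simultaneously arranging the leading $\Delta\widetilde{N}$-linear terms in the precise form that lets the algorithmic-selection identity cancel the $d(\widetilde{\mu}_b, \widetilde{x}_{1,b})$ coefficient. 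Once that alignment is in place, the rest is a direct finite-sample transcription of the fluid computation already carried out in the paper.
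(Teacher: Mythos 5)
Your proposal follows essentially the same route as the paper's proof: a multivariate mean value theorem expansion of the index difference over the window $[N,N+R]$, with the $\widetilde{\mu}$-noise contributing $O(N^{1-3\alpha/8})$ and the curvature corrections contributing $O\left(\left(N^{-3\alpha/8}+RN^{-1}\right)R\right)$, then a case split on the sign of $d(\widetilde{\mu}_1,\widetilde{x}_{1,a})-d(\widetilde{\mu}_1,\widetilde{x}_{1,b})$, with Lemma~\ref{lem:triv1} used to trade $\Delta\widetilde{N}_1$ for $(\widetilde{N}_1/\widetilde{N}_b)\Delta\widetilde{N}_b$ and the selection rule at iteration $N$ used to cancel the $d(\widetilde{\mu}_b,\widetilde{x}_{1,b})$ coefficient, leaving $-\frac{\widetilde{N}_a}{\widetilde{N}_b}d(\widetilde{\mu}_a,\widetilde{x}_{1,a})=-\Theta(1)$; the IAT2 adaptation via $\mathcal{I}^{(m)}$ is also as in the paper.

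Two small repairs are needed, neither of which changes the final bounds. First, the MVT across $[N,N+R]$ controls only the \emph{increment} of $\mathcal{I}_a-\mathcal{I}_b$, so your first display must also carry the time-$N$ gap $\mathcal{I}_a(N)-\mathcal{I}_b(N)$; without bounding it, Case~I does not follow. The paper bounds it (its eq.~(\ref{eqn:ub2})) by the same selection-rule ingredient you invoke later: since $a$ was pulled at $N$, $\mathcal{I}_a(N-1)\leq\mathcal{I}_b(N-1)=\mathcal{I}_b(N)$, and the one-step increment of $\mathcal{I}_a$ is $O(N^{1-3\alpha/8})$, so the gap is absorbed into the $C_4 N^{1-3\alpha/8}$ term — it is not purely a $\widetilde{\mu}$-drift effect as your accounting suggests. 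Second, your claim $|\widetilde{\mu}_j(N)-\widetilde{\mu}_j(N-1)|=O(1/N)$ is not justified for unbounded SPEF rewards; what the definition of $T_0$ gives is only $O(N^{-3\alpha/8})$, so the error in your rearranged selection-rule inequality is $O(N^{1-3\alpha/8})$ rather than $O(1)$. Dividing by $\widetilde{N}_b=\Theta(N)$ (Lemma~\ref{lem:N_a_are_Theta_N}) and multiplying by $\Delta\widetilde{N}_b\leq R$ this contributes $O(RN^{-3\alpha/8})$, which is already within the stated error, so the conclusion stands once these bookkeeping points are fixed.
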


\begin{proof}
For cleaner presentation, we use $b$ to denote $b(N,R)$. We also use $\widetilde{\mu}_j,~\widetilde{x}_{1,j},~\Delta\widetilde{N}_j$ and $\widetilde{N}_j$, respectively, to denote $\widetilde{\mu}_j(N),~\widetilde{x}_{1,j}(N),~\Delta\widetilde{N}_j(N,R)$ and $\widetilde{N}_j(N)$ for every arm $j\in[K]$, whenever it doesn't cause confusion. \\

\bulletwithnote{AT2 Algorithm} Let 
\[ S_{a,b}(\mathbold{\widetilde{N}}(N),\mathbold{\widetilde{\mu}}(N))~=~\mathcal{I}_a(N) - \mathcal{I}_b(N) \]
denote the the difference between the empirical indexes of the two arms $a$ and $b$ at iteration $N$. Note that $S_{a,b}(\mathbold{\widetilde{N}},\mathbold{\widetilde{\mu}})$ depends only on the tuple of variables~~$(\widetilde{N}_1, \widetilde{N}_a,
\widetilde{N}_b, \widetilde{\mu}_1,\widetilde{\mu}_a, \widetilde{\mu}_b)$.~~In the following argument, we apply mean value theorem over $S_{a,b}$ and the empirical indexes $\mathcal{I}_a$ and $\mathcal{I}_b$, treating them as functions of $\widetilde{N}_1$, $\widetilde{N}_a$,
$\widetilde{N}_b$, $\widetilde{\mu}_1$, $\widetilde{\mu}_a$, and $ \widetilde{\mu}_b$.  

Using the multivariate mean value theorem, we have, 
\begin{align}\label{eqn:expanding_S_using_MVT}
    S_{a,b}(\mathbold{\widetilde{N}}(N+R),\mathbold{\widetilde{\mu}}(N+R))-&S_{a,b}(\mathbold{\widetilde{N}}(N),\mathbold{\widetilde{\mu}}(N))~=~\sum_{j=1,a,b}\frac{\partial S_{a,b}}{\partial \mu_j}(\mathbold{\hat{N}},\mathbold{\hat{\mu}})\Delta\widetilde{\mu}_j\nonumber\\
    &\qquad+\frac{\partial S_{a,b}}{\partial N_1}(\mathbold{\hat{N}},\mathbold{\hat{\mu}})\cdot\Delta\widetilde{N}_1+\frac{\partial S_{a,b}}{\partial N_b}(\mathbold{\hat{N}},\mathbold{\hat{\mu}})\cdot\Delta\widetilde{N}_b,
\end{align}
where~~$\Delta\widetilde{\mu}_j=\widetilde{\mu}_j(N+R)-\widetilde{\mu}_j(N)$~~ for $j=1,a,b$, and~~ $(\mathbold{\hat{N}},\mathbold{\hat{\mu}})=(\hat{N}_1,\hat{N}_a,\hat{N}_b,\hat{\mu}_1,\hat{\mu}_a,\hat{\mu}_b)$~~ such that, for $j=1,a,b$, $\hat{\mu}_j$ lies between $\widetilde{\mu}_j(N)$ and $\widetilde{\mu}_j(N+R)$, and $\hat{N}_j$ lies in $\left[ \widetilde{N}_j(N),\widetilde{N}_j(N+R)\right]$. Note that there is no contribution on the RHS in (\ref{eqn:expanding_S_using_MVT}) due to $\widetilde{N}_a$, because $\widetilde{N}_a(\cdot)$ doesn't change during iterations $N+1,N+2,\hdots,N+R$, owing to our assumption that the algorithm doesn't pull arm $a$ during the mentioned iterations. 

First we consider the partial derivatives of $S_{j,b}$ with respect to $\mu_1,\mu_j$ and $\mu_b$ in (\ref{eqn:expanding_S_using_MVT}). We have 
\[ \frac{\partial S_{a,b}}{\partial \mu_1}(\mathbold{\hat{N}},\mathbold{\hat{\mu}}) ~=~\hat{N}_1\left(d_1(\hat{\mu}_1,\hat{x}_{1,a})-d_1(\hat{\mu}_1,\hat{x}_{1,b})\right), \]
where $\hat{x}_{1,j}=\frac{\hat{N}_1\hat{\mu}_1+\hat{N}_j\hat{\mu}_j}{\hat{N}_1+\hat{N}_j}$ for $j=a,b$.

Since $R\leq N$, we have for $j=1,a,b$,~~ $\hat{N}_j\leq \widetilde{N}_j+N=\Theta(N)$~~ (by Lemma \ref{lem:N_a_are_Theta_N}). As a result, using (\ref{eqn:envelope_d1}) of Appendix \ref{sec:spef}, both $d_1(\hat{\mu}_1,\hat{x}_{1,a})$ and $d_1(\hat{\mu}_1,\hat{x}_{1,b})$ are $\Theta(1)$. Using this, $\left|\frac{\partial S_{a,b}}{\partial \mu_1}(\mathbold{\hat{N}},\mathbold{\hat{\mu}})\right|$ is $O(N)$. 

Similarly, 
\[
\frac{\partial S_{a,b}}{\partial \mu_a}(\mathbold{\hat{N}},\mathbold{\hat{\mu}})
=\hat{N}_a d_1(\hat{\mu}_a,\hat{x}_{1,a})
\mbox{  and  }
\frac{\partial S_{a,b}}{\partial \mu_b}(\mathbold{\hat{N}},\mathbold{\hat{\mu}})
=- \hat{N}_b d_1(\hat{\mu}_b,\hat{x}_{1,b}).
\]
Using the same argument as for the partial derivative of $S_{a,b}$ with respect to $\mu_1$, we have $\left|\frac{\partial S_{a,b}}{\partial \mu_j}(\mathbold{\hat{N}},\mathbold{\hat{\mu}})\right|=O(N)$ for both $j=a,b$. 

Therefore, the contribution in the RHS of (\ref{eqn:expanding_S_using_MVT}) due to the noisiness in the empirical means $\widetilde{\mu}_1,\widetilde{\mu}_a,\widetilde{\mu}_b$ is bounded above by, 
\begin{align}\label{eqn:noise_contrib_in_S:at2}
    \sum_{j=1,a,b}\frac{\partial S_{a,b}}{\partial \mu_j}(\mathbold{\hat{N}},\mathbold{\hat{\mu}})\Delta\widetilde{\mu}_j~&\leq~ \sum_{j=1,a,b}\left|\frac{\partial S_{a,b}}{\partial \mu_j}(\mathbold{\hat{N}},\mathbold{\hat{\mu}})\right|\cdot\left|\Delta\widetilde{\mu}_j\right|\nonumber\\
    ~&\overset{(1)}{=}~O(N)\times O(N^{-3\alpha/8})\nonumber\\
    ~&=~O(N^{1-3\alpha/8}),
\end{align}
where (1) follows since for $N\geq T_0$ and $j=1,a,b$, $|\Delta\widetilde{\mu}_j|=O(N^{-3\alpha/8})$.

Now considering the partial derivative of $S_{a,b}(\cdot)$ with respect to $N_1$, we get
\[\frac{\partial S_{a,b}}{\partial N_1}(\mathbold{\hat{N}},\mathbold{\hat{\mu}})~=~d(\hat{\mu}_1,\hat{x}_{1,a})-d(\hat{\mu}_1,\hat{x}_{1,b}).\]

Using Lemma \ref{lem:N_a_are_Theta_N}, $\widetilde{N}_j(N)$ and $\widetilde{N}_j(N+R)$ are both $\Theta(N)$ for all $j=1,a,b$ and $N\geq T_6$, since $R\leq N$. As a result, using (\ref{eqn:envelope_d1}), (\ref{eqn:envelope_d2}), we have, 
\[\left|\frac{\partial^2 S_{a,b}}{\partial\mu_j \partial N_k}(\mathbold{N}^\prime, \mathbold{\mu}^\prime)\right|=O(1),\quad\text{and}\quad\left|\frac{\partial^2 S_{a,b}}{\partial N_j \partial N_k}(\mathbold{N}^\prime, \mathbold{\mu}^\prime)\right|=O(N^{-1})\]

for every $j,k\in\{1,a,b\}$, and tuple~~$(\mathbold{N}^\prime,\mathbold{\mu}^\prime)=(N_1^\prime,N_a^\prime,N_b^\prime,\mu_1^\prime,\mu_a^\prime,\mu_b^\prime)$~~having  $N_i^\prime\in\left[\widetilde{N}_i(N),\widetilde{N}_i(N+R)\right]$ and $\mu_i^\prime$ lying between $\hat{\mu}_i$ and $\widetilde{\mu}_i$, for every $i=1,a,b$.

Therefore, applying the mean value theorem, we get, 
\begin{align*}
    \frac{\partial S_{a,b}}{\partial N_1}(\mathbold{\hat{N}},\mathbold{\hat{\mu}})~&=~d(\widetilde{\mu}_1,\widetilde{x}_{1,a})-d(\widetilde{\mu}_1,\widetilde{x}_{1,b})+O\left(\sum_{j=1,a,b}|\hat{\mu}_j- \widetilde{\mu}_j|\right)+O\left(N^{-1}\sum_{j=1,b}\Delta\widetilde{N}_j\right)\\
    ~&=~d(\widetilde{\mu}_1,\widetilde{x}_{1,a})-d(\widetilde{\mu}_1,\widetilde{x}_{1,b})+O\left(N^{-3\alpha/8}+RN^{-1}\right).
\end{align*}
Similarly, considering the partial derivative with respect to $N_b$,
\begin{align*}
    \frac{\partial S_{a,b}}{\partial N_b}(\mathbold{\hat{N}},\mathbold{\hat{\mu}})~&=~-d(\widetilde{\mu}_b,\widetilde{x}_{1,b})+O\left(N^{-3\alpha/8}+RN^{-1}\right).
\end{align*}

Now, using all these upper bounds in the RHS of (\ref{eqn:expanding_S_using_MVT}), we obtain
\begin{align}\label{eqn:approx}
    &S_{a,b}(\mathbold{\widetilde{N}}(N+R),\mathbold{\widetilde{\mu}}(N+R))-S_{a,b}(\mathbold{\widetilde{N}}(N),\mathbold{\widetilde{\mu}}(N))\nonumber\\ 
    &\leq~\Delta \widetilde{N}_1 \cdot(d(\widetilde{\mu}_1, \widetilde{x}_{1,a})-d(\widetilde{\mu}_1, \widetilde{x}_{1,b}))-\Delta \widetilde{N}_b\cdot d(\widetilde{\mu}_b,\widetilde{x}_{1,b})\nonumber\\
    &\quad+O\Big(N^{1-3\alpha/8}+R(N^{-3\alpha/8}+RN^{-1})\Big).
\end{align}

Since arm $a$ was pulled in iteration $N$, we have $\mathcal{I}_a(N-1)\leq \mathcal{I}_b(N-1)$. Also since arm $b$ was not pulled in iteration $N$, its empirical index remains unchanged, \textit{i.e.} $\mathcal{I}_b(N-1)=\mathcal{I}_b(N)$. Combining these two observations, we have $\mathcal{I}_a(N-1)\leq \mathcal{I}_b(N)$. 

As a result, 
\begin{align}\label{eqn:ub2}
    S_{a,b}(\mathbold{\widetilde{N}}(N),\mathbold{\widetilde{\mu}}(N))~&=~\mathcal{I}_{a}(N)-\mathcal{I}_{b}(N)\nonumber\\
    &=~(\mathcal{I}_{a}(N)-\mathcal{I}_{a}(N-1))+(\mathcal{I}_{a}(N-1)-\mathcal{I}_{b}(N))\nonumber\\
    &\leq~\mathcal{I}_{a}(N)-\mathcal{I}_{a}(N-1)\nonumber\\
    &=~I_a(N)-I_a(N-1)+O(N^{1-3\alpha/8})\quad\text{(using Lemma \ref{lem:noise_in_index})}\nonumber\\
    &\leq~d(\mu_a,\mu_1)+ O(N^{1-3\alpha/8})=O(N^{1-3\alpha/8}),
\end{align}

where the last step follows from the fact that, the partial derivatives of $I_a(\cdot)$ with respect to $N_a$ is $d(\mu_a,x_{1,a}(N))\leq d(\mu_a,\mu_1)$. As a result, since arm $a$ was pulled in iteration $N$, the increment $I_a(N)-I_a(N-1)$ in $I_a(\cdot)$ is upper bounded by $d(\mu_a,\mu_1)$.

Therefore (\ref{eqn:approx}) implies, 
\begin{align}\label{eqn:approx2}
    S_{a,b}(\mathbold{\widetilde{N}}(N+R),\mathbold{\widetilde{\mu}}(N+R))~&\leq~ \Delta\widetilde{N}_1 \cdot(d(\widetilde{\mu}_1, \widetilde{x}_{1,a})-d(\widetilde{\mu}_1, \widetilde{x}_{1,b}))-\Delta\widetilde{N}_b\cdot d(\widetilde{\mu}_b,\widetilde{x}_{1,b})\nonumber\\ 
    &~+S_{a,b}(\mathbold{\widetilde{N}}(N),\mathbold{\widetilde{\mu}}(N))+O\left(\left(N^{-3\alpha/8}+RN^{-1}\right)R\right) \nonumber\\
    &\leq~\Delta\widetilde{N}_1 \cdot(d(\widetilde{\mu}_1, \widetilde{x}_{1,a})-d(\widetilde{\mu}_1, \widetilde{x}_{1,b}))-\Delta\widetilde{N}_b\cdot d(\widetilde{\mu}_b,\widetilde{x}_{1,b})\nonumber\\ 
    &+O\left(N^{1-3\alpha/8}+\left(N^{-3\alpha/8}+RN^{-1}\right)R\right).
\end{align}

Now we consider two possibilities: \\

\bulletwithnote{Case I~~$d(\widetilde{\mu}_1, \widetilde{x}_{1,a})-d(\widetilde{\mu}_1, \widetilde{x}_{1,b})<0$~}~~ In this case, by (\ref{eqn:approx}), $S_{a,b}(\mathbold{\widetilde{N}}(N+R),\mathbold{\widetilde{\mu}}(N+R))$ is upper bounded by, 
\[-\Delta \widetilde{N}_b\cdot d(\widetilde{\mu}_b,\widetilde{x}_{1,b})+O\left(N^{1-3\alpha/8}+\left(N^{-3\alpha/8}+RN^{-1}\right)R\right).\]

By Lemma \ref{lem:N_a_are_Theta_N} and (\ref{eqn:envelope_kl_div}), since both $\widetilde{N}_1(N)$ and $\widetilde{N}_b(N)$ are $\Theta(N)$, we have $d(\widetilde{\mu}_b,\widetilde{x}_{1,b})=\Theta(1)$. As a result, (\ref{eqn:decrement_in_index:at2}) follows. \\

\bulletwithnote{Case II~~$d(\widetilde{\mu}_1, \widetilde{x}_{1,a})-d(\widetilde{\mu}_1, \widetilde{x}_{1,b})\geq 0$~} In this case, the RHS of (\ref{eqn:approx}) can be rewritten as,
\begin{equation} 
\Delta\widetilde{N}_b \left(\frac{\Delta \widetilde{N}_1}{\Delta \widetilde{N}_b}(d(\widetilde{\mu}_1, \widetilde{x}_{1,a})-d(\widetilde{\mu}_1,\widetilde{x}_{1,b}))
- d(\widetilde{\mu}_b, \widetilde{x}_{1,b}) \right)+O\left(N^{1-3\alpha/8}+\left(N^{-3\alpha/8}+R N^{-1}\right)R\right).
\end{equation}

Using (\ref{eqn:ratio_bound}) in Lemma \ref{lem:triv1}, the upper bound becomes
\begin{align} \label{eqn:approx3}
&\Delta\widetilde{N}_b \Big ( \frac{\widetilde{N}_1}{\widetilde{N}_b}(d(\widetilde{\mu}_1, \widetilde{x}_{1,a})
-d(\widetilde{\mu}_1, \widetilde{x}_{1,b}))
- d(\widetilde{\mu}_b, \widetilde{x}_{1,b}) \Big)\nonumber\\
&+O\Big(N^{1-3\alpha/8} (1+RN^{-1})^3+\left(N^{-3\alpha/8}+RN^{-1}\right)R\Big)\nonumber\\
&=\Delta\widetilde{N}_b \left ( \frac{\widetilde{N}_1}{\widetilde{N}_b}(d(\widetilde{\mu}_1, \widetilde{x}_{1,a})
-d(\widetilde{\mu}_1, \widetilde{x}_{1,b}))
- d(\widetilde{\mu}_b, \widetilde{x}_{1,b}) \right)\nonumber\\ 
&+O\left(N^{1-3\alpha/8}+\left(N^{-3\alpha/8}+RN^{-1}\right)R\right)\quad\text{(since $R\leq N$).}
\end{align}

By (\ref{eqn:ub2}), we have, 
\begin{align}\label{eqn:ub3}
    \mathcal{I}_a(N)~&\leq~\mathcal{I}_b(N)+O(N^{1-3\alpha/8}).
\end{align}

Expanding the empirical index terms, we get,
\[\widetilde{N}_1 d(\widetilde{\mu}_1,\widetilde{x}_{1,a})+\widetilde{N}_a d(\widetilde{\mu}_a,\widetilde{x}_{1,a})~\leq~\widetilde{N}_1 d(\widetilde{\mu}_1,\widetilde{x}_{1,b})+\widetilde{N}_b d(\widetilde{\mu}_b,\widetilde{x}_{1,b})+O(N^{1-3\alpha/8}).\]

By Lemma \ref{lem:N_a_are_Theta_N} we have $\widetilde{N}_b=\Theta(N)$. As a result, upon dividing both sides of the above inequality by $\widetilde{N}_b$ and after some re-arrangement of terms, we obtain, 
\[\frac{\widetilde{N}_1}{\widetilde{N}_b}(d(\widetilde{\mu}_1, \widetilde{x}_{1,j})-d(\widetilde{\mu}_1, \widetilde{x}_{1,b}))-d(\widetilde{\mu}_b, \widetilde{x}_{1,b})~\leq~ -\frac{\widetilde{N_j}}{\widetilde{N_b}}d(\widetilde{\mu}_j,\widetilde{x}_{1,j})+O(N^{-3\alpha/8}).\]

Using the above inequality in (\ref{eqn:approx3}), we get the upper bound, 
\begin{align}\label{eqn:final_bound}
-\Delta \widetilde{N}_b \left (  \frac{\widetilde{N}_j}{\widetilde{N}_b} d(\widetilde{\mu}_a, \widetilde{x}_{1,a}) 
\right)+&O\left(N^{1-3\alpha/8}+\left(N^{-3\alpha/8}+RN^{-1}\right)R\right)
\end{align}

By Lemma \ref{lem:N_a_are_Theta_N} and (\ref{eqn:envelope_kl_div}), we have, $\frac{\widetilde{N}_j}{\widetilde{N}_b}d(\widetilde{\mu}_j,\widetilde{x}_{1,j})=\Theta(1)$. Therefore (\ref{eqn:decrement_in_index:at2}) follows. \\

\bulletwithnote{IAT2 Algorithm} The proof of (\ref{eqn:decrement_in_index:iat2}) for IAT2 is very similar to the proof of (\ref{eqn:decrement_in_index:at2}) for AT2.  We first consider the difference of the modified empirical indexes between the two arms $a$ and $b$, 
\[S_{a,b}^{(m)}(\mathbold{\widetilde{N}}(N),\mathbold{\widetilde{\mu}}(N))~=~\mathcal{I}_a^{(m)}(N)-\mathcal{I}_b^{(m)}(N).\]

Following a similar procedure as the AT2 algorithm, we apply the mean value theorem to obtain, 
\begin{align}\label{eqn:MVT_over_S:iat2}
    S_{a,b}^{(m)}(\mathbold{\widetilde{N}}(N+R),\mathbold{\widetilde{\mu}}(N+R))&-S_{a,b}^{(m)}(\mathbold{\widetilde{N}}(N),\mathbold{\widetilde{\mu}}(N))~=~\sum_{j=1,a,b}\frac{\partial S_{a,b}^{(m)}}{\partial \mu_j}(\mathbold{\hat{N}},\mathbold{\hat{\mu}})\cdot\Delta\widetilde{\mu}_j\nonumber\\
    &+\frac{\partial S_{a,b}^{(m)}}{\partial N_1}(\mathbold{\hat{N}},\mathbold{\hat{\mu}})\cdot\Delta\widetilde{N}_1+\frac{\partial S_{a,b}^{(m)}}{\partial N_b}(\mathbold{\hat{N}},\mathbold{\hat{\mu}})\cdot\Delta\widetilde{N}_b,
\end{align}
where~~ $\Delta\widetilde{\mu}_j=\widetilde{\mu}_j(N+R)-\widetilde{\mu}_j(N)$, ~~and ~~$(\mathbold{\hat{N}},\mathbold{\hat{\mu}})=(\hat{N}_1,\hat{N}_a,\hat{N}_b,\hat{\mu}_1,\hat{\mu}_a,\hat{\mu}_b)$,~~ such that, $\hat{\mu}_j$ lies between $\widetilde{\mu}_j(N)$ and $\widetilde{\mu}_j(N+R)$, and $\hat{N}_j\in\left[\widetilde{N}_j(N),\widetilde{N}_j(N+R)\right]$, for every $j=1,a,b$. 

The contribution to (\ref{eqn:MVT_over_S:iat2}) due to noise in $\mathbold{\widetilde{\mu}}$ is $O(N^{1-3\alpha/8})$ by the same argument we proved (\ref{eqn:noise_contrib_in_S:at2}) for AT2. 

Now we consider the partial derivatives of $S_{a,b}^{(m)}$ with respect to $N_1,N_a,N_b$. Following the same steps as used for AT2 algorithm, we have, 
\begin{align*}
    \frac{\partial S_{a,b}^{(m)}}{\partial N_1}(\mathbold{\hat{N}},\mathbold{\hat{\mu}})~&=~\frac{\partial S_{a,b}}{\partial N_1}(\mathbold{\hat{N}},\mathbold{\hat{\mu}})~=~d(\widetilde{\mu}_1,\widetilde{x}_{1,a})-d(\widetilde{\mu}_1,\widetilde{x}_{1,b})+O(N^{-3\alpha/8}+RN^{-1}),~\text{and}\\
    \frac{\partial S_{a,b}^{(m)}}{\partial N_b}(\mathbold{\hat{N}},\mathbold{\hat{\mu}})~&=~\frac{\partial S_{a,b}}{\partial N_b}(\mathbold{\hat{N}},\mathbold{\hat{\mu}})-\frac{1}{\hat{N}_b}~\leq~-d(\widetilde{\mu}_b,\widetilde{x}_{1,b})+O(N^{-3\alpha/8}+RN^{-1}).
\end{align*}
As a result, the contribution to (\ref{eqn:MVT_over_S:iat2}) due to $\widetilde{N}_1$ and $\widetilde{N}_b$ is, 
\begin{align*}
    &\frac{\partial S_{a,b}^{(m)}}{\partial N_1}(\mathbold{\hat{N}},\mathbold{\hat{\mu}})\cdot\Delta\widetilde{N}_1+\frac{\partial S_{a,b}^{(m)}}{\partial N_b}(\mathbold{\hat{N}},\mathbold{\hat{\mu}})\cdot\Delta\widetilde{N}_b\\
   ~ &\leq ~\Delta\widetilde{N}_1\cdot(d(\widetilde{\mu}_1,\widetilde{x}_{1,a})-d(\widetilde{\mu}_1,\widetilde{x}_{1,b}))-\Delta\widetilde{N}_b\cdot d(\widetilde{\mu}_b,\widetilde{x}_{1,b})+O\left( \left(N^{-3\alpha/8}+RN^{-1}\right)R\right).
\end{align*}

Therefore, adding the contributions of the noise in $(\widetilde{\mu}_j)_{j=1,a,b}$, (\ref{eqn:MVT_over_S:iat2}) can be further upper bounded by, 
\begin{align}\label{eqn:S_bound2}
    &S_{a,b}^{(m)}(\mathbold{\widetilde{N}}(N+R),\mathbold{\widetilde{\mu}}(N+R))-S_{a,b}^{(m)}(\mathbold{\widetilde{N}}(N),\mathbold{\widetilde{\mu}}(N))\nonumber\\
    &\leq~\Delta\widetilde{N}_1\cdot(d(\widetilde{\mu}_1,\widetilde{x}_{1,a})-d(\widetilde{\mu}_1,\widetilde{x}_{1,b}))-\Delta\widetilde{N}_b\cdot d(\widetilde{\mu}_b,\widetilde{x}_{1,b})\nonumber\\
    &\qquad+O\left(N^{1-3\alpha/8}+\left(N^{-3\alpha/8}+RN^{-1}\right)R\right).
\end{align}

Now, we find an upper bound to $S_{a,b}^{(m)}(\mathbold{\widetilde{N}}(N),\mathbold{\widetilde{\mu}}(N))$. Since the algorithm pulls arm $a$ and doesn't pull arm $b$ at iteration $N$, we must have 
\[\mathcal{I}_a^{(m)}(N-1)~\leq~\mathcal{I}_b^{(m)}(N-1)~=~\mathcal{I}_b^{(m)}(N).\]

Therefore,
\begin{align}\label{eqn:diff_index_iat2}
    S_{a,b}^{(m)}(\mathbold{\widetilde{N}}(N),\mathbold{\widetilde{\mu}}(N))~&=~\mathcal{I}_a^{(m)}(N)-\mathcal{I}_b^{(m)}(N)\nonumber\\
    &=~(\mathcal{I}_a^{(m)}(N)-\mathcal{I}_a^{(m)}(N-1))+(\mathcal{I}_a^{(m)}(N-1)-\mathcal{I}_b^{(m)}(N))\nonumber\\
    &\leq ~\mathcal{I}_a^{(m)}(N)-\mathcal{I}_a^{(m)}(N-1)\nonumber\\
    &=~\mathcal{I}_a(N)-\mathcal{I}_a(N-1)+\log\left(\frac{\widetilde{N}_a(N)}{\widetilde{N}_a(N-1)}\right)\nonumber\\
    &\overset{(1)}{=}~\mathcal{I}_a(N)-\mathcal{I}_a(N-1)+O(1)\nonumber\\
    &\overset{(2)}{=}~O(N^{1-3\alpha/8}),
\end{align}
where (1) follows from the fact that $\widetilde{N}_a(N)=\Theta(N)$ for $N\geq T_6$, and (2) follows using the arguments used while bounding $\mathcal{I}_a(N)-\mathcal{I}_a(N-1)$ in (\ref{eqn:ub2}).

Putting this in (\ref{eqn:S_bound2}), we get, 
\begin{align}\label{eqn:S_bound3}
    S_{a,b}^{(m)}(\mathbold{\widetilde{N}}(N+R),\mathbold{\widetilde{\mu}}(N+R))&\leq \Delta\widetilde{N}_1\cdot(d(\widetilde{\mu}_1,\widetilde{x}_{1,a})-d(\widetilde{\mu}_1,\widetilde{x}_{1,b}))-\Delta\widetilde{N}_b\cdot d(\widetilde{\mu}_b,\widetilde{x}_{1,b})\nonumber\\
    &+O\left(N^{1-3\alpha/8}+R(N^{-3\alpha/8}+RN^{-1})\right).
\end{align}
Now there can be two cases. \\

\bulletwithnote{Case I~$d(\widetilde{\mu}_1,\widetilde{x}_{1,a})-d(\widetilde{\mu}_1,\widetilde{x}_{1,b})<0$ } Then, using the same argument as was used for Case I of AT2 algorithm, we get (\ref{eqn:decrement_in_index:iat2}).\\

\bulletwithnote{Case II~$d(\widetilde{\mu}_1,\widetilde{x}_{1,a})-d(\widetilde{\mu}_1,\widetilde{x}_{1,b})\geq 0$ } Using (\ref{eqn:ratio_bound}) of Lemma \ref{lem:Na_by_Nb_bound}, the RHS in (\ref{eqn:S_bound3}) is upper bounded by, 
\[\Delta\widetilde{N}_b \left ( \frac{\widetilde{N}_1}{\widetilde{N}_b}(d(\widetilde{\mu}_1, \widetilde{x}_{1,a})
-d(\widetilde{\mu}_1, \widetilde{x}_{1,b}))
- d(\widetilde{\mu}_b, \widetilde{x}_{1,b}) \right)+O\left(N^{1-3\alpha/8}+\left(N^{-3\alpha/8}+RN^{-1}\right)R\right).\]

Using (\ref{eqn:diff_index_iat2}), we have 
\[\mathcal{I}_a^{(m)}(N)~\leq~\mathcal{I}_b^{(m)}(N)+O\left(N^{1-3\alpha/8}\right),\]

which implies, 
\[\mathcal{I}_a(N)~\leq~\mathcal{I}_b(N)+\log\left(\frac{\widetilde{N}_b}{\widetilde{N}_a}\right)+O(N^{1-3\alpha/8})~\overset{(1)}{=}~\mathcal{I}_b(N)+O(N^{1-3\alpha/8}),\]

where (1) follows from the fact that $\widetilde{N}_a(N)$ and $\widetilde{N}_b(N)$ are both $\Theta(N)$, causing $\log\left(\frac{\widetilde{N}_a(N)}{\widetilde{N}_b(N)}\right)=O(1)$ for $N\geq T_6$. Note that the above inequality is same as (\ref{eqn:ub3}) obtained in Case II of AT2. Now (\ref{eqn:decrement_in_index:iat2}) follows using exactly the same argument as in the Case II of AT2 after (\ref{eqn:ub3}).  
\end{proof}
\bigskip

For every $a\in[K]/\{1\}$, $N\geq T_6$ and $R\in\{1,2,\hdots,N\}$, we have, 
\begin{align}\label{eqn:index_correction:at2}
    \left|\mathcal{I}_a(N+R)-\mathcal{I}_a(N+R-1)\right|~&\overset{(1)}{=}~\left|I_a(N+R)-I_a(N+R-1)\right|+O((N+R)^{1-3\alpha/8})\nonumber\\
    &\overset{(2)}{=}~\left|I_a(N+R)-I_a(N+R-1)\right|+O(N^{1-3\alpha/8})\nonumber\\
    &\overset{(3)}{\leq}~\max\{d(\mu_1,\mu_a),d(\mu_a,\mu_1)\}+O(N^{1-3\alpha/8})=O(N^{1-3\alpha/8}),
\end{align}

where: (1) follows by Lemma \ref{lem:noise_in_index};~(2) follows since $R\leq N$;~and~ (3) follows from the fact that $I_a(\cdot)$ can increment by atmost $\max\{d(\mu_1,\mu_a),d(\mu_a,\mu_1)\}$ in one iteration. 

Since, at every iteration $N\geq 1$ and for every arm, the modified empirical index differ from the empirical index by atmost $\log(N)$, we have, 
\begin{align}\label{eqn:index_correction:iat2}
    \left|\mathcal{I}^{(m)}_a(N+R)-\mathcal{I}^{(m)}_a(N+R-1)\right|~&\leq~\left|\mathcal{I}_a(N+R)-\mathcal{I}_a(N+R-1)\right|+2\log(N+R)\nonumber\\
    &\overset{\text{(i)}}{\leq}~\left|\mathcal{I}_a(N+R)-\mathcal{I}_a(N+R-1)\right|+O(\log(N))\nonumber\\
    &=~O(N^{1-3\alpha/8})+O(\log(N))~=~O(N^{1-3\alpha/8}),
\end{align}
where (i) follows because $R\leq N$. 

Using (\ref{eqn:index_correction:at2}) and (\ref{eqn:index_correction:iat2}), the following corollary follows from Lemma \ref{lem:indexes_are_omegaR}, \\

\begin{corollary}\label{cor:index_are_omegaR}
For AT2 (\ref{code:AT2}) and IAT2 (\ref{code:IAT2}) algorithms, for $N\geq T_6$ and $R\in\{1,2,\hdots,N\}$, if the algorithm pulls some arm $a\in[K]/\{1\}$ at iteration $N$ and doesn't pull $a$ for the next $R$ iterations, then,
\begin{align}\label{eqn:decrement_in_index2:at2}
   \text{\textbf{AT2:}}\quad\mathcal{I}_a(N+R-1)-\mathcal{I}_{b(N,R)}(N+R-1)~&\leq~-C_3 \Delta\widetilde{N}_{b(N,R)}(N,R)+C_5 N^{1-3\alpha/8}\nonumber\\
   &\quad+O\left(\left(N^{-3\alpha/8}+RN^{-1}\right)R\right),
\end{align}

\begin{align}\label{eqn:decrement_in_index2:iat2}
   \text{\textbf{IAT2:}}\quad\mathcal{I}^{(m)}_a(N+R-1)-\mathcal{I}^{(m)}_{b(N,R)}(N+R-1)~&\leq~-C_3 \Delta\widetilde{N}_{b(N,R)}(N,R)+C_5 N^{1-3\alpha/8}\nonumber\\
   &\quad+O\left(\left(N^{-3\alpha/8}+RN^{-1}\right)R\right),
\end{align}    

where $C_3,C_5>0$ are constants independent of the sample paths. 
\end{corollary}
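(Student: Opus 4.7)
\textbf{Proof plan for Corollary \ref{cor:index_are_omegaR}.} The plan is to reduce the statement to Lemma~\ref{lem:indexes_are_omegaR} by moving the evaluation point from $N+R$ back to $N+R-1$ using the one-step increment bounds (\ref{eqn:index_correction:at2}) and (\ref{eqn:index_correction:iat2}). Concretely, for any two arms $a,c\in[K]/\{1\}$, I will write the telescoping identity
\begin{equation*}
\mathcal{I}_a(N+R-1)-\mathcal{I}_{c}(N+R-1) \;=\; \bigl[\mathcal{I}_a(N+R)-\mathcal{I}_{c}(N+R)\bigr] \;+\;\bigl[\mathcal{I}_a(N+R-1)-\mathcal{I}_a(N+R)\bigr]\;+\;\bigl[\mathcal{I}_{c}(N+R)-\mathcal{I}_{c}(N+R-1)\bigr],
\end{equation*}
and specialize to $c = b(N,R)$. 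The first bracket is controlled by (\ref{eqn:decrement_in_index:at2}) from Lemma~\ref{lem:indexes_are_omegaR}, and the remaining two brackets are each bounded in absolute value by $O(N^{1-3\alpha/8})$ via (\ref{eqn:index_correction:at2}), which is valid since $R\in\{1,\dots,N\}$ and $N\geq T_6$.

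For the AT2 case, combining these three estimates gives
\begin{equation*}
\mathcal{I}_a(N+R-1)-\mathcal{I}_{b(N,R)}(N+R-1)\;\leq\;-C_3\,\Delta\widetilde{N}_{b(N,R)}(N,R)\;+\;C_4\,N^{1-3\alpha/8}\;+\;O\!\bigl((N^{-3\alpha/8}+RN^{-1})R\bigr)\;+\;O(N^{1-3\alpha/8}),
\end{equation*}
and I absorb the final $O(N^{1-3\alpha/8})$ term into the $C_4 N^{1-3\alpha/8}$ term by choosing $C_5$ to be $C_4$ plus the hidden constant from (\ref{eqn:index_correction:at2}) applied to arms $a$ and $b(N,R)$. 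Since $C_3$ and the hidden constants in (\ref{eqn:index_correction:at2}) are independent of the sample paths, so is $C_5$, yielding (\ref{eqn:decrement_in_index2:at2}).

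For the IAT2 case, the argument is identical in structure, with $\mathcal{I}_j(\cdot)$ replaced throughout by $\mathcal{I}^{(m)}_j(\cdot)$, using (\ref{eqn:decrement_in_index:iat2}) from Lemma~\ref{lem:indexes_are_omegaR} for the first bracket and (\ref{eqn:index_correction:iat2}) for the two single-iteration differences; the additional $O(\log N)$ contribution arising from the $\log \widetilde{N}_j$ term in $\mathcal{I}^{(m)}_j$ is already subsumed in the $O(N^{1-3\alpha/8})$ bound of (\ref{eqn:index_correction:iat2}). There is no real obstacle here: the only point to check carefully is that the hidden constants in the one-step-increment bounds do not depend on the particular arm $a$ or the sample path beyond iteration $T_6$, which follows because (\ref{eqn:index_correction:at2}) and (\ref{eqn:index_correction:iat2}) were derived using only the Lipschitz-type estimate $|I_j(\cdot)-I_j(\cdot\!-\!1)|\leq\max\{d(\mu_1,\mu_j),d(\mu_j,\mu_1)\}$ and Lemma~\ref{lem:noise_in_index}, both of which hold uniformly in $j\in[K]/\{1\}$.
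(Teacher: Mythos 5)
Your proposal is correct and follows essentially the same route as the paper: the paper also obtains the corollary by shifting the evaluation point from $N+R$ to $N+R-1$ via the one-step increment bounds (\ref{eqn:index_correction:at2}) and (\ref{eqn:index_correction:iat2}) and absorbing the resulting $O(N^{1-3\alpha/8})$ terms into the constant $C_5$. Your explicit telescoping decomposition and the uniformity check over arms simply spell out what the paper leaves implicit.
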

\bigskip

\bulletwithnote{Proof of Lemma \ref{lem:period}} We prove the proposition for the AT2 algorithm. The proof for IAT2 algorithm follows the exact same argument by replacing $\mathcal{I}$ with $\mathcal{I}^{(m)}$.

In the proof, we argue via contradiction. We show that, there exists constants $M_5\geq 1$ and $C_1>0$ independent of the sample paths, such that for $N\geq \max\{M_5,T_6\}$, if the algorithm pulls some arm $a\in[K]/\{1\}$ at iteration $N$ and doesn't pull it for the next $R(N)\defined \lceil C_1 N^{1-3\alpha/8}\rceil$ iterations, then at iteration $\tau(N,R(N))$, the algorithm pulls arm $b(N,R(N))$, even though arm $a$ has empirical index strictly less than that of arm $b(N,R(N))$.

Using Corollary \ref{cor:index_are_omegaR}, there exists constants $C_3,C_5,C_6>0$ independent of the sample paths, such that, for $N\geq T_6$ and $R\in\{1,2,\hdots,N\}$, 
\begin{align}\label{eqn:period1}
    \mathcal{I}_a(N+R-1)-\mathcal{I}_{b(N,R)}(N+R-1)~&\leq~-C_3\Delta\widetilde{N}_{b(N,R)}(N,R)+C_4 N^{1-3\alpha/8}\nonumber\\
    &\qquad+C_5 R(N^{-3\alpha/8}+RN^{-1}).
\end{align}

We define 
\[C_1=\max\left\{D_1,\frac{2C_4}{D_2 C_3}\right\}+1,\]
where $D_1$ and $D_2$ are the constants introduced in Lemma \ref{lem:triv2}. Let $M_{52}\geq 1$ to be the smallest number such that, every $N\geq M_{52}$ satisfies $\lceil C_1 N^{1-3\alpha/8}\rceil <N$. Since $C_1>D_1$, by the definition of $M_{51}$ in the proof of Lemma \ref{lem:triv2}, we have $M_{52}>M_{51}$. 

Now let $R(N)=\lceil C_1 N^{1-3\alpha/8}\rceil$. Then by Lemma \ref{lem:triv2}, for $N\geq \max\{M_{52},T_6\}$, since $R(N)>\lceil D_1 N^{1-3\alpha/8}\rceil$, we have, 
\begin{align}\label{eqn:period2}
    \Delta\widetilde{N}_{b(N,R(N))}(N,R(N))~&\geq~D_2 R(N)~\geq~\frac{2C_4}{C_3}N^{1-3\alpha/8}.
\end{align}

Since arm $b(N,R)$ is not pulled between the iterations $\tau(N,R)$ and $N+R$, we have 
\[\Delta\widetilde{N}_{b(N,R)}(N,t)~=~\Delta\widetilde{N}_{b(N,R)}(N,R)\]
for all $t\in\{~\tau(N,R)-N,~~\tau(N,R)-N+1,~~\hdots,~~R~\}$.

As a result,  by definition of $b(N,R)$ we have, 
\begin{align*}
    b(N,R(N))~&=~b(N,\tau(N,R(N))-N)\quad\text{and}\\ 
    \Delta\widetilde{N}_{b(N,R(N))}(N,R(N))~&=~\Delta\widetilde{N}_{b(N,R)}(N,\tau(N,R(N))-N).
\end{align*} 

In the rest of the proof, we denote $b(N,R(N))$ and $\tau(N,R(N))-N$, respectively, using $b(N)$ and $\tau_b(N)$. 

Therefore, using Corollary \ref{cor:index_are_omegaR}, we have, for $N\geq \max\{M_{52},T_6\}$,
\begin{align}\label{eqn:period3}
    &\mathcal{I}_a(N+\tau_b(N)-1)-\mathcal{I}_{b(N)}(N+\tau_b(N)-1)\nonumber\\
    &\leq~-C_3 \Delta\widetilde{N}_{b(N)}(N,\tau_b(N))+C_4 N^{1-3\alpha/8}+C_5 \tau_b(N)\times (N^{-3\alpha/8}+\tau_b(N) N^{-1})\nonumber\\
    &\overset{(1)}{\leq}~-C_3\times \frac{2C_4}{C_3}N^{1-3\alpha/8}+C_4 N^{1-3\alpha/8}+C_5 R(N)\times (N^{-3\alpha/8}+R(N)N^{-1})\nonumber\\
    &\overset{(2)}{\leq}~-C_4 N^{1-\alpha/8}+C_5(C_1+1)(C_1+2) N^{1-3\alpha/8}\times N^{-3\alpha/8}\nonumber\\
    &\overset{(3)}{=}~-(C_4-C_6 N^{-3\alpha/8}) N^{1-3\alpha/8},
\end{align}
where: (1) follows using (\ref{eqn:period2}) and $\tau_b(N)\leq R(N)$, (2) follows since $R(N)\leq (C_1+1)N^{-3\alpha/8}$, and (3) follows by letting $C_6=C_5(C_1+1)(C_1+2)$.

We now take $M_{53}\geq 1$ to be large enough, such that, $C_6 M_{53}^{-3\alpha/8}<C_4$. Let $M_5=\max\{M_{52},M_{53}\}$. Then (\ref{eqn:period3}) implies, for $N\geq\max\{M_5,T_6\}$ and $R(N)=\lceil C_1 N^{1-3\alpha/8}\rceil$, if the algorithm picks some arm $a\in[K]/\{1\}$ at iteration $N$ and doesn't pick $a$ for the next $R(N)$ iterations, then, at iteration $N+\tau_b(N)$, the algorithm picks arm $b(N)$, even though, $\mathcal{I}_a(N+\tau_b(N)-1)-\mathcal{I}_{b(N)}(N+\tau_b(N)-1)<0$. Thus we arrive at a contradiction.  
\hfill\qedsymbol

\section{Proof of Theorem \ref{thm:main:asympt_optimality}} \label{sec:theorem3.1}
By Proposition \ref{prop:closeness_of_allocations}, we know that, for every $a\in[K]$ and $N\geq T_{stable}$, 
\[|\widetilde{\omega_a}(N)-\omega_a^\star|~\leq~C_1 N^{-3\alpha/8}.\]

Recall the functions $W_a(\cdot,\cdot)$ defined in Appendix \ref{appndx:alloc_closeness_proof}.  Note that for every allocation $\mathbold{\omega}=(\omega_a)_{a\in[K]}$, and $a\in[K]/\{1\}$, we have, 
\[\frac{\partial W_a}{\partial \omega_1}(\omega_1,\omega_a)~=~d(\mu_1,x_{1,a}(\omega_1,\omega_a))\quad\text{and}\quad\frac{\partial W_a}{\partial \omega_a}(\omega_1,\omega_a)~=~d(\mu_a,x_{1,a}(\omega_1,\omega_a)),\]
where $x_{1,a}(\omega_1,\omega_a)=\frac{\omega_1 \mu_1+\omega_a \mu_a}{\omega_1+\omega_a}$.

As a result, for every $a\in[K]/\{1\}$, the partial derivatives of $W_a(\omega_1,\omega_a)$ with respect to $\omega_1$ and $\omega_a$ are both $O(1)$. Therefore, using the mean value theorem, for every $a\in[K]/\{1\}$ and $N\geq T_{stable}$, we have, 
\begin{align}\label{eqn:main_prf1}
  |W_a(\widetilde{\omega}_1(N),\widetilde{\omega}_a(N))-W_a(\omega_1^\star,\omega_a^\star)|~&=~O(N^{-3\alpha/8}).  
\end{align}

Define the normalized index of every arm as 
\[H_a(N)~=~\frac{I_a(N)}{N}~=~W_a(\widetilde{\omega}_1(N),\widetilde{\omega}_a(N)).\] 

Also by (\ref{eqn:optimality_cond}), $W_a(\omega_1^\star,\omega_a^\star)=I^\star=T^\star(\mathbold{\mu})^{-1}$ for every alternative arm $a\in[K]/\{1\}$. 

Therefore (\ref{eqn:main_prf1}) gives us,
\begin{align}\label{eqn:main_prf2}
    |H_a(N)-T^\star(\mathbold{\mu})^{-1}|~&=~O(N^{-3\alpha/8}),\quad\text{for}~a\in[K]/\{1\}~\text{and}~N\geq T_{stable}.
\end{align}

By Lemma \ref{lem:noise_in_index}, we also know, 
\begin{align}\label{eqn:main_prf3}
    |\mathcal{I}_a(N)-N H_a(N)|~&=~|\mathcal{I}_a(N)-I_a(N)|~=~O(N^{1-3\alpha/8}).    
\end{align}

Combining (\ref{eqn:main_prf2}) and (\ref{eqn:main_prf3}), we get 
\begin{align*}
    |\mathcal{I}_a(N)-N T^\star(\mathbold{\mu})^{-1}|~&\leq~|\mathcal{I}_a(N)-N H_a(N)|+N|H_a(N)-T^\star(\mathbold{\mu})^{-1}|\\
    ~&=~O(N^{1-3\alpha/8}),
\end{align*}

for every $a\in[K]/\{1\}$.  Hence, we can find a constant $C_2>0$, independent of the sample paths, such that, 
\begin{align*}
    \mathcal{I}_a(N)~\geq~\frac{N}{T^\star(\mathbold{\mu})}-C_2 N^{1-3\alpha/8}, 
\end{align*}

for every $N\geq T_{stable}$ and $a\in[K]/\{1\}$. As a result, for $N\geq T_{stable}$, we have, 
\begin{align}\label{eqn:trans_cost_lb}
    \min_{a\in[K]/\{1\}} \mathcal{I}_a(N)~&\geq~\frac{N}{T^\star(\mathbold{\mu})}-C_2 N^{1-3\alpha/8}.
\end{align}

The threshold function $\beta(N,\delta)$ deciding our stopping condition satisfies, 
\[\log(1/\delta)~\leq~\beta(N,\delta)~\leq~\log(1/\delta)+C_3 \log\log(1/\delta)+C_4\log\log(N)+C_5\]

for constants $C_3,C_4,C_5>0$. For every $\delta>0$, we define the deterministic quantity, 
\begin{align}\label{def:t_max_delta}
    t_{\max,\delta}~&=~\min\left\{~N\geq T_{stable}~~\Big\vert~~\frac{N}{T^\star(\mathbold{\mu})}-C_2 N^{1-3\alpha/8}>\beta(N,\delta)~\right\}.
\end{align}

Now we make the following observations about $t_{\max,\delta}$,  
\begin{enumerate}
    \item Note that $\frac{N}{T^\star(\mathbold{\mu})}-C_2 N^{1-3\alpha/8}$ increases linearly in $N$ and $\beta(N,\delta)$ is $O(\log\log(N)+\log(1/\delta))$, for a fixed $\delta>0$. Hence, $t_{\max,\delta}$ is finite for every $\delta>0$. 
    \bigskip

    \item We have $\beta(N,\delta)\geq \log(1/\delta)$ and $\frac{N}{T^\star(\mathbold{\mu})}-C_2 N^{1-3\alpha/8}<\frac{N}{T^\star(\mathbold{\mu})}$. As a result, $t_{\max,\delta}$ is atleast the iteration at which $\frac{N}{T^\star(\mathbold{\mu})}$ exceeds $\log(1/\delta)$, which is atleast $T^\star(\mathbold{\mu})\log(1/\delta)$. This implies $t_{\max,\delta}\geq T^\star(\mathbold{\mu})\log(1/\delta)$. As a result, $t_{\max,\delta}\to \infty$ as $\delta\to 0$. 
    \bigskip

    \item If $\tau_{\delta}>T_{stable}$, then $\min_{a\in[K]/\{1\}}\mathcal{I}_a(N)$ exceed $\beta(N,\delta)$ before the lower bound of $\min_{a\in[K]/\{1\}}\mathcal{I}_a(N)$ in (\ref{eqn:trans_cost_lb}) exceeds $\beta(N,\delta)$. As a result, $\tau_{\delta}\leq t_{\max,\delta}$, whenever $\tau_{\delta}\geq T_{stable}$. This gives us the upper bound,
    \begin{align}\label{eqn:tau_delta_ub}
        \tau_{\delta}~&\leq~\max\{T_{stable},t_{\max,\delta}\}\quad\text{a.s. in}~\mathbb{P}_{\mathbold{\mu}}.
    \end{align}
\end{enumerate}

We have $\mathbb{E}_{\mathbold{\mu}}[T_{stable}]<\infty$, which also implies $T_{stable}<\infty$ a.s. in $\mathbb{P}_{\mathbold{\mu}}$. Now using (\ref{eqn:tau_delta_ub}), we get, 
\[\limsup_{\delta\to 0}\frac{\mathbb{E}_{\mathbold{\mu}}[\tau_{\delta}]}{\log(1/\delta)}~\leq~ \limsup_{\delta\to 0}\frac{\mathbb{E}_{\mathbold{\mu}}[T_{stable}]+t_{\max,\delta}}{\log(1/\delta)}~=~\limsup_{\delta\to 0}\frac{t_{\max,\delta}}{\log(1/\delta)}.\]

Similarly, we have, 
\[\limsup_{\delta\to 0}\frac{\tau_{\delta}}{\log(1/\delta)}~\leq~\limsup_{\delta\to 0}\frac{T_{stable}+t_{\max,\delta}}{\log(1/\delta)}~=~\limsup_{\delta\to 0}\frac{t_{\max,\delta}}{\log(1/\delta)}\quad\text{a.s. in}~\mathbb{P}_{\mathbold{\mu}}.\]

Therefore to prove asymptotic optimality, it is sufficient to prove $\limsup_{\delta\to 0}\frac{t_{\max,\delta}}{\log(1/\delta)}\leq T^\star(\mathbold{\mu})$.

Let $s_{\max,\delta}=t_{\max,\delta}-1$. Note that $\limsup_{\delta\to 0}\frac{t_{\max,\delta}}{\log(1/\delta)}=\limsup_{\delta\to 0}\frac{s_{\max,\delta}}{\log(1/\delta)}$. By definition of $t_{\max,\delta}$, we have 
\[\frac{s_{\max,\delta}}{T^\star(\mathbold{\mu})}-C_2 s_{\max,\delta}^{1-3\alpha/8}~\leq~ \beta(s_{\max,\delta},\delta)=\log(1/\delta)+C_3\log\log(1/\delta)+C_4\log\log(s_{\max,\delta})+C_5.\]

After some rearrangement of terms, upon dividing both sides by $\log(1/\delta)$, we get,
\begin{align*}
    \frac{1}{T^\star(\mathbold{\mu})}\frac{s_{\max,\delta}}{\log(1/\delta)}\Big(1-C_2 s_{\max,\delta}^{-3\alpha/8}&-C_3 s_{\max,\delta}^{-1}\log\log(s_{\max,\delta})\\
    &\quad-C_5 s_{\max,\delta}^{-1}\Big)~\leq~1+C_3 \frac{\log\log(1/\delta)}{\log(1/\delta)}.
\end{align*}

By Observation 2, $s_{\max,\delta}=t_{\max,\delta}-1\to\infty$ as $\delta\to 0$. As a result, the above inequality implies, 
\[\frac{1}{T^\star(\mathbold{\mu})}\limsup_{\delta\to 0}\frac{s_{\max,\delta}}{\log(1/\delta)}~\leq~1.\]

We already argued that $\limsup_{\delta\to 0}\frac{t_{\max,\delta}}{\log(1/\delta)}\leq T^\star(\mathbold{\mu})$. As a result, we have a constant $C_6>0$, such that $t_{\max,\delta}\leq C_6 \log(1/\delta)$ for all $\delta$. By (\ref{eqn:tau_delta_ub}) we have: 
\[\frac{t_{\max,\delta}-1}{T^\star(\mathbold{\mu})}-C_2 (t_{\max,\delta}-1)^{1-3\alpha/8}\leq \beta(t_{\max,\delta}-1,\delta)\]
which implies, 
\[t_{\max,\delta}\leq T^\star(\mathbold{\mu})\log(1/\delta)+O\left(\log\log(1/\delta)+t_{\max,\delta}^{1-3\alpha/8}\right).\]
Putting $t_{\max,\delta}\leq C_6 \log(1/\delta)$ in the above inequality, 
\begin{align}\label{eqn:t_max_delta_ub:main_body2}
    t_{\max,\delta}~&\leq~ T^\star(\mathbold{\mu})\log(1/\delta)+O\left(\left(\log(1/\delta)\right)^{1-3\alpha/8}\right).    
\end{align}
Since $\tau_\delta\leq \max\{T_{stable},1+t_{\max,\delta}\}$, we can find a constant $C>0$ such that $\tau_\delta\leq \max\left\{T_{stable},T^\star(\mathbold{\mu})\log(1/\delta)+C(\log(1/\delta))^{1-3\alpha/8}\right\}$.  Hence Theorem \ref{thm:main:asympt_optimality} stands proved.

\section{Extending the proposed algorithm to distributions with bounded support}\label{appndx:extension_to_bded_support}
We describe a natural extension of AT2 and IAT2 algorithms to bandit instances from a non-parametric family. We conduct experiments to compare the proposed algorithm with the existing ones. We consider the class of distributions having their supports contained in $[0,1]$, which we denote by $\mathcal{F}_{[0,1]}$. This is similar to the assumptions made in \cite{jourdan2022top}. Some definitions are in order. We use $\mu(G)$ to denote the mean of distribution $G\in\mathcal{F}_{[0,1]}$.

For every $F\in\mathcal{F}_{[0,1]}$ and $x\in[0,1]$, we define $\KLinf^+$ and $\KLinf^-$ as:
\begin{align*}
    \KLinf^+(F,x)~&=~\inf\{~\KL(F,G)~\vert~\mu(G)>x~\}\quad\text{and}\\
    \KLinf^-(F,x)~&=~\inf\{~\KL(F,G)~\vert~\mu(G)<x~\}.
\end{align*}

At iteration $N$ of the algorithm, let $\widetilde{F}_a(N)$ be the empirical distribution of the samples collected from some arm $a\in[K]$, $\widetilde{\mathbold{F}}(N)=(\widetilde{F}_a(N):a\in[K])$, $\widetilde{N}_a(N)$ be the total no. of samples collected from $a$ till $N$, and $\widetilde{\mathbold{N}}(N)=(\widetilde{N}_a(N):a\in[K])$. Let $\widetilde{\mu}_a(N)=\mu(\widetilde{F}_a(N))$ and $\hat{i}_N=\argmax{a\in[K]}{\widetilde{\mu}_a(N)}$. We now define: 
\[x_{\hat{i}_N,a}(N)~=~\argmin{x\in[0,1]}{\left\{~\widetilde{N}_{\hat{i}_N}(N)\cdot \KLinf^{-}(\widetilde{F}_{\hat{i}_N}(N),x)+\widetilde{N}_a(N)\cdot \KLinf^{+}(\widetilde{F}_a(N),x)~\right\}}.\]
At every iteration, we compute the \emph{empirical index} of every arm $a\neq \hat{i}_N$ as: 
\[\mathcal{I}_a(N)~=~\min_{x\in[0,1]}{\left\{~\widetilde{N}_{\hat{i}_N}(N)\cdot \KLinf^{-}(\widetilde{F}_{\hat{i}_N}(N),x)+\widetilde{N}_a(N)\cdot \KLinf^{+}(\widetilde{F}_a(N),x)~\right\}},\]
and the \emph{anchor function} as, 
\[g(\widetilde{\mathbold{F}}(N),\widetilde{\mathbold{N}}(N))~=~\sum_{a\neq \hat{i}_N}\frac{\KLinf^{-}(\widetilde{F}_{\hat{i}_N}(N),x_{\hat{i}_N,a}(N))}{\KLinf^{+}(\widetilde{F}_a(N),x_{\hat{i}_N,a}(N))}-1.\] 
The AT2 and IAT2 algorithm for the class $\mathcal{F}_{[0,1]}$, respectively, follows the same steps as in (\ref{code:AT2}) and (\ref{code:IAT2}) with the anchor and index functions defined as above.

We experimentally demonstrate the performance of the proposed algorithms in Appendix~\ref{appndx:experiments_bded_supports}.

\section{Experiments}\label{appndx:experiments}
\subsection{Dynamics of the algorithms}

{\bf  Experiment 1 (Gaussian and Bernoulli bandits with well-separated arms):} In the main text (Figure~\ref{mainfig:index_1}), we presented the evolution of normalized indexes for the sub-optimal arms for AT2, when run without the stopping rule. Numerically, we observe similar plots for normalized indexes even for the other algorithms: $0.5$-EB-TCB (proposed in \cite{jourdan2022top} with $\beta=0.5$), and TCB (proposed in \cite{mukherjee2023best}). Hence, we do not report them. However, we do observe differences in the evolution of the anchor function value across these algorithms. We present this in two different settings in the current section. 

Interestingly, as per our implementations, we observe that only AT2 satisfies the asymptotic optimality conditions, maintaining the anchor function close to $0$, in addition to maintaining the equality of the normalized indexes.

In this section, we consider the following two examples:
\begin{enumerate}
\item \textit{Gaussian bandit. } In the first setup (Figure~\ref{fig:anchor_comparison_gauss}), we consider a $4$ armed Gaussian bandit with unit variance and mean vector $\mu = [10, 8, 7, 6.5].$ This is the same setting as in Section~\ref{sec:experiments} from the main text.

\item \textit{Bernoulli bandit.} In the second setup (Figure~\ref{fig:anchor_comparison_ber}), we consider a Bernoulli bandit with means
$\mu = [0.91, 0.73, 0.64, 0.59].$ 
\end{enumerate}

In Figures~\ref{fig:anchor_comparison_gauss} and~\ref{fig:anchor_comparison_ber}, we plot the evolution of anchor function value for the three algorithms in the two settings, without implementing the stopping rule. The solid lines in the figure represent the average of anchor function over $4,000$ independent runs. The shaded bands around the sold lines (almost invisible in these figures), represent 2 standard deviation bands around the mean. 

We observe that only AT2 maintains the anchor function value close to $0$. Our experiments suggest that that TCB algorithm, as per our implementation, doesn't satisfy the asymptotic optimality conditions.

\begin{figure*}[h!]
    \centerline{
    \begin{minipage}{.5\textwidth}
      \centering

    \includegraphics[width=\linewidth]{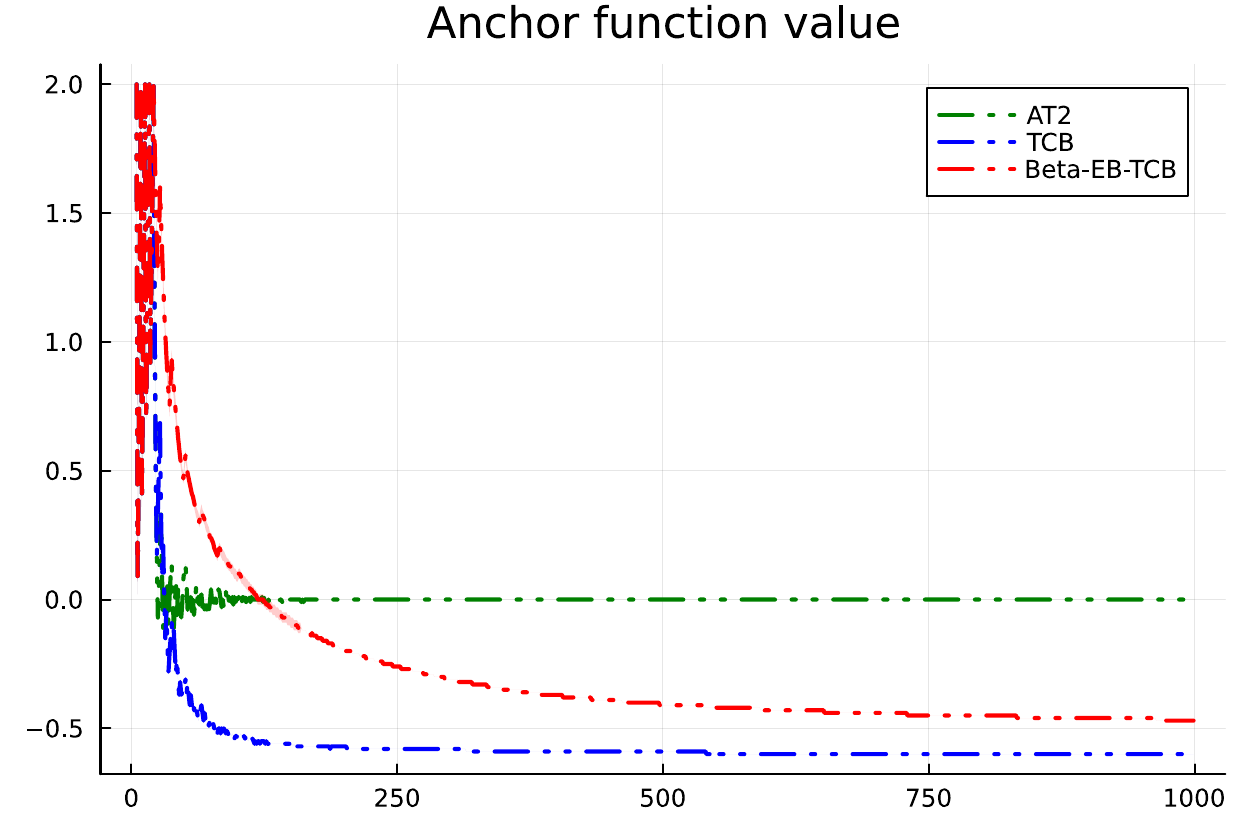}
      \caption{Anchor function value for easy Gaussian bandit (Exp.1), 
      averaged over 4,000 sample paths.}
      \label{fig:anchor_comparison_gauss}

    \end{minipage}%
    \hspace{0.1in}\begin{minipage}{.5\textwidth}
      \centering
            
      \includegraphics[trim={.15cm 0 1 0},clip, width=\linewidth]{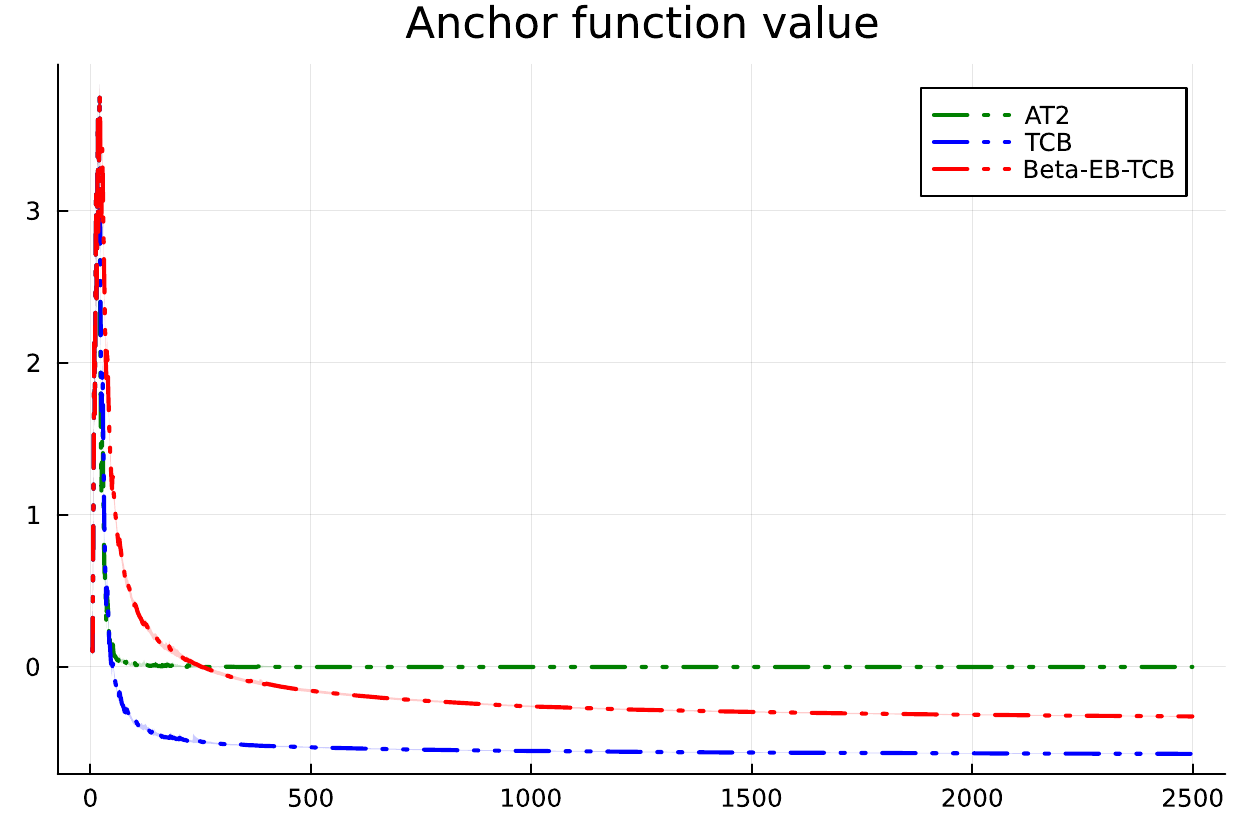}
      \caption{Anchor function value for easy Bernoulli bandit (Exp.1), 
      averaged over 4,000 sample paths.}
      \label{fig:anchor_comparison_ber}
    \end{minipage}}
\end{figure*}

\subsection{Sample complexity comparison}\label{appndx:experiments:TCB_suboptimality}
In Section~\ref{sec:experiments} in the main text, we compared the sample complexities (SC) of the three algorithms on a well-separated Gaussian bandit (Figure~\ref{mainfig:sc_comparison}). In this section, we compare the SC of all the  algorithms, as a function of different parameters. We consider harder Gaussian as well as Bernoulli bandit instances (with means close to each other), presented below. 

\begin{enumerate}
\item \textit{Gaussian bandit.} A $4$ armed Gaussian bandit with unit variance and mean vector $\mu = [7.25, 7.05, 7, 7.1]$ so that the means are closer together. 
\item \textit{Bernoulli bandit.} As a second example, we consider a  $4$-armed Bernoulli bandit with close-by means:
$\mu = [0.99, 0.96, 0.95, 0.97].$
\end{enumerate}

{\bf Experiment 2 (SC as function of $\beta$):} In this experiment, we compare SC of (I)AT2, (I)TCB, $\beta$-EB-(I)TCB algorithms, for $\beta\in [0.2, 0.3, 0.4, 0.5, 0.6, 0.7, 0.8]$, on the Gaussian instance (Figure~\ref{fig:Exp2}) and the Bernoulli instance (Figure~\ref{fig:Exp_SC_hard_bernoulli}), described above. The error probability $\delta$ in both these experiments is set to  $0.001$. All the algorithms use the same forced exploration rule and stopping rules.

The lines with the markers in the figures represent the average number of samples generated before stopping, averaged over $4,000$ independent simulations, while the shaded regions denote 2 standard deviations around the mean. We also report the average sample complexity and the standard deviation of the average sample complexity for AT2, IAT2, TCB, and ITCB, across $4,000$ independent simulations. 

In both these simulations, we observe that AT2 and IAT2, respectively, have about 5\% lower sample complexity compared to TCB and ITCB. 

\begin{figure}[h!]
    \centerline{
    \begin{minipage}{.55\textwidth}
      \centering
      \includegraphics[trim={.15cm 0 1 0},clip, width=\linewidth]{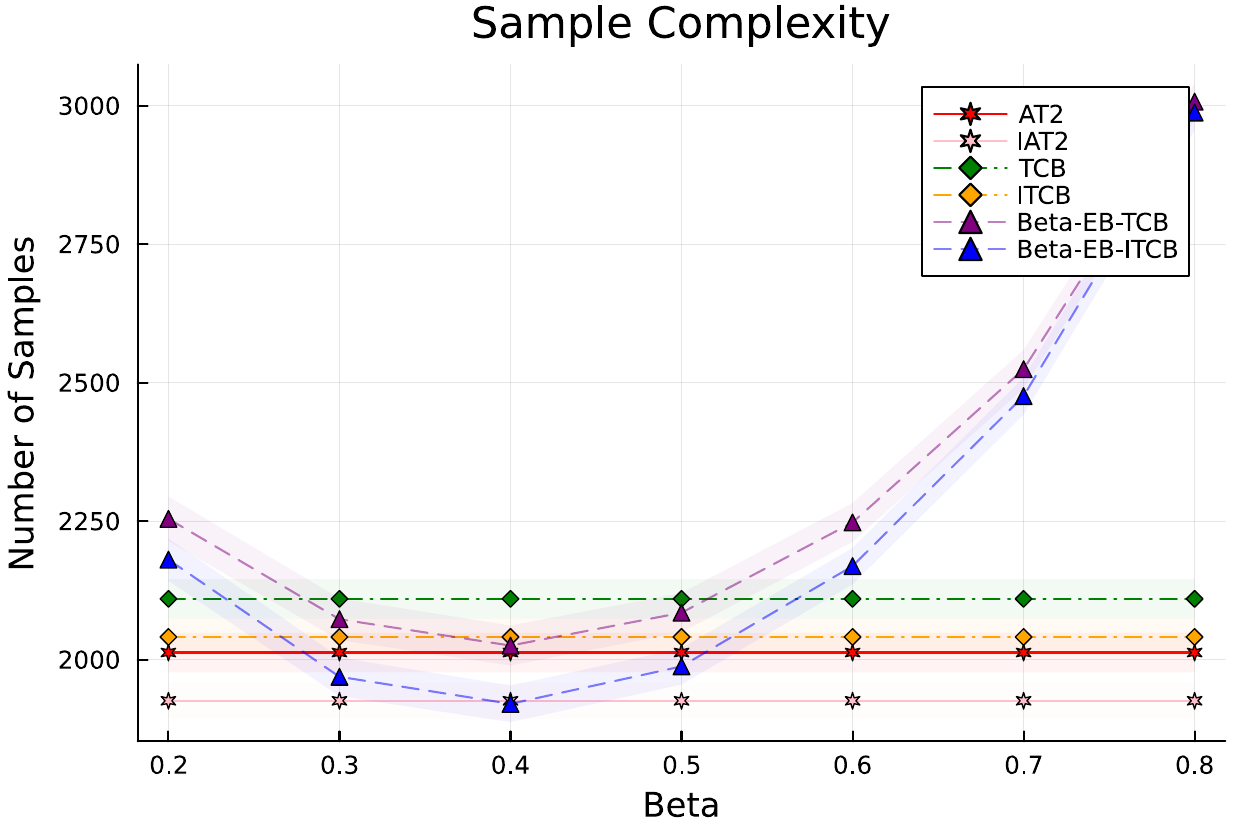}
    \end{minipage}%
    \hspace{0.1in}
    \begin{minipage}{.35\textwidth}
      \centering
    \resizebox{\columnwidth}{!}{\begin{tabular}{|c|c|c|}
    \hline
     Algorithm & Avg. SC  & Std. Dev.\\
    \hline
    AT2    &      2013.0  & 17.85   \\
    IAT2 &    1925.9  & 16.36 \\
    \hline
    $0.5$-EB-TCB    &      2084.84  & 17.74   \\
    $0.5$-EB-TCBI &    1987.92  & 16.33 \\
    \hline
    TCB   & 2109.82  & 17.82 \\
    ITCB  & 2041.03 & 16.55\\
  \hline
    \end{tabular}
    }
    \end{minipage}}
  \caption{Sample complexity (SC) on Gaussian bandit from Exp. 2, averaged over $4,000$ independent sample paths. $\delta = 0.001$}
  \label{fig:Exp2}
\end{figure}


\begin{figure}[h!]
    \centerline{
    \begin{minipage}{.55\textwidth}
      \centering
      \includegraphics[trim={.15cm 0 1 0},clip, width=\linewidth]{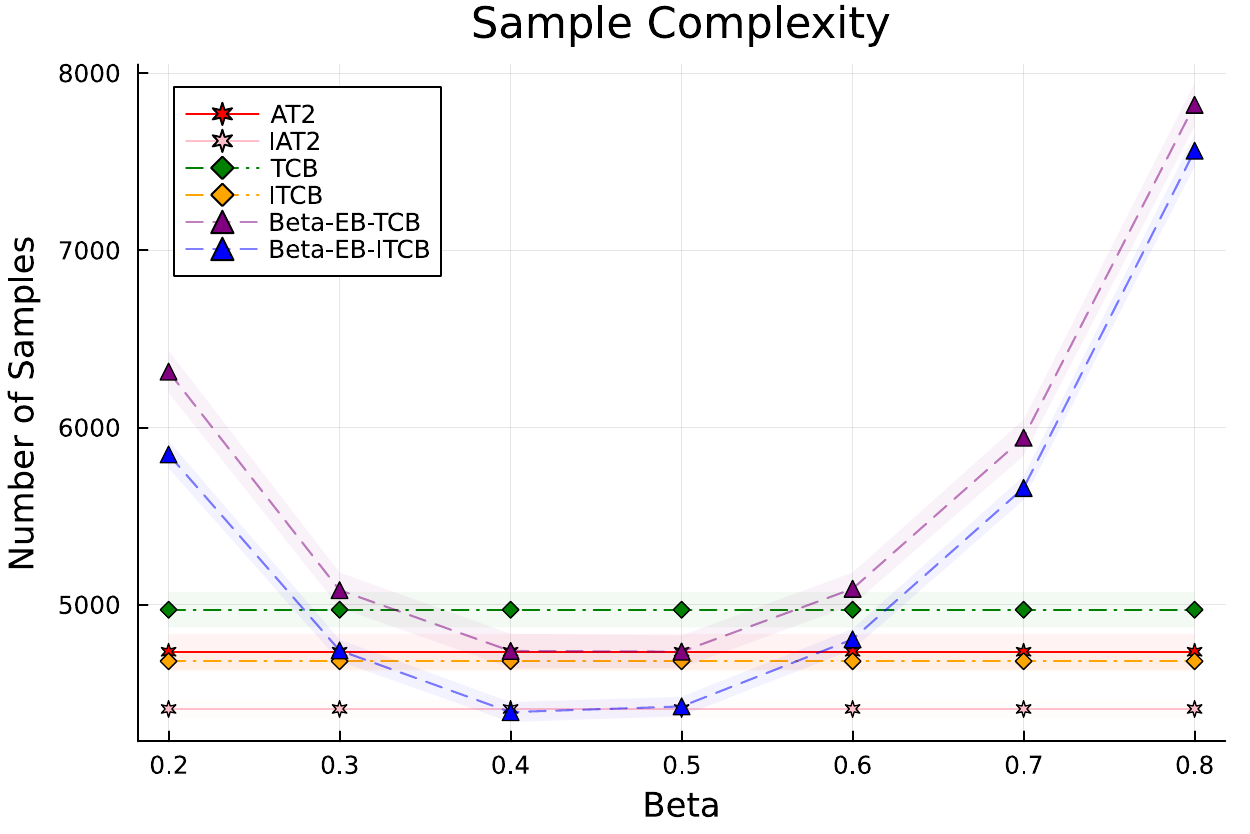}
    \end{minipage}%
    \hspace{0.1in}
    \begin{minipage}{.35\textwidth}
      \centering
    \resizebox{\columnwidth}{!}{\begin{tabular}{|c|c|c|}
    \hline
     Algorithm & Avg. SC  & Std. Dev.\\
    \hline
    AT2    &      4735.09  & 50.4   \\
    IAT2 &    4412.16  & 27.36 \\
    \hline
    $0.5$-EB-TCB    &      4735.65  & 45.44   \\
    $0.5$-EB-TCBI &   4426.59  & 27.27 \\
    \hline
    TCB   & 4972.07  & 49.64 \\
    ITCB  & 4681.63 & 28.52\\
  \hline
    \end{tabular}
    }
    \end{minipage}}
  \caption{Sample complexity (SC) on Bernoiulli bandit from Exp. 2, averaged over $4,000$ independent sample paths. $\delta = 0.001$}
  \label{fig:Exp_SC_hard_bernoulli}
\end{figure}

{\bf Experiment 3 (SC as function of $\delta$):} In Figure~\ref{fig:SC_delta_G} and Figure~\ref{fig:SC_delta_B}, we plot the sample complexities of the three algorithms --- AT2, $0.5$-EB-TCB, and TCB ---  as a function of $\delta$, for the Gaussian and Bernoulli bandits considered in Experiment 2 above. All the algorithms use the same forced exploration and stopping rules. We observe that AT2 consistently outperforms both the previously known algorithms, and the gap in performance increases as we reduce $\delta$. 

\begin{figure*}[h!]
    \centerline{
    \begin{minipage}{.5\textwidth}
      \centering

    \includegraphics[width=\linewidth]{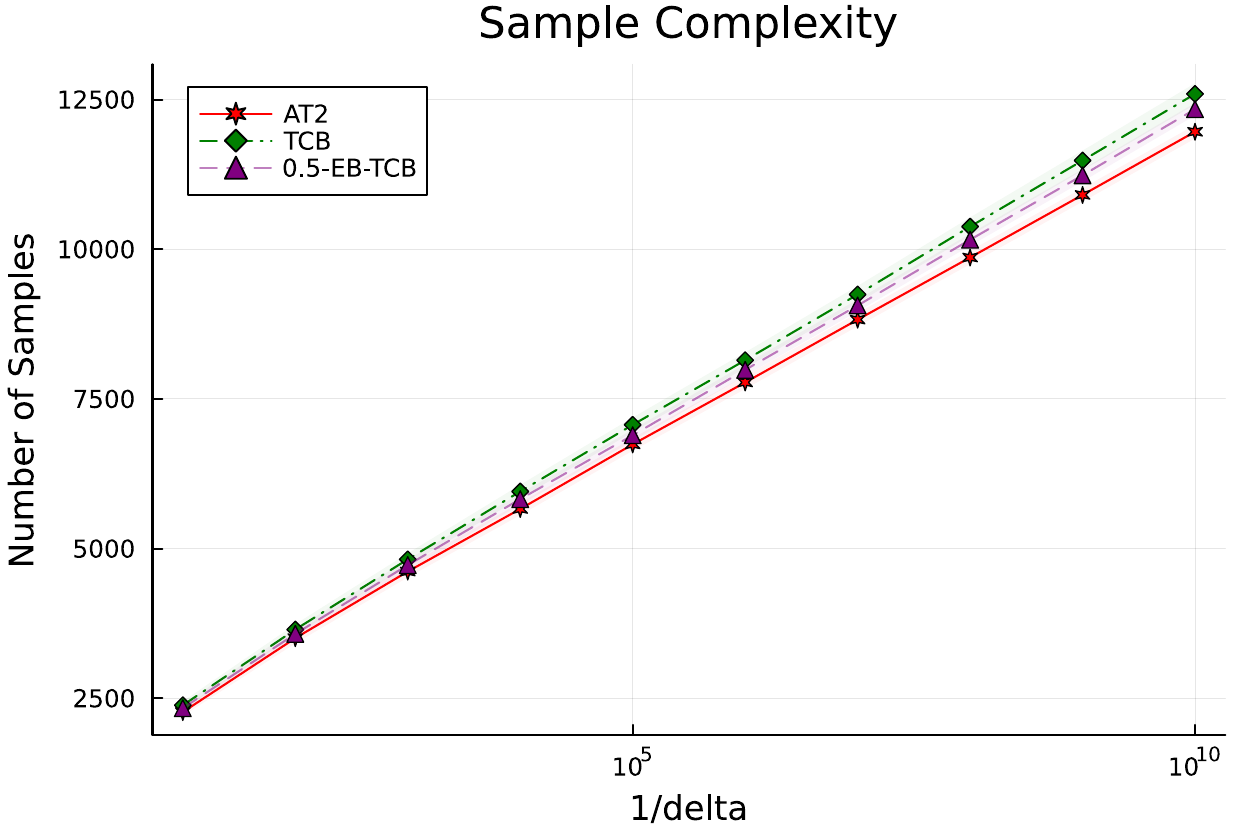}
      \caption{Sample complexity of Gaussian bandit (Exp.3), 
      averaged over 4,000 sample paths.}
      \label{fig:SC_delta_G}

    \end{minipage}%
    \hspace{0.1in}\begin{minipage}{.5\textwidth}
      \centering
            
      \includegraphics[trim={.15cm 0 1 0},clip, width=\linewidth]{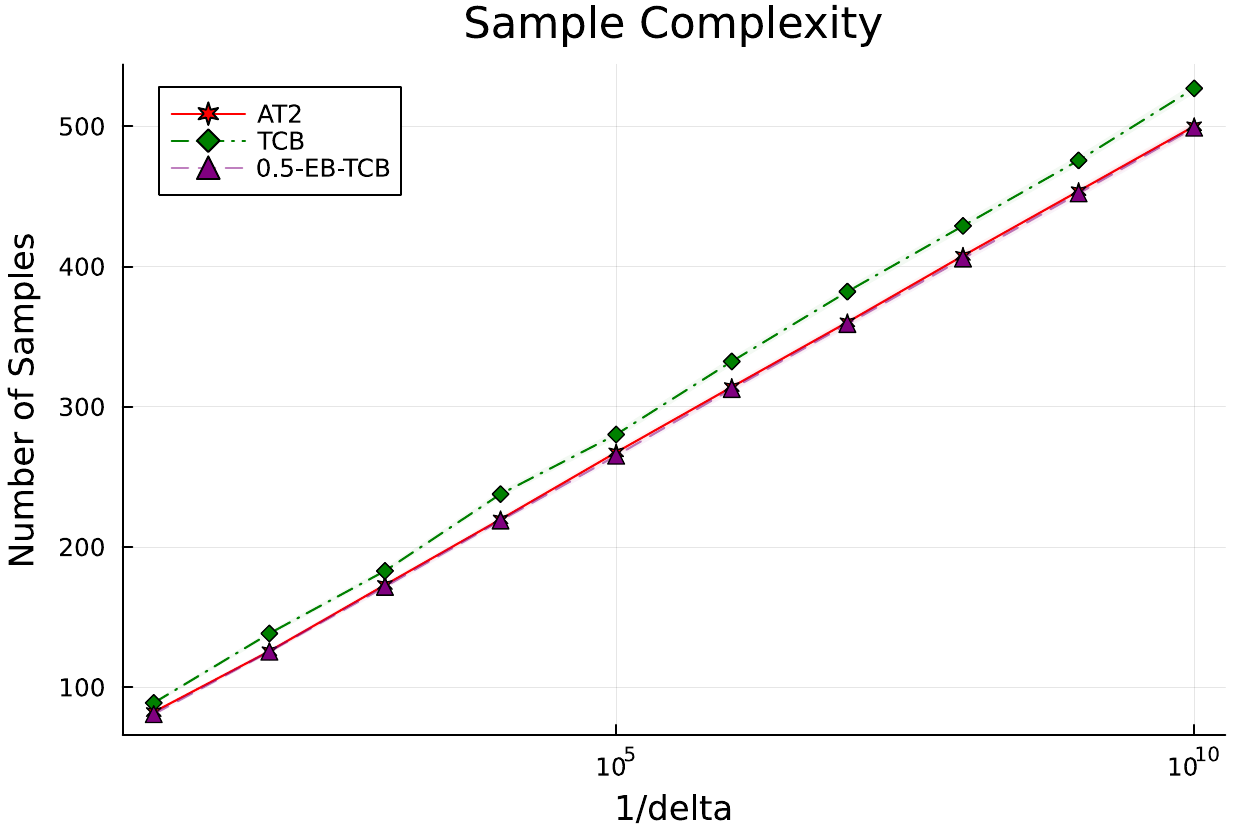}
      \caption{Sample complexity for Bernoulli bandit (Exp.3), 
      averaged over 4,000 sample paths.}
      \label{fig:SC_delta_B}
    \end{minipage}}
\end{figure*}

{\bf Experiment 4 (SC as a function of number of arms).} We plot the number of samples needed by the three algorithms, as a function of number of arms in the bandit instance. $\delta$ is set to $0.001$ in this experiment.

For scalability, in this experiment, we consider a simple Gaussian bandit (well-separated means) with all arms having a unit variance. Arm 1 is optimal with mean $10$. To study the effect of number of arms on sample complexity, we choose all the other arms to be same with mean $8$. Thus, the bandit instances have Gaussian arms with unit variance, and means
\[ \mu = [10, 8, \dots]. \]
As in the earlier experiments, for fair comparison, all the algorithms are implemented with the same forced exploration and stopping rules. Results are presented in Figure~\ref{fig:Exp_SC_func_K}. We observe that the sample complexity increases linearly with number of arms for the three algorithms. In this experiment, the performance of TCB and AT2 looks comparable, with TCB requiring slightly more number of samples. However, the gap in their performance is expected to increase for smaller values of $\delta$.

\begin{figure}[h!]
    \centerline{
    \begin{minipage}{.55\textwidth}
      \centering
      \includegraphics[trim={.15cm 0 1 0},clip, width=\linewidth]{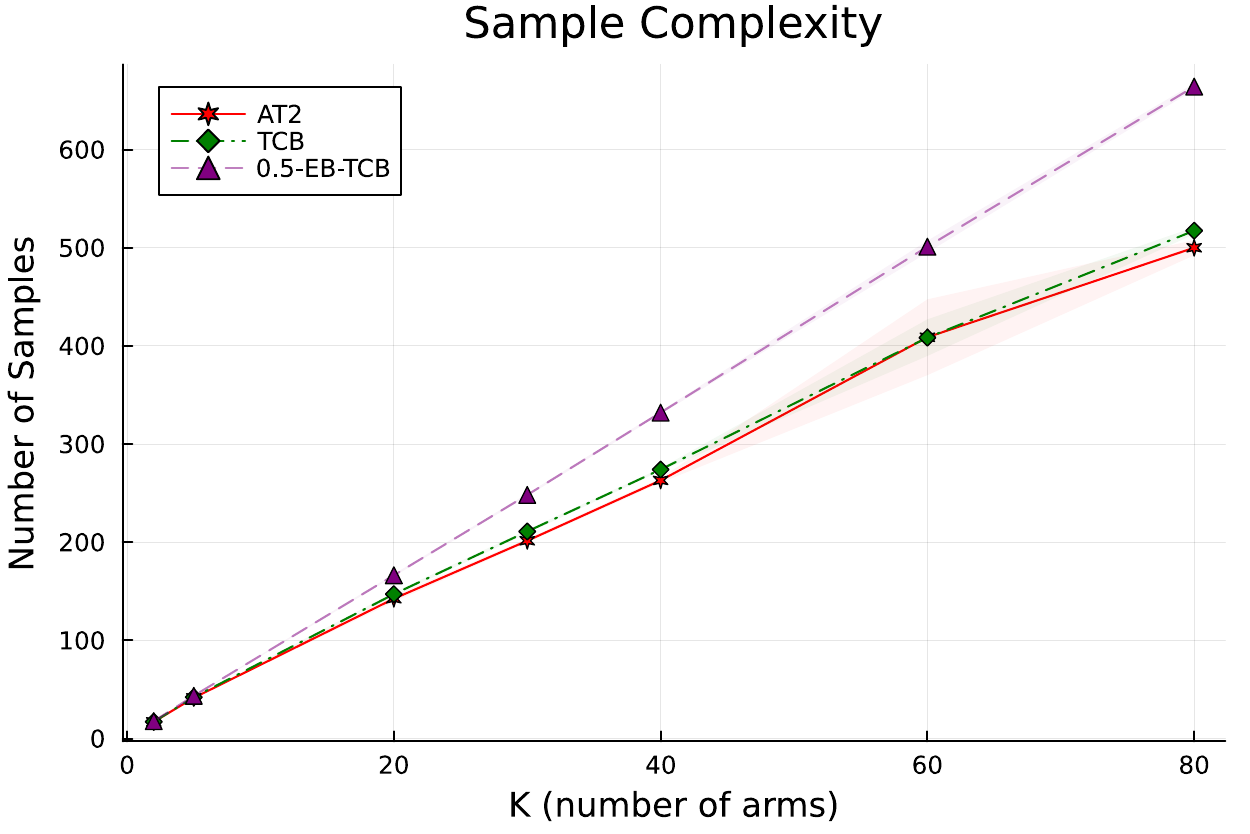}
    \end{minipage}}
  \caption{Sample complexity on Gaussian bandit (Exp.4), averaged over $4,000$ independent sample paths. }
  \label{fig:Exp_SC_func_K}
\end{figure}

\subsection{Runtime comparison}
{\bf Experiment 5:} In this experiment, we compare the run-time of  (I)AT2 and (I)TCB algorithms on a 4 armed Gaussian bandit with means $\mu = [10, 9.4, 7, 6.5]$ and unit variance, averaged over longer 
100,000 simulations. $\delta$ is set to $0.001$, and the four algorithms use the same forced exploration and stopping rules.

Table~\ref{tab:runtime1} represents the average run-time of the two algorithms.  We observe that TCB and ITCB take roughly two times more computational time compared to AT2 and IAT2, respectively. 

\begin{table}[h!]
    \centering
    \resizebox{0.98\columnwidth}{!}{\begin{tabular}{|c|c|c|c|c|c|}
    \hline
 Algorithm & Avg. Sample Complexity  & Std. Dev. &  Avg. Run Time (microsec.) & Run Time Std. Dev.   \\
 \hline
    AT2    &      90.53    & 0.2 &    129.76 & 32.34 \\
    IAT2 &          90.63    & 0.2 & 310.76 & 55.88\\
    \hline
    TCB         & 96.55    & 0.21  &  501.22 & 82.60\\
    ITCB     &    96.69   & 0.21 &  845.19 & 145.97\\
  \hline
    \end{tabular}
    }
    \BlankLine
    \caption{Runtime of (I)AT2 and (I)TCB on Gaussian bandit with $\mu = [10, 9.4, 7, 6.5]$ and unit variances (Exp.6). Results reported are for $100,000$ independent runs of each algorithm.}
    \label{tab:runtime1}
\end{table}

\subsection{Effect of forced-exploration parameter $\alpha$ on sample complexity}\label{app:exp_exploration}
In this section, we provide a numerical evaluation of the impact of forced exploration on the performance of AT2 and IAT2. Our experiments suggest that unlike IAT2, AT2 needs forced exploration. On instances where the second and third best arms have equal means, AT2 might see some bad samples from the best arm in the beginning. As a result, without forced exploration, it will sample the second and the third best arms forever. However, we see that AT2 performs sufficient exploration for instances having all arms with different means. Note that similar observations were made in \cite{jourdan2022top} for $\beta$-EB-TCB.

{\bf Experiment 6:} To see the above mentioned behavior of AT2, we study the performance of AT2 and IAT2 on the following two bandit instances. 

\begin{enumerate}
    \item A $4$ armed Gaussian bandit with unit variance and mean vector $\mu = [ 7.25, 7.05, 7, 7.1]$, so that the means are close together, yet all different.
    \item A $4$ armed Gaussian bandit with unit variance and mean vector $\mu = [7.25, 7, 7, 7]$, so that the three suboptimal arms have equal means. 
\end{enumerate}

Intuitively, it might appear that forced exploration could significantly increase the sample complexity of AT2 and IAT2. However, in Figure~\ref{fig:diff}, we see that IAT2's performance remains unaffected and AT2 performs at least as well as IAT2 with moderate amount of forced exploration on the first instance, where the gap between the means is small. 

Figure~\ref{fig:same} shows that without forced exploration, AT2's sample complexity blows up on the instance with equal sub-optimal arm means. We see that on this instance, IAT2 performs significantly better than AT2. Furthermore, IAT2's sample complexity remains almost unaffected with respect to the amount of forced exploration done.

\begin{figure*}[h!]
    \centerline{
    \begin{minipage}{.5\textwidth}
      \centering

    \includegraphics[width = \linewidth]{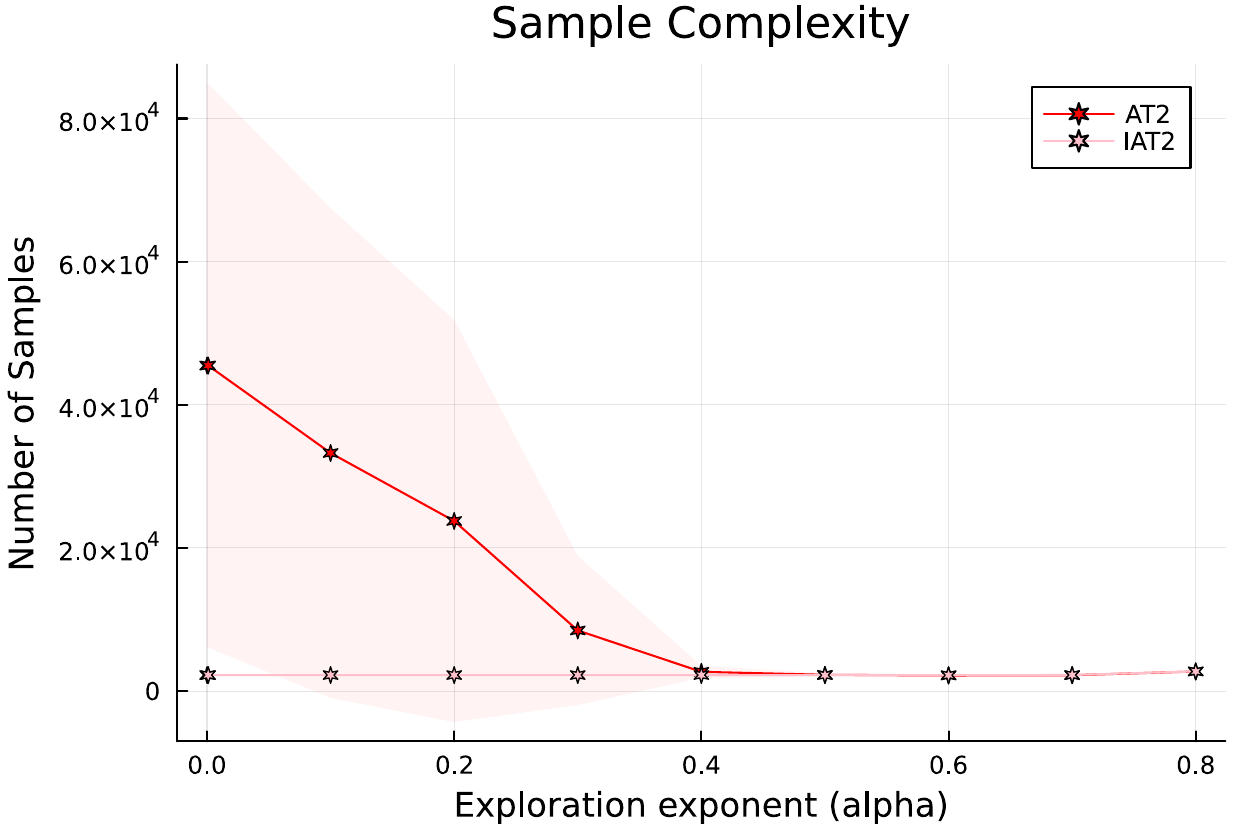}
    \caption{Sample complexity of Gaussian bandit with means $\mu = [7.25,7.05, 7, 7.1]$, as a function of $\alpha$ (Exp.6)}\label{fig:diff}

    \end{minipage}%
    \hspace{0.15in}\begin{minipage}{.5\textwidth}
      \centering
    \includegraphics[width = \linewidth]{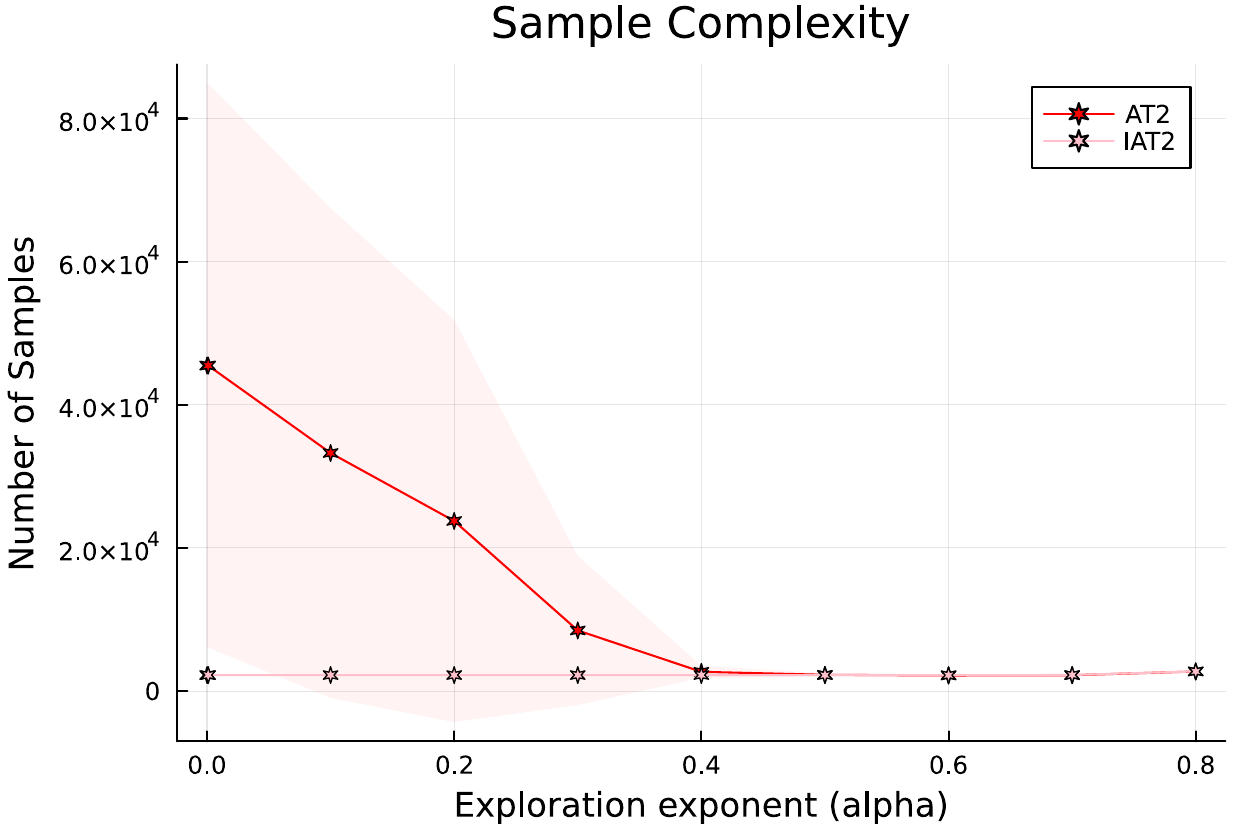}
    \caption{Sample complexity of Gaussian bandit with means $\mu = [7.25, 7, 7, 7]$, as a function of $\alpha$ (Exp.6)}\label{fig:same}
    \end{minipage}}
\end{figure*}

\subsection{Experiments for bandits with bounded-support distributions}\label{appndx:experiments_bded_supports}
In this section, we experimentally demonstrate the performance of a natural extension of AT2 to a non-parametric setting of bandits with arms having distributions supported in $[0,1]$ (see Appendix~\ref{appndx:extension_to_bded_support} for the modified AT2). This is the setting considered in, for example, \cite{jourdan2022top}. 

{\bf Experiment 7:} Consider a $4$-armed bandit with the following arm distributions: $$\operatorname{Beta}(1.5, 1), ~\operatorname{Beta}(2,6),~\operatorname{Beta}(1,1.5),~\text{ and }~\operatorname{Beta}(1,7).$$ Here, the arms have means 
$$\mu = [0.6, 0.25, 0.4, 0.125].$$ 
While we do not provide the analysis of the algorithm for this setting, numerically we observe that even in this non-parametric setting, extension of AT2 to this setting outperforms $\beta$-EB-TCB, and a corresponding natural extension of TCB to this setting. 

\begin{figure*}[h!]
    \centerline{
    \begin{minipage}{.5\textwidth}
      \centering

    \includegraphics[width=\linewidth]{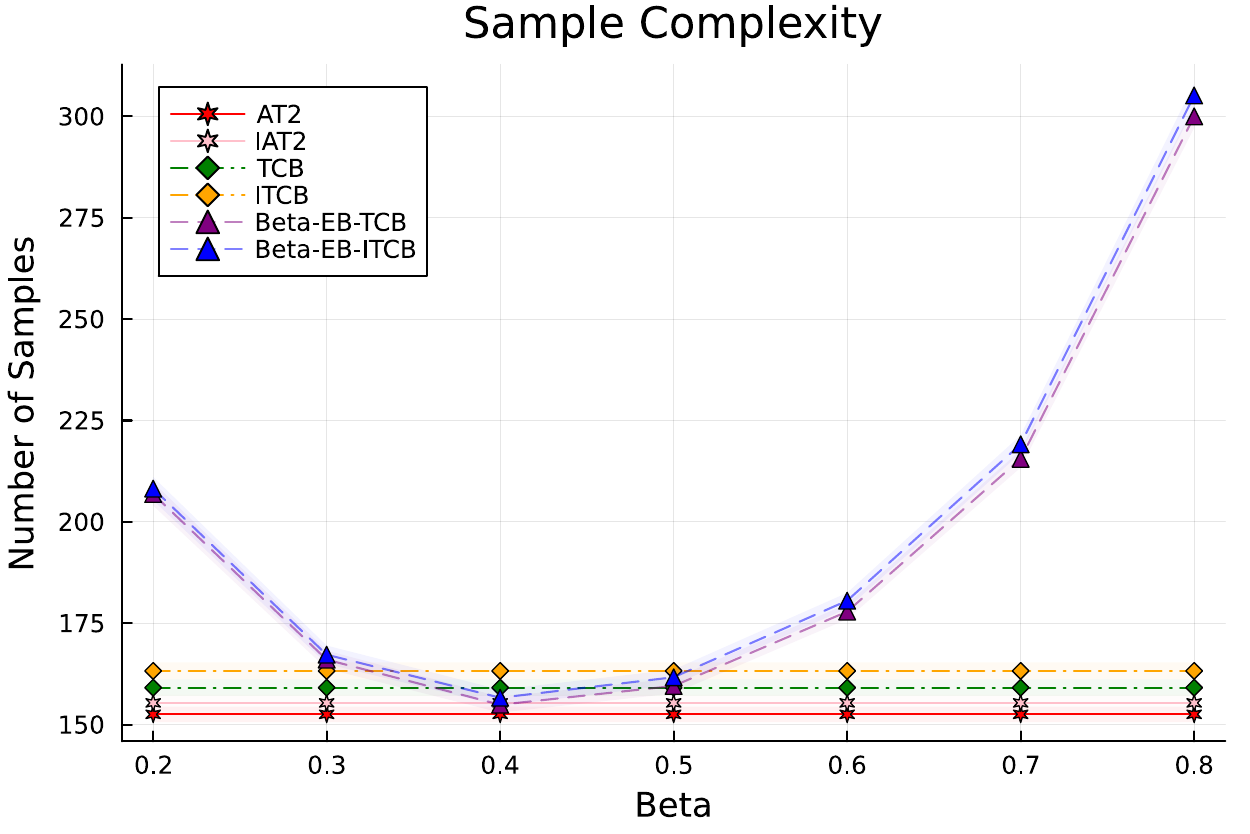}
      \caption{Sample complexity (Exp.6), 
      averaged over 4,000 sample paths. We set $\delta = 0.001$.}
      \label{fig:SC_delta_bdd1}

    \end{minipage}%
    \hspace{0.1in}\begin{minipage}{.5\textwidth}
      \centering
            
      \includegraphics[trim={.15cm 0 1 0},clip, width=\linewidth]{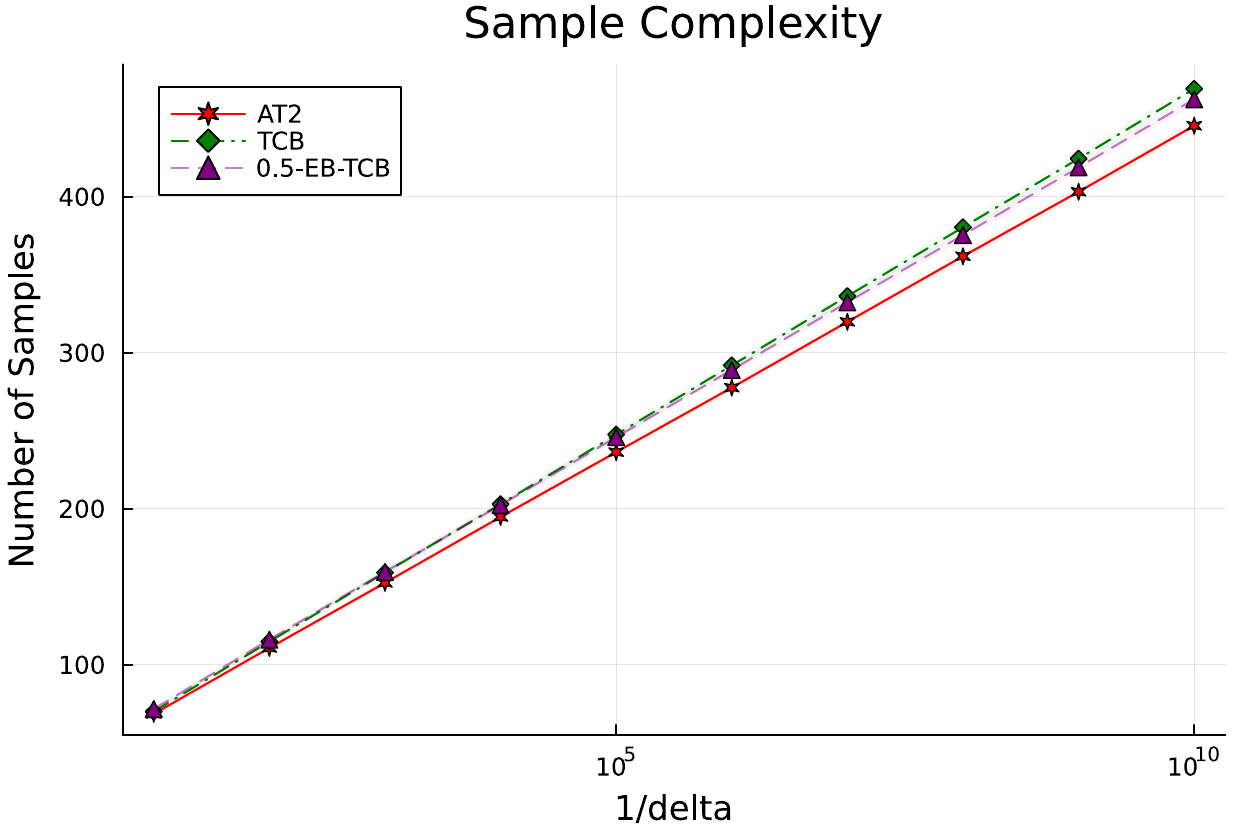}
      \caption{Sample complexity (Exp.6), 
      averaged over 4,000 sample paths.}
      \label{fig:SC_delta_bdd2}
    \end{minipage}}
\end{figure*}

{\bf Reproducibility:} Our code is implemented in \texttt{Julia 1.7.1}, and the plots are generated with the \texttt{Plots.jl} package.  Other dependencies are listed in the \texttt{Readme.md} file, which also includes instructions to reproduce the figures and tables presented here. We build upon the publicly available code for \cite{jourdan2022top}. 
Our experiments are conducted on an institutional cluster computing facility having an Intel Xeon Gold 6130 2.1GHz CPU with 32 cores.
\end{document}